 \newcommand{\figcaption}[1]{\def\@captype{figure}\caption{#1}} 
 \renewcommand{\@biblabel}[1]{[#1]}
\newtheorem{prop}{Proposition}[section]
\newtheorem{lem}[prop]{Lemma}
\newtheorem{thm}[prop]{Theorem}
\newtheorem{defin}[prop]{Definition}
\newtheorem{rem}[prop]{Remark}
\newtheorem{cor}[prop]{Corollary}
\newtheorem{exam}[prop]{Example}
\newtheorem{assum}[prop]{Assumption}
\newenvironment{myproof}[1]{\par\noindent{\bf {#1}\ }}{\hfill\BlackBox\\[2mm]}
\def\mcl#1{\mathcal{#1}}
\def\blacket#1{\left\langle #1\right\rangle}
\def\hil{\mcl{H}}
\def\nn{\nonumber}
\def\opn{\operatorname}
\def\alg{\mcl{A}}
\def\modu{\mcl{M}}
\def\mat{\mathbb{C}^{m\times m}}
\def\blin#1{\mcl{B}(#1)}
\def\bblacket#1{\big\langle #1\big\rangle}
\def\bblacketg#1{\bigg\langle #1\bigg\rangle}
\def\Bblacket#1{\bigg\langle #1\bigg\rangle}
\def\sblacket#1{\langle #1\rangle}
\def\clch{{C}_0(\mcl{X},\alg)}
\def\total{\mcl{T}(\mcl{X},\alg)}
\def\simple{\mcl{S}(\mcl{X},\alg)}
\def\bochner#1{{L}^1_{#1}(\mcl{X},\alg)}
\def\measure{\mcl{D}(\mcl{X},\alg)}
\def\measuremat{\mcl{D}(\mcl{X},\mat)}
\def\bdmeasure#1{\mcl{D}_{#1}(\mcl{X},\alg)}
\def\regular{\mcl{R}_+(\mcl{X})}
\def\bc{\mathbf{c}}
\def\bG{\mathbf{G}}
\def\bra#1{\langle#1\vert}
\def\ket#1{\vert#1\rangle}
\def\blackett#1#2{\langle#1\vert#2\rangle}
\def\vvrkhs{\hil_k^{\opn{v}}}
\def\rkhs{\hil_{\tilde{k}}}
\def\red#1{\textcolor{black}{#1}}
\def\motiv{black}
\definecolor{dark-blue}{rgb}{0,0,0.55}
\def\exp{black}
\definecolor{dark-green}{rgb}{0.33,0.42,0.18}
\def\cont{black}
\begin{document}

\title{Reproducing kernel Hilbert $C^*$-module and kernel mean embeddings}

\author{\name Yuka Hashimoto \email yuka.hashimoto.rw@hco.ntt.co.jp \\
       \addr NTT Network Service Systems Laboratories, NTT Corporation\\
       3-9-11, Midori-cho, Musashinoshi, Tokyo, 180-8585, Japan / \\
       Graduate School of Science and Technology, Keio University\\
       3-14-1, Hiyoshi, Kohoku, Yokohama, Kanagawa, 223-8522, Japan
       \AND
       \name Isao Ishikawa \email ishikawa.isao.zx@ehime-u.ac.jp\\%\email isao.ishikawa@riken.jp \\
       %\addr Center for Advanced Intelligence Project, RIKEN\\
       %1-4-1 Nihonbashi, Chuo-ku, Tokyo 103-0027, Japan / \\
       %Faculty of Science and Technology, Keio University\\
       %3-14-1 Hiyoshi, Kohoku, Yokohama, Kanagawa, 223-8522, Japan
       \addr Center for Data Science, Ehime University\\
       2-5, Bunkyo-cho, Matsuyama, Ehime, 790-8577, Japan / \\
       Center for Advanced Intelligence Project, RIKEN\\
       1-4-1, Nihonbashi, Chuo-ku, Tokyo 103-0027, Japan
       \AND
       \name Masahiro Ikeda \email masahiro.ikeda@riken.jp \\
       \addr Center for Advanced Intelligence Project, RIKEN\\
       1-4-1, Nihonbashi, Chuo-ku, Tokyo 103-0027, Japan / \\
       Faculty of Science and Technology, Keio University\\
       3-14-1, Hiyoshi, Kohoku, Yokohama, Kanagawa, 223-8522, Japan
       \AND
       \name Fuyuta Komura \email fuyuta.k@keio.jp \\
       \addr Faculty of Science and Technology, Keio University\\
       3-14-1, Hiyoshi, Kohoku, Yokohama, Kanagawa, 223-8522, Japan / \\ 
       Center for Advanced Intelligence Project, RIKEN\\
       1-4-1, Nihonbashi, Chuo-ku, Tokyo 103-0027, Japan
       \AND 
       \name Takeshi Katsura \email katsura@math.keio.ac.jp \\
       \addr Faculty of Science and Technology, Keio University\\
       3-14-1, Hiyoshi, Kohoku, Yokohama, Kanagawa, 223-8522, Japan / \\ 
       Center for Advanced Intelligence Project, RIKEN\\
       1-4-1, Nihonbashi, Chuo-ku, Tokyo 103-0027, Japan
       \AND
       \name Yoshinobu Kawahara \email kawahara@imi.kyushu-u.ac.jp\\
       \addr Institute of Mathematics for Industry, Kyushu University\\
       744, Motooka, Nishi-ku, Fukuoka, 819-0395, Japan / \\
       Center for Advanced Intelligence Project, RIKEN\\
       1-4-1, Nihonbashi, Chuo-ku, Tokyo 103-0027, Japan
}

\maketitle

\begin{abstract}%   <- trailing '%' for backward compatibility of .sty file
Kernel methods have been among the most popular techniques in machine learning, where learning tasks are solved using the property of reproducing kernel Hilbert space (RKHS). % or vector-valued RKHS (vvRKHS). 
In this paper, we propose a novel data analysis framework with reproducing kernel Hilbert $C^*$-module (RKHM) and kernel mean embedding (KME) in RKHM. %, instead of RKHS and vvRKHS. 
Since RKHM contains richer information than RKHS or vector-valued RKHS (vvRKHS), analysis with RKHM enables us to capture and extract structural properties in such as functional data. %multivariate data, functional data and other structured data.
%RKHM is a generalization of RKHS and contains richer information than vector-valued RKHS (vv-RKHS), a popular generalization of RKHS in data analysis. 
%Analysis with RKHM enables us to deal with structures among variables more explicitly than vv-RKHS.
We show a branch of theories for RKHM to apply to data analysis, including the representer theorem, and the injectivity and universality of the proposed KME.
We also show RKHM generalizes RKHS and vvRKHS.
Then, we provide concrete procedures for employing RKHM and the proposed KME to data analysis.
%And, we investigate the empirical performance of our methods using synthetic and real-world data.
\end{abstract}

\begin{keywords}
  reproducing kernel Hilbert $C^*$-module, kernel mean embedding, structured data, kernel PCA, interaction effects 
\end{keywords}

%\maketitle
%\tableofcontetns

\section{Introduction}\label{secintro}
Kernel methods have been among the most popular techniques in machine learning~\citep{scholkopf01}, where learning tasks are solved using the property of reproducing kernel Hilbert space (RKHS). 
RKHS is the space of complex-valued functions equipped with an inner product determined by a positive-definite kernel. 
One of the important tools with RKHS is kernel mean embedding (KME).
%KME is a powerful tool to analyze probability distributions (or measures) for data, where 
In KME, a probability distribution (or measure) is embedded as a function in an RKHS~\citep{solma07,kernelmean,sriperumbudur11}, which enables us to analyze distributions in RKHSs.
%Here, the {\em orthonormality} is defined on the basis of the inner products calculated by evaluating a kernel function, which plays an important role in developing practical algorithms.

Whereas much of the classical literature on RKHS approaches has focused on complex-valued functions, RKHSs of vector-valued functions, i.e., vector-valued RKHSs (vvRKHSs), 
%have received increasing attention
have also been proposed~\citep{micchelli05,mauricio11,lim15,quang16,kadri16}. 
This allows us to learn vector-valued functions rather than complex-valued functions.
%This allows us to incorporate information about structures among observation variables into analyses with RKHSs.
%Several kernel methods such as SVM can be generalized to vector-valued functions by employing vv-RKHSs~\cite{quang16}.
%
%However, the use of vv-RKHS may not always be the best option to analyze structured data, mainly for two reasons.
%First, although a sample is transformed into a unique function in RKHSs, 
%a function in vv-RKHSs corresponding to a sample cannot be determined uniquely. 
%Second, although for each sample composed of several elements, relations between all combinations of pairs of the elements should not be lost, information about the relations may be lost and can be difficult to recover when using vv-RKHSs.
%This is caused by the fact that an inner product in vv-RKHSs, which measures the similarity between a pair of samples in data, is represented with one complex value.

In this paper, we develop a branch of theories on reproducing kernel Hilbert $C^*$-module (RKHM) and propose a generic framework for data analysis with RKHM.
RKHM is a generalization of RKHS and vvRKHS in terms of $C^*$-algebra, and we show that RKHM is a powerful tool to analyze structural properties in {such as functional data}. %multivariate data, functional data and other structured data.
An RKHM is constructed by a $C^*$-algebra-valued positive definite kernel and characterized by a $C^*$-algebra-valued inner product (see Definition~\ref{def:pdk_rkhm}).
%The theory of RKHM was originally studied in mathematical physics to analyze covariance kernels arising in the theory of generalized stochastic processes and the dilations of operators~\cite{itoh90,heo08}.
%Unlike vv-RKHSs, the theory of RKHMs enables us to transform each sample as a unique $C^*$-algebra-valued function, with $C^*$-algebra-valued inner products.
%We note that a $C^*$-algebra is a generalization of the space of complex values.
The theory of $C^*$-algebra has been discussed in mathematics, especially in operator algebra theory.
%It is a generalization of the space of complex values.
%One of the important examples 
An important example of $C^*$-algebra is $L^{\infty}(\Omega)$, where $\Omega$ is a compact measure space.
Another important example is $\blin{\mcl{W}}$, which denotes the space of bounded linear operators on a Hilbert space $\mcl{W}$.
Note that $\blin{\mcl{W}}$ coincides with the space of matrices $\mat$ if the Hilbert space $\mcl{W}$ is finite dimensional.
%Therefore, we can define a matrix-valued inner product that encodes the similarities of all combinations of pairs of variables in data.

Although there are several advantages for studying RKHM compared with RKHS and vvRKHS, those can be summarized into two points as follows:
First, an RKHM is a ``Hilbert $C^*$-module'', which is mathematically more general than a ``Hilbert space''. 
The inner product in an RKHM is $C^*$-algebra-valued, which captures more information than the complex-valued one in an RKHS or vvRKHS and enables us to extract richer information.
For example, 
%if we set $\blin{\hil}$ as a $C^*$-algebra, we can encode similarities between arbitrary pairs of variables in a pair of samples. 
%if we set $L^{\infty}(\Omega)$ as a $C^*$-algebra, we can regard multi-variable functional data as functions of a single variable.
%Also, if we set $\blin{\mcl{W}}$ as a $C^*$-algebra, we can encode distributions at all the points in a two-dimensional space into a vector in an RKHM.
{if we set $L^{\infty}(\Omega)$ as a $C^*$-algebra, we can control and extract features of functional data such as derivatives, total variation, and frequency components.
Also, if we set $\blin{\mcl{W}}$ as a $C^*$-algebra and the inner product is described by integral operators, we can control and extract features of continuous relationships between pairs of functional data.}
This cannot be achieved, in principle, by RKHSs and vv-RKHSs.
This is because their inner products are complex-valued, where such information degenerates into one complex value {or is lost by discretizations of function into complex values}.
Therefore, we cannot reconstruct the information from a vector in an RKHS or vvRKHS.
%For example, we cannot completely reproduce vector-valued functions in vvRKHSs by calculating an inner product since the inner product is complex-valued.
%On the other hand, we can completely reproduce $C^*$-algebra valued functions in RKHMs since the inner product is $C^*$-algebra-valued.
Second, RKHM generalizes RKHS and vvRKHS, %is more general notion and has more expressive power than RKHS and vvRKHS. 
that is, it can be shown that we can reconstruct RKHSs and vvRKHSs from RKHMs.
This implies that existing algorithms with RKHSs and vvRKHSs are reconstructed by using the framework of RKHM.
%an RKHM contains an RKHS and vv-RKHS, which means an RKHM is a larger space than an RKHS and vv-RKHS.
%For a fixed sample $x$ and feature map $\phi$, a vector in a vv-RKHS has a form of $\phi(x)v$. %, which describes only an aspect of $\phi(x)$ through $v$, not $\phi(x)$ itself. 
%Since $v$ is arbitrary, $x$ does not correspond to a unique vector in the vv-RKHS. 
%On the other hand, an RKHM contains $\phi(x)$, which enables us to uniquely allocate a vector in the RKHM for $x$. This allows us to generalize data analysis methods such as principal component analysis (PCA) and modal decomposition more naturally than vv-RKHS.

%As with RKHS, the construction of orthonormal systems plays an key role in applying the theory of RKHMs to algorithm developments,
%since it provides us orthogonal projections, which are utilized for finding solutions of minimization problems. 
%To the best of our knowledge, although the generalization of orthonormality to $C^*$-algebra-valued inner-products has been proposed in mathematics~\cite{bakic01}, its application to data analysis has not been discussed so far.
%Meanwhile, as for %theoretical analyses of 
The theory of RKHM has been studied in mathematical physics and pure mathematics~\citep{itoh90,heo08,szafraniec10}.
On the other hand, to the best of our knowledge, as for the application of RKHM to data analysis, we can find the only literature by \citet{ye17}, where only the case of setting the space of matrices as a $C^*$-algebra is discussed.
%a connection between a generalization of SVM and RKHM is discussed.
%only the case of RKHMs for data represented as matrices is considered 
%without addressing orthonormalization.
%Thus, its applicability to data analysis was limited. 
%In this paper, we develop novel algorithms using RKHMs via the construction of orthonormal systems with $C^*$-algebra-valued inner-products.
%Addressing orthonormality in RKHMs are sometimes not obvious because an element of $C^*$-algebra is not always commutative or invertible.
%We first show the theoretical validity for constructing orthonormal systems in Hilbert C*-modules.
%Then, we derive concrete procedures for orthonormalization in RKHMs and show some theoretical results on those properties in the numerical computations.
In this paper, we develop a branch of theories on RKHM and propose a generic framework for data analysis with RKHM.
%In this paper, we propose a framework of using RKHM for data analysis and develop a kernel mean embedding to RKHMs.
%We first provide the theoretical validity of using RKHM for data analysis.
We show a theoretical property on minimization with respect to orthogonal projections and give a representer theorem in RKHMs.
These properties are fundamental for data analysis that have been investigated and applied in the cases of RKHS and vvRKHS, which has made RKHS and vvRKHS widely-accepted tools for data analysis~\citep{scholkopf01_representer}.
Moreover, we define a KME in an RKHM, and provide theoretical results about the injectivity of the proposed KME and the connection with universality of RKHM.
Note that, as is well known for RKHSs, these two properties have been actively studied to theoretically guarantee the validity of kernel-based algorithms~\citep{steinwart01,gretton07,fukumizu08,sriperumbudur11}. 
Then, we apply the developed theories to generalize kernel PCA~\citep{scholkopf01}, \textcolor{black}{analyze time-series data with the theory of dynamical system}, and analyze interaction effects for infinite dimensional data. 
%and MMD~\citep{gretton12}, which are tipical applications of kernel method originally developed for RKHS. %, under the motivation described above.

%This enables us to access and extract information of structured data such as multivariate data and functional data explicitly.
%Also, many existing notions are shown to be generalized by using the framework of RKHM and the proposed KME.

%However, 
%Our generalizations based on the developed theories enable us to access and extract rich information in structured data, which is shown partly here in the context of PCA and the analysis of interacting dynamical systems. 
%Note that, 
%To the best of our knowledge, this is the first paper to describe practical algorithms to utilize the orthonormality with respect to $C^*$-algebra-valued inner products in RKHMs for analyses of structured data.

The remainder of this paper is organized as follows. 
First, in Section~\ref{sec:bg}, we briefly review RKHS, vvRKHS, and \textcolor{black}{the definition of RKHM}. %mathematical notions and definitions required to discuss RKHM.
%In Section~\ref{sec:properties}, we show the theoretical validity to apply RKHM to data analysis.
\textcolor{black}{In Section~\ref{sec:motivation}, we provide an overview of the motivation of studying RKHM for data analysis.}
In Section~\ref{sec:RKHM}, we show {general properties of RKHM for data analysis and the connection of RKHMs with RKHSs and vvRKHSs}.
In Sections~\ref{sec:kme}, we propose a KME in RKHMs, and show the connection between the injectivity of the KME and the universality of RKHM.
Then, in Section~\ref{sec:application}, we discuss applications of the developed results to kernel PCA, \textcolor{black}{time-series data analysis}, and the analysis of interaction effects in finite or infinite dimensional data.
Finally, in Section~\ref{sec:existing}, we discuss the connection of RKHMs and the proposed KME with the existing notions, and conclude the paper in Section~\ref{sec:concl}.

\paragraph{Notations}
%small letter --> lowercase
Lowercase letters denote $\alg$-valued coefficients (often by $a,b,c,d$), vectors in a Hilbert $C^*$-module $\modu$ (often by $p,q,u,v$), or vectors in a Hilbert space $\mcl{W}$ (often by $w,h$).
Lowercase Greek letters denote measures (often by $\mu,\nu,\lambda$) or complex-valued coefficients (often by $\alpha,\beta$).
Calligraphic capital letters denote sets.
%Bold capital letters denote finite dimentional $\alg$-linear map and 
And, bold lowercase letters denote vectors in $\alg^n$ for $n\in\mathbb{N}$ (a finite dimensional Hilbert $C^*$-module).
%Small Roman letters denote vectors in $\mathbb{C}^n$ (a finite dimensional vector space).
Also, we use $\sim$ for objects related to RKHSs.
Moreover, an inner product, an absolute value, and a norm in a space or a module $\mcl{S}$ (see Definitions~\ref{def:innerproduct} and \ref{def:absolute_norm}) are denoted as $\blacket{\cdot,\cdot}_{\mcl{S}}$, $\vert\cdot\vert_{\mcl{S}}$, and $\Vert\cdot\Vert_{\mcl{S}}$, respectively.

The typical notations in this paper are listed in Table~\ref{tab1}.

%\setlongtables
\begin{table}[t]
\caption{Notation table}\label{tab1}
\vspace{.3cm}
\renewcommand{\arraystretch}{1.1}
 \begin{tabularx}{\linewidth}{|c|X|}
%\vspace{.3cm}
\hline
$\alg$ & A $C^*$-algebra\\
$1_{\alg}$ & The multiplicative identity in $\alg$\\
$\alg_+$ & The subset of $\alg$ composed of all positive elements in $\alg$\\
$\le_{\alg}$ & For $c,d\in\alg$, $c\le_{\alg}d$ means $d-c$ is positive.\\
$<_{\alg}$ & For $c,d\in\alg$, $c<d$ means $d-c$ is strictly positive, i.e., $d-c$ is positive and invertible.\\
$L^{\infty}(\Omega)$ & The space of complex-valued $L^{\infty}$ functions on a measure space $\Omega$\\
$\blin{\mcl{W}}$ & The space of bounded linear operators on a Hilbert space $\mcl{W}$\\
$\mat$  & A set of all complex-valued $m\times m$ matrix\\
%$\Vert\cdot\Vert_{\alg}$ & The norm in $\alg$ (For $\alg=\mat$, $\Vert c\Vert_{\mat}:=\sup_{\Vert\mathrm{d}\Vert_2=1}\Vert c\mathrm{d}\Vert_2$)\\
$\modu$ & A Hilbert $\alg$-module\\
%$\vert\cdot\vert$ & The $\alg$-valued absolute value in $\modu$ defined as a positive element $\vert u\vert$ such that $\vert u\vert^2=\blacket{u,u}$\\
%$\Vert\cdot\Vert$ & The norm in $\modu$ defined as $\Vert u\Vert:=\Vert\vert \blacket{u,u}\vert\Vert_{\alg}$\\
$\mcl{X}$ & A nonempty set for data\\% ($\mcl{X}=\mcl{X}^m$ for a nonempty set $\mcl{X}$ for $m$-dimensional data)\\
%$m$ & A natural number that represents the number of elements of a sample\\
$C(\mcl{X},\mcl{Y})$ & The space of $\mcl{Y}$-valued continuous functions on $\mcl{X}$ for topological spaces $\mcl{X}$ and $\mcl{Y}$\\
$n$ & A natural number that represents the number of samples\\
%$\mcl{Y}$ & A set of observed data $\mcl{Y}:=\{x_0,x_1,\ldots\}\subseteq \mcl{X}^m$ (a countable and dense subset of $\mcl{X}^m$ whose elements are completely different)\\
$k$ & An $\alg$-valued positive definite kernel\\
$\phi$ & The feature map endowed with $k$\\
$\modu_k$ & The RKHM associated with $k$\\
$\mcl{S}^{\mcl{X}}$ & The set of all functions from a set $\mcl{X}$ to a space $\mcl{S}$\\
%$\vert\cdot\vert_k$ & The $\alg$-valued absolute value in $\modu_k$ \\
%$\Vert\cdot\Vert_k$ & The norm in $\modu_k$ \\
$\tilde{k}$ & A complex-valued positive definite kernel\\
$\tilde{\phi}$ & The feature map endowed with $\tilde{k}$\\
$\hil_{\tilde{k}}$ & The RKHS associated with $\tilde{k}$\\
$\vvrkhs$ & The vvRKHS associated with $k$\\
$\measure$ & The set of all $\alg$-valued finite regular Borel measures\\
$\Phi$ & The proposed KME in an RKHM\\
$\delta_x$ & The $\alg$-valued Dirac measure defined as $\delta_x(E)=1_{\alg}$ for $x\in E$ and $\delta_x(E)=0$ for $x\notin E$\\
$\tilde{\delta}_x$ & The complex-valued Dirac measure defined as $\tilde{\delta}_x(E)=1$ for $x\in E$ and $\tilde{\delta}_x(E)=0$ for $x\notin E$\\
$\chi_E$ & The indicator function of a Borel set $E$ on $\mcl{X}$\\
%$\regular$ & The set of all real positive-valued regular measures\\
%$\bdmeasure{\nu}$ & The set of all finite regular Borel $\alg$-valued measures $\mu$ whose total variation is dominated by $\nu\in\regular$\\
$\clch$ & The space of all continuous $\alg$-valued functions on $\mcl{X}$ vanishing at infinity\\
%$\hat{\lambda}$ & The Fourier transform of an $\alg$-valued measure $\lambda$ defined as $\hat{\lambda}=\int_{\omega\in\mathbb{R}^d}e^{-\sqrt{-1}x^T\omega}d\lambda(\omega)$\\
%$\opn{supp}(\lambda)$ & The support of an $\alg$-valued measure $\lambda$ defined as $\opn{supp}(\lambda):=\{x\in\mathbb{R}^d\mid\ \mbox{for any open set $U$ such that }x\in U,\ \lambda(U)\mbox{ is positive definite}\}$\\
%$W$ & The linear operator from $\alg^n$ to $\modu_k$ composed of observed data  $\phi(x_1),\ldots,\phi(x_n)$\\
$\mathbf{G}$ & The $\alg$-valued Gram matrix defined as $\bG_{i,j}=k(x_i,x_j)$ for given samples $x_1,\ldots,x_n\in\mcl{X}$\\
$p_j$ & The $j$-th principal axis generated by kernel PCA with an RKHM\\
$r$ &  A natural number that represents the number of principal axes\\
$Df_{\bc}$ & The G\^{a}teaux derivative of a function $f:\modu\to\alg$ at $\bc\in\modu$\\
$\nabla f_{\bc}$ & The gradient of a function $f:\modu\to\alg$ at $\bc\in\modu$\\
%$\gamma(\mu,\nu,\mcl{U})$ & The MMD of real-valued probability measure $\mu$ and $\nu$ with respect to a real-valued function set $\mcl{U}$\\
%$\gamma_{\alg}(\mu,\nu,\mcl{U}_{\alg})$ & The proposed MMD of $\alg$-valued measure $\mu$ and $\nu$ with respect to a set of $\alg$-valued function $\mcl{U}_{\alg}$\\
%$(\Omega,\mcl{F})$ & A measurable space \\
%$P$ & A real-valued probability measure on $\Omega$\\
%$X_i,Y_i$ & Real-valued random variables on $\Omega$\\
%$\mu_X$ & The cross-covariance measure of $X=[X_1,\ldots,X_m]$\\
%$\ket{\psi_i}$ &  normalized vectors in $\mathbb{C}^m$ that construct a measurement.\\
%$\rho$ & A density matrix\\
%$\blacket{\cdot,\cdot}_{\opn{HS}}$, $\Vert \cdot\Vert_{\opn{HS}}$& The Hilbert--Schmidt inner product and norm\\
%$\{q_t\}_{t=1}^{\infty}$ & An orthonormal system in $\modu$ or $\modu_k$\\
%$Q$ & The linear operator from $\alg^n$ to $\modu_k$ composed of $q_1,\ldots,q_n$\\
%$\mathbf{R}$ & The $n\times n$ $\alg$-valued matrix that satisfies $\Vert W-Q\mathbf{R}\Vert\le\epsilon$\\
%$\mathbf{R}_{\opn{inv}}$ & The $n\times n$ $\alg$-valued matrix that satisfies $Q=W\mathbf{R}_{\opn{inv}}$\\ 
%$K$ & A Perron-Frobenius operator on $\modu_k$\\
%$\mathbf{K}_T$ & The estimation of $K$ with observed data $\{x_0,\ldots,x_{T-1}\}$\\
%$\hat{a}_{T,S}$ & The abnormality at $S$ computed with $\mathbf{K}_T$\\
\hline
 \end{tabularx}
\renewcommand{\arraystretch}{1} 
\end{table}

%=====
\section{Background}
\label{sec:bg}
%We briefly review existing analyses of dynamical systems with transfer operators and then Krylov subspace methods.
We briefly review RKHS and vvRKHS in Subsections~\ref{subsec:rkhs} and \ref{subsec:vv-rkhs}, respectively.
Then, we review $C^*$-algebra and $C^*$-module in Subsection~\ref{subsec:c_alg}, Hilbert $C^*$-module in Subsection~\ref{subsec:H_c_module}, and RKHM in Subsection~\ref{sec:rkhm_review}.
%In addition, we review $\alg$-valued measure and integral, which are used in the kernel mean embedding in RKHMs, in Subsection~\ref{subsec:vv_measure}.
%RKHM is a generalization of RKHS.
%The theoretical background of RKHS is given in Appendix~\ref{ap:review_rkhs}.
%We also add a review on vv-RKHS, 
%which is another generalization of RKHS, in Appendix~\ref{ap:vv-RKHS_review}.

%----------
\subsection{Reproducing kernel Hilbert space (RKHS)}\label{subsec:rkhs}
We review the theory of RKHS.
An RKHS is a Hilbert space and useful for extracting nonlinearity or higher-order moments of data~\citep{scholkopf01,saitoh16}.

We begin by introducing positive definite kernels.
Let $\mcl{X}$ be a non-empty set for data, and $\tilde{k}$ be a positive definite kernel, which is defined as follows:
\begin{defin}[Positive definite kernel]\label{def:pdk_rkhs}
A map $\tilde{k}:\mcl{X}\times \mcl{X}\to\mathbb{C}$ is called a {\em positive definite kernel} if it satisfies the following conditions:
\begin{enumerate}
    \item $\tilde{k}(x,y)=\overline{\tilde{k}(y,x)}$\; for $x,y\in\mcl{X}$,
    \item $\sum_{i,j=1}^{n}\overline{\alpha}_i\alpha_j\tilde{k}(x_i,x_j)\ge 0$\; for $n\in\mathbb{N}$, $\alpha_i\in\mathbb{C}$, $x_i\in\mcl{X}$.
\end{enumerate}
\end{defin}
Let $\tilde{\phi}:\mcl{X}\to\mathbb{C}^{\mcl{X}}$ be a map defined as $\tilde{\phi}(x)=\tilde{k}(\cdot,x)$.
With $\tilde{\phi}$, the following space as a subset of $\mathbb{C}^{\mcl{X}}$ is constructed: 
\begin{equation*}
\hil_{\tilde{k},0}:=\bigg\{\sum_{i=1}^{n}\alpha_i\tilde{\phi}(x_i)\bigg|\ n\in\mathbb{N},\ \alpha_i\in\mathbb{C},\ x_i\in\mcl{X}\bigg\}.
\end{equation*}
Then, a map $\blacket{\cdot,\cdot}_{\hil_{\tilde{k}}}:\hil_{\tilde{k},0}\times \hil_{\tilde{k},0}\to\mathbb{C}$ is defined as follows:
\begin{equation*}
\Bblacket{\sum_{i=1}^{n}\alpha_i\tilde{\phi}(x_i),\sum_{j=1}^{l}\beta_j\tilde{\phi}(y_j)}_{\rkhs}:=\sum_{i=1}^{n}\sum_{j=1}^{l}\overline{\alpha_i}\beta_j\tilde{k}(x_i,y_j).
\end{equation*}
By the properties in Definition~\ref{def:pdk_rkhs} of $\tilde{k}$, $\blacket{\cdot,\cdot}_{\rkhs}$ is well-defined, satisfies the axiom of inner products, and has the reproducing property, that is,
\begin{equation*}
\sblacket{\tilde{\phi}(x),v}_{\rkhs}=v(x)
\end{equation*}
for $v\in\hil_{\tilde{k},0}$ and $x\in\mcl{X}$.

The completion of $\hil_{\tilde{k},0}$ 
is called the {\em RKHS} associated with $\tilde{k}$ and denoted as $\hil_{\tilde{k}}$.
It can be shown that $\blacket{\cdot,\cdot}_{\rkhs}$ is extended continuously to $\hil_{\tilde{k}}$ and the map $\hil_{\tilde{k}}\ni v\mapsto (x\mapsto\sblacket{\tilde{\phi}(x),v}_{\rkhs})\in\mathbb{C}^{\mcl{X}}$ is injective.
Thus, $\hil_{\tilde{k}}$ is regarded to be a subset of $\mathbb{C}^{\mcl{X}}$ and has the reproducing property.
Also, $\hil_{\tilde{k}}$ is determined uniquely.

The map $\tilde{\phi}$ maps data into $\hil_{\tilde{k}}$ and is called the {\em feature map}.
Since the dimension of $\hil_{\tilde{k}}$ is higher (often infinite dimensional) than that of $\mcl{X}$, complicated behaviors of data in $\mcl{X}$ are expected to be transformed into simple ones in $\hil_{\tilde{k}}$~\citep{scholkopf01}.

\subsection{Vector-valued RKHS (vvRKHS)}\label{subsec:vv-rkhs}
We review the theory of vvRKHS. Complex-valued functions in RKHSs are generalized to vector-valued functions in vvRKHSs.
Similar to the case of RKHS, we begin by introducing positive definite kernels.
Let $\mcl{X}$ be a non-empty set for data and $\mcl{W}$ be a Hilbert space.
In addition, let $k$ be an operator-valued positive definite kernel, which is defined as follows:
\begin{defin}[Operator-valued positive definite kernel]\label{def:pdk_vv-rkhs}
A map $k:\mcl{X}\times \mcl{X}\to\mcl{B}(\mcl{W})$ is called an {\em operator-valued positive definite kernel} if it satisfies the following conditions:
\begin{enumerate}
    \item $k(x,y)=k(y,x)^*$\; for $x,y\in\mcl{X}$,
    \item $\sum_{i,j=1}^{n}\blacket{w_i,k(x_i,x_j)w_j}_{\mcl{W}}\ge 0$\; for $n\in\mathbb{N}$, $w_i\in\mcl{W}$, $x_i\in\mcl{X}$.
\end{enumerate}
Here, $^*$ represents the adjoint.
\end{defin}
Let $\phi:\mcl{X}\to\blin{\mcl{W}}^{\mcl{X}}$ be a map defined as $\phi(x)=k(\cdot,x)$.
With $\phi$, the following space as a subset of $\mcl{W}^{\mcl{X}}$ is constructed: 
\begin{equation*}
\hil_{k,0}^{\opn{v}}:=\bigg\{\sum_{i=1}^n\phi(x_i)w_i\bigg|\ n\in\mathbb{N},\ w_i\in\mcl{W},\ x_i\in\mcl{X}\bigg\}.
\end{equation*}
Then, a map $\blacket{\cdot,\cdot}_{\vvrkhs}:\hil_{k,0}^{\opn{v}}\times \hil_{k,0}^{\opn{v}}\to\mathbb{C}$ is defined as follows:
\begin{align*}
&\Bblacket{\sum_{i=1}^n\phi(x_i)w_i,\sum_{j=1}^l\phi(y_j)h_j}_{\vvrkhs}
:=\sum_{i=1}^n\sum_{j=1}^l\blacket{w_i,k(x_i,y_j)h_j}_{\mcl{W}}.
\end{align*}
By the properties in Definition~\ref{def:pdk_vv-rkhs} of $k$, $\blacket{\cdot,\cdot}_{\vvrkhs}$ is well-defined, satisfies the axiom of inner products, and has the reproducing property, that is,
\begin{equation}
\blacket{\phi(x)w,u}_{\vvrkhs}=\blacket{w,u(x)}_{\mcl{W}}\label{eq:reproducing_vvRKHS}
\end{equation}
for $u\in\hil_{k,0}^{{\opn{v}}}$, $x\in\mcl{X}$, and $w\in\mcl{W}$.

The completion of $\hil_{k,0}^{\opn{v}}$ 
is called the {\em vvRKHS} associated with $k$ and denoted as $\hil_k^{\opn{v}}$. %$\hil_{k,{\opn{v}}}$.
Note that since an inner product in $\hil_k^{\opn{v}}$ is defined with the complex-valued inner product in $\mcl{W}$, it is complex-valued.
%Moreover, the reproducing property~\eqref{eq:reproducing_vvRKHS} means we cannot reproduce the vector-valued function $u$ by calculating one inner product, and if $\mcl{W}$ is an infinite dimensional space, we cannot completely reproduce $u$ by finite calculations.

\subsection{$C^*$-algebra and Hilbert $C^*$-module}% for nonlinear dynamical systems}
\label{subsec:c_alg}
A $C^*$-algebra and a $C^*$-module are %suitable
generalizations of the space of complex numbers $\mathbb{C}$ and a vector space, respectively.  
In this paper, we denote a $C^*$-algebra by $\mathcal{A}$ and a $C^*$-module by $\modu$, respectively. 
As we see below, many complex-valued notions can be generalized to $\mathcal{A}$-valued. 

A $C^*$-algebra is defined as a Banach space equipped with a product structure, an involution  $(\cdot)^*:\mathcal{A}\rightarrow\mathcal{A}$, \red{and additional properties}.
We denote the norm of $\mathcal{A}$ by $\Vert\cdot\Vert_\mathcal{A}$.
\color{black}
\begin{defin}[Algebra]
A set $\alg$ is called an {\em algebra} on a filed $\mathbb{F}$ if it is a vector space equipped with an operation $\cdot:\alg\times\alg\to\alg$ which satisfies the following conditions for $b,c,d\in\alg$ and $\alpha\in\mathbb{F}$:\vspace{.2cm}

 \leftskip=10pt
 $\bullet$ $(b+c)\cdot d=\red{b}\cdot d+c\cdot d$,\qquad
 $\bullet$ $b\cdot(c+d)=b\cdot c+b\cdot d$,\qquad
 $\bullet$ $(\alpha c)\cdot d=\alpha(c\cdot d)=c\cdot(\alpha d)$.\vspace{.2cm}
\leftskip=0pt

\noindent{The symbol $\cdot$ is omitted when it does not cause confusion.}
\end{defin}
\color{black}
\begin{defin}[$C^*$-algebra]~\label{def:c*_algebra}
A set $\mcl{A}$ is called a {\em $C^*$-algebra} if it satisfies the following conditions:

\begin{enumerate}
 \item $\mcl{A}$ is an algebra over $\mathbb{C}$ and \red{equipped with} a bijection $(\cdot)^*:\mcl{A}\to\mcl{A}$ that satisfies the following conditions for $\alpha,\beta\in\mathbb{C}$ and $c,d\in\mcl{A}$:

 \leftskip=10pt
 $\bullet$ $(\alpha c+\beta d)^*=\overline{\alpha}c^*+\overline{\beta}d^*$,\qquad
 $\bullet$ $(cd)^*=d^*c^*$,\qquad
 $\bullet$ $(c^*)^*=c$.

 \leftskip=0pt
 \item $\mcl{A}$ is a normed space with $\Vert\cdot\Vert_{\alg}$, and for $c,d\in\mcl{A}$, $\Vert cd\Vert_{\alg}\le\Vert c\Vert_{\alg}\Vert d\Vert_{\alg}$ holds.
 In addition, $\mcl{A}$ is complete with respect to $\Vert\cdot\Vert_{\alg}$.

 \item For $c\in\mcl{A}$, $\Vert c^*c\Vert_{\alg}=\Vert c\Vert_{\alg}^2$ holds.
\end{enumerate}
\end{defin} 
\begin{defin}[Multiplicative identity and unital $C^*$-algebra]\label{def:multiplicative_identity}
The {\em multiplicative identity} of $\alg$ is the element $a\in\alg$ which satisfies $ac=ca=c$ for any $c\in\alg$.
We denote by $1_{\alg}$ the multiplicative identity of $\alg$.
If a $C^*$-algebra $\alg$ has the multiplicative identity, then it is called a {\em unital $C^*$-algebra}.
\end{defin}
%  
%In fact, by the Gelfand--Naimark theorem, any $C^*$-algebra can be regarded as a subspace of $\mcl{B}(\mcl{W})$ for some Hilbert space $\mcl{W}$,
%A main example of $C^*$-algebras is 
%where $\mcl{B}(\hil)$ is the set of all bounded linear operators on a Hilbert space $\hil$. 
%Therefore, for applications, it is sufficient to consider the case of $\blin{\hil}$.
%In particular, if $\hil$ is finite dimensional, it is equal to the space of matrices.
\begin{exam}\label{ex:vonNeumann}
Important examples of (unital) $C^*$-algebras are $L^{\infty}(\Omega)$ and $\blin{\mcl{W}}$, i.e., the space of complex-valued $L^{\infty}$ functions on a \red{$\sigma$-finite measure space} $\Omega$ and  the space of bounded linear operators on a Hilbert space $\mcl{W}$, respectively.
\begin{enumerate}
\item For $\alg=L^{\infty}(\Omega)$, the product of two functions $c,d\in\alg$ is defined as $(cd)(t)=c(t)d(t)$ for any $t\in\Omega$, the involution is defined as $c(t)=\overline{c(t)}$, the norm is the $L^{\infty}$-norm, and the multiplicative identity is the constant function whose value is $1$ at almost everywhere $t\in\Omega$.
\item For $\alg=\mcl{B}(\mcl{W})$, the product structure is the product (the composition) of operators,
the involution is the adjoint,
the norm $\Vert\cdot\Vert_\mathcal{A}$ is the operator norm, and the multiplicative identity is the identity map.
\end{enumerate}
\end{exam}
\textcolor{black}{In fact, by the Gelfand--Naimark theorem (see, for example, \citet{murphy90}), any $C^*$-algebra can be regarded as a subalgebra of $\mcl{B}(\mcl{W})$ for some Hilbert space $\mcl{W}$.
Therefore, considering the case of $\alg=\mcl{B}(\mcl{W})$ is sufficient for applications.}

{\em The positiveness} is also important in $C^*$-algebras. 
\begin{defin}[Positive]~\label{def:positive}
An element $c$ of $\mcl{A}$ is called {\em positive} if there exists $d\in\mcl{A}$ such that $c=d^*d$ holds.
%For a positive element $c\in\mcl{A}$, we denote $c\ge 0$.
\textcolor{black}{For a unital $C^*$-algebra $\alg$}, if a positive element $c\in\alg$ is invertible, i.e., there exists $d\in\alg$ such that $cd=dc=1_{\alg}$, then $c$ is called {\em strictly positive}.
For $c,d\in\alg$, we denote $c\le_{\alg} d$ if $d-c$ is positive and $c<_{\alg}d$ if $d-c$ is strictly positive.
We denote by $\alg_+$ the subset of $\alg$ composed of all positive elements in $\alg$.
\end{defin}
\begin{exam}\label{ex:positive}
\begin{enumerate}
\item For $\alg=L^{\infty}(\Omega)$, a function $c\in\alg$ is positive if and only if $c(t)\ge 0$ for almost everywhere $t\in\Omega$, and strictly positive if and only if $c(t)>0$ for almost everywhere $t\in\Omega$.
\item For $\mathcal{A}=\mcl{B}(\mcl{W})$, the positiveness is equivalent to the positive semi-definiteness of \red{self-adjoint} operators and the strictly positiveness is equivalent to the positive definiteness of \red{self-adjoint} operators.
\end{enumerate}
\end{exam}
The positiveness provides us the (pre) order in $\mathcal{A}$ and, thus, enables us to consider optimization problems in $\mathcal{A}$.
\begin{defin}[Supremum and infimum]\label{def:sup}
\begin{enumerate}
\item For a subset $\mcl{S}$ of $\alg$, $a\in\alg$ is said to be an {\em upper bound} with respect to the order $\le_{\alg}$, if $d\le_{\alg}a$ for any $d\in\mcl{S}$.
Then, $c\in\alg$ is said to be a {\em supremum} of $\mcl{S}$, if $c\le_{\alg} a$ for any upper bound $a$ of $\mcl{S}$.
\item For a subset $\mcl{S}$ of $\alg$, $a\in\alg$ is said to be a {\em lower bound} with respect to the order $\le_{\alg}$, if $a\le_{\alg} d$ for any $d\in\mcl{S}$.
Then, $c\in\alg$ is said to be a {\em infimum} of $\mcl{S}$, if $a\le_{\alg} c$ for any lower bound $a$ of $\mcl{S}$. 
\end{enumerate}
\end{defin}
%The set of all positive elements of $\alg$ is denoted as $\alg_+$.
%A {\em $C^*$-module} $\mathcal{M}$ over $\mathcal{A}$ is  
%a linear space $\mathcal{M}$ equipped with a {\em right $\mathcal{A}$-multiplication}. 
% 
We now introduce a $C^*$-module over $\alg$, which is a generalization of the vector space.
\begin{defin}[Right multiplication]\label{def:multiplication}
Let $\modu$ be an abelian group with operation $+$.
For $c,d\in\alg$ and $u,v\in\modu$, if an operation $\cdot:\modu\times\alg\to\modu$ satisfies
\begin{enumerate}
\item $(u+v)\cdot c=u\cdot c+v\cdot c$,
\item $u\cdot (c+d)=u\cdot c+u\cdot d$,
\item $u\cdot (cd)=(u\cdot \red{c})\cdot \red{d}$,
\item $u\cdot 1_{\alg}=u$ \textcolor{black}{if $\alg$ is unital},
\end{enumerate}
%where $1_{\alg}$ is the multiplicative identity of $\alg$,
then, $\cdot$ is called a (right) {\em $\alg$-multiplication}.
The multiplication $u\cdot c$ is usually denoted as $uc$.
\end{defin}
\begin{defin}[$C^*$-module]\label{def:c*module}
Let $\modu$ be an abelian group with operation $+$.
If $\modu$ \red{is equipped with} a (right) $\alg$-multiplication, $\modu$ is called a (right) {\em $C^*$-module} over $\alg$.
\end{defin}
%
%For $u\in \mathcal{M},c\in\mathcal{A}$, the right $\alg$-multiplication of $u$ with $c$ is denoted as $uc$. 
In this paper, we consider column vectors rather than row vectors for representing $\alg$-valued coefficients, and column vectors act on the right.
Therefore, we consider right multiplications.
However, considering row vectors and left multiplications instead of column vectors and right multiplications is also possible.
% for making algorithms with RKHMs (with $\alg$-multiplications) compatible with those with RKHSs.

\subsection{Hilbert $C^*$-module}\label{subsec:H_c_module}
A Hilbert $C^*$-module is a generalization of a Hilbert space.
We first consider an $\alg$-valued inner product, which is a generalization of a complex-valued inner product, and then, introduce the definition of a Hilbert $C^*$-module.
\begin{defin}[$\alg$-valued inner product]\label{def:innerproduct}
A \red{$\mathbb{C}$-linear map with respect to the second variable} $\blacket{\cdot,\cdot}_{\modu}:\modu\times\modu\to\alg$ is called an $\alg$-valued {\em inner product} if it satisfies the following properties for $u,v,p\in\modu$ 
and $c,d\in\alg$:
\begin{enumerate}
 \item $\blacket{u,vc+pd}_{\modu}=\blacket{u,v}_{\modu}c+\blacket{u,p}_{\modu}d$,
 \item $\blacket{v,u}_{\modu}=\blacket{u,v}_{\modu}^*$,
 \item $\blacket{u,u}_{\modu}\ge_{\alg} 0$,
 \item If $\blacket{u,u}_{\modu}=0$ then $u=0$.
\end{enumerate}
\end{defin}

\begin{defin}[$\alg$-valued absolute value and norm]\label{def:absolute_norm}
For $u\in\modu$, the {\em $\alg$-valued absolute value} $\vert u\vert_{\modu}$ on $\modu$ is defined by the positive element $\vert u \vert_{\modu}$ of $\alg$ such that $\vert u\vert_{\modu}^2=\blacket{u,u}_{\modu}$.   
%The $\alg$-valued absolute value $\vert\cdot\vert$ defines 
The (real-valued) norm $\Vert \cdot\Vert_{\modu}$ on $\modu$ is defined by $\Vert u\Vert_{\modu} =\big\Vert\vert u\vert_{\modu}\big\Vert_\alg$. 
%By definition,
%The norm $\Vert\cdot\Vert$ is regarded as an aspect of the $\alg$-valued absolute value $\vert\cdot\vert$.  We call $\modu$ equipped with $\blacket{\cdot,\cdot}$ a {\em Hilbert $C^*$-module} if $\modu$ is complete w.r.t.\@ the norm $\Vert\cdot\Vert$.
\end{defin}
Since the absolute value $\vert\cdot\vert_{\modu}$ takes values in $\alg$, it behaves more complicatedly.
\textcolor{black}{For example, the triangle inequality does not hold for the absolute value}.
However, it provides us with more information than the norm $\Vert\cdot\Vert_{\modu}$ (which is real-valued).
For example, let $\modu={\alg=}\mat$, $c=\opn{diag}\{\alpha,0,\ldots,0\}$, and $d=\opn{diag}\{\alpha,\ldots,\alpha\}$, where $\alpha\in \mathbb{C}$. 
Then, $\Vert c\Vert_{\modu}=\Vert d\Vert_{\modu}$, but $\vert c\vert_{\modu}\neq\vert d\vert_{\modu}$.
For a self-adjoint matrix, the absolute value describes the whole spectrum of it, but the norm only describes the largest eigenvalue.
\begin{defin}[Hilbert $C^*$-module]\label{def:hil_c*module}
Let $\modu$ be a (right) $C^*$-module over $\alg$ equipped with an $\alg$-valued inner product defined in Definition~\ref{def:innerproduct}.
If $\modu$ is complete with respect to the norm $\Vert \cdot\Vert_{\modu}$, it is called a {\em Hilbert $C^*$-module} over $\alg$ or {\em Hilbert $\alg$-module}.
\end{defin}
\begin{exam}\label{ex:An}
A simple example of Hilbert $C^*$ modules over $\alg$ is $\alg^n$ for a natural number $n$.
The $\alg$-valued inner product between $\mathbf{c}=[c_1,\ldots,c_n]^T$ and $\mathbf{d}=[d_1,\ldots,d_n]^T$ is defined as $\blacket{\mathbf{c},\mathbf{d}}_{\alg^n}=\sum_{i=1}^nc_i^*d_i$.
The absolute value and norm in $\alg^n$ are given as $\vert \mathbf{c}\vert_{\alg^n}^2=(\sum_{i=1}^nc_i^*c_i)$ and $\Vert \mathbf{c}\Vert_{\alg^n}=\Vert \sum_{i=1}^nc_i^*c_i\Vert_{\alg}^{1/2}$, respectively.
\end{exam}
Similar to the case of Hilbert spaces, the following Cauchy--Schwarz inequality for $\alg$-valued inner products is available~\citep[Proposition 1.1]{lance95}.
\begin{lem}[Cauchy--Schwarz inequality]\label{lem:c-s}
For $u,v\in\modu$, the following inequality holds:
\begin{equation*}
\vert\blacket{u,v}_{\modu}\vert_{\alg}^2\;\le_{\alg}\Vert u\Vert_{\modu}^2\blacket{v,v}_{\modu}.
\end{equation*}
\end{lem}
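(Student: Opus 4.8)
The plan is to imitate the classical completion-of-the-square proof of the Cauchy--Schwarz inequality for Hilbert spaces, but with an $\alg$-valued coefficient, exploiting two facts about the order $\le_{\alg}$: it is preserved under the conjugation $c\mapsto a^*ca$, and every positive element $c\in\alg$ is dominated by $\Vert c\Vert_{\alg}1_{\alg}$.

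First I would dispose of the degenerate case $\Vert u\Vert_{\modu}=0$. Then $\vert u\vert_{\modu}=0$ in $\alg$, hence $\blacket{u,u}_{\modu}=\vert u\vert_{\modu}^2=0$, so $u=0$ by the fourth axiom in Definition~\ref{def:innerproduct}; additivity of the inner product in its first variable gives $\blacket{0,v}_{\modu}=\blacket{0,v}_{\modu}+\blacket{0,v}_{\modu}$, so $\blacket{u,v}_{\modu}=0$ and both sides of the claimed inequality vanish. Hence assume $\Vert u\Vert_{\modu}>0$. Replacing $u$ by $u/\Vert u\Vert_{\modu}$ and, at the very end, multiplying the resulting inequality by the nonnegative scalar $\Vert u\Vert_{\modu}^2$, we may and do assume $\Vert u\Vert_{\modu}=1$.

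Now set $z:=\blacket{u,v}_{\modu}\in\alg$. Using the module axioms and the inner-product axioms (linearity in the second variable and $\blacket{p,q}_{\modu}^*=\blacket{q,p}_{\modu}$), one computes $\blacket{uz,uz}_{\modu}=z^*\blacket{u,u}_{\modu}z$, $\blacket{uz,v}_{\modu}=z^*z$, and $\blacket{v,uz}_{\modu}=z^*z$, whence
\begin{equation*}
0\le_{\alg}\blacket{uz-v,\,uz-v}_{\modu}=z^*\blacket{u,u}_{\modu}z-2z^*z+\blacket{v,v}_{\modu}.
\end{equation*}
At this point I would invoke two facts. (i) If $c\le_{\alg}d$ then $a^*ca\le_{\alg}a^*da$ for every $a\in\alg$; this is immediate from Definition~\ref{def:positive}, since $d-c=e^*e$ forces $a^*(d-c)a=(ea)^*(ea)$. (ii) For a positive $c\in\alg$ one has $c\le_{\alg}\Vert c\Vert_{\alg}1_{\alg}$ (after adjoining a unit if $\alg$ is non-unital, noting that positivity of an element of $\alg$ is unchanged in the unitization), and here $\Vert\blacket{u,u}_{\modu}\Vert_{\alg}=\Vert\,\vert u\vert_{\modu}\,\Vert_{\alg}^2=\Vert u\Vert_{\modu}^2=1$ by the $C^*$-identity and Definition~\ref{def:absolute_norm}. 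Combining (i) and (ii) yields $z^*\blacket{u,u}_{\modu}z\le_{\alg}z^*z$, so the displayed inequality collapses to $0\le_{\alg}-z^*z+\blacket{v,v}_{\modu}$, i.e. $\vert\blacket{u,v}_{\modu}\vert_{\alg}^2=z^*z\le_{\alg}\blacket{v,v}_{\modu}$, which is the assertion in the normalized case. Undoing the normalization gives the general statement.

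The algebraic bookkeeping---expanding the $\alg$-valued inner product while keeping track of which slot is linear and which is conjugate-linear---requires care, but the only substantive ingredient is fact (ii), the spectral bound $c\le_{\alg}\Vert c\Vert_{\alg}1_{\alg}$ for positive $c$, together with the reduction to the unital, norm-one case. I expect that to be the main obstacle to a fully self-contained argument, although it is entirely standard $C^*$-algebra theory and is precisely what is packaged in the cited \citet{lance95}.
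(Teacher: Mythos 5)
Your proof is correct. The paper does not prove this lemma itself but cites it from \citet[Proposition 1.1]{lance95}, and your argument is essentially that standard one: expand $0\le_{\alg}\blacket{uz-v,uz-v}_{\modu}$ with $z=\blacket{u,v}_{\modu}$ and use $\blacket{u,u}_{\modu}\le_{\alg}\Vert u\Vert_{\modu}^2 1_{\alg}$ together with order preservation under $c\mapsto z^*cz$; the only facts you use implicitly beyond what you state (e.g.\ that $\alg_+$ is closed under addition, so $0\le_{\alg}A\le_{\alg}B$ gives $0\le_{\alg}B$) are standard $C^*$-algebra theory, so the argument goes through.
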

%We remark the proof of Cauchy--Shwarz inequality only requires properties 1 and 2 about $\alg$-valued inner products in Definition~\ref{def:innerproduct}.
%
An important property associated with an inner product is the orthonormality.
The orthonormality plays an important role in data analysis.
For example, an orthonormal basis constructs orthogonal projections and an orthogonally projected vector minimizes the deviation from its original vector in the projected space.
%We begin with, the definition of an orthonormal system and orthonormal basis in a Hilbert $C^*$-module $\modu$.
Therefore, we also introduce the orthonormality in Hilbert $C^*$-module. See, for example, Definition 1.2 in~\citep{bakic01} for more details.
\begin{defin}[Normalized]\label{def:normalized}
A vector $q\in\mcl{M}$ is {\em normalized} if 
 $0\neq\blacket{q,q}_{\modu}=\blacket{q,q}_{\modu}^2$.
\end{defin}
Note that in the case of a general $C^*$-valued inner product, for a normalized vector $q$, $\blacket{q,q}_{\modu}$ is not always equal to the identity of $\alg$ in contrast to the case of a complex-valued inner product.
\begin{defin}[Orthonormal system and basis]\label{def:orthonormal}
Let $\mcl{I}$ be an index set.
A set $\mathcal{S}=\{q_i\}_{i\in\mcl{I}}\subseteq\modu$ is called an {\em orthonormal system (ONS)} of $\mcl{M}$ if 
$q_i$ is normalized for any $i\in\mcl{I}$
and $\blacket{q_i,q_j}_{\modu}=0$ for $i\neq j$. 
We call $\mathcal{S}$ an {\em orthonormal basis (ONB)} if \red{the module generated by $\mathcal{S}$} is an ONS and dense in $\modu$.
\end{defin}
In Hilbert $C^*$-modules, $\alg$-linear is often used instead of $\mathbb{C}$-linear.
\begin{defin}[$\alg$-linear operator]\label{def:a_lin_op}
Let $\modu_1,\modu_2$ be Hilbert $\alg$-modules.
A linear map $L:\modu_1\to\modu_2$ is referred to as {\em $\alg$-linear} if it satisfies $L(uc)=(Lu)c\;$ for any $u\in\modu$ and $c\in\alg$.
\end{defin}
\begin{defin}[$\alg$-linearly independent]\label{def:a_lin_indep}
The set $\mcl{S}$ of $\modu$ is said to be {\em $\alg$-linearly independent} if it satisfies the following condition: For any finite subset $\{v_1,\ldots,v_n\}$ of $\mcl{S}$, if $\sum_{i=1}^nv_ic_i=0$ for $c_i\in\alg$, then $c_i=0$ for $i=1,\ldots,n$.
\end{defin}

For further details about $C^*$-algebra, $C^*$-module, and Hilbert $C^*$-module, refer to~\citet{murphy90,lance95}.

\subsection{Reproducing kernel Hilbert $C^*$-module (RKHM)}\label{sec:rkhm_review}
We summarize the theory of RKHM, which is discussed, for example, in~\citet{heo08}. 

Similar to the case of RKHS, we begin by introducing an $\alg$-valued generalization of a positive definite kernel on a non-empty set $\mcl{X}$ for data.
\begin{defin}[$\alg$-valued positive definite kernel]\label{def:pdk_rkhm}
 An $\mcl{A}$-valued map $k:\ \mcl{X}\times \mcl{X}\to\mcl{A}$ is called a {\em positive definite kernel} if it satisfies the following conditions: 
\begin{enumerate}
 \item $k(x,y)=k(y,x)^*$ \;for $x,y\in\mcl{X}$,
 \item $\sum_{i,j=1}^nc_i^*k(x_i,x_j)c_j\ge_{\alg} 0$ \;for $n\in\mathbb{N}$, $c_i\in\alg$, $x_i\in\mcl{X}$.
\end{enumerate}
\end{defin}
%\color{red}
\begin{exam}\label{ex:pdk1}
\begin{enumerate}
\item Let $\mcl{X}=C([0,1]^m)$.
Let $\alg=L^{\infty}([0,1])$ and let $k:\mcl{X}\times\mcl{X}\to \alg$ be defined as $k(x,y)(t)=\int_{[0,1]^m}\red{\overline{(t-x(s))}}(t-y(s))ds$ for $t\in[0,1]$.
Then, for $x_1,\ldots,x_n\in\mcl{X}$, $c_1,\ldots,c_n\in \alg$ and $t\in[0,1]$, we have
\begin{align*}
\sum_{i,j=1}^nc_i^*(t)k(x_i,x_j)(t)c_j(t)&=\int_{[0,1]^m}\sum_{i,j=1}^n\red{\overline{c_i(t)(t-x_i(s))}}(t-x_j(s))c_j(t)ds\\
&=\int_{[0,1]^m}\sum_{i=1}^n\red{\overline{c_i(t)(t-x_i(s))}}\sum_{j=1}^n(t-x_j(s))c_j(t)ds\ge 0
\end{align*}
for $t\in[0,1]$.
Thus, $k$ is an $\alg$-valued positive definite kernel.
\item Let $\alg=L^{\infty}([0,1])$ and $k:\mcl{X}\times\mcl{X}\to \alg$ be defined such that $k(x,y)(t)$ is a complex-valued positive definite kernel for any $t\in[0,1]$.
Then, $k$ is an $\alg$-valued positive definite kernel.
\item Let $\mcl{W}$ be a separable Hilbert space and let $\{e_i\}_{i=1}^{\infty}$ be an orthonormal basis of $\mcl{W}$.
Let $\alg=\blin{\mcl{W}}$.
Let $k_i:\mcl{X}\times \mcl{X}\to\mathbb{C}$ be a complex-valued positive definite kernel for any $i=1,2,\ldots$.
\red{Assume for any $x\in\mcl{X}$, there exists $C>0$ such that for any $i=1,2,\ldots$, $\vert k_i(x,x)\vert \le C$ holds.}
Let $k:\mcl{X}\times\mcl{X}\to\alg$ be defined as $k(x,y)e_i=k_i(x,y)e_i$.
Then, for $x_1,\ldots,x_n\in\mcl{X}$, $c_1,\ldots,c_n\in\alg$ and $w\in\mcl{W}$, we have
\begin{align*}
\bblacketg{w,\bigg(\sum_{i,j=1}^nc_i^*k(x_i,x_j)c_j\bigg)w}_{\mcl{W}}&=\sum_{i,j=1}^n\sum_{l=1}^{\infty}\blacket{\alpha_{i,l}e_l,k(x_i,x_j)\alpha_{j,l}e_l}_{\mcl{W}}\\
&=\sum_{l=1}^{\infty}\sum_{i,j=1}^n\overline{\alpha_{i,l}}\alpha_{j,l}\tilde{k}_l(x_i,x_j)\ge 0,
\end{align*}
where $c_iw=\sum_{l=1}^{\infty}\alpha_{i,l}e_l$ is the expansion with respect to $\{e_i\}_{i=1}^{\infty}$.
Thus, $k$ is an $\alg$-valued positive definite kernel. % and $\phi(x)\in\clch$.
\item Let $\mcl{X}=C(\Omega,\mcl{Y})$ and $\mcl{W}=L^2(\Omega)$ for a \red{topological space $\Omega$ with a finite Borel measure} and a topological space $\mcl{Y}$.
Let $\alg=\mcl{B}(\mcl{W})$, and $\tilde{k}:\mcl{Y}\times\mcl{Y}\to\mathbb{C}$ be a complex-valued \red{bounded} continuous positive definite kernel.
Moreover, let $k:\mcl{X}\times\mcl{X}\to\alg$ be defined as $(k(x,y)w)(s)=\int_{t\in \Omega}\tilde{k}(x(s),y(t))w(t)dt$.
Then, for $x_1,\ldots,x_n\in\mcl{X}$, $c_1,\ldots,c_n\in\alg$ and $w\in\mcl{W}$, we have
\begin{align*}
\bblacketg{w,\bigg(\sum_{i,j=1}^nc_i^*k(x_i,x_j)c_j\bigg)w}_{\mcl{W}}&=\int_{t\in\Omega}\int_{s\in\Omega}\sum_{i,j=1}^n\overline{d_i(s)}\tilde{k}(x_i(s),x_j(t))d_j(t)dsdt\ge 0,
\end{align*}
where $d_i=c_iw$.
Thus, $k$ is an $\alg$-valued positive definite kernel.
\end{enumerate}
%Examples of complex-valued $c_0$-kernels are Gaussian, Laplacian, and $B_{2n+1}$-spline.
\end{exam}

\color{black}
Let $\phi:\mcl{X}\to\alg^{\mcl{X}}$ be the {\em feature map} associated with $k$, which is defined as $\phi(x)=k(\cdot,x)$ for $x\in\mcl{X}$.
Similar to the case of RKHS, we construct the following $C^*$-module composed of $\alg$-valued functions by means of $\phi$: %the set of all $\mathbb{C}$-valued functions on $\mcl{X}$:
\begin{equation*}
\modu_{k,0}:=\bigg\{\sum_{i=1}^{n}\phi(x_i)c_i\bigg|\ n\in\mathbb{N},\ c_i\in\alg,\ x_i\in\mcl{X}\bigg\}.
\end{equation*}
An $\alg$-valued map $\blacket{\cdot,\cdot}_{\modu_k}:\modu_{k,0}\times \modu_{k,0}\to\alg$ is defined as follows:
\begin{equation*}
\bblacketg{\sum_{i=1}^{n}\phi(x_i)c_i,\sum_{j=1}^{l}\phi(y_j)d_j}_{\modu_k}:=\sum_{i=1}^{n}\sum_{j=1}^{l}c_i^*k(x_i,y_j)d_j.
\end{equation*}
By the properties in Definition~\ref{def:pdk_rkhm} of $k$, $\blacket{\cdot,\cdot}_{\modu_k}$ is well-defined and has the reproducing property
\begin{equation*}
\blacket{\phi(x),v}_{\modu_k}=v(x)
\end{equation*}
for $v\in\modu_{k,0}$ and $x\in\mcl{X}$.
Also, it satisfies the properties in Definition~\ref{def:innerproduct}. % (Proposition~\ref{prop:inner_product}).
As a result, $\blacket{\cdot,\cdot}_{\modu_k}$ is shown to be an $\alg$-valued inner product.

The {\em reproducing kernel Hilbert $\alg$-module (RKHM)} associated with $k$ is defined as the completion of $\modu_{k,0}$.  
We denote by $\modu_k$ the RKHM associated with $k$.  

\color{\cont}
\citet{heo08} focused on the case where a group acts on $\mcl{X}$ and investigated corresponding actions on RKHMs.
Moreover, he considered the space of operators on Hilbert $\alg$-module and proved that for each operator-valued positive definite kernel associated with a group and cocycle, there is a corresponding representation on the Hilbert $C^*$-module associated with the positive definite kernel.

\color{\motiv}
\section{Application of RKHM to functional data}\label{sec:motivation}
In this section, we provide an overview of the motivation for studying RKHM for data analysis.
We especially focus on the application of RKHM to functional data.

Analyzing functional data has been researched to take advantage of the additional information implied by the smoothness of functions underlying data~\citep{ramsay05,levitin07,wang16}. 
By describing data as functions, we obtain information as functions such as derivatives.
Applying kernel methods to functional data is also proposed~\citep{kadri16}.
In these frameworks, the functions are assumed to be vectors in a Hilbert space such as $L^2(\Omega)$ for a measure space $\Omega$, or they are embedded in an RKHS or vvRKHS.
Then, analyses are addressed in these Hilbert spaces.

However, since functional data itself is infinite-dimensional data, Hilbert spaces are not always sufficient for extracting its continuous behavior.
This is because the inner products in Hilbert spaces are complex-valued, degenerating or failing to capture the continuous behavior of the functional data.
We compare algorithms in Hilbert spaces and those in Hilbert $C^*$-modules and show advantages of algorithms in Hilbert $C^*$-modules over those in Hilbert spaces, which are summarized in Figure~\ref{fig:advantages}.
We first consider algorithms in Hilbert spaces for analyzing functional data $x_1,x_2,\ldots\in C(\Omega,\mcl{X})$, where $\Omega$ is a \red{$\sigma$-finite measure space} and $\mcl{X}$ is a Hilbert space.
There are two possible typical patterns of algorithms in Hilbert spaces.
The first pattern (Pattern 1 in Fig.~\ref{fig:advantages}) is regarding each function $x_i$ as a vector in a Hilbert space $\hil$ containing $C(\Omega,\mcl{X})$.
In this case, the inner product $\blacket{x_i,x_j}_{\hil}$ between two functions $x_i$ and $x_j$ is single  complex-valued although $x_i$ and $x_j$ are functions.
Therefore, information of the value of functions at each point degenerates into a complex value.
The second pattern (Pattern 2 in Fig.~\ref{fig:advantages}) is discretizing each function $x_i$ as $x_i(t_0),x_i(t_1),\ldots$ for $t_0,t_1,\ldots\in\Omega$ and regarding each discretized value $x_i(t_l)$ as a vector in the Hilbert space $\mathcal{X}$.
In this case, we obtain the complex-valued inner product $\blacket{x_i(t_l),x_j(t_l)}_{\mcl{X}}$ at each point $t_l\in\Omega$.
However, because of the discretization, continuous behaviors, for example, derivatives, total variation, and frequency components, of the function $x_i$ are lost.
Algorithms of both patterns in the Hilbert spaces proceed by using the computed complex-valued inner products.
As a result, capturing features of functions with the algorithms in the Hilbert spaces is difficult.
On the other hand, if we regard each function $x_i$ as a vector in a Hilbert $C^*$-module $\modu$ (the rightmost picture in Fig.~\ref{fig:advantages}), then the inner product $\blacket{x_i,x_j}_{\modu}$ between two functions $x_i$ and $x_j$ in the Hilbert $C^*$-module is $C^*$-algebra-valued.
Thus, if we set the $C^*$-algebra as a function space such as $L^{\infty}(\Omega)$, the inner product $\blacket{x_i,x_j}_{\modu}$ is function-valued.
Therefore, algorithms in Hilbert $C^*$-modules enable us to capture and extract continuous behaviors of functions.
Moreover, in the case of the outputs are functions, we can control the outputs according to the features of the functions.
%The difference between algorithms in Hilbert spaces and those in Hilbert $C^*$-modules is summarized in Figure~\ref{fig:advantages}.

Since RKHM is a generalization of RKHS and vvRKHS (see Subsection~\ref{subsec:rkhm_vvrks} for further details), the framework of RKHMs (Hilbert $C^*$-modules) allows us to generalize kernel methods in RKHSs and vvRKHSs (Hilbert spaces) to those in Hilbert $C^*$-modules.
Therefore, by using RKHM, we can capture and extract features of functions in kernel methods.
The remainder of this paper is devoted to developing the theory of applying RKHMs to data analysis and showing examples of practical applications of data analysis in RKHMs (PCA, time-series data analysis, and analysis of interaction effects).
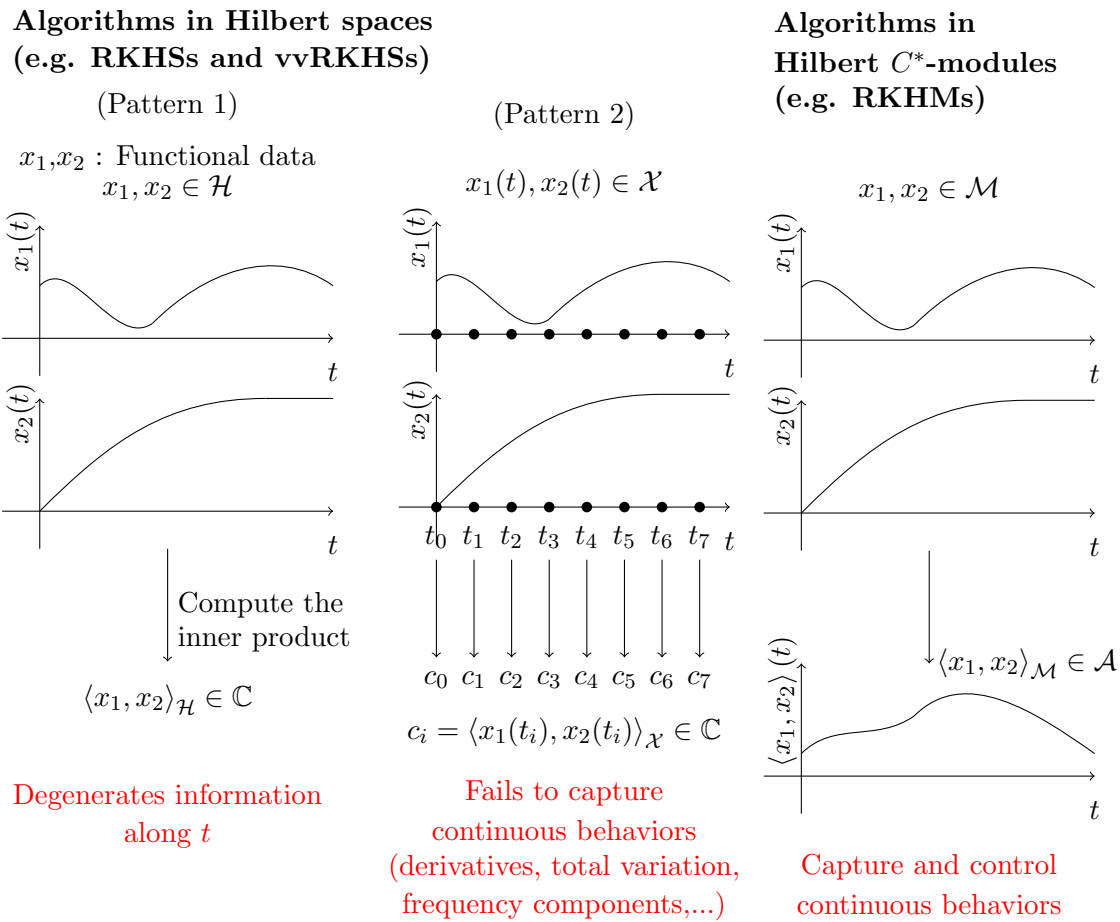
\begin{figure}
\begin{minipage}{0.33\textwidth}
\centering
\begin{tikzpicture}
\draw[->] (1,-0.5) -- (5.4,-0.5);
\draw[->] (1.5,-1) -- (1.5,1);
\node at (1.25,0.8) [rotate=90] {$x_2(t)$};
\draw (1.5,-0.5) to [out=45,in=180] (4.5,1);
\draw (4.5,1) -- (5.4,1);
\node at (5.4,-0.7) [below] {$t$};
\draw[->] (1,1.8) -- (5.4,1.8);
\draw[->] (1.5,1.3) -- (1.5,3.3);
\node at (1.25,3.1) [rotate=90] {$x_1(t)$};
\draw (1.5,2.5) to [out=45,in=-145] (3,2);
\draw (3,2) to [out=45,in=145] (5.4,2.5);
\node at (5.4,1.6) [below] {$t$};
\draw[->] (3.2,-1) to node [right]{Compute the} (3.2,-2.5);
\node at (3.2,-2.2) [right] {inner product};
\node at (3.2,-3) [] {$\blacket{x_1,x_2}_{\hil}\in\mathbb{C}$};
\node at (3.2,-3.5) [] {\phantom{$c_i=\blacket{x_1(t_i),x_2(t_i)}\in\mathbb{C}$}};
\node at (3.2,-4.3) [] {\textcolor{red}{Degenerates information}};
\node at (3.2,-4.8) [] {\textcolor{red}{along $t$}};
\node at (3.2,-5.3) [] {\textcolor{red}{\phantom{a}}};
\node at (3.2,-5.8) [] {\textcolor{red}{\phantom{a}}};
\node at (3.2,4.2) [] {$x_1$,$x_2$ : Functional data};
\node at (3.2,3.8) [] {$x_1,x_2\in\hil$};
\node at (1,6) [right] {\textbf{Algorithms in Hilbert spaces}};
\node at (1,5.5) [right] {\textbf{(e.g. RKHSs and vvRKHSs)}};
\node at (3.2,4.9) [thick] {(Pattern 1)};
\end{tikzpicture}
\end{minipage}%
\begin{minipage}{0.33\textwidth}
\centering
\begin{tikzpicture}
\draw[->] (1,-0.5) -- (5.4,-0.5);
\draw[->] (1.5,-1) -- (1.5,1);
\node at (1.25,0.8) [rotate=90] {$x_2(t)$};
\draw (1.5,-0.5) to [out=45,in=180] (4.5,1);
\draw (4.5,1) -- (5.4,1);
\node at (5.4,-0.7) [below] {$t$};
\draw[->] (1,1.8) -- (5.4,1.8);
\draw[->] (1.5,1.3) -- (1.5,3.3);
\node at (1.25,3.1) [rotate=90] {$x_1(t)$};
\draw (1.5,2.5) to [out=45,in=-145] (3,2);
\draw (3,2) to [out=45,in=145] (5.4,2.5);
\node at (5.4,1.6) [below] {$t$};
\draw[->] (1.5,-1.2) to node [right]{} (1.5,-2.5);
\draw[->] (2,-1.2) to node [right]{} (2,-2.5);
\draw[->] (2.5,-1.2) to node [right]{} (2.5,-2.5);
\draw[->] (3,-1.2) to node [right]{} (3,-2.5);
\draw[->] (3.5,-1.2) to node [right]{} (3.5,-2.5);
\draw[->] (4,-1.2) to node [right]{} (4,-2.5);
\draw[->] (4.5,-1.2) to node [right]{} (4.5,-2.5);
\draw[->] (5,-1.2) to node [right]{} (5,-2.5);
\node at (1.5,-0.9) [] {$t_0$};
\fill[] (1.5,-0.5) circle (2pt);
\fill[] (2,-0.5) circle (2pt);
\fill[] (2.5,-0.5) circle (2pt);
\fill[] (3,-0.5) circle (2pt);
\fill[] (3.5,-0.5) circle (2pt);
\fill[] (4,-0.5) circle (2pt);
\fill[] (4.5,-0.5) circle (2pt);
\fill[] (5,-0.5) circle (2pt);
\fill[] (1.5,1.8) circle (2pt);
\fill[] (2,1.8) circle (2pt);
\fill[] (2.5,1.8) circle (2pt);
\fill[] (3,1.8) circle (2pt);
\fill[] (3.5,1.8) circle (2pt);
\fill[] (4,1.8) circle (2pt);
\fill[] (4.5,1.8) circle (2pt);
\fill[] (5,1.8) circle (2pt);
\node at (2,-0.9) [] {$t_1$};
\node at (2.5,-0.9) [] {$t_2$};
\node at (3,-0.9) [] {$t_3$};
\node at (3.5,-0.9) [] {$t_4$};
\node at (4,-0.9) [] {$t_5$};
\node at (4.5,-0.9) [] {$t_6$};
\node at (5,-0.9) [] {$t_7$};
\node at (1.5,-2.8) [] {$c_0$};
\node at (2,-2.8) [] {$c_1$};
\node at (2.5,-2.8) [] {$c_2$};
\node at (3,-2.8) [] {$c_3$};
\node at (3.5,-2.8) [] {$c_4$};
\node at (4,-2.8) [] {$c_5$};
\node at (4.5,-2.8) [] {$c_6$};
\node at (5,-2.8) [] {$c_7$};
\node at (3.2,-3.5) [] {$c_i=\blacket{x_1(t_i),x_2(t_i)}_{\mcl{X}}\in\mathbb{C}$};
\node at (3.2,-4.3) [] {\textcolor{red}{Fails to capture}};
\node at (3.2,-4.8) [] {\textcolor{red}{continuous behaviors}};
\node at (3.2,-5.3) [] {\textcolor{red}{(derivatives, total variation,}};
\node at (3.2,-5.8) [] {\textcolor{red}{frequency components,...)}};
\node at (3.2,4) [] {\phantom{Functional data}};
\node at (1,6) [right] {\phantom{\textbf{Algorithms}}};
\node at (1,5.5) [right] {\phantom{\textbf{(e.g. RKHSs)}}};
\node at (3.2,4.7) [thick] {(Pattern 2)};
\node at (3.2,3.8) [] {$x_1(t),x_2(t)\in\mcl{X}$};
\end{tikzpicture}
\end{minipage}%
\begin{minipage}{0.33\textwidth}
\centering
\begin{tikzpicture}
\draw[->] (1,-0.5) -- (5.4,-0.5);
\draw[->] (1.5,-1) -- (1.5,1);
\node at (1.25,0.8) [rotate=90] {$x_2(t)$};
\draw (1.5,-0.5) to [out=45,in=180] (4.5,1);
\draw (4.5,1) -- (5.4,1);
\node at (5.4,-0.7) [below] {$t$};
\draw[->] (1,1.8) -- (5.4,1.8);
\draw[->] (1.5,1.3) -- (1.5,3.3);
\node at (1.25,3.1) [rotate=90] {$x_1(t)$};
\draw (1.5,2.5) to [out=45,in=-145] (3,2);
\draw (3,2) to [out=45,in=145] (5.4,2.5);
\node at (5.4,1.6) [below] {$t$};
\draw[->] (3.2,-1) to node [right]{} (3.2,-2.5);
\node at (3.2,-2.2) [right] {};
\draw[->] (1,-4) -- (5.4,-4);
\draw[->] (1.5,-4.5) -- (1.5,-2.5);
\node at (1.25,-3) [rotate=90] {$\blacket{x_1,x_2}(t)$};
\node at (5.4,-4.2) [below] {$t$};
\draw (1.5,-3.7) to [out=45,in=-145] (3,-3.2);
\draw (3,-3.2) to [out=45,in=145] (5.4,-3.7);
\node at (4.5,-2.5) [] {$\blacket{x_1,x_2}_{\modu}\in\alg$};
\node at (3.2,-5.2) [] {\textcolor{red}{Capture and control}};
\node at (3.2,-5.7) [] {\textcolor{red}{continuous behaviors}};
\node at (3.2,4) [] {\phantom{Functional data}};
\node at (1,6) [right] {\textbf{Algorithms in}};
\node at (1,5.5) [right] {\textbf{Hilbert $C^*$-modules}};
\node at (1,5) [right] {\textbf{(e.g. RKHMs)}};
\node at (3.2,3.8) [] {$x_1,x_2\in\modu$};
\end{tikzpicture}
\end{minipage}%
\caption{Advantages of algorithms in Hilbert $C^*$-modules over those in Hilbert spaces}\label{fig:advantages}
\end{figure}
%As an example, we consider PCA.
%For further details, see Subsection~\ref{subsec:pca_rkhm}.

\if0
In the sense of kernel methods, a similar discussion is made regarding the reproducing property.
Let $\mcl{W}$ be a Hilbert space, let $k:\mcl{X}\times \mcl{X}\to\mcl{B}(\mcl{W})$ be an operator valued kernel, and let $\hil_k^{\opn{v}}$ be the vvRKHS associated with $k$.
For evaluating $u\in\hil_k^{\opn{v}}$ at $x\in \mcl{X}$, the reproducing property for vvRKHSs~\eqref{eq:reproducing_vvRKHS} have to be applied multiple times.
If $\mcl{W}$ is a $d$-dimensional space, applying the reproducing property $d$ times with putting $w$ as $d$ linearly independent vectors in $\mcl{W}$ reproduces $u(x)\in\mcl{W}$.
However, if $\mcl{W}$ is an infinite dimensional space such as a function space, applying the reproducing property finite times does not reproduce $u(x)\in\mcl{W}$ completely.
Therefore, we cannot completely extract continuous behavior of $u(x)$ as a function.
On the other hand, the reproducing property for RKHMs are in the same form as 
\fi

%For further details, see Subsection~\ref{subsec:pf}.

%%%
\section{RKHM for data analysis}\label{sec:RKHM}
%We first review the theory of RKHM in Subsection~\ref{sec:rkhm_review}, and then 
{As we mentioned in Section~\ref{secintro}, RKHM has been studied in mathematical physics and pure mathematics.
In existing studies, mathematical properties of RKHM such as the relationship between group actions and RKHMs (see the last paragraph of Subsection~\ref{sec:rkhm_review}) have been discussed. 
However, these studies have not been focused on data and algorithms for analyzing it.}
%Thus, the existing framework of RKHM is not for data analysis. 
Therefore, we fill the gaps between the existing theory of RKHM and its application to data analysis in this section.
We develop theories for the validity to applying it to data analysis in Subsection~\ref{subsec:foundation_rkhm}.
Also, we investigate the connection of RKHM with RKHS and vvRKHS in Subsection~\ref{subsec:rkhm_vvrks}.

\color{\cont}
Generalizations of theories of Hilbert space and RKHS are quite nonobvious for general $C^*$-algebras since fundamental properties in Hilbert spaces such as the Riesz representation theorem and orthogonal complementedness are not always obtained in Hilbert $C^*$-modules.
Therefore, we consider limiting $C^*$-algebras to an appropriate class of $C^*$-algebras.
In fact, von Neumann-algebras satisfy desired properties.
%In this paper, we focus on a special type of $C^*$-algebras called von Neumann-algebra because several important properties for data analysis are available for von Neumann-algebra.
%For example, the Riesz representation theorem (Proposition~\ref{thm:riesz}) is available.
\begin{defin}[von Neumann-algebra]\label{def:vonNeumann}
A C*-algebra $\mathcal{A}$ is called a von Neumann-algebra if $\mathcal{A}$ is isomorphic to the dual Banach space of some Banach space. 
\end{defin}
%For RKHS, the Riesz representation theorem is necessary to define the KME.
%We will also use this type of theorem for modules to define a KME in an RKHM.
The following propositions are fundamental for deriving useful properties for data analysis in Hilbert $C^*$-modules and RKHMs~\citep[Theorem 4.16]{skeide00}, \citep[Proposition 2.3.3]{manuilov00}.
\begin{prop}[The Riesz representation theorem for Hilbert $\alg$-modules]\label{thm:riesz}
Let\\ $\alg\red{\subseteq\mcl{B}(\mcl{W})}$ be a von Neumann-algebra and \red{let $\modu$ be a Hilbert $\alg$-module}.
Let $\hil=\modu\otimes_{\alg}\mcl{W}$ (see Definition~\ref{def:interior_tensor} for the definition of the product $\modu\otimes_{\alg}\mcl{W}$).
Then, every $v\in\modu$ can be regarded as an operator in $\mcl{B}(\mcl{W}, \hil)$, the set of bounded linear operators from $\mcl{W}$ to $\hil$.
If $\modu\subseteq\mcl{B}(\mcl{W}, \hil)$ is strongly closed (in this case, we say that $\modu$ is a von Neumann $\alg$-module),
then for a bounded $\alg$-linear map $L:\modu\to\alg$ (see Definition~\ref{def:a_lin_op}), there exists a unique $u\in\modu$ such that $Lv=\blacket{u,v}_{\modu}$ for all $v\in\modu$.
\end{prop}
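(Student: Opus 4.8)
The plan is to turn the bounded $\alg$-linear map $L\colon\modu\to\alg$ into a bounded operator between Hilbert spaces, take its Hilbert-space adjoint, and recognize that adjoint as an element of $\modu$ using the von Neumann module hypothesis. First recall the embedding $\modu\hookrightarrow\mcl{B}(\mcl{W},\hil)$ from the statement: for $v\in\modu$ set $L_v w=v\otimes w\in\hil=\modu\otimes_\alg\mcl{W}$. From the defining formula $\blacket{v\otimes w,v'\otimes w'}_\hil=\blacket{w,\blacket{v,v'}_\modu w'}_\mcl{W}$ of the inner product on the interior tensor product (Definition~\ref{def:interior_tensor}), a short computation gives that each $L_v$ is bounded with $L_v^*L_{v'}=\blacket{v,v'}_\modu$ as operators on $\mcl{W}$ (under $\alg\subseteq\mcl{B}(\mcl{W})$), hence $\|L_v\|^2=\|L_v^*L_v\|=\|\blacket{v,v}_\modu\|_\alg=\|v\|_\modu^2$, and $L_{vc}=L_v c$ for $c\in\alg$. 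Thus it suffices to produce $u\in\modu$ with $L_u^*L_v=L(v)$ for all $v\in\modu$.

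The technical core is an amplified Cauchy--Schwarz estimate for $L$: for all $v_1,\dots,v_n\in\modu$,
\[
[L(v_i)^*L(v_j)]_{i,j}\ \le_{M_n(\alg)}\ \|L\|^2\,[\blacket{v_i,v_j}_\modu]_{i,j}.
\]
I would prove this by the usual square-root trick carried out inside the unital $C^*$-algebra $M_n(\alg)$. Put $P=[\blacket{v_i,v_j}_\modu]$ and $Q=[L(v_i)^*L(v_j)]$, both positive; for $\epsilon>0$ let $c=(P+\epsilon 1)^{-1/2}\in M_n(\alg)$ and $\nu_k=\sum_i v_i c_{ik}\in\modu$. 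Then $[\blacket{\nu_k,\nu_l}_\modu]=cPc\le 1$, so for any $\mathbf{d}\in\alg^n$ with $\|\mathbf{d}\|_{\alg^n}\le 1$ we have $\|\sum_k\nu_k d_k\|_\modu\le 1$ and therefore $\mathbf{d}^*(cQc)\mathbf{d}=L(\sum_k\nu_k d_k)^*L(\sum_k\nu_k d_k)\le_{\alg}\|L\|^2 1$. Since a positive element of $M_n(\alg)$, viewed as an operator on the Hilbert $\alg$-module $\alg^n$, has norm equal to the supremum of $\|\mathbf{d}^*R\mathbf{d}\|_\alg$ over the unit ball of $\alg^n$, this forces $cQc\le\|L\|^2 1$; multiplying on both sides by $c^{-1}=(P+\epsilon 1)^{1/2}$ and letting $\epsilon\to 0$ yields $Q\le\|L\|^2 P$.

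Now define $\tilde L\colon\hil\to\mcl{W}$ on algebraic tensors by $\tilde L(\sum_i v_i\otimes w_i)=\sum_i L(v_i)w_i$. The estimate above says exactly $\|\tilde L\xi\|_\mcl{W}^2\le\|L\|^2\|\xi\|_\hil^2$ for $\xi=\sum_i v_i\otimes w_i$, so $\tilde L$ kills null vectors (hence is well defined on the quotient) and extends to a bounded operator $\hil\to\mcl{W}$ with $\|\tilde L\|\le\|L\|$; moreover $\tilde L L_v=L(v)$ on $\mcl{W}$ for every $v\in\modu$. If $\tilde L^*\in\mcl{B}(\mcl{W},\hil)$ lies in $\modu$, say $\tilde L^*=L_u$ with $u\in\modu$, then $\blacket{u,v}_\modu=L_u^*L_v=\tilde L L_v=L(v)$ for all $v$, which is the assertion. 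Uniqueness is immediate: if $\blacket{u,v}_\modu=\blacket{u',v}_\modu$ for all $v$, take $v=u-u'$ to get $\blacket{u-u',u-u'}_\modu=0$, hence $u=u'$ by definiteness of the $\alg$-valued inner product.

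So the whole matter reduces to showing $\tilde L^*\in\modu$, and this is where I expect the main obstacle and where strong closedness of $\modu\subseteq\mcl{B}(\mcl{W},\hil)$ is indispensable. Let $\alg'\subseteq\mcl{B}(\mcl{W})$ denote the commutant of $\alg$ and let $\rho$ be its canonical normal representation on $\hil=\modu\otimes_\alg\mcl{W}$ determined by $\rho(a')(v\otimes w)=v\otimes a'w$ (well defined since $a'$ commutes with $\alg$). Because $L(v)\in\alg=\alg''$, one checks $\tilde L\,\rho(a')=a'\,\tilde L$ for all $a'\in\alg'$; taking adjoints, $\rho(a')\,\tilde L^*=\tilde L^*a'$, i.e.\ $\tilde L^*$ intertwines the two $\alg'$-actions. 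Every $L_v$ ($v\in\modu$) intertwines them as well, and the structure theory of von Neumann modules --- the point at which strong closedness enters, and which one may also phrase as the self-duality of von Neumann $\alg$-modules --- gives conversely that $\modu$ is exactly the set of all such intertwiners $\mcl{W}\to\hil$. Hence $\tilde L^*\in\modu$. The construction of $\tilde L$ is routine once the amplified Cauchy--Schwarz estimate is in place; the real content, which I would isolate as a separate structural input (citing the references given with the statement), is this last identification, since for a general Hilbert $\alg$-module $L$ is only implemented by an element of Paschke's self-dual completion $\modu^\#$ rather than of $\modu$ itself.
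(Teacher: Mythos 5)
The paper itself gives no proof of this proposition: it is quoted from the literature, namely \citet[Theorem 4.16]{skeide00} and \citet[Proposition 2.3.3]{manuilov00}, so there is no in-paper argument to compare against. Your sketch is essentially a correct reconstruction of Skeide's proof of self-duality of von Neumann modules, which is the proof the paper implicitly relies on. The amplified Cauchy--Schwarz inequality $[L(v_i)^*L(v_j)]_{i,j}\le \Vert L\Vert^2[\blacket{v_i,v_j}_{\modu}]_{i,j}$ is established correctly (the square-root trick in $M_n(\alg)$ works, and the fact that a positive matrix $R$ over $\alg$ satisfies $\Vert R\Vert=\sup_{\Vert\mathbf{d}\Vert_{\alg^n}\le1}\Vert\mathbf{d}^*R\mathbf{d}\Vert_{\alg}$, hence $R\le\Vert R\Vert 1$, is sound); the operator $\tilde L:\hil\to\mcl{W}$ is then well defined and bounded, and the identities $L_u^*L_v=\blacket{u,v}_{\modu}$, $\tilde L L_v=L(v)$, together with the uniqueness argument via definiteness of the inner product, are all fine. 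The single external input you invoke --- that a strongly closed $\modu\subseteq\mcl{B}(\mcl{W},\hil)$ with $\hil=\modu\otimes_{\alg}\mcl{W}$ coincides with the space of intertwiners $\{x\in\mcl{B}(\mcl{W},\hil)\mid xa'=\rho(a')x\ \text{for all } a'\in\alg'\}$ --- is precisely the content of the cited theorem of Skeide, so deferring to it matches the paper's own level of detail. One caution: do not describe that input as ``self-duality of von Neumann modules'', since self-duality is literally the statement being proved and citing it would be circular; as you actually use it, the input is the intertwiner (commutant-lifting) characterization, which is a genuinely prior structural fact, and your reduction of the Riesz representation statement to it is valid.
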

Let $\alg$ be a von Neumann-algebra.
We remark that the Hilbert $\alg$-module $\alg^n$ for some $n\in\mathbb{N}$ is a von Neumann $\alg$-module.
%For Hilbert $\alg_1$-module $\modu_1$ and Hilbert $\alg_2$-module $\modu_2$, the inner product between $u_1\otimes u_2\in\modu_1\otimes\modu_2$ and $v_1\otimes v_2\in\modu_1\otimes\modu_2$ is defined as $\blacket{u_1,v_1}_{\modu_1}\otimes_a\blacket{u_2,v_2}_{\modu_2}\in\alg_1\otimes\alg_2$, where  
Moreover, for an $\alg$-valued positive definite kernel defined as $\tilde{k}1_{\alg}$, where $\tilde{k}$ is a (standard) positive definite kernel, 
the RKHM $\modu_k$ is a von Neumann $\alg$-module. 
(Generally, the Hilbert $\alg$-module represented as $\hil\otimes\alg$ for a Hilbert space $\hil$ is a von Neumann $\alg$-module.
Here, $\otimes$ represents the tensor product of a Hilbert space and $C^*$-module.
See~\citet[p.6]{lance95} for further details about the tensor product.)
\begin{prop}[Orthogonal complementedness in Hilbert $\alg$-modules]\label{prop:orthcomp}
Let $\alg$ be a \red{unital $C^*$-algebra} and let $\modu$ be a Hilbert $\alg$-module.
Let $\mcl{V}$ be a \red{finitely (algebraically) generated} closed submodule of $\modu$. 
Then, any $u\in\modu$ is decomposed into $u=u_1+u_2$ where $u_1\in\mcl{V}$ and $u_2\in\mcl{V}^{\perp}$.
Here, $\mcl{V}^{\perp}$ is the orthogonal complement of $\mcl{V}$ defined as $\{u\in\modu\mid\ \blacket{u,v}_{\modu}=0\}$.
\end{prop}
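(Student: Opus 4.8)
The plan is to realize the finitely generated closed submodule $\mcl{V}$ as the range of an adjointable $\alg$-linear operator out of the free module $\alg^n$, and to build the orthogonal projection onto $\mcl{V}$ from a generalized inverse of the associated $\alg$-valued Gram matrix. Fix algebraic generators $v_1,\dots,v_n$ of $\mcl{V}$, so that $\mcl{V}=\{\sum_{i=1}^{n}v_ic_i\mid c_i\in\alg\}$, and this set is closed by hypothesis. First I would introduce $T\colon\alg^n\to\modu$ by $T\mathbf{c}=\sum_{i=1}^{n}v_ic_i$; this map is bounded and $\alg$-linear, and a short computation using the properties in Definition~\ref{def:innerproduct} shows it is adjointable with $T^{*}u=(\blacket{v_1,u}_{\modu},\dots,\blacket{v_n,u}_{\modu})^{\mathrm{T}}\in\alg^n$. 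Consequently $T^{*}T$ equals the Gram matrix $\bG$ with $\bG_{ij}=\blacket{v_i,v_j}_{\modu}$, viewed as a positive operator on $\alg^n$, while by the choice of generators $\opn{ran}(T)=\mcl{V}$, which is closed, so $T$ has closed range. One also notes $\ker\bG=\ker T$, since $\bG\mathbf{c}=0$ forces $\blacket{T\mathbf{c},T\mathbf{c}}_{\modu}=\blacket{\mathbf{c},\bG\mathbf{c}}_{\alg^n}=0$ and hence $T\mathbf{c}=0$ by positive-definiteness of the inner product.

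\noindent\textbf{Key step.} Since $T$ has closed range, I would invoke the standard fact that, for an adjointable operator between Hilbert $\alg$-modules, closed range of $T$ is equivalent to closed range of $T^{*}$ and of $T^{*}T$, and to $T^{*}T$ admitting a bounded adjointable (positive) generalized inverse $\bG^{+}$; in particular $\opn{ran}(\bG)=\opn{ran}(T^{*})=(\ker T)^{\perp}$ is an orthogonally complemented submodule of $\alg^n$, and $\bG\bG^{+}=\bG^{+}\bG=Q$ for the orthogonal projection $Q$ of $\alg^n$ onto $\opn{ran}(\bG)$. (Concretely, closed range forces $0$ to be isolated in, or absent from, the spectrum of $\bG$ in the unital $C^{*}$-algebra $M_n(\alg)$, so one may take $\bG^{+}=f(\bG)$ by continuous functional calculus, with $f$ continuous, $f(t)=1/t$ away from $0$ and $f(0)=0$.) Setting $P:=T\bG^{+}T^{*}\colon\modu\to\modu$, I would then check that $P=P^{*}$ because $\bG^{+}$ is self-adjoint, that $P^{2}=T(\bG^{+}\bG\bG^{+})T^{*}=T\bG^{+}T^{*}=P$ using $T^{*}T=\bG$ and the Moore--Penrose relations, and that $\opn{ran}(P)=\mcl{V}$: indeed $\opn{ran}(P)\subseteq\opn{ran}(T)=\mcl{V}$, and for $v=T\mathbf{c}\in\mcl{V}$ we get $Pv=T\bG^{+}\bG\mathbf{c}=TQ\mathbf{c}=T\mathbf{c}=v$ since $(1-Q)\mathbf{c}\in\ker\bG=\ker T$. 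Thus $P$ is the orthogonal projection onto $\mcl{V}$, and for any $u\in\modu$ the elements $u_1:=Pu\in\mcl{V}$ and $u_2:=u-Pu$ satisfy, for every $v\in\mcl{V}$, $\blacket{u_2,v}_{\modu}=\blacket{u,v}_{\modu}-\blacket{Pu,v}_{\modu}=\blacket{u,v}_{\modu}-\blacket{u,Pv}_{\modu}=0$, where we used $P=P^{*}$ and $Pv=v$; hence $u_2\in\mcl{V}^{\perp}$ and $u=u_1+u_2$ is the desired decomposition.

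\noindent\textbf{Main obstacle and an alternative.} The only nontrivial ingredient is the passage from ``$T$ has closed range'' to the existence of the generalized inverse $\bG^{+}$, equivalently to the decomposition $\alg^n=\ker\bG\oplus\opn{ran}(\bG)$; in a general Hilbert $C^{*}$-module a closed submodule need not be orthogonally complemented, and it is precisely the combination of finite algebraic generation, which presents $\mcl{V}$ as the range of an operator defined on $\alg^n$, with closedness of $\mcl{V}$ that rescues the situation. I would handle this either by citing the relevant Hilbert-module result on operators with closed range or, as indicated above, by the spectral argument for $\bG$ inside the unital matrix algebra $M_n(\alg)$. An alternative route that avoids $P$ altogether is to first prove that a finitely (algebraically) generated Hilbert $\alg$-module over a unital $\alg$ is self-dual, and then to apply this to the bounded $\alg$-linear functional $v\mapsto\blacket{u,v}_{\modu}$ on $\mcl{V}$ to obtain $u_1\in\mcl{V}$ with $u-u_1\in\mcl{V}^{\perp}$ directly.
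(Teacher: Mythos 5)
The paper itself gives no proof of Proposition~\ref{prop:orthcomp}: it is imported from the literature (Proposition 2.3.3 of \citet{manuilov00}, cited just before the statement), so there is no in-paper argument to compare yours against. Your proposal supplies an actual proof, and its reduction is correct: writing $\mcl{V}=\opn{ran}(T)$ for the adjointable map $T\mathbf{c}=\sum_i v_ic_i$ on $\alg^n$, noting $T^{*}T=\bG$ and $\ker\bG=\ker T$, and building $P=T\bG^{+}T^{*}$ from a Moore--Penrose inverse of the Gram matrix; the verifications $P=P^{*}$, $P^{2}=P$, $P|_{\mcl{V}}=\mathrm{id}$ and $\blacket{u-Pu,v}_{\modu}=0$ all go through. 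Compared with simply quoting the textbook result, your route yields an explicit Gram-matrix formula for the projection onto $\mcl{V}$, which is exactly the kind of object the paper uses later (for an ONS the Gram matrix is diagonal with projection entries and is its own generalized inverse, so $P$ specializes to the projection $Pu=\sum_i q_i\blacket{q_i,u}_{\modu}$ of Theorem~\ref{prop:min_projection}).

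One caution: essentially all the nontrivial content sits in the ``standard fact'' you invoke. In a Hilbert $C^{*}$-module the chain ``$\opn{ran}(T)$ closed $\Rightarrow$ $\opn{ran}(T^{*})=(\ker T)^{\perp}$, $\opn{ran}(T^{*}T)$ closed, $0$ isolated in or absent from the spectrum of $\bG$, and $\bG^{+}$ exists'' is precisely the closed-range theorem for adjointable operators (Theorem 3.2 of \citet{lance95}) together with the Moore--Penrose characterization; these are theorems, not routine facts, and your parenthetical spectral-gap argument presupposes that $\bG$ already has closed range, so it is not an independent elementary shortcut. Note also that the closed-range theorem you cite already asserts that $\opn{ran}(T)$ is orthogonally complemented, so once it is invoked the conclusion is immediate and the explicit $\bG^{+}$ construction is a bonus rather than a logical necessity. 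With those citations made explicit (which is in the same spirit as the paper's own appeal to \citet{manuilov00}), the proof stands; your alternative via self-duality of finitely generated Hilbert modules is also valid, but it rests on the comparably heavy fact that algebraically finitely generated Hilbert modules over a unital $C^{*}$-algebra are projective, hence self-dual.
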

\red{Let $\alg$ be a unital $C^*$-algebra, let $\modu$ be a Hilbert $\alg$-module, and let $\{q_1,\ldots,q_n\}$ be an ONS of $\modu$. 
Then, the submodule $\mcl{V}$ generated by $\{q_1,\ldots,q_n\}$ is isomorphic to $\bigoplus_{i=1}^n\mcl{V}_i$, where $\mcl{V}_i=\{\blacket{q_i,q_i}_{\modu}c\mid\ c\in\alg\}$ is a closed submodule of $\alg$.
Thus, we have $\modu=\mcl{V}\oplus\mcl{V}^{\perp}$.}

\color{black}
Therefore, we set $\alg$ as a von Neumann-algebra to derive useful properties of RKHM for data analysis.
Note that every von Neumann-algebra is unital (see Definition~\ref{def:multiplicative_identity}).
\begin{assum}
We assume $\alg$ is a von Neumann-algebra throughout this paper.
\end{assum}
\color{\cont}
$C^*$-algebras in Example~\ref{ex:vonNeumann} are also von Neumann-algebras.
As we noted after Example~\ref{ex:vonNeumann}, any $C^*$-algebra can be regarded as a subalgebra of $\mcl{B}(\mcl{W})$.
Thus, this fact implies setting the range of the positive definite kernel as $\mcl{B}(\mcl{W})$ rather than general $C^*$-algebras is effective for data analysis. 
\color{black}
\subsection{General properties of RKHM for data analysis}\label{subsec:foundation_rkhm}
\color{black}
\subsubsection{Fundamental properties of RKHM}
\color{black}
%\textcolor{black}{For data analysis, we first choose a positive definite kernel and construct an RKHM.
%Therefore, we begin by providing examples of $\alg$-valued positive definite kernels.}
%\begin{exam}\label{ex:pdk2}
%Assume $\mcl{X}=\mcl{X}^m$ for some $\mcl{X}$.
%If $k:\mcl{X}\times\mcl{X}\to\mathbb{C}^{m\times m}$ is set as $[k(x,y)]_{i,j}=\tilde{k}(x_i,y_j)$ for some complex-valued $c_0$-kernel $\tilde{k}:\mcl{X}\times \mcl{X}\to\mathbb{C}$,
%then, for $c_1,\ldots,c_s\in\mat$ and $h\in\mathbb{C}^m$, $h^*\sum_{l,l'=1}^sc_l^*k(x_l,y_{l'})c_{l'}h=\sum_{l,l'=1}^s\sum_{i,j=1}^m\overline{(g_i)_l}\tilde{k}((x_l)_i,(x_l')_j)(g_j)_{l'}\ge 0$ holds, where $g_i:=c_ih$.
%Thus, $k$ is an $\alg$-valued positive definite kernel and $\phi(x)\in\clch$.
%\end{exam}
%
%
Similar to the cases of RKHSs, we show RKHMs constructed by $\alg$-valued positive definite kernels have the reproducing property.
Also, we show that the RKHM associated with an $\alg$-valued positive definite kernel $k$ is uniquely determined.
\begin{prop}\label{prop:reproducing}
The map $\blacket{\cdot,\cdot}_{\modu_k}$ defined on $\modu_{k,0}$ is extended continuously to $\modu_k$ and the map $\modu_k\ni v\mapsto (x\mapsto\blacket{\phi(x),v}_{\modu_k})\in\alg^{\mcl{X}}$ is injective.
Thus, $\modu_k$ is regarded to be the subset of $\alg^{\mcl{X}}$ and has the reproducing property.
\end{prop}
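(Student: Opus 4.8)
The plan is to follow the classical construction of an RKHS as a space of functions, keeping careful track of the $\alg$-valued structure at every step. First I would extend $\blacket{\cdot,\cdot}_{\modu_k}$ from $\modu_{k,0}$ to $\modu_k$. The Cauchy--Schwarz inequality (Lemma~\ref{lem:c-s}), after taking $\alg$-norms and using $\Vert c^*c\Vert_{\alg}=\Vert c\Vert_{\alg}^2$, yields $\Vert\blacket{u,v}_{\modu_k}\Vert_{\alg}\le\Vert u\Vert_{\modu}\Vert v\Vert_{\modu}$ for $u,v\in\modu_{k,0}$; combining this with the decomposition $\blacket{u,v}_{\modu_k}-\blacket{u',v'}_{\modu_k}=\blacket{u-u',v}_{\modu_k}+\blacket{u',v-v'}_{\modu_k}$ shows that the sesquilinear form is uniformly continuous on norm-bounded subsets of $\modu_{k,0}\times\modu_{k,0}$. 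Since $\modu_{k,0}$ is dense in its completion $\modu_k$, for Cauchy sequences $u_n\to u$ and $v_n\to v$ the elements $\blacket{u_n,v_n}_{\modu_k}$ form a Cauchy sequence in $\alg$, which converges because $\alg$ is complete, and the limit is independent of the chosen sequences. This defines $\blacket{\cdot,\cdot}_{\modu_k}$ on all of $\modu_k$.

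Next I would verify that the extended form still satisfies the axioms of Definition~\ref{def:innerproduct}: $\mathbb{C}$-linearity and the right-multiplication identity pass to the limit; conjugate symmetry uses continuity of the involution; positivity $\blacket{u,u}_{\modu_k}\ge_{\alg}0$ uses that $\alg_+$ is norm-closed in $\alg$ (see \citet{murphy90}); and definiteness is automatic, because $\Vert u\Vert_{\modu_k}^2=\Vert\blacket{u,u}_{\modu_k}\Vert_{\alg}$ forces $u=0$ whenever $\blacket{u,u}_{\modu_k}=0$. Together with the continuous extension of the right $\alg$-multiplication --- which follows from $\Vert uc\Vert_{\modu}\le\Vert u\Vert_{\modu}\Vert c\Vert_{\alg}$ --- and the fact that $\Vert\cdot\Vert_{\modu_k}$ agrees with the completion norm, this shows $\modu_k$ is genuinely a Hilbert $\alg$-module.

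For injectivity of the map $\iota\colon\modu_k\to\alg^{\mcl{X}}$, $\iota(v)=(x\mapsto\blacket{\phi(x),v}_{\modu_k})$, suppose $\iota(v)=0$, i.e.\ $\blacket{\phi(x),v}_{\modu_k}=0$ for every $x\in\mcl{X}$. Choosing $v_n=\sum_i\phi(x_i^{(n)})c_i^{(n)}\in\modu_{k,0}$ with $v_n\to v$, one gets $\blacket{v_n,v}_{\modu_k}=\sum_i\big(c_i^{(n)}\big)^*\blacket{\phi(x_i^{(n)}),v}_{\modu_k}=0$ for every $n$, and letting $n\to\infty$ and using continuity of the extended inner product gives $\blacket{v,v}_{\modu_k}=0$, hence $v=0$. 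So $\iota$ is injective and we may identify $\modu_k$ with $\iota(\modu_k)\subseteq\alg^{\mcl{X}}$. This identification is compatible with the inclusion $\modu_{k,0}\subseteq\alg^{\mcl{X}}$, since the reproducing property $\blacket{\phi(x),v}_{\modu_k}=v(x)$ already holds on $\modu_{k,0}$; under the identification the same identity holds for all $v\in\modu_k$ by the very definition of $\iota$, which is the reproducing property of $\modu_k$. (Uniqueness of the RKHM follows from the same computation: any Hilbert $\alg$-module of functions on $\mcl{X}$ with reproducing kernel $k$ contains $\modu_{k,0}$ isometrically as a dense submodule, hence coincides with $\modu_k$.)

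I expect the one genuinely delicate point to be the second step: ensuring that passing to the abstract completion does not destroy the $\alg$-valued structure --- in particular that positivity of $\blacket{u,u}_{\modu_k}$ is preserved in the limit (where norm-closedness of $\alg_+$ enters) and that the completion carries a compatible right $\alg$-module structure rather than only a Banach-space structure. Once that is in place, injectivity and the reproducing property are a short density-and-continuity argument, exactly as in the RKHS case; note that the von Neumann-algebra hypothesis is not needed for this particular proposition.
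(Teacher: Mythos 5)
Your proposal is correct and follows essentially the same route as the paper's proof: extend the inner product by Cauchy--Schwarz and completeness of $\alg$ (checking independence of the approximating sequences), then obtain injectivity and the reproducing property by a density-and-continuity argument culminating in $\blacket{v,v}_{\modu_k}=0$. The additional verification that the completed module retains the $\alg$-valued inner-product axioms, and your remark that the von Neumann hypothesis is not needed, are fine but go beyond what the paper records.
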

\begin{prop}\label{prop:rkhm_unique}
Assume a Hilbert $C^*$-module $\mcl{M}$ over $\mcl{A}$ and a map $\psi:\mcl{X}\to\mcl{M}$ satisfy the following conditions:
\begin{enumerate}
 \item $^{\forall} x,y\in\mcl{X}$, $\blacket{\psi(x),\psi(y)}_{\mcl{M}}=k(x,y)$
 \item $\overline{\{\sum_{i=1}^n\psi(x_i)c_i\mid\ x_i\in\mcl{X},\ c_i\in\mcl{A}\}}=\mcl{M}$
\end{enumerate}
Then, there exists a unique $\mcl{A}$-linear bijection map $\Psi:\mcl{M}_k\to\mcl{M}$ that preserves the inner product and satisfies the following commutative diagram:
\begin{equation*}
 \xymatrix{
\mcl{M}_k \ar[rr]^{\Psi}&  &\mcl{M}\\
&\mcl{X} \ar[lu]^{\phi} \ar[ru]_{\psi}\ar@{}[u]|{\circlearrowright} &
}
\end{equation*}
\end{prop}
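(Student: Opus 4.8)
The plan is to mimic the classical uniqueness proof for RKHSs, transferred to the $C^*$-module setting. First I would define the candidate map $\Psi$ on the dense submodule $\modu_{k,0}$ by the only reasonable formula forced by the commutative diagram, namely $\Psi\big(\sum_{i=1}^n\phi(x_i)c_i\big):=\sum_{i=1}^n\psi(x_i)c_i$. The first thing to check is that this is \emph{well-defined}: if $\sum_{i=1}^n\phi(x_i)c_i=0$ in $\modu_k$ we must show $\sum_{i=1}^n\psi(x_i)c_i=0$ in $\modu$. This follows from condition 1: for any element $\sum_j\psi(y_j)d_j$ we have $\big\langle\sum_j\psi(y_j)d_j,\sum_i\psi(x_i)c_i\big\rangle_{\modu}=\sum_{i,j}d_j^*k(y_j,x_i)c_i=\big\langle\sum_j\phi(y_j)d_j,\sum_i\phi(x_i)c_i\big\rangle_{\modu_k}=0$, and since such elements are dense in $\modu$ (condition 2) and the $\alg$-valued inner product is nondegenerate (Definition~\ref{def:innerproduct}(4)), we get $\sum_i\psi(x_i)c_i=0$. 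The same computation run in both directions shows $\Psi$ preserves the $\alg$-valued inner product on $\modu_{k,0}$, and in particular it is isometric for $\Vert\cdot\Vert_{\modu_k}$, hence bounded.

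Next I would extend $\Psi$ from $\modu_{k,0}$ to all of $\modu_k$ by continuity: $\modu_{k,0}$ is dense in $\modu_k$ by definition of the RKHM, $\Psi$ is an isometry into the complete space $\modu$, so it extends uniquely to an isometry $\Psi:\modu_k\to\modu$. Inner-product preservation passes to the completion because $\blacket{\cdot,\cdot}_{\modu_k}$ and $\blacket{\cdot,\cdot}_{\modu}$ are continuous (the former by Proposition~\ref{prop:reproducing}). $\alg$-linearity on $\modu_{k,0}$ is immediate from the defining formula and the module axioms, and it persists under the continuous extension since right multiplication by a fixed $c\in\alg$ is continuous. Surjectivity then follows because the image $\Psi(\modu_k)$ is closed (being the isometric image of a complete space) and contains $\{\sum_i\psi(x_i)c_i\}$, which is dense in $\modu$ by condition 2; so $\Psi(\modu_k)=\modu$. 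The commutative diagram holds by construction, since $\Psi(\phi(x))=\Psi(\phi(x)1_{\alg})=\psi(x)1_{\alg}=\psi(x)$.

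For uniqueness of $\Psi$: any $\alg$-linear map satisfying the diagram must agree with our formula on $\modu_{k,0}$ (it sends $\phi(x_i)$ to $\psi(x_i)$ and is $\alg$-linear, hence additive, so it is determined on finite sums $\sum_i\phi(x_i)c_i$), and any such map preserving the inner product is in particular norm-continuous, hence determined on the dense subset $\modu_{k,0}$, hence equal to $\Psi$ everywhere.

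The only genuinely delicate point is the well-definedness step, because unlike in a Hilbert space one cannot simply invoke positivity of a scalar norm; the argument rests on the nondegeneracy axiom for $\alg$-valued inner products (Definition~\ref{def:innerproduct}(4)) together with the density hypothesis in condition 2, and it is worth writing out the equality of inner products carefully since that single computation does double duty — it yields both well-definedness and inner-product preservation. Everything else (extension by continuity, $\alg$-linearity, surjectivity, uniqueness) is routine once one has the isometry on the dense submodule, exactly as in the RKHS case, so I would present the inner-product computation in full and treat the rest briskly.
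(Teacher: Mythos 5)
Your proposal is correct and follows essentially the same route as the paper's proof: define $\Psi$ on $\modu_{k,0}$ by the forced formula, verify inner-product preservation (and hence well-definedness and isometry) via the kernel identity $\blacket{\psi(x),\psi(y)}_{\modu}=k(x,y)=\blacket{\phi(x),\phi(y)}_{\modu_k}$, extend by continuity to the completion, and obtain surjectivity from condition 2 and uniqueness from density of $\modu_{k,0}$. The only difference is cosmetic ordering: you settle well-definedness up front via nondegeneracy and density, while the paper deduces it from the norm preservation, which amounts to the same computation.
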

We give the proofs for the above propositions in Appendix~\ref{ap:proof_3_1}.
%We also denote by $\vert\cdot\vert_k$ and $\Vert \cdot \Vert_k$ the absolute value and norm on $\modu_k$, respectively. 

\color{black}
\subsubsection{Minimization property and representer theorem in RKHMs}
\color{black}
We now develop some theories for the validity to apply RKHM to data analysis.
%A $C^*$-algebra $\alg$ is referred to as a {\em von Neumann-algebra} if it is a subspace of $\blin{\hil}$ and 
%closed with respect to the strong operator topology.
%Important examples of unital von Neumann-algebras are $\blin{\mcl{W}}$ and $L^{\infty}(\Omega)$.
%contains the limits of all nets in $\alg$ with respect to strong operator topology, 
%(i.e., $c\in\alg$ if and only if there exists $\{c_i\}_i\subseteq\alg$ such that $\lim_{i\to\infty}\Vert c_ih-ch\Vert_{\hil}=0$ for all $h\in\hil$).
%Since $\blin{\hil}$ is a von Neumann-algebra, this assumption does not limit applications of RKHMs.
First, we show a minimization property of orthogonal projection operators, which is a fundamental property in Hilbert spaces, is also available in Hilbert $C^*$-modules.
\begin{thm}[Minimization property of orthogonal projection operators]\label{prop:min_projection}
Let $\mcl{I}$ be a \red{finite} index set.
Let $\{q_i\}_{i\in\mcl{I}}$ be an ONS of $\modu$ and 
$\mcl{V}$ be the submodule of $\modu$ spanned by $\{q_i\}_{i\in\mcl{I}}$.
For $u\in\modu_k$, let $P:\modu\to\mcl{V}$ be the projection operator defined as $Pu:=\sum_{i\in\mcl{I}}q_i\blacket{q_i,u}_{\modu}$.
Then $Pu$ is the unique solution of the following minimization problem, where the minimum is taken with respect to a (pre) order in $\alg$ (see Definition~\ref{def:sup}): 
\begin{equation}
%\{Pw\}=\underset{v\in\mcl{V}}{{\rm argmin}}\vert w-v\vert^2.
\min_{v\in\mcl{V}}\vert u-v\vert_{\modu}^2.\label{eq:min_projection}
\end{equation}
\end{thm}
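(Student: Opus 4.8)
The plan is to reproduce the classical Hilbert-space argument via a Pythagorean decomposition with respect to $\mcl{V}$, the only genuinely new ingredient being that the ``normalizers'' $e_i:=\blacket{q_i,q_i}_{\modu}$ need not equal $1_{\alg}$. First I would record that each $e_i$ is a nonzero projection in $\alg$: it is self-adjoint by Definition~\ref{def:innerproduct}(2) and idempotent with $e_i\neq 0$ by Definition~\ref{def:normalized}. The key auxiliary fact is $q_i e_i=q_i$, which follows from Definition~\ref{def:innerproduct}(4) once one computes, using $e_i=e_i^*$ and $e_i^2=e_i$,
\[
\blacket{q_i e_i-q_i,\,q_i e_i-q_i}_{\modu}= e_i e_i e_i - e_i e_i - e_i e_i + e_i = e_i-e_i-e_i+e_i=0 .
\]
In particular $\blacket{q_i,u}_{\modu}=\blacket{q_i e_i,u}_{\modu}=e_i\blacket{q_i,u}_{\modu}$ for every $u\in\modu$.

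Next I would show $u-Pu\in\mcl{V}^{\perp}$. Using $\blacket{q_i,q_j}_{\modu}=0$ for $i\neq j$ together with the identity just obtained,
\[
\blacket{q_i,Pu}_{\modu}=\sum_{j\in\mcl{I}}\blacket{q_i,q_j}_{\modu}\blacket{q_j,u}_{\modu}=e_i\blacket{q_i,u}_{\modu}=\blacket{q_i,u}_{\modu},
\]
so $\blacket{q_i,u-Pu}_{\modu}=0$ for all $i\in\mcl{I}$. Since every $w\in\mcl{V}$ is of the form $\sum_{i\in\mcl{I}}q_i c_i$ with $c_i\in\alg$, the inner-product axioms give $\blacket{w,u-Pu}_{\modu}=\sum_{i\in\mcl{I}}c_i^*\blacket{q_i,u-Pu}_{\modu}=0$, and hence $\blacket{u-Pu,w}_{\modu}=\blacket{w,u-Pu}_{\modu}^*=0$ as well.

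Now, for an arbitrary $v\in\mcl{V}$ I would write $u-v=(u-Pu)+(Pu-v)$ with $Pu-v\in\mcl{V}$ and expand the $\alg$-valued inner product; the two cross terms vanish by the previous step, leaving
\[
\vert u-v\vert_{\modu}^2=\vert u-Pu\vert_{\modu}^2+\vert Pu-v\vert_{\modu}^2 .
\]
Since $\vert Pu-v\vert_{\modu}^2=\blacket{Pu-v,Pu-v}_{\modu}\ge_{\alg}0$ by Definition~\ref{def:innerproduct}(3), this yields $\vert u-Pu\vert_{\modu}^2\le_{\alg}\vert u-v\vert_{\modu}^2$ for every $v\in\mcl{V}$; and since $Pu\in\mcl{V}$, the value $\vert u-Pu\vert_{\modu}^2$ is attained, so it is indeed the minimum in~\eqref{eq:min_projection}. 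For uniqueness, if some $v\in\mcl{V}$ also attains it, then the displayed identity forces $\vert Pu-v\vert_{\modu}^2=0$, whence $Pu=v$ by Definition~\ref{def:innerproduct}(4).

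The main obstacle — and essentially the only place the argument departs from the Hilbert-space template — is the first step: establishing $q_i e_i=q_i$, and thus $e_i\blacket{q_i,u}_{\modu}=\blacket{q_i,u}_{\modu}$, which is exactly what compensates for $\blacket{q_i,q_i}_{\modu}$ failing to be the identity. Once this is in hand, the Pythagorean identity and the positivity of the $\alg$-valued inner product carry the rest, with ``$\ge$'' and ``$\le$'' read throughout in the $C^*$-order of Definition~\ref{def:sup}.
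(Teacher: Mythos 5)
Your proof is correct, and it follows the same Pythagorean template as the paper's proof but establishes the key orthogonality in a more self-contained way. The paper simply invokes Proposition~\ref{prop:orthcomp} (orthogonal complementedness for finitely generated closed submodules) to write $u=u_1+u_2$ with $u_1\in\mcl{V}$, $u_2\in\mcl{V}^{\perp}$, and identifies $u_1=Pu$ without comment; from there the expansion $\vert u-v\vert_{\modu}^2=\vert u_2\vert_{\modu}^2+\vert u_1-v\vert_{\modu}^2$ and the uniqueness argument are exactly yours. You instead verify directly that $u-Pu\in\mcl{V}^{\perp}$, and the ingredient you need for this, $q_i\blacket{q_i,q_i}_{\modu}=q_i$ (proved by expanding $\blacket{q_ie_i-q_i,q_ie_i-q_i}_{\modu}=0$), is precisely Lemma~\ref{cor:inprodequiv}, which the paper only states and proves later, in the Gram--Schmidt subsection, via Lemma~\ref{lem:inprodequiv}. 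What your route buys is that it avoids the black-box appeal to Proposition~\ref{prop:orthcomp} and makes explicit why the $\mcl{V}$-component of $u$ is exactly $Pu=\sum_i q_i\blacket{q_i,u}_{\modu}$ — a point the paper's proof leaves implicit; what the paper's route buys is brevity and the fact that it works verbatim for any finitely generated closed submodule, not only one presented by an ONS. One minor point to keep in mind: when you write every $w\in\mcl{V}$ as $\sum_{i\in\mcl{I}}q_ic_i$, you are using that the module algebraically generated by the finite ONS is already closed (as the paper notes after Proposition~\ref{prop:orthcomp}); with $\mcl{I}$ finite this is fine, and in any case orthogonality to the generators passes to limits by continuity of the inner product.
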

\begin{proof}
%Since $\alg$ is a von Neumann-algebra, $\mcl{V}$ is orthogonally complemented~\citep[Proposition 2.5.4]{manuilov00}, i.e., $u\in\mcl{M}$ is decomposed into $u=u_1+u_2$, where  $u_1=Pu\in\mcl{V}$, $u_2=u-u_1\in\mcl{V}^{\perp}$, and $\mcl{V}^{\perp}=\{u\in\modu\mid\ \blacket{u,v}_{\modu}=0\}$.
By Proposition~\ref{prop:orthcomp}, $u\in\mcl{M}$ is decomposed into $u=u_1+u_2$, where $u_1=Pu\in\mcl{V}$ and $u_2=u-u_1\in\mcl{V}^{\perp}$.
Let $v\in\mcl{V}$.
Since $u_1-v\in\mcl{V}$, the identity $\blacket{u_2,u_1-v}_{\modu}=0$ holds.
Therefore, we have
\begin{equation}
 \vert u-v\vert_{\modu}^2=\vert u_2+(u_1-v)\vert_{\modu}^2=\vert u_2\vert_{\modu}^2+\vert u_1-v\vert_{\modu}^2\label{eq:deviation},
\end{equation}
which implies $\vert u-v\vert_{\modu}^2-\vert u-u_1\vert_{\modu}^2\ge_{\alg} 0$.
Since $v\in\mcl{V}$ is arbitrary, $u_1$ is a solution of $\min_{v\in\mcl{V}}\vert u-v\vert_{\modu}$.

Moreover, if there exists $u'\in\mcl{V}$ such that $\vert u-u_1\vert_{\modu}^2=\vert u-u'\vert_{\modu}^2$, then letting $v=u'$ in Eq.~\eqref{eq:deviation} derives $\vert u-u'\vert_{\modu}^2=\vert u_2\vert_{\modu}^2+\vert u_1-u'\vert_{\modu}^2$, which implies $\vert u_1-u'\vert_{\modu}^2=0$.
As a result, $u_1=u'$ holds and the uniqueness of $u_1$ has been proved. 
\end{proof}
%Unlike the case of complex-valued inner products, the existence of the solution of  $\alg$-valued minimization problem~\eqref{eq:min_projection} 
%is not obvious since the positive elements in $\alg$ are not totally ordered.
Proposition~\ref{prop:min_projection} shows the orthogonally projected vector uniquely minimizes the deviation from an original vector in $\mcl{V}$.
Thus, %with an orthonormalization algorithm, which we will develop in Section~\ref{sec:gram-schmidt}, 
we can generalize methods related to orthogonal projections in Hilbert spaces to Hilbert $C^*$-modules.

Next, we show the representer theorem in RKHMs.
\begin{thm}[Representer theorem]\label{thm:representation}
Let $x_1,\ldots,x_n\in\mcl{X}$ and $a_1,\ldots,a_n\in\alg$.
Let $h:\mcl{X}\times\alg^2\to\alg_+$ be an error function and let $g:\alg_+\to\alg_+$ satisfy $g(c)\le_{\alg} g(d)$ for $c\le_{\alg} d$.
\red{Assume the module spanned by $\{\phi(x_i)\}_{i=1}^n$ is closed.}
%where $\alg_+$ is the subset of $\alg$ composed of all the positive elements in $\alg$.
Then, any $u\in\modu_k$ minimizing $\sum_{i=1}^nh(x_i,a_i,u(x_i))+g(\vert u\vert_{\modu_k})$ admits a representation of the form $\sum_{i=1}^n\phi(x_i)c_i$ for some $c_1,\ldots,c_n\in\alg$.
\end{thm}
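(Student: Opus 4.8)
The plan is to mimic the classical representer-theorem argument: decompose the candidate minimizer $u$ into a component lying in the module $\mcl{V}$ spanned by the feature vectors $\{\phi(x_i)\}_{i=1}^n$ and a component orthogonal to it, then show the orthogonal component can only increase (in the $\alg$-order) the objective, hence must vanish. First I would set $\mcl{V}=\overline{\{\sum_{i=1}^n\phi(x_i)c_i\mid c_i\in\alg\}}$, which is closed by hypothesis and finitely generated, so Proposition~\ref{prop:orthcomp} applies: any $u\in\modu_k$ decomposes as $u=u_1+u_2$ with $u_1\in\mcl{V}$ and $u_2\in\mcl{V}^\perp$. The key pointwise observation is that for each sample $x_i$, the reproducing property (Proposition~\ref{prop:reproducing}) gives $u_2(x_i)=\blacket{\phi(x_i),u_2}_{\modu_k}=0$, since $\phi(x_i)\in\mcl{V}$ and $u_2\perp\mcl{V}$. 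Therefore $u(x_i)=u_1(x_i)$ for all $i$, and the data-fitting term $\sum_{i=1}^n h(x_i,a_i,u(x_i))=\sum_{i=1}^n h(x_i,a_i,u_1(x_i))$ is unchanged when we replace $u$ by $u_1$.

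Next I would control the regularization term. Because $u_1\perp u_2$, we have $\vert u\vert_{\modu_k}^2=\blacket{u,u}_{\modu_k}=\blacket{u_1,u_1}_{\modu_k}+\blacket{u_2,u_2}_{\modu_k}=\vert u_1\vert_{\modu_k}^2+\vert u_2\vert_{\modu_k}^2$, so $\vert u\vert_{\modu_k}^2\ge_\alg \vert u_1\vert_{\modu_k}^2$, i.e. $\vert u_1\vert_{\modu_k}^2\le_\alg \vert u\vert_{\modu_k}^2$. The subtle step is passing from the squares to the absolute values themselves: one needs $c^2\le_\alg d^2$ (with $c,d\ge_\alg 0$) to imply $c\le_\alg d$, which is operator monotonicity of the square root — a standard fact for positive elements of a $C^*$-algebra. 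Applying this, $\vert u_1\vert_{\modu_k}\le_\alg \vert u\vert_{\modu_k}$, and then the monotonicity hypothesis on $g$ gives $g(\vert u_1\vert_{\modu_k})\le_\alg g(\vert u\vert_{\modu_k})$. Combining with the data term:
\begin{equation*}
\sum_{i=1}^n h(x_i,a_i,u_1(x_i))+g(\vert u_1\vert_{\modu_k})\le_\alg \sum_{i=1}^n h(x_i,a_i,u(x_i))+g(\vert u\vert_{\modu_k}).
\end{equation*}
Since $u$ is assumed to be a minimizer (with respect to the $\alg$-order), the two sides must be equal, and in particular $g(\vert u_1\vert_{\modu_k})=g(\vert u\vert_{\modu_k})$; tracing through the inequalities forces $\blacket{u_2,u_2}_{\modu_k}=0$ — here I would need that $g$ is, in the relevant sense, order-reflecting on the range of interest, or alternatively argue directly that $u_1$ is also a minimizer and then that membership of the minimizer in $\mcl{V}$ can be assumed. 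Finally, $u_2=0$ by the definiteness axiom of the $\alg$-valued inner product (Definition~\ref{def:innerproduct}(4)), so $u=u_1\in\mcl{V}$. Since $\mcl{V}$ is the closure of finite sums $\sum_{i=1}^n\phi(x_i)c_i$, and (using the finitely-generated closed-submodule structure from Proposition~\ref{prop:orthcomp} and the remark after it, identifying $\mcl{V}$ with $\bigoplus_i\mcl{V}_i$) such an element is actually a finite sum of this form, we conclude $u=\sum_{i=1}^n\phi(x_i)c_i$ for some $c_1,\ldots,c_n\in\alg$.

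The main obstacle I anticipate is not the decomposition but the order-theoretic bookkeeping at the end: showing that equality of the objective values genuinely forces $u_2=0$ rather than merely that some minimizer lies in $\mcl{V}$. The cleanest route is probably to prove the weaker statement that $u_1$ is always a minimizer and that every minimizer whose norm term is minimal lies in $\mcl{V}$; if the theorem is to be read as "there exists a minimizer of the representer form," the decomposition argument suffices immediately, whereas "any minimizer has this form" additionally requires that $g$ be strictly order-increasing (or the data term already pins down $u$ on a generating set). I would flag this dependence explicitly and, if needed, add the hypothesis that $g(c)<_\alg g(d)$ whenever $c<_\alg d$, or simply restrict the claim to existence of a representer-form minimizer, which is what applications need. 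The operator-monotonicity of the square root is standard and I would just cite it.
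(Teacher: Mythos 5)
Your proposal follows essentially the same route as the paper's proof: the orthogonal decomposition $u=u_1+u_2$ via Proposition~\ref{prop:orthcomp}, the reproducing property to show the data term depends only on $u_1$, and monotonicity of $g$ (together with operator monotonicity of the square root, which the paper uses implicitly when writing $g\big((\vert u_1\vert_{\modu_k}^2+\vert u_2\vert_{\modu_k}^2)^{1/2}\big)\ge_{\alg} g(\vert u_1\vert_{\modu_k})$) to control the regularizer. The subtlety you flag at the end is real but is equally present in the paper's own argument, which asserts that setting $u_2=0$ ``strictly'' reduces $g(\vert u\vert_{\modu_k})$ without justification under the stated hypotheses; your proposed remedies (assuming strict order-monotonicity of $g$, or weakening the conclusion to existence of a minimizer of representer form) are the natural ways to close it.
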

\begin{proof}
Let $\mcl{V}$ be the module spanned by $\{\phi(x_i)\}_{i=1}^n$.
%Since $\mcl{V}$ is orthogonally complemented, 
By Proposition~\ref{prop:orthcomp}, $u\in\modu_k$ is decomposed into $u=u_1+u_2$, where  $u_1\in\mcl{V}$, $u_2\in\mcl{V}^{\perp}$.
By the reproducing property of $\modu_k$, the following equalities are derived for $i=1,\ldots,n$:
\begin{align*}
&u(x_i)=\blacket{\phi(x_i),u}_{\modu_k}=\blacket{\phi(x_i),u_1+u_2}_{\modu_k}=\blacket{\phi(x_i),u_1}_{\modu_k}.
\end{align*}
Thus, $\sum_{i=1}^nh(x_i,a_i,u(x_i))$ is independent of $u_2$.
As for the term $g(\vert u\vert_{\modu_k})$, since $g$ satisfies $g(c)\le_{\alg} g(d)$ for $c\le_{\alg} d$, we have
\begin{align*}
g(\vert u\vert_{\modu_k})=g(\vert u_1+u_2\vert_{\modu_k})=g\Big(\big(\vert u_1\vert_{\modu_k}^2+\vert u_2\vert_{\modu_k}^2\big)^{1/2}\Big)\ge_{\alg} g(\vert u_1\vert_{\modu_k}).
\end{align*}
Therefore, setting $u_2=0$ does not affect the term $\sum_{i=1}^nh(x_i,a_i,u(x_i))$, while strictly reducing the term $g(\vert u\vert_{\modu_k})$, which implies any minimizer
must have $u_2=0$.
As a result, any minimizer takes the form $\sum_{i=1}^n\phi(x_i)c_i$.
\end{proof}
%This enables us to generalize minimization problems in RKHSs to those in RKHMs. %supervised learnings to $\alg$-valued.

%%%

\subsection{Connection with RKHSs and vvRKHSs}\label{subsec:rkhm_vvrks}
We show that the framework of RKHM is more general than those of RKHS and vvRKHS.
Let $\tilde{k}$ be a complex-valued positive definite kernel and let $\hil_{\tilde{k}}$ be the RKHS associated with $\tilde{k}$.
In addition, let $k$ be an $\alg$-valued positive definite kernel and $\modu_k$ be the RKHM associated with $k$.
The following proposition is derived by the definitions of RKHSs and RKHMs.
\begin{prop}[Connection between RKHMs with RKHSs]
If $\alg=\mathbb{C}$ and $k=\tilde{k}$, then $\hil_{\tilde{k}}=\modu_k$.
\end{prop}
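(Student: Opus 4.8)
The plan is to observe that, once $\alg$ is specialized to $\mathbb{C}$, the two constructions coincide literally, term by term. First I would note that taking $\alg=\mathbb{C}$ in Definition~\ref{def:pdk_rkhm} turns the adjoint $c^*$ into complex conjugation and the order relation $\ge_{\alg}0$ into the usual nonnegativity of a real number, so Definition~\ref{def:pdk_rkhm} reduces exactly to Definition~\ref{def:pdk_rkhs}; in particular the hypothesis $k=\tilde{k}$ is meaningful and $k$ is a complex-valued positive definite kernel. Consequently the feature maps agree: $\phi(x)=k(\cdot,x)=\tilde{k}(\cdot,x)=\tilde{\phi}(x)$ for every $x\in\mcl{X}$.

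Next I would check that the pre-modules agree as subsets of $\mathbb{C}^{\mcl{X}}=\alg^{\mcl{X}}$. A $C^*$-module over $\mathbb{C}$ is just a complex vector space, so the right $\mathbb{C}$-multiplication is ordinary scalar multiplication; hence a generic element $\sum_{i=1}^{n}\phi(x_i)c_i$ of $\modu_{k,0}$ with $c_i\in\mathbb{C}$ is the same function as $\sum_{i=1}^{n}c_i\tilde{\phi}(x_i)$, and conversely, so $\modu_{k,0}=\hil_{\tilde{k},0}$ as sets of functions. On this common set the two inner products coincide as well, since for $c_i,d_j\in\mathbb{C}$ we have $c_i^*=\overline{c_i}$ and
\begin{equation*}
\Bblacket{\sum_{i=1}^n\phi(x_i)c_i,\sum_{j=1}^l\phi(y_j)d_j}_{\modu_k}
=\sum_{i,j}c_i^*k(x_i,y_j)d_j
=\sum_{i,j}\overline{c_i}\,d_j\,\tilde{k}(x_i,y_j)
=\Bblacket{\sum_{i=1}^n c_i\tilde{\phi}(x_i),\sum_{j=1}^l d_j\tilde{\phi}(y_j)}_{\rkhs}.
\end{equation*}
In particular the $\alg$-valued absolute value becomes $\vert u\vert_{\modu_k}=\langle u,u\rangle_{\modu_k}^{1/2}\in\mathbb{C}$, and the induced norm $\Vert u\Vert_{\modu_k}=\Vert\,\vert u\vert_{\modu_k}\Vert_{\alg}$ is exactly the RKHS norm $\langle u,u\rangle_{\rkhs}^{1/2}$.

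Finally, since $\modu_k$ and $\rkhs$ are by definition the completions of one and the same normed space $(\modu_{k,0},\Vert\cdot\Vert)=(\hil_{\tilde{k},0},\Vert\cdot\Vert_{\rkhs})$ formed inside the common ambient space $\alg^{\mcl{X}}=\mathbb{C}^{\mcl{X}}$, they are equal, and by continuity the (extended) inner products of Proposition~\ref{prop:reproducing} agree on $\modu_k=\rkhs$; alternatively, applying Proposition~\ref{prop:rkhm_unique} with $\mcl{M}=\rkhs$ and $\psi=\tilde{\phi}$ (whose hypotheses hold by the computation above) yields a unique $\alg$-linear inner-product-preserving bijection $\Psi\colon\modu_k\to\rkhs$ with $\Psi\circ\phi=\tilde{\phi}$, and $\Psi$ is the identity since it is the identity on the dense submodule $\modu_{k,0}=\hil_{\tilde{k},0}$. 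There is essentially no real obstacle; the only point deserving a moment's care is reading ``$=$'' as literal equality of subsets of $\mathbb{C}^{\mcl{X}}$ rather than mere isometric isomorphism, which is legitimate precisely because both completions live in the same ambient function space with the same norm.
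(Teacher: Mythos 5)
Your proof is correct, and it follows the same route as the paper: the paper simply remarks that the proposition ``is derived by the definitions of RKHSs and RKHMs,'' and your argument is precisely that verification spelled out (definitions of the kernels, feature maps, pre-spaces, inner products, and norms all coincide when $\alg=\mathbb{C}$, so the completions inside $\mathbb{C}^{\mcl{X}}$ agree). The care you take in reading the equality as identity of subsets of $\mathbb{C}^{\mcl{X}}$, justified via the reproducing property or Proposition~\ref{prop:rkhm_unique}, is exactly the right point to make explicit.
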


As for the connection between vvRKHSs and RKHMs, we first remark that in the case of $\alg=\blin{\mcl{W}}$, Definition~\ref{def:pdk_rkhm} is equivalent to the operator valued positive definite kernel (Definition~\ref{def:pdk_vv-rkhs}) for the theory of vv-RKHSs.
\begin{lem}[Connection between Definition~\ref{def:pdk_rkhm} and Definition~\ref{def:pdk_vv-rkhs}]\label{lem:pdk_equiv}
If $\alg=\mcl{B}(\mcl{W})$, then, the $\alg$-valued positive definite kernel defined in Definition~\ref{def:pdk_rkhm} is equivalent to the operator valued positive definite kernel defined in Definition~\ref{def:pdk_vv-rkhs}.
\end{lem}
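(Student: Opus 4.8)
The plan is to show the two defining conditions of an $\alg$-valued positive definite kernel with $\alg=\blin{\mcl{W}}$ are literally the same as the two conditions in Definition~\ref{def:pdk_vv-rkhs}, after translating ``$\ge_{\alg}0$'' back into the language of Hilbert-space inner products. The first condition, $k(x,y)=k(y,x)^*$, is verbatim the same in both definitions, since the involution on $\blin{\mcl{W}}$ is precisely the operator adjoint (Example~\ref{ex:vonNeumann}); so nothing is to be done there. The work is entirely in the second condition.

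The key step is the characterization of positivity recorded in Example~\ref{ex:positive}: for $\alg=\blin{\mcl{W}}$, an element $c\in\alg$ is positive if and only if $c$ is a self-adjoint operator with $\blacket{w,cw}_{\mcl{W}}\ge 0$ for all $w\in\mcl{W}$. I would apply this to the element $c=\sum_{i,j=1}^n c_i^*k(x_i,x_j)c_j$ for arbitrary $n\in\mathbb{N}$, $c_i\in\alg=\blin{\mcl{W}}$, and $x_i\in\mcl{X}$. First note $c$ is automatically self-adjoint once condition~1 holds: $c^*=\sum_{i,j}c_j^*k(x_i,x_j)^*c_i=\sum_{i,j}c_j^*k(x_j,x_i)c_i=c$ after relabeling. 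So $c\ge_{\alg}0$ is equivalent to $\blacket{w,cw}_{\mcl{W}}\ge 0$ for all $w\in\mcl{W}$, i.e.
\begin{equation*}
\sum_{i,j=1}^n\blacket{w,c_i^*k(x_i,x_j)c_jw}_{\mcl{W}}=\sum_{i,j=1}^n\blacket{c_iw,k(x_i,x_j)(c_jw)}_{\mcl{W}}\ge 0.
\end{equation*}

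It remains to reconcile this with condition~2 of Definition~\ref{def:pdk_vv-rkhs}, namely $\sum_{i,j=1}^n\blacket{w_i,k(x_i,x_j)w_j}_{\mcl{W}}\ge 0$ for all $w_i\in\mcl{W}$. One direction is immediate: given the operator-valued condition, put $w_i:=c_iw$ to obtain the $\alg$-valued condition. For the converse, given the $\alg$-valued condition and arbitrary vectors $w_1,\dots,w_n\in\mcl{W}$, I would choose rank-one operators $c_i\in\blin{\mcl{W}}$ and a fixed unit vector $w\in\mcl{W}$ with $c_iw=w_i$ (for instance, fix a unit vector $w$ and set $c_i:=w_i\otimes\overline{w}$, i.e.\ $c_i h=\blacket{w,h}_{\mcl{W}}w_i$), so that the $\alg$-valued condition specializes exactly to the operator-valued one. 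Thus the two conditions are equivalent, and hence so are the two notions of positive definite kernel. The only mild subtlety — the ``main obstacle'' such as it is — is making sure the reduction vectors/operators are chosen legitimately in $\blin{\mcl{W}}$ (they are: rank-one operators are bounded) and that one is allowed to take $n$ and the $c_i$ fully arbitrary in one direction while specializing them in the other; both are fine since both definitions quantify over all $n$ and all choices. I would close by remarking that under this identification the $\alg$-valued inner product $\blacket{\cdot,\cdot}_{\modu_k}$ restricted appropriately recovers the complex-valued inner product $\blacket{\cdot,\cdot}_{\vvrkhs}$, which sets up the comparison of $\modu_k$ with $\vvrkhs$ in the sequel.
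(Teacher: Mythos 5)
Your proposal is correct and follows essentially the same route as the paper's proof: one direction by substituting $w_i=c_iw$ into the operator-valued condition, and the other by realizing arbitrary $w_1,\ldots,w_n$ as $c_iw$ for rank-one operators $c_ih=\blacket{w,h}_{\mcl{W}}w_i$ built from a fixed (unit) vector $w$, together with the identification of $\ge_{\alg}0$ in $\blin{\mcl{W}}$ with positive semi-definiteness. Your explicit remark that self-adjointness of $\sum_{i,j}c_i^*k(x_i,x_j)c_j$ follows from condition~1 is a small clarification the paper leaves implicit, but the substance is identical.
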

The proof for Lemma~\ref{lem:pdk_equiv} is given in Appendix~\ref{ap:proof_3_1}.

\color{black}
Let $\alg=\blin{\mcl{W}}$ and let $\hil_{k}^{\opn{v}}$ be the vvRKHS associated with $k$.
To investigate further connections between vvRKHSs and RKHMs, we introduce the notion of interior tensor~\citep[Chapter 4]{lance95}.
\begin{prop}
Let $\modu$ be a Hilbert $\blin{\mcl{W}}$-module \textcolor{black}{and let $\modu\otimes\mcl{W}$ be the tensor product of $\modu$ and $\mcl{W}$ as vector spaces}.
The map $\blacket{\cdot,\cdot}_{\modu\otimes\mcl{W}}:\modu\otimes \mcl{W}\;\times\;\modu\otimes \mcl{W}\to\mathbb{C}$ defined as
\begin{equation*}
\blacket{v\otimes w,u\otimes h}_{\modu\otimes\mcl{W}}=\blacket{w,\blacket{v,u}_{\modu}h}_{\mcl{W}}
\end{equation*}
is a complex-valued pre inner product on $\modu\otimes \mcl{W}$.
\end{prop}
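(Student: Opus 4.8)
The plan is to verify the three defining properties of a complex-valued pre-inner product: that the stated formula extends from elementary tensors to a well-defined sesquilinear form on the algebraic tensor product $\modu\otimes\mcl{W}$, that this form is Hermitian, and that it is positive semi-definite. Definiteness is not asserted — hence ``pre'' rather than ``inner'' — which is as it should be, since the balancing vectors $vc\otimes w-v\otimes cw$ ($c\in\alg$) will lie in the kernel of the form.

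First I would establish well-definedness and sesquilinearity. A typical element of $\modu\otimes\mcl{W}$ is a finite sum $\sum_i v_i\otimes w_i$ whose expression is not unique, so the point is that $\sum_i\blacket{w_i,\blacket{v_i,u}_\modu h}_{\mcl{W}}$ depends only on $\sum_i v_i\otimes w_i$ and on $(u,h)$. For fixed $u,h$, the map $F_{u,h}:\modu\times\mcl{W}\to\mathbb{C}$, $F_{u,h}(v,w)=\blacket{w,\blacket{v,u}_\modu h}_{\mcl{W}}$, is conjugate-linear in each of $v$ and $w$, by Definition~\ref{def:innerproduct} and the conjugate-linearity of $\blacket{\cdot,\cdot}_{\mcl{W}}$ in its first slot; equivalently $\overline{F_{u,h}}$ is $\mathbb{C}$-bilinear, so by the universal property of the tensor product it factors through $\modu\otimes\mcl{W}$. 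Hence $\sum_i v_i\otimes w_i=0$ forces $\sum_i F_{u,h}(v_i,w_i)=0$, which is the required independence. Running the same argument in the second pair of variables — where the relevant map is honestly bilinear, since $\blacket{v,u}_\modu$ is $\mathbb{C}$-linear in $u$ — gives a well-defined form $\blacket{\cdot,\cdot}_{\modu\otimes\mcl{W}}$, and the bookkeeping just described shows it is conjugate-linear in the first variable and linear in the second, i.e.\ sesquilinear with the convention of Definition~\ref{def:innerproduct}.

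For Hermitian symmetry it suffices, by sesquilinearity, to check the identity on elementary tensors: using $\blacket{v,u}_\modu^{*}=\blacket{u,v}_\modu$ (Definition~\ref{def:innerproduct}), the adjoint relation $\blacket{w,Ah}_{\mcl{W}}=\blacket{A^{*}w,h}_{\mcl{W}}$ in $\blin{\mcl{W}}$, and conjugate symmetry of $\blacket{\cdot,\cdot}_{\mcl{W}}$, one has
\begin{equation*}
\overline{\blacket{u\otimes h,\,v\otimes w}_{\modu\otimes\mcl{W}}}
=\overline{\blacket{h,\blacket{u,v}_\modu h'}_{\mcl{W}}}\Big|_{h'=w}
=\blacket{w,\blacket{u,v}_\modu^{*}h}_{\mcl{W}}
=\blacket{w,\blacket{v,u}_\modu h}_{\mcl{W}}
=\blacket{v\otimes w,\,u\otimes h}_{\modu\otimes\mcl{W}}.
\end{equation*}
The substantive step is positivity. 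Given $z=\sum_{i=1}^{n}v_i\otimes w_i$, we must show $\blacket{z,z}_{\modu\otimes\mcl{W}}=\sum_{i,j=1}^{n}\blacket{w_i,\blacket{v_i,v_j}_\modu w_j}_{\mcl{W}}\ge 0$. I would first note that the Gram matrix $G=[\blacket{v_i,v_j}_\modu]$ is a positive element of the $C^*$-algebra $M_n(\alg)$; this is a standard fact for Hilbert $C^*$-modules \citep[cf.][Lemma~4.2 and the surrounding discussion]{lance95}, and concretely one checks that $\theta:\alg^{n}\to\modu$, $\theta(\mathbf c)=\sum_{i=1}^{n}v_ic_i$, is adjointable with $\theta^{*}v=(\blacket{v_i,v}_\modu)_{i=1}^{n}$ using the inner product on $\alg^{n}$ from Example~\ref{ex:An}, so that under the canonical $*$-isomorphism $\blin{\alg^{n}}\cong M_n(\alg)$ the positive operator $\theta^{*}\theta$ corresponds to $G$. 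Since here $\alg=\blin{\mcl{W}}$, we have $M_n(\alg)=\blin{\mcl{W}^{n}}$, so $G$ is a positive operator on the Hilbert space $\mcl{W}^{n}$; evaluating its quadratic form at $\xi=(w_1,\ldots,w_n)$ gives $\blacket{\xi,G\xi}_{\mcl{W}^{n}}=\sum_{i,j}\blacket{w_i,\blacket{v_i,v_j}_\modu w_j}_{\mcl{W}}=\blacket{z,z}_{\modu\otimes\mcl{W}}\ge 0$.

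I expect the positivity step to be where care is most needed. The trap is circularity: one must obtain $G\ge 0$ in $M_n(\alg)$ from the Hilbert $\alg$-module structure of $\alg^{n}$ (the $\theta^{*}\theta$ argument), not from a representation on $\mcl{W}^{n}$, because non-negativity of $\sum_{i,j}\blacket{w_i,\blacket{v_i,v_j}_\modu w_j}_{\mcl{W}}$ is precisely the conclusion we are after; only after $G\ge 0$ is established as a $C^*$-algebra statement may one pass to $\blin{\mcl{W}^{n}}$ and pair against $(w_1,\ldots,w_n)$. A secondary, purely bookkeeping subtlety is the conjugate-linearity in the well-definedness step, which is why I would reduce to the ordinary bilinear universal property by passing to $\overline{F_{u,h}}$.
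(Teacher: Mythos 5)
Your proof is correct. The paper itself gives no argument for this proposition—it is quoted as part of the interior-tensor construction from \citet[Chapter 4]{lance95}—and your treatment (well-definedness via the universal property, Hermitian symmetry on elementary tensors, and positivity reduced to positivity of the Gram matrix $[\blacket{v_i,v_j}_{\modu}]$ in $M_n(\alg)\cong\blin{\mcl{W}^n}$, established from the module axioms to avoid circularity) is essentially the standard proof in that cited reference.
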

\begin{defin}[Interior tensor]\label{def:interior_tensor}
The completion of $\modu\otimes \mcl{W}$ with respect to the pre inner product $\blacket{\cdot,\cdot}_{\modu\otimes\mcl{W}}$ is referred to as the {\em interior tensor} between $\modu$ and $\mcl{W}$, and denoted as $\modu\otimes_{\blin{\mcl{W}}}\mcl{W}$. 
\end{defin}
Note that $\modu\otimes_{\blin{\mcl{W}}}\mcl{W}$ is a Hilbert space.
We now show vvRKHSs are reconstructed by the interior tensor between RKHMs and $\mcl{W}$.
\begin{thm}[Connection between RKHMs and vvRKHSs]\label{thm:rkhm_vvrkhs}
If $\alg=\blin{\mcl{W}}$, then two Hilbert spaces $\hil_k^{\opn{v}}$ and $\modu\otimes_{\blin{\mcl{W}}}\mcl{W}$ are isomorphic.
\end{thm}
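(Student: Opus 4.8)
\medskip
\noindent\emph{Proof idea.}
The plan is to exhibit an explicit surjective linear isometry
\[
U\colon\modu_k\otimes_{\blin{\mcl{W}}}\mcl{W}\longrightarrow\vvrkhs,\qquad U\big((\phi(x)c)\otimes w\big)=\phi(x)(cw),
\]
(writing $\modu_k$ for the RKHM denoted $\modu$ in the statement, and $\alg=\blin{\mcl{W}}$); note $cw\in\mcl{W}$ makes sense because $c\in\blin{\mcl{W}}$. I would construct $U$ first on a dense subspace and then extend it by continuity.

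First I would define a bilinear map $B\colon\modu_{k,0}\times\mcl{W}\to\hil_{k,0}^{\opn{v}}$ by $B\big(\sum_{i=1}^{n}\phi(x_i)c_i,\,w\big)=\sum_{i=1}^{n}\phi(x_i)(c_iw)$. Regarding $\modu_k$ as a subset of $\alg^{\mcl{X}}$ (Proposition~\ref{prop:reproducing}) and $\vvrkhs$ as a subset of $\mcl{W}^{\mcl{X}}$, one checks that $B(v,w)$ is simply the function $y\mapsto v(y)w$, so it depends only on $v$ and $w$ and not on the chosen expansion of $v$; hence $B$ is well-defined, and it is manifestly bilinear. By the universal property of the algebraic tensor product, $B$ induces a linear map $\tilde U\colon\modu_{k,0}\otimes\mcl{W}\to\hil_{k,0}^{\opn{v}}$ with $\tilde U(v\otimes w)=B(v,w)$.

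Next I would verify that $\tilde U$ preserves the (pre) inner products. For $v=\sum_i\phi(x_i)c_i$ and $v'=\sum_j\phi(y_j)d_j$ in $\modu_{k,0}$ and $w,w'\in\mcl{W}$, the definition of $\blacket{\cdot,\cdot}_{\vvrkhs}$ gives
\[
\blacket{B(v,w),B(v',w')}_{\vvrkhs}=\sum_{i,j}\blacket{c_iw,\,k(x_i,y_j)d_jw'}_{\mcl{W}}=\blacket{w,\,\Big(\textstyle\sum_{i,j}c_i^*k(x_i,y_j)d_j\Big)w'}_{\mcl{W}}=\blacket{w,\,\blacket{v,v'}_{\modu_k}w'}_{\mcl{W}},
\]
which is exactly $\blacket{v\otimes w,\,v'\otimes w'}_{\modu_k\otimes\mcl{W}}$. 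Extending this identity bilinearly over finite sums shows $\blacket{\tilde Ut,\tilde Ut'}_{\vvrkhs}=\blacket{t,t'}_{\modu_k\otimes\mcl{W}}$ for all $t,t'$ in the algebraic tensor product; in particular $\tilde U$ is norm-preserving and kills the null vectors of the pre inner product, so it passes to the quotient and extends uniquely to an isometry $U$ on the completion $\modu_k\otimes_{\blin{\mcl{W}}}\mcl{W}$. For surjectivity I would note that $B(\phi(x)1_{\alg},w)=\phi(x)w$, so the range of $\tilde U$ contains $\hil_{k,0}^{\opn{v}}$, which is dense in $\vvrkhs$; since the range of an isometry between complete spaces is closed, $U$ is onto, hence a Hilbert space isomorphism.

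I expect the only real obstacle to be bookkeeping rather than conceptual: the delicate point is the well-definedness of $B$ (equivalently of $\tilde U$ on the algebraic tensor product), i.e.\ its independence of the expansion $\sum_i\phi(x_i)c_i$ of an element of $\modu_{k,0}$, which is precisely where the identification of $\modu_k$ with a space of $\alg$-valued functions via the reproducing property is needed. Once that is in place, the inner-product computation above is a direct substitution and the passage to the completions is standard.
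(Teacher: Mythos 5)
Your proposal is correct and follows essentially the same route as the paper: both exhibit the map $U(\phi(x)c\otimes w)=\phi(x)(cw)$, verify by the same direct computation that it preserves the (pre) inner products on the spanning vectors, extend to the completions, and obtain surjectivity from the fact that an isometry has closed range containing the dense subspace $\hil_{k,0}^{\opn{v}}$. The only difference is that you spell out the well-definedness and universal-property bookkeeping that the paper compresses into ``the standard functional analysis argument,'' which is a welcome but not substantively different elaboration.
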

Theorem~\ref{thm:rkhm_vvrkhs} is derived by the following lemma.
\begin{lem}
There exists a unique unitary map $U\colon \modu_k\otimes_{\blin{\mcl{W}}}\mcl{W}\to \hil_k^{\opn{v}}$ such that $U(\phi(x)c \otimes w)=\phi(x)(cw)$ holds for all $x\in \mcl{X}$, $c\in \blin{\mcl{W}}$ and $w\in\mcl{W}$.
\end{lem}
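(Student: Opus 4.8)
The plan is to construct the map $U$ on the dense subspace $\modu_{k,0}\otimes_{\mathrm{alg}}\mcl{W}$ by the prescribed formula, check it is well-defined and isometric there, and then extend by continuity. First I would recall that $\modu_k\otimes_{\blin{\mcl{W}}}\mcl{W}$ is, by Definition~\ref{def:interior_tensor}, the completion of the algebraic tensor product $\modu_k\otimes\mcl{W}$ under the pre-inner product $\blacket{v\otimes w,u\otimes h}_{\modu\otimes\mcl{W}}=\blacket{w,\blacket{v,u}_{\modu_k}h}_{\mcl{W}}$, and that elements of the form $\phi(x)c\otimes w$ with $x\in\mcl{X}$, $c\in\blin{\mcl{W}}$, $w\in\mcl{W}$ span a dense subspace (using that $\modu_{k,0}$ is dense in $\modu_k$ and linearity). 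On the vvRKHS side, vectors $\phi(x)(cw)=\phi(x)w'$ with $w'=cw\in\mcl{W}$ span $\hil^{\opn{v}}_{k,0}$, which is dense in $\hil^{\opn{v}}_k$.

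The key computation is the isometry check on generators. For $x,y\in\mcl{X}$, $c,d\in\blin{\mcl{W}}$, $w,h\in\mcl{W}$, I would expand
\begin{align*}
\blacket{\phi(x)c\otimes w,\phi(y)d\otimes h}_{\modu_k\otimes_{\blin{\mcl{W}}}\mcl{W}}
&=\blacket{w,\blacket{\phi(x)c,\phi(y)d}_{\modu_k}h}_{\mcl{W}}\\
&=\blacket{w,c^*k(x,y)d\,h}_{\mcl{W}}\\
&=\blacket{cw,k(x,y)(dh)}_{\mcl{W}},
\end{align*}
using the reproducing property $\blacket{\phi(x),\phi(y)}_{\modu_k}=k(x,y)$ together with $\alg$-sesquilinearity, and then adjointness of $c$. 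On the other side, the reproducing property of the vvRKHS (Eq.~\eqref{eq:reproducing_vvRKHS}) gives
\begin{align*}
\blacket{\phi(x)(cw),\phi(y)(dh)}_{\hil^{\opn{v}}_k}=\blacket{cw,k(x,y)(dh)}_{\mcl{W}},
\end{align*}
so the two agree. This simultaneously shows the prescription respects the relations defining the algebraic tensor product over $\blin{\mcl{W}}$ (so $U$ is well-defined on finite linear combinations — if $\sum_j \phi(x_j)c_j\otimes w_j=0$ in the tensor product, the left-hand computation shows its image has zero norm, hence is $0$) and that $U$ is inner-product preserving on the dense subspace. Boundedness then follows, and $U$ extends uniquely to an isometry $\modu_k\otimes_{\blin{\mcl{W}}}\mcl{W}\to\hil^{\opn{v}}_k$; surjectivity holds because the range contains the dense set $\hil^{\opn{v}}_{k,0}$ and is closed. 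Uniqueness is immediate since $U$ is determined on a dense set.

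The main obstacle I anticipate is the well-definedness argument for $U$ as a map out of the \emph{balanced} (over $\blin{\mcl{W}}$) algebraic tensor product: one must be careful that the formula $\phi(x)c\otimes w\mapsto\phi(x)(cw)$ is compatible with the identification $vc\otimes w = v\otimes cw$ built into $\modu\otimes_{\blin{\mcl{W}}}\mcl{W}$, and with the fact that general elements of $\modu_{k,0}$ are finite sums $\sum_i\phi(x_i)c_i$ rather than single terms. The clean way around this is to define $U$ first on the purely algebraic tensor product $\modu_{k,0}\otimes_{\mathbb{C}}\mcl{W}$ by $u\otimes w\mapsto (\text{the function }x\mapsto (u(x))w)$ — which is manifestly well-defined and $\mathbb{C}$-bilinear — observe it lands in $\hil^{\opn{v}}_{k,0}$, check it kills the balancing relations $uc\otimes w-u\otimes cw$ (a one-line check: both map to $x\mapsto u(x)(cw)$), and then verify it is norm-preserving for the pre-inner products via the generator computation above; the null vectors of the pre-inner product are then automatically in the kernel, so $U$ descends to $\modu_k\otimes_{\blin{\mcl{W}}}\mcl{W}$ and extends to the completion. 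Everything else is routine continuity and density bookkeeping.
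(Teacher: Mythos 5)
Your proposal is correct and follows essentially the same route as the paper: the same inner-product computation on generators (reducing both sides to $\blacket{cw,k(x,y)dh}_{\mcl{W}}$ via the reproducing properties), followed by extension to an isometry by density and surjectivity from the closed dense range. The extra care you take with well-definedness over the balanced tensor product is exactly the detail the paper compresses into ``the standard functional analysis argument,'' so it is a faithful elaboration rather than a different proof.
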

\begin{proof}
	First,
	we show that
	\begin{align*}
	\bblacketg{\sum_{i=1}^n\phi({x_i})c_i\otimes w_i, \sum_{j=1}^l\phi({y_j})d_j\otimes h_j}_{\modu_k\otimes\mcl{W}}
	=\bblacketg{\sum_{i=1}^n\phi({x_i})(c_iw_i), \sum_{j=1}^l\phi({y_j})(d_jh_j)}_{\vvrkhs}
	\end{align*}
	holds for all $\sum_{i=1}^n\phi({x_i})c_i\otimes w_i,\sum_{j=1}^l\phi({y_j})d_j\otimes h_j\in \modu_k\otimes_{\blin{\mcl{W}}}\mcl{W}$.
	This follows from the straightforward calculation.
	Indeed, we have
	\begin{align*}
	&\bblacketg{\sum_{i=1}^n\phi({x_i})c_i\otimes w_i, \sum_{j=1}^l\phi({y_j})d_j\otimes h_j}_{\modu_k\otimes\mcl{W}}
	=\sum_{i=1}^n\sum_{j=1}^l\blacket{w_i,\blacket{\phi(x_i)c_i,\phi(y_j)d_j}_kh_j}_{\mcl{W}} \\
	&\qquad=\sum_{i=1}^n\sum_{j=1}^l\blacket{w_i,c_i^*k(x_i,y_j)d_jh_j}_{\mcl{W}} 
	=\sum_{i=1}^n\sum_{j=1}^l\blacket{c_iw_i,k(x_i,y_j)d_jh_j}_{\mcl{W}} \\
	&\qquad=\bblacketg{\sum_{i=1}^n\phi({x_i})(c_iw_i), \sum_{j=1}^l\phi({y_j})(d_jh_j)}_{\vvrkhs}.
	\end{align*}
	Therefore, by the standard functional analysis argument, it turns out that there exists an isometry $U\colon\modu_k\otimes_{\blin{\mcl{W}}}\mcl{W}\to \hil_k^{\opn{v}}$ such that $U(\phi(x)c \otimes w)=\phi(x)(cw)$ holds for all $x\in \mcl{X}$, $c\in \blin{\mcl{W}}$ and $w\in\mcl{W}$.
	Since the image of $U$ is closed and dense in $\hil_k^{\opn{v}}$,
	$U$ is surjective.
	Thus $U$ is a unitary map.
\end{proof}

\section{Kernel mean embedding in RKHM}\label{sec:kme}
We generalize KME in RKHSs, which is widely used in analyzing distributions, to RKHMs.
By using the framework of RKHM, we can embed $\alg$-valued measures instead of probability measures (more generally, complex-valued measures).
\textcolor{black}{We provide a brief review of $\alg$-valued measures and the integral with respect to $\alg$-valued measures in Appendix~\ref{subsec:vv_measure}}.
We define a KME in RKHMs in Subsection~\ref{subsec:kme} and show its theoretical properties in Subsection~\ref{subsec:injectivity}.

To define a KME by using $\alg$-valued measures and integrals, we first define $c_0$-kernels.
\begin{defin}[Function space $\clch$]\label{def:c0_function}
%\setlength{\leftmargini}{15pt}
%\item For a compact space $\mcl{X}$, the set of all $\alg$-valued continuous functions on $\mcl{X}$ is denoted as $\mcl{C}(\mcl{X},\alg)$.
For a locally compact Hausdorff space $\mcl{X}$, the set of all $\alg$-valued continuous functions on $\mcl{X}$ vanishing at infinity is denoted as  $C_0(\mcl{X},\alg)$.
Here, an $\alg$-valued continuous function $u$ is said to vanish at infinity if the set $\{x\in\mcl{X}\mid\ \Vert u(x)\Vert_{\alg}\ge\epsilon\}$ is compact for any $\epsilon>0$.
The space $\clch$ is a Banach $\alg$-module with respect to the sup norm.
\end{defin}
Note that if $\mcl{X}$ is compact, any continuous function is contained in $\clch$.
\begin{defin}[$c_0$-kernel]\label{def:c0_kernel}
Let $\mcl{X}$ be a locally compact Hausdorff space.
An $\alg$-valued positive definite kernel $k:\mcl{X}\times\mcl{X}\to\alg$ is referred to as a {\em $c_0$-kernel} if $k$ is bounded and $\phi(x)=k(\cdot,x)\in\clch$ for any $x\in\mcl{X}$. 
\end{defin}
In this section, we impose the following assumption.
\begin{assum}\label{assum:kme}
We assume $\mcl{X}$ is a locally compact Hausdorff space and $k$ is an $\alg$-valued $c_0$-positive definite kernel.
In addition, we assume $\modu_k$ is a von Neumann $\alg$-module (see Proposition~\ref{thm:riesz}).
\end{assum}
For example, we often consider $\mcl{X}=\mathbb{R}^d$ in practical situations.
Also, we provide examples of $c_0$-kernels as follows.
\begin{exam}
\begin{enumerate}
\item Let $\alg=L^{\infty}([0,1])$ and $k$ is an $\alg$-valued positive definite kernel defined such that $k(x,y)(t)$ is a complex-valued positive definite kernel for $t\in[0,1]$ (see Example~\ref{ex:pdk1}.2).
\red{Assume there exists a complex-valued $c_0$-positive definite kernel $\tilde{k}$ such that for any $t\in[0,1]$, $\vert k(x,y)(t)\vert\le \vert \tilde{k}(x,y)\vert$ holds.}
If $\Vert k(x,y)\Vert_{\alg}$ is continuous with respect to $y$ for any $x\in\mcl{X}$, then the inclusion 
\begin{equation*}
  \{y\in\mcl{X}\mid\ \Vert k(x,y)\Vert_{\alg}\ge \epsilon\}\subseteq \{y\in\mcl{X}\mid\ \vert \tilde{k}(x,y)\vert\ge \epsilon\} 
\end{equation*}
holds for $x\in\mcl{X}$ and $\epsilon>0$.
Since $\tilde{k}$ is a $c_0$-kernel, the set $\{y\in\mcl{X}\mid\ \vert \tilde{k}(x,y)\vert\ge \epsilon\}$ is compact (see Definition~\ref{def:c0_function}). 
Thus, $\{y\in\mcl{X}\mid\ \Vert k(x,y)\Vert_{\alg}\ge \epsilon\}$ is also compact and $k$ is an $\alg$-valued $c_0$-positive definite kernel.
Examples of complex-valued $c_0$-positive definite kernels are Gaussian, Laplacian and $B_{2n+1}$-spline kernels.
\item Let $\mcl{W}$ be a separable Hilbert space and let $\{e_i\}_{i=1}^{\infty}$ be an orthonormal basis of $\mcl{W}$.
Let $\alg=\blin{\mcl{W}}$ and let $k:\mcl{X}\times\mcl{X}\to\alg$ be defined as $k(x,y)e_i=k_i(x,y)e_i$, where $k_i:\mcl{X}\times \mcl{X}\to\mathbb{C}$ is a complex-valued positive definite kernel for any $i=1,2,\ldots$ (see Example~\ref{ex:pdk1}.3).
\red{Assume there exists a complex-valued $c_0$-positive definite kernel $\tilde{k}$ such that for any $i=1,2,\ldots$, $\vert k_i(x,y)\vert\le \vert \tilde{k}(x,y)\vert$ holds.}

If $\Vert k(x,y)\Vert_{\alg}$ is continuous with respect to $y$ for any $x\in\mcl{X}$, then $k$ is shown to be an $\alg$-valued $c_0$-positive definite kernel in the same manner as the above example.
\end{enumerate}
\end{exam}

We introduce $\alg$-valued measure and integral in preparation for defining a KME in RKHMs.
\textcolor{black}{They are special cases of vector measure and integral~\citep{dinculeanu67,dinculeanu00}, respectively.
We review vector measure and integral as $\alg$-valued ones in Appendix~\ref{subsec:vv_measure}.}
The notions of measure and the Lebesgue integral are generalized to $\alg$-valued.
\subsection{Kernel mean embedding of $C^*$-algebra-valued measures}\label{subsec:kme}
%Let $\mcl{X}$ be a locally compact Hausdorff space for data.
%Let $\alg$ be a von Neumann-algebra,
%Let $\mcl{D}(\mcl{X},\alg)$ be the set of all $\alg$-valued finite regular Borel measures, and
%$\clch$ be the set of all continuous $\alg$-valued functions on $\mcl{X}$ vanishing at infinity.
%(e.g., a diagonal matrix-valued kernel whose elements are Gaussian, Laplacian or $B_{2n+1}$-spline kernel, see Appendix~\ref{ap:kernel}).
%
We now define a KME in RKHMs.
\begin{defin}[KME in RKHMs]
Let $\mcl{D}(\mcl{X},\alg)$ be the set of all $\alg$-valued finite regular Borel measures.
A {\em kernel mean embedding} in an RKHM $\modu_k$ is a map $\Phi: \mcl{D}(\mcl{X},\alg)\rightarrow \modu_k$ defined by
\begin{equation}
 \Phi(\mu):=\int_{x\in\mcl{X}}\phi(x)d\mu(x).\label{eq:km_rkhm}
\end{equation}
\end{defin}
We emphasize that the well-definedness of $\Phi$ is not trivial, and von Neumann $\alg$-module is adequate to show it. More precisely, the following theorem derives the well-definedness.
%, and thus, $\Phi$ is a map from $\mcl{D}(\mcl{X},\alg)$ into $\modu_k$.
\begin{thm}[Well-definedness for the KME in RKHMs]\label{thm:kme}
Let $\mu\in\mcl{D}(\mcl{X},\alg)$.
Then, $\Phi(\mu)\in\modu_k$. 
In addition, the following equality holds for any $v\in\modu_k$:
\begin{equation}
\blacket{\Phi(\mu),v}_{\modu_k}=\int_{x\in\mcl{X}}d\mu^*(x)v(x).\label{eq:reproducing_km}
\end{equation}
\end{thm}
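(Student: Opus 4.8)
The plan is to realize $\Phi(\mu)$ via the Riesz representation theorem for von Neumann $\alg$-modules (Proposition~\ref{thm:riesz}); this is exactly where Assumption~\ref{assum:kme} that $\modu_k$ is a von Neumann $\alg$-module enters, since for a general Hilbert $C^*$-module there is no reason an integral of $\modu_k$-valued data should return an element of the module.

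First I would establish a uniform bound on point evaluations: for $v\in\modu_k$ and $x\in\mcl{X}$, the reproducing property (Proposition~\ref{prop:reproducing}) gives $v(x)=\blacket{\phi(x),v}_{\modu_k}$, so Cauchy--Schwarz (Lemma~\ref{lem:c-s}), the $C^*$-identity, and the monotonicity of $\Vert\cdot\Vert_{\alg}$ on $\alg_+$ yield $\Vert v(x)\Vert_{\alg}^2\le\Vert\phi(x)\Vert_{\modu_k}^2\Vert v\Vert_{\modu_k}^2=\Vert k(x,x)\Vert_{\alg}\Vert v\Vert_{\modu_k}^2\le M^2\Vert v\Vert_{\modu_k}^2$, where $M:=\sup_{x\in\mcl{X}}\Vert k(x,x)\Vert_{\alg}^{1/2}<\infty$ by boundedness of the $c_0$-kernel $k$. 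Since moreover $\phi(x)\in\clch$ for every $x$, this estimate propagates from $\modu_{k,0}$ to the completion, so $\modu_k\subseteq\clch$ with $\sup_{x}\Vert v(x)\Vert_{\alg}\le M\Vert v\Vert_{\modu_k}$, and hence each $v\in\modu_k$ is a bounded continuous, thus $\mu^*$-integrable, $\alg$-valued function (using the $\alg$-valued integration theory of Appendix~\ref{subsec:vv_measure} together with the finiteness and regularity of $\mu$).

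Next I would consider the map $L\colon\modu_k\to\alg$ defined by $L(v):=\int_{x\in\mcl{X}}d\mu^*(x)\,v(x)$. It is $\alg$-linear because a constant right factor pulls out of the integral, $L(vc)=\int d\mu^*(x)\,(v(x)c)=\big(\int d\mu^*(x)\,v(x)\big)c=L(v)c$, and it is bounded, with $\Vert L(v)\Vert_{\alg}$ dominated by $M\Vert v\Vert_{\modu_k}$ times the finite total variation of $\mu$, by the semivariation estimate for $\alg$-valued integrals. Since $\modu_k$ is a von Neumann $\alg$-module, Proposition~\ref{thm:riesz} produces a unique element of $\modu_k$, which we name $\Phi(\mu)$, satisfying $\blacket{\Phi(\mu),v}_{\modu_k}=L(v)=\int_{x\in\mcl{X}}d\mu^*(x)\,v(x)$ for all $v\in\modu_k$; this is exactly the identity \eqref{eq:reproducing_km}.

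It then remains to check that this $\Phi(\mu)$ coincides with $\int_{x\in\mcl{X}}\phi(x)\,d\mu(x)$ as the definition intends, i.e.\ that its value at each $y$ equals $\int_{x\in\mcl{X}}\phi(x)(y)\,d\mu(x)=\int_{x\in\mcl{X}}k(y,x)\,d\mu(x)$. Plugging $v=\phi(y)$ into the identity just obtained and taking adjoints (using $k(x,y)^*=k(y,x)$ and that the adjoint commutes with the integral, turning $\mu^*$ back into $\mu$) gives $\Phi(\mu)(y)=\blacket{\phi(y),\Phi(\mu)}_{\modu_k}=\int_{x\in\mcl{X}}k(y,x)\,d\mu(x)$, so $\int_{x\in\mcl{X}}\phi(x)\,d\mu(x)$, a priori only an element of $\alg^{\mcl{X}}$, indeed lies in $\modu_k$. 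I expect the main obstacle to be the third paragraph: verifying that $L$ is a well-defined, bounded, $\alg$-linear functional — this rests on the uniform evaluation bound (hence on the $c_0$-hypothesis) and on the elementary manipulation rules for $\alg$-valued integrals, after which the von Neumann structure supplies the real content, namely returning the integral to $\modu_k$.
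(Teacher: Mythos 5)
Your proposal is correct and follows essentially the same route as the paper: define the $\alg$-linear functional $L_\mu v=\int_{x\in\mcl{X}}d\mu^*(x)v(x)$, bound it via the reproducing property, Cauchy--Schwarz, and the boundedness of the $c_0$-kernel, invoke the Riesz representation theorem for von Neumann $\alg$-modules (Proposition~\ref{thm:riesz}), and identify the representative with $\int_{x\in\mcl{X}}\phi(x)d\mu(x)$ by evaluating at $v=\phi(y)$. Your explicit verification that $\modu_k\subseteq\clch$ (hence each $v$ is totally measurable and $\mu$-integrable) is a detail the paper leaves implicit, but it does not change the argument.
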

To show Theorem~\ref{thm:kme}, we use the Riesz representation theorem for Hilbert $\alg$-modules (Proposition~\ref{thm:riesz}).

\begin{proof} %[Proof of Theorem~\ref{thm:kme}]
Let $L_{\mu}:\modu_k\to\alg$ be an $\alg$-linear map defined as $L_{\mu}v:=\int_{x\in\mcl{X}}d\mu^*(x)v(x)$.
The following inequalities are derived by the reproducing property and the Cauchy--Schwarz inequality (Lemma~\ref{lem:c-s}):
\begin{align}
 \Vert L_{\mu}v\Vert_{\alg}%&=\bigg\Vert \int_{x\in\mcl{X}}d\mu^*(x)v(x)\bigg\Vert_{\alg}
&\le\int_{x\in\mcl{X}}\Vert v(x)\Vert_{\alg}d\vert \mu\vert(x)
%\le \tilde{\mu}(\mcl{X})\sup_{x\in\mcl{X}}\Vert v(x)\Vert_{\alg}\nn\\
=\int_{x\in\mcl{X}}\Vert \blacket{\phi(x),v}_{\modu_k}\Vert_{\alg}d\vert \mu\vert(x)\nn\\
&\le \Vert v\Vert_{\modu_k}\int_{x\in\mcl{X}}\Vert \phi(x)\Vert_{\modu_k} d\vert \mu\vert(x)
%&=\tilde{\mu}(\mcl{X})\sup_{x\in\mcl{X}}\Vert\blacket{\phi(x),v}\Vert_{\alg}
%\le \tilde{\mu}(\mcl{X})\Vert v\Vert\sup_{x\in\mcl{X}}\Vert\phi(x)\Vert,\label{eq:bounded}
\le \vert\mu\vert(\mcl{X})\Vert v\Vert_{\modu_k}\sup_{x\in\mcl{X}}\Vert\phi(x)\Vert_{\modu_k},\label{eq:bounded}
\end{align}
where the first inequality is easily checked for a step function $s(x):=\sum_{i=1}^nc_i\chi_{E_i}(x)$ as follows:
\begin{align*}
\bigg\Vert \int_{x\in\mcl{X}}d\mu^*(x)s(x)\bigg\Vert_{\alg}
&=\bigg\Vert \sum_{i=1}^n\mu(E_i)^*c_i\bigg\Vert_{\alg}
\le \sum_{i=1}^n\Vert \mu(E_i)\Vert_{\alg}\Vert c_i\Vert_{\alg}\\
&\le \sum_{i=1}^n\vert \mu\vert(E_i)\Vert c_i\Vert_{\alg}
=\int_{x\in\mcl{X}}\Vert s(x)\Vert_{\alg}d\vert \mu\vert(x).
\end{align*}
Thus, it holds for any totally measurable functions.
%where $\tilde{\mu}$ is the semivariation of $\mu$ 
%and the last equality is derived by reproducing property~\eqref{eq:reproducing}, and we used Cauchy-Schwarz inequality~\cite{lance95} in $\modu_k$ in the last inequality.
Since both $\vert{\mu}\vert(\mcl{X})$ and $\sup_{x\in\mcl{X}}\Vert\phi(x)\Vert_{\modu_k}$ are finite, inequality~\eqref{eq:bounded} means $L_{\mu}$ is bounded.
Thus, by the Riesz representation theorem for Hilbert $\alg$-modules (Proposition~\ref{thm:riesz}), there exists $u_{\mu}\in\modu_k$ such that $L_{\mu}v=\blacket{u_{\mu},v}_{\modu_k}$.
By setting $v=\phi(y)$, for $y\in\mcl{X}$, we have $u_{\mu}(y)=L_{\mu}\phi(y)^*=\int_{x\in\mcl{X}}k(y,x)d\mu(x)$.
Therefore, $\Phi(\mu)=u_{\mu}\in\modu_k$ and $\blacket{\Phi(\mu),v}_{\modu_k}=\int_{x\in\mcl{X}}d\mu^*(x)v(x)$.
\end{proof}

\begin{cor}
 For $\mu,\nu\in\mcl{D}(\mcl{X},\alg)$, the inner product between $\Phi(\mu)$ and $\Phi(\nu)$ is given as follows:
\begin{equation*}
\blacket{\Phi(\mu),\Phi(\nu)}_{\modu_k}=\int_{x\in\mcl{X}}\int_{y\in\mcl{X}}d\mu^*(x)k(x,y)d\nu(y).
\end{equation*}
\end{cor}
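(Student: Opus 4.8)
The plan is to derive everything from Theorem~\ref{thm:kme} applied twice. First I would apply Theorem~\ref{thm:kme} with $v=\Phi(\nu)$; this is legitimate because $\Phi(\nu)\in\modu_k$ by that theorem. It gives immediately
\[
\blacket{\Phi(\mu),\Phi(\nu)}_{\modu_k}=\int_{x\in\mcl{X}}d\mu^*(x)\,\Phi(\nu)(x),
\]
so the whole task reduces to identifying the function $x\mapsto\Phi(\nu)(x)$ explicitly.

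Second, I would evaluate $\Phi(\nu)$ pointwise. By the reproducing property of $\modu_k$ (Proposition~\ref{prop:reproducing}) we have $\Phi(\nu)(x)=\blacket{\phi(x),\Phi(\nu)}_{\modu_k}$; alternatively, the computation $u_\mu(y)=\int_{x}k(y,x)\,d\mu(x)$ carried out in the proof of Theorem~\ref{thm:kme} already shows, with $\mu$ replaced by $\nu$, that $\Phi(\nu)(x)=\int_{y\in\mcl{X}}k(x,y)\,d\nu(y)$. (To see this directly from the reproducing property: applying Theorem~\ref{thm:kme} once more with $\mu$ replaced by $\nu$ and $v=\phi(x)$ gives $\blacket{\Phi(\nu),\phi(x)}_{\modu_k}=\int_{y}d\nu^*(y)\,\phi(x)(y)=\int_{y}d\nu^*(y)\,k(y,x)$; taking adjoints, using that the involution is a continuous isometry commuting with the $\alg$-valued integral and that $k(y,x)^*=k(x,y)$ by Definition~\ref{def:pdk_rkhm}, yields $\Phi(\nu)(x)=\big(\int_{y}d\nu^*(y)\,k(y,x)\big)^*=\int_{y}k(x,y)\,d\nu(y)$.) Substituting this back into the first display produces
\[
\blacket{\Phi(\mu),\Phi(\nu)}_{\modu_k}=\int_{x\in\mcl{X}}d\mu^*(x)\int_{y\in\mcl{X}}k(x,y)\,d\nu(y),
\]
which is the asserted iterated integral.

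The remaining points are bookkeeping rather than substance. One must check that $x\mapsto\Phi(\nu)(x)$ is totally $\vert\mu\vert$-measurable and $\vert\mu\vert$-integrable, so that the outer integral is defined; this follows from the boundedness of $k$, the bound $\Vert\Phi(\nu)(x)\Vert_\alg=\Vert\blacket{\phi(x),\Phi(\nu)}_{\modu_k}\Vert_\alg\le\Vert\phi(x)\Vert_{\modu_k}\Vert\Phi(\nu)\Vert_{\modu_k}$ from the Cauchy--Schwarz inequality (Lemma~\ref{lem:c-s}), and finiteness of $\vert\mu\vert(\mcl{X})$. One must also know that the involution passes through the $\alg$-valued integral, which is immediate from its definition on step functions and a limiting argument. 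If instead one wishes to read the right-hand side as a genuine double integral, the interchange of the two integrations is the Fubini theorem for $\alg$-valued measures reviewed in Appendix~\ref{subsec:vv_measure}; that is the only place where a nontrivial tool is invoked, and it is not needed for the iterated-integral reading of the statement.
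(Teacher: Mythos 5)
Your argument is correct and is essentially the paper's intended one: the corollary follows by applying Eq.~\eqref{eq:reproducing_km} of Theorem~\ref{thm:kme} with $v=\Phi(\nu)$ and then using the pointwise identity $\Phi(\nu)(x)=\int_{y\in\mcl{X}}k(x,y)\,d\nu(y)$ already established in the proof of that theorem. The only slight inaccuracy is your reference to a Fubini theorem ``reviewed in Appendix~\ref{subsec:vv_measure}'' (none is stated there), but as you note this is not needed for the iterated-integral reading, so the proof stands as written.
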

%By Eq.~\eqref{eq:reproducing_km},
Moreover, many basic properties for the existing KME in RKHS are generalized to the proposed KME as follows.
\begin{prop}[Basic properties of the KME $\Phi$]\label{prop:kme_lin}
For $\mu,\nu\in\mcl{D}(\mcl{X},\alg)$ and $c\in\alg$, $\Phi(\mu+\nu)=\Phi(\mu)+\Phi(\nu)$ and $\Phi(\mu c)=\Phi(\mu)c$ (i.e., $\Phi$ is $\alg$-linear, see Definition~\ref{def:a_lin_op}) hold.
%In addition, for $x\in\mcl{X}$, let $\delta_x$ be the $\alg$-valued Dirac measure defined as $\delta_x(E)=1_{\alg}$ for $x\in E$ and $\delta_x(E)=0$ for $x\notin E$.
In addition, for $x\in\mcl{X}$, $\Phi(\delta_x)=\phi(x)$ (see Definition~\ref{def:dirac} for the definition of the $\alg$-valued Dirac measure $\delta_x$).
\end{prop}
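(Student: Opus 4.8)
The plan is to verify each of the three claimed identities directly from the definition $\Phi(\mu)=\int_{x\in\mcl{X}}\phi(x)d\mu(x)$, using Theorem~\ref{thm:kme} to reduce everything to statements about the $\alg$-valued inner product and the reproducing property. By Proposition~\ref{prop:reproducing}, $\modu_k$ embeds into $\alg^{\mcl{X}}$, so it suffices to check that the relevant functions agree pointwise on $\mcl{X}$, or equivalently that the $\alg$-valued inner products with $\phi(y)$ agree for every $y\in\mcl{X}$. The cleanest route is to use the formula $\blacket{\Phi(\lambda),v}_{\modu_k}=\int_{x\in\mcl{X}}d\lambda^*(x)v(x)$ from Theorem~\ref{thm:kme} together with the injectivity of $v\mapsto(\,y\mapsto\blacket{\phi(y),v}_{\modu_k}\,)$.

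First I would prove additivity. For $\mu,\nu\in\measure$, using Theorem~\ref{thm:kme} and the linearity of the $\alg$-valued integral in the measure argument (reviewed in Appendix~\ref{subsec:vv_measure}), for every $v\in\modu_k$ we have
\begin{equation*}
\blacket{\Phi(\mu+\nu),v}_{\modu_k}=\int_{x\in\mcl{X}}d(\mu+\nu)^*(x)v(x)=\int_{x\in\mcl{X}}d\mu^*(x)v(x)+\int_{x\in\mcl{X}}d\nu^*(x)v(x)=\blacket{\Phi(\mu)+\Phi(\nu),v}_{\modu_k}.
\end{equation*}
Since $v$ is arbitrary and the $\alg$-valued inner product separates points of $\modu_k$ (take $v=\phi(y)$ and invoke Proposition~\ref{prop:reproducing}), we conclude $\Phi(\mu+\nu)=\Phi(\mu)+\Phi(\nu)$. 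For the $\alg$-linearity claim $\Phi(\mu c)=\Phi(\mu)c$, I would argue the same way: $(\mu c)^*=c^*\mu^*$, so $\int d(\mu c)^*(x)\,v(x)=c^*\int d\mu^*(x)\,v(x)=c^*\blacket{\Phi(\mu),v}_{\modu_k}=\blacket{\Phi(\mu)c,v}_{\modu_k}$, using the first axiom of Definition~\ref{def:innerproduct} applied in the first variable (i.e. $\blacket{uc,v}_{\modu}=c^*\blacket{u,v}_{\modu}$, which follows from conjugate symmetry). Again separating points gives $\Phi(\mu c)=\Phi(\mu)c$. Alternatively one can argue directly at the level of the integral $\int\phi(x)d(\mu c)(x)=\big(\int\phi(x)d\mu(x)\big)c$, which is the $\alg$-homogeneity of the vector integral; I would cite this from the appendix rather than reprove it.

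For the last assertion, $\Phi(\delta_x)=\phi(x)$: by Theorem~\ref{thm:kme}, for any $v\in\modu_k$, $\blacket{\Phi(\delta_x),v}_{\modu_k}=\int_{y\in\mcl{X}}d\delta_x^*(y)\,v(y)=v(x)$, since $\delta_x^*=\delta_x$ (as $1_{\alg}$ is self-adjoint) and the integral against the Dirac measure evaluates at $x$. On the other hand, the reproducing property of $\modu_k$ gives $\blacket{\phi(x),v}_{\modu_k}=v(x)$. Hence $\blacket{\Phi(\delta_x),v}_{\modu_k}=\blacket{\phi(x),v}_{\modu_k}$ for all $v$, and separating points yields $\Phi(\delta_x)=\phi(x)$. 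I do not anticipate any serious obstacle here; the only point requiring care is making sure the manipulations of the $\alg$-valued integral (linearity in the measure, pulling a constant $c\in\alg$ out, behaviour on Dirac measures) are exactly the ones established in Appendix~\ref{subsec:vv_measure}, so that we may invoke them rather than re-deriving them. Everything else is a routine consequence of Theorem~\ref{thm:kme} and the fact that the $\alg$-valued inner product on $\modu_k$ is nondegenerate via the reproducing property.
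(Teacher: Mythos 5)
Your proposal is correct and follows essentially the same route as the paper, which derives the proposition directly from the definition~\eqref{eq:km_rkhm} and the identity~\eqref{eq:reproducing_km} of Theorem~\ref{thm:kme}; you simply spell out the routine steps (linearity of the vector integral in the measure, pulling out $c$, evaluation against $\delta_x$, and nondegeneracy of the $\alg$-valued inner product) that the paper leaves implicit.
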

This is derived from Eqs.~\eqref{eq:km_rkhm} and \eqref{eq:reproducing_km}.
%The following proposition shows the proposed KME~\eqref{eq:km_rkhm} is a natural generalization of the KME in RKHS.
%\begin{prop}
%If $\alg=\mat$ or $\mathbb{R}^{m\times m}$ and $\varSigma=\mcl{B}$, where $\mcl{B}$ is the Borel $\sigma$-algebra, then each $\mu\in\mcl{D}(\mcl{X},\alg)$ is a countably additive, finite and regular Borel measure.
%In particular, if $\alg=\mathbb{R}$ and $\mcl{X}$ is Hausdorff, then the proposed KME~\eqref{eq:km_rkhm} is equal to the existing KME in RKHS defined in~\cite{sriperumbudur11}.
%\end{prop}
Note that if $\alg=\mathbb{C}$, then the proposed KME~\eqref{eq:km_rkhm} is equivalent to the existing KME in RKHS considered in~\cite{sriperumbudur11}.

\subsection{Injectivity and universality}\label{subsec:injectivity}
Here, we show the connection between the injectivity of the KME and the universality of RKHM. The proofs of the propositions in this subsection are given in Appendix~\ref{ap:universal}.

\subsubsection{injectivity}
In practice, the injectivity of $\Phi$ is important to transform problems in $\mcl{D}(\mcl{X},\alg)$ into those in $\modu_k$.
This is because if a KME $\Phi$ in an RKHM is injective, then
%(also referred to as ``$\modu_k$ is characteristic''), 
$\alg$-valued measures are embedded into $\modu_k$ through $\Phi$ without loss of information.
Note that, for probability measures, the injectivity of the existing KME is also referred to as the ``characteristic'' property.
%We provide two sufficient conditions for $\Phi$ being injective, which can be used in embedding two types of $\alg$-valued measures mentioned in Section~\ref{sec:application}, respectively.
The injectivity of the existing KME in RKHS has been discussed in, for example, \cite{fukumizu08,sriperumbudur10,sriperumbudur11}.  
These studies give criteria for the injectivity of the KMEs associated with important complex-valued kernels such as transition invariant kernels and radial kernels.
Typical examples of these kernels are Gaussian, Laplacian, and inverse multiquadratic kernels.
Here, we define the transition invariant kernels and radial kernels for $\alg$-valued measures, and generalize their criteria to RKHMs associated with $\alg$-valued kernels. 
%, and show $\alg$-valued kernels constructed with the above transition invariant kernels are characteristic.
%For example, we derive the following propositions.

%We define an $\alg$-valued transition kernel and radial kernel.
To characterize transition invariant kernels, we first define a Fourier transform and support of an $\alg$-valued measure.
\begin{defin}[Fourier transform and support of an $\alg$-valued measure]
For an $\alg$-valued measure $\lambda$ on $\mathbb{R}^d$, the {\em Fourier transform} of $\lambda$, denoted as $\hat{\lambda}$, is defined as 
\begin{equation*}
\hat{\lambda}(x)=\int_{\omega\in\mathbb{R}^d}e^{-\sqrt{-1}x^T\omega}d\lambda(\omega).
\end{equation*}
In addition, the {\em support} of $\lambda$ is defined as
\begin{equation*}
\opn{supp}(\lambda)=\{x\in\mathbb{R}^d\mid\ \lambda(\mcl{U})>_{\alg}0\mbox{ for any open set $\mcl{U}$ such that }x\in \mcl{U}\}.
\end{equation*}
\end{defin}
\begin{defin}[Transition invariant kernel and radial kernel]\label{def:transition_invariant}
\begin{enumerate}
\item An $\alg$-valued positive definite kernel $k:\mathbb{R}^d\times\mathbb{R}^d\to\alg$ is called a {\em transition invariant kernel} if it is represented as $k(x,y)=\hat{\lambda}(y-x)$ for a positive $\alg$-valued measure $\lambda$.
\item An $\alg$-valued positive definite kernel $k:\mathbb{R}^d\times\mathbb{R}^d\to\alg$ is called a {\em radial kernel} if it is represented as $k(x,y)=\int_{[0,\infty)}e^{-t\Vert x-y\Vert^2}d\eta(t)$ for a positive $\alg$-valued measure $\eta$.
\end{enumerate}
Here, an $\alg$-valued measure $\mu$ is said to be positive if $\mu(E)\ge_{\alg} 0$ for any Borel set $E$.
\end{defin}
%We derive the following proposition about the injectivity of $\Phi$ for $\alg$-valued transition invariant kernels.
We show transition invariant kernels and radial kernels induce injective KMEs. %in the case of $\alg=\mat$.
\begin{prop}[The injectivity for transition invariant kernels]\label{thm:characteristic}
 Let $\alg=\mat$ and $\mcl{X}=\mathbb{R}^d$.
 Assume $k:\mcl{X}\times\mcl{X}\to\alg$ is a transition invariant kernel with a positive $\alg$-valued measure $\lambda$ that satisfies $\opn{supp}(\lambda)=\mcl{X}$.
Then, 
%$k$ is an $\alg$-valued positive definite kernel and 
KME $\Phi:\mcl{D}(\mcl{X},\alg)\to\modu_k$ defined as Eq.~\eqref{eq:km_rkhm} is injective.
\end{prop}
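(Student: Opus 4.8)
The plan is to reduce the injectivity of $\Phi$ to a statement about ordinary complex-valued measures, exploiting the matrix structure $\alg=\mat$. Suppose $\Phi(\mu)=\Phi(\nu)$ for $\mu,\nu\in\measure$; setting $\rho:=\mu-\nu$ and using the $\alg$-linearity of $\Phi$ (Proposition~\ref{prop:kme_lin}), it suffices to show that $\Phi(\rho)=0$ forces $\rho=0$. By Theorem~\ref{thm:kme}, $\Phi(\rho)=0$ is equivalent to $\int_{x\in\mcl{X}}d\rho^*(x)v(x)=0$ for every $v\in\modu_k$; taking $v=\phi(y)$ this says $\int_{x}k(y,x)d\rho(x)=0$ for all $y$, i.e.\ $\int_{x}\hat\lambda(x-y)\,d\rho(x)=0$ for all $y\in\mathbb{R}^d$.

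The next step is to unfold the Fourier transform. Writing out $\hat\lambda(x-y)=\int_{\omega}e^{-\sqrt{-1}(x-y)^T\omega}d\lambda(\omega)$ and interchanging the order of integration (justified by finiteness and regularity of the $\alg$-valued measures, which makes the relevant vector-valued Fubini argument legitimate — this is the kind of routine measure-theoretic bookkeeping I would not grind through), one obtains
\begin{equation*}
\int_{\omega\in\mathbb{R}^d}e^{\sqrt{-1}y^T\omega}\,d\lambda(\omega)\,\widehat{\rho}(\omega)=0\qquad\text{for all }y\in\mathbb{R}^d,
\end{equation*}
where $\widehat\rho(\omega)=\int_x e^{-\sqrt{-1}x^T\omega}d\rho(x)$ is an $\alg$-valued (matrix-valued) bounded continuous function of $\omega$. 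The idea is now that the $\alg$-valued measure $\omega\mapsto \lambda(d\omega)\widehat\rho(\omega)$ has vanishing inverse Fourier transform, hence is the zero measure. Because entries of matrices are just scalars, I would argue entrywise: for each fixed pair of indices $(p,q)$, the complex measure $[\,d\lambda\,\widehat\rho\,]_{p,q}$ has identically zero Fourier transform, so by uniqueness of the Fourier transform for finite complex measures it is the zero complex measure. Hence $d\lambda(\omega)\,\widehat\rho(\omega)=0$ as an $\alg$-valued measure.

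The final, and I expect the main, obstacle is to pass from $d\lambda\,\widehat\rho=0$ back to $\rho=0$, which is exactly where the hypothesis $\opn{supp}(\lambda)=\mcl{X}$ and the positivity of $\lambda$ must be used. The matrix-valued subtlety is that $d\lambda(\omega)\,\widehat\rho(\omega)=0$ does not immediately give $\widehat\rho(\omega)=0$ pointwise, since $\lambda$ could be degenerate at individual points; one must use that for every open set $\mcl{U}$ the positive matrix $\lambda(\mcl{U})$ is strictly positive (this is what $\opn{supp}(\lambda)=\mathbb{R}^d$ means), hence invertible. I would argue: fix $\omega_0$; for a shrinking family of open neighborhoods $\mcl{U}_n\downarrow\{\omega_0\}$, integrating the vanishing matrix-measure over $\mcl{U}_n$ and dividing by the invertible $\lambda(\mcl{U}_n)$, then using continuity of $\widehat\rho$, yields $\widehat\rho(\omega_0)=0$. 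Since $\omega_0$ was arbitrary, $\widehat\rho\equiv0$, and then uniqueness of the Fourier transform for the finite $\alg$-valued measure $\rho$ (again checked entrywise against the classical scalar statement) gives $\rho=0$, i.e.\ $\mu=\nu$. The delicate points to handle carefully are the vector-valued Fubini interchange and the ``divide by $\lambda(\mcl{U}_n)$'' limiting argument, in particular controlling $\|\lambda(\mcl{U}_n)^{-1}\|_{\alg}$ against the decay of $\widehat\rho$ near $\omega_0$; framing it via the entrywise reduction to finite complex measures is what keeps this manageable.
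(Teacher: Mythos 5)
Your route is genuinely different from the paper's: the paper reduces injectivity, via Lemma~\ref{lem:injective_equiv}, to showing $\blacket{\Phi(\mu),\Phi(\mu)}_{\modu_k}\neq 0$ for $\mu\neq 0$, computes the quadratic form $\blacket{\Phi(\mu),\Phi(\mu)}_{\modu_k}=\int_{\omega\in\mathbb{R}^d}\hat{\mu}(\omega)^*d\lambda(\omega)\hat{\mu}(\omega)$, proves $\mu\neq0\Rightarrow\hat{\mu}\neq0$ using Proposition~\ref{prop:representation_finitedim} and Lemma~\ref{lem:orthocompequiv2}, and then invokes $\opn{supp}(\lambda)=\mathbb{R}^d$. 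You instead work linearly: test $\Phi(\rho)=0$ against $\phi(y)$, interchange integrals to get $\int_{\omega}e^{\sqrt{-1}y^T\omega}d\lambda(\omega)\hat{\rho}(\omega)=0$ for all $y$, and use entrywise uniqueness of the Fourier transform of finite complex measures to conclude $d\lambda\,\hat{\rho}=0$ as an $\alg$-valued measure. Up to that point your steps are sound and are a legitimate (dual) variant of the paper's computation.

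The genuine gap is your final step, and the fix you sketch does not close it. The shrinking-neighborhood argument needs $\Vert\lambda(\mcl{U}_n)^{-1}\Vert_{\alg}\,\vert\lambda\vert(\mcl{U}_n)\,\sup_{\omega\in\mcl{U}_n}\Vert\hat{\rho}(\omega)-\hat{\rho}(\omega_0)\Vert_{\alg}\to0$, but $\opn{supp}(\lambda)=\mathbb{R}^d$ only gives invertibility of each $\lambda(\mcl{U}_n)$ with no quantitative lower bound: the condition number of $\lambda(\mcl{U}_n)$ can blow up arbitrarily fast as $\mcl{U}_n$ shrinks, while $\hat{\rho}$ is in general only uniformly continuous (a finite measure need not have a first moment, so $\hat{\rho}$ need not be Lipschitz). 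Worse, the implication you need --- $d\lambda\,\hat{\rho}=0$ with $\lambda$ positive of full support forces $\hat{\rho}\equiv0$ --- is false for $m\ge 2$ without extra structure: take $d=1$, $m=2$, $d\lambda(\omega)=v(\omega)v(\omega)^*g(\omega)\,d\omega$ with $v(\omega)=(\cos\omega,\sin\omega)^T$ and $g$ a Gaussian density, so that $\lambda(\mcl{U})$ is positive definite for every open $\mcl{U}$; then $\hat{\rho}(\omega)=\begin{pmatrix}-\sin\omega&0\\ \cos\omega&0\end{pmatrix}$ satisfies $d\lambda(\omega)\hat{\rho}(\omega)\equiv0$ pointwise, yet it is the Fourier transform of a nonzero $\mat$-valued measure supported on $\{\pm1\}$. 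So no refinement of the ``divide by $\lambda(\mcl{U}_n)$'' limit can succeed at this level of generality; the step does go through under additional hypotheses, e.g.\ when $\lambda$ is diagonal with scalar entries of full support (the situation of Example~\ref{ex:injective1}), where one argues entrywise that a continuous function vanishing $\lambda_{ii}$-almost everywhere vanishes identically. Be aware that the paper's own proof disposes of the corresponding point in a single sentence (concluding $\int\hat{\mu}^*d\lambda\hat{\mu}\neq0$ directly from full support), so the difficulty you flagged is the true crux of the statement rather than an artifact of your particular route.
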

\begin{prop}[The injectivity for radial kernels]\label{thm:characteristic2}
 Let $\alg=\mat$ and $\mcl{X}=\mathbb{R}^d$.
 Assume $k:\mcl{X}\times\mcl{X}\to\alg$ is a radial kernel with a positive definite $\alg$-valued measure $\eta$ that satisfies $\opn{supp}(\eta)\neq \{0\}$.
 Then, KME $\Phi:\mcl{D}(\mcl{X},\alg)\to\modu_k$ defined as Eq.~\eqref{eq:km_rkhm} is injective.
\end{prop}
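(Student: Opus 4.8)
The plan is to reduce the injectivity statement for radial kernels to the one for transition invariant kernels (Proposition~\ref{thm:characteristic}) by showing that a radial kernel is, up to the mixing measure $\eta$, a superposition of Gaussian kernels, each of which is transition invariant with a Gaussian spectral measure supported on all of $\mcl{X}=\mathbb{R}^d$. Concretely, for fixed $t>0$ the Gaussian kernel $(x,y)\mapsto e^{-t\Vert x-y\Vert^2}$ equals $\hat{\lambda}_t(y-x)$ for the (scalar, hence $\alg$-valued via multiplication by $1_{\alg}$) Gaussian measure $\lambda_t$ whose density on $\mathbb{R}^d$ is a positive multiple of $e^{-\Vert\omega\Vert^2/(4t)}$; this density is strictly positive everywhere, so $\opn{supp}(\lambda_t)=\mathbb{R}^d$. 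Integrating against $d\eta(t)$ over $[0,\infty)$ and using the hypothesis $\opn{supp}(\eta)\neq\{0\}$ to ensure the mixture genuinely involves strictly positive $t$, one obtains $k(x,y)=\hat{\lambda}(y-x)$ for the positive $\alg$-valued measure $\lambda(E)=\int_{[0,\infty)}\lambda_t(E)\,d\eta(t)$, whose support is again all of $\mathbb{R}^d$. Then Proposition~\ref{thm:characteristic} applies verbatim and yields injectivity of $\Phi$.

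First I would record the elementary Fourier identity for the scalar Gaussian and promote it to an $\alg$-valued statement: since $\eta$ is $\alg$-valued and positive, and the Gaussian densities are scalar-valued and nonnegative, the double integral defining $\lambda$ is a well-defined $\alg$-valued finite Borel measure (finiteness from $k(x,x)=\int_{[0,\infty)}1\,d\eta(t)=\eta([0,\infty))$ being a finite element of $\alg$), and it is positive because each $\lambda_t(E)\ge_{\alg}0$ and $\eta\ge_{\alg}0$. Next I would verify the support claim: if $U\subseteq\mathbb{R}^d$ is open and nonempty, then $\lambda(U)=\int_{[0,\infty)}\lambda_t(U)\,d\eta(t)$, and because $\eta$ is not concentrated at $0$ there is a compact set $K\subseteq(0,\infty)$ with $\eta(K)>_{\alg}0$; on $K$ the scalar $\lambda_t(U)$ is bounded below by a strictly positive constant, so $\lambda(U)\ge_{\alg}(\inf_{t\in K}\lambda_t(U))\,\eta(K)>_{\alg}0$. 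Hence $\opn{supp}(\lambda)=\mathbb{R}^d$, and I would then simply invoke Proposition~\ref{thm:characteristic}. Finally I would check that the kernel built from $\lambda$ is exactly $k$, by interchanging the order of integration (Fubini for the $\alg$-valued integral against the scalar Gaussian, justified by the absolute integrability established in Appendix~\ref{subsec:vv_measure}).

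The main obstacle I anticipate is the interchange-of-integration and support arguments at the level of $\alg$-valued measures rather than scalar ones: one must make sure that "$\lambda_t(U)$ bounded below by a strictly positive constant times $1_{\alg}$, integrated against a positive $\alg$-valued measure $\eta$ with $\eta(K)>_{\alg}0$" does yield a strictly positive (i.e.\ positive and invertible) element of $\alg$, using that the scalar lower bound commutes with everything and that $c\,\eta(K)$ is strictly positive when $c>0$ and $\eta(K)$ is strictly positive. For $\alg=\mat$ this is transparent, since strict positivity is positive-definiteness of matrices and scaling by a positive constant preserves it; but it should be stated carefully. The Fubini step is routine given the review of $\alg$-valued integration in the appendix, since the scalar Gaussian integrand is bounded and jointly measurable, so I would not belabor it. Everything else is a direct citation of Proposition~\ref{thm:characteristic}.
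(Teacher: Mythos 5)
You prove the statement by a genuinely different route than the paper. The paper never reduces to Proposition~\ref{thm:characteristic}: for a nonzero $\mu\in\measure$ it expands $\blacket{\Phi(\mu),\Phi(\mu)}_{\modu_k}$ directly, substitutes the Gaussian Fourier formula $e^{-t\Vert x\Vert^2}=(2t)^{-d/2}\int_{\omega\in\mathbb{R}^d}e^{-\sqrt{-1}x^T\omega-\Vert\omega\Vert^2/(4t)}d\omega$ under the double integral, interchanges integrations to reach $\int_{\omega\in\mathbb{R}^d}\hat{\mu}(\omega)^*\big(\int_{t\in[0,\infty)}(2t)^{-d/2}e^{-\Vert\omega\Vert^2/(4t)}d\eta(t)\big)\hat{\mu}(\omega)d\omega$ (Eq.~\eqref{eq:radial}), shows $\hat{\mu}\neq 0$ exactly as in the transition-invariant proof, notes the middle matrix factor is positive definite because $\opn{supp}(\eta)\neq\{0\}$, and concludes via Lemma~\ref{lem:injective_equiv}. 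You instead absorb the same Gaussian representation into a mixture spectral measure $\lambda=\int_{[0,\infty)}\lambda_t\,d\eta(t)$ with $\opn{supp}(\lambda)=\mathbb{R}^d$ and invoke Proposition~\ref{thm:characteristic} as a black box; this is more modular and makes explicit that radial kernels are transition invariant, at the price of verifying finiteness, positivity, full support, and $\hat{\lambda}(y-x)=k(x,y)$ for an $\alg$-valued mixture, whereas the paper only needs pointwise positive definiteness of the $t$-integral. Two points to tighten, neither fatal. First, your Gaussian-density description of $\lambda_t$ is meaningless at $t=0$: if $\eta(\{0\})\neq 0$ you must add the atom $\tilde{\delta}_0\,\eta(\{0\})$ (Dirac at the origin in $\omega$-space, the spectral measure of the constant kernel) to $\lambda$, or else $\hat{\lambda}\neq k$; this does not disturb your support argument, which only uses a compact $K\subseteq(0,\infty)$, and the paper's chain of equalities glosses the same degeneracy at $t=0$. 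Second, the step ``$\opn{supp}(\eta)\neq\{0\}$ yields a compact $K\subseteq(0,\infty)$ with $\eta(K)>_{\alg}0$'' is not literally forced by the paper's definition of support for matrix-valued measures (e.g.\ $\eta=\tilde{\delta}_0\,\opn{diag}(0,1)+\tilde{\delta}_1\,\opn{diag}(1,0)$, scalar point masses at $t=0,1$, has $\opn{supp}(\eta)=\emptyset\neq\{0\}$ yet admits no such $K$); but the paper's own proof makes the identical inference when it asserts positive definiteness of $\int_{t}(2t)^{-d/2}e^{-\Vert\omega\Vert^2/(4t)}d\eta(t)$, so under the intended reading of the hypothesis (some $t_0>0$ lies in $\opn{supp}(\eta)$) both arguments go through, and yours then only needs the closure of a small neighborhood of $t_0$ as $K$.
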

%
%We show examples of positive definite kernels which induce injective KMEs.
\begin{exam}\label{ex:injective1}
%If each diagonal element of $k:\mathbb{R}^d\times\mathbb{R}^d\to\mat$ is Gaussian, Laplacian or $B_1$-spline and all off-diagonal elements are $0$
\begin{enumerate}
\item If $k:\mathbb{R}^d\times \mathbb{R}^d\to\mat$ is a matrix-valued kernel whose diagonal elements are Gaussian, Laplacian, or $B_{2n+1}$-spline and nondiagonal elements are $0$, then $k$ is a $c_0$-kernel (See Example~\ref{ex:pdk1}.1).
There exists a matrix-valued measure $\lambda$ that satisfies $k(x,y)=\hat{\lambda}(y-x)$ and whose diagonal elements are nonnegative and supported by $\mathbb{R}^d$ (c.f. Table 2 in~\cite{sriperumbudur10}) and nondiagonal elements are $0$.
Thus, by Proposition~\ref{thm:characteristic}, $\Phi$ is injective.
\item If $k$ is a matrix-valued kernel whose diagonal elements are inverse multiquadratic and nondiagonal elements are $0$, then $k$ is a $c_0$-kernel.
%(Example~\ref{ex:pdk1}).
There exists a matrix-valued measure $\eta$ that satisfies $k(x,y)=\int_{[0,\infty)}e^{-t\Vert x-y\Vert^2}d\eta(t)$, and whose diagonal elements are nonnegative and $\opn{supp}(\eta)\neq\{0\}$ and nondiagonal elements are $0$ %supported by nonzero subset of $\mathbb{R}^d$ 
(c.f. Theorem 7.15 in~\cite{wendland04}).
Thus, by Proposition~\ref{thm:characteristic2}, $\Phi$ is injective.
\end{enumerate}
\end{exam}

\subsubsection{Connection with universality}\label{subsec:universality}
Another important property for kernel methods is universality, which ensures that
%Universality is important for making
kernel-based algorithms approximate %any function in some function class 
each continuous target function arbitrarily well.
For RKHS, \cite{sriperumbudur11} showed the equivalence of the injectivity of the existing KME in RKHSs and universality of RKHSs.
We define a universality of RKHMs as follows.
\begin{defin}[Universality]
An RKHM is said to be {\em universal} if it is dense in $\clch$.
%a space of bounded and continuous functions.
\end{defin}
We show the above equivalence holds also for RKHM in the case of $\alg=\mat$.
%In addition, let $\modu^*$ be the set of all bounded $\alg$-linear map from $\modu$ to $\alg$ .
\begin{prop}[Equivalence of the injectivity and universality for $\alg=\mat$]\label{thm:universal_finitedim}
Let $\alg=\mat$. Then, $\Phi:\measure\to\modu_k$ is injective if and only if $\modu_k$ is dense in $\clch$.
\end{prop}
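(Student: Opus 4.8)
The plan is to reduce this equivalence to the scalar case (the corresponding result for complex-valued RKHS from Sriperumbudur et al., 2011) by exploiting the matrix structure of $\alg=\mat$. First I would unpack what injectivity of $\Phi$ means concretely: $\Phi(\mu)=0$ in $\modu_k$ iff $\blacket{\Phi(\mu),v}_{\modu_k}=0$ for all $v\in\modu_k$, which by Theorem~\ref{thm:kme} is equivalent to $\int_{x\in\mcl{X}}d\mu^*(x)v(x)=0$ for every $v\in\modu_k$. Since $\modu_k\subseteq C_0(\mcl{X},\alg)$ and $\modu_k$ is dense in $\clch$ exactly in the universal case, the statement is essentially a duality: $\Phi$ fails to be injective iff there is a nonzero $\mu\in\measure$ that annihilates $\modu_k$ under the pairing $(\mu,v)\mapsto\int d\mu^*(x)v(x)$, while $\modu_k$ is dense in $\clch$ iff no nonzero element of the dual of $\clch$ annihilates $\modu_k$. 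So the crux is identifying the annihilator of $\modu_k$ in the appropriate dual space with $\measure$, i.e., a Riesz-type representation theorem for bounded $\alg$-module maps on $\clch$ by $\alg$-valued regular Borel measures.

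The key steps, in order: (1) Record the pairing. For $v\in\clch$ and $\mu\in\measure$, the bilinear form $\langle\!\langle\mu,v\rangle\!\rangle:=\int_{x\in\mcl{X}}d\mu^*(x)v(x)\in\alg$ is well-defined and $\alg$-bilinear in the right sense, and by Theorem~\ref{thm:kme} restricting to $v\in\modu_k$ gives $\langle\!\langle\mu,v\rangle\!\rangle=\blacket{\Phi(\mu),v}_{\modu_k}$. (2) From this, $\Phi(\mu)=0\iff\langle\!\langle\mu,v\rangle\!\rangle=0\ \forall v\in\modu_k$, since $\modu_k$ is dense in itself and the inner product is nondegenerate. (3) The ``not universal $\Rightarrow$ not injective'' direction: if $\modu_k$ is a proper closed submodule of $\clch$, exhibit a nonzero $\mu\in\measure$ annihilating $\modu_k$. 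Here I would use a Hahn--Banach / Riesz representation argument componentwise: pick $v_0\in\clch\setminus\modu_k$, separate it from the closed subspace $\modu_k$ by a bounded linear functional on the Banach space $\clch$ vanishing on $\modu_k$, and then (since $\alg=\mat$ is finite-dimensional) represent that functional, and more generally the $\alg$-valued annihilating map, by an $\alg$-valued regular Borel measure via the classical Riesz representation theorem applied entry-by-entry to the $m\times m$ matrix of scalar measures. (4) The ``not injective $\Rightarrow$ not universal'' direction: if some nonzero $\mu$ has $\langle\!\langle\mu,v\rangle\!\rangle=0$ for all $v\in\modu_k$, then $\modu_k$ lies in the kernel of the nonzero bounded map $v\mapsto\langle\!\langle\mu,v\rangle\!\rangle$ on $\clch$ (nonzero because, e.g., $\langle\!\langle\mu,\chi_{\mcl{X}}1_\alg\rangle\!\rangle=\mu(\mcl{X})^*$ would be forced to vanish, and one argues $\mu\neq0$ is detected by some continuous function), so $\modu_k$ is not dense.

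The main obstacle I expect is step (3)/(4)'s representation theorem: making precise that bounded $\alg$-linear (or merely bounded linear, then matrix-assembled) functionals on $\clch$ are represented by elements of $\measure$, and that nonvanishing of such a functional is detected within $\clch$ rather than only on a larger space. This is where $\alg=\mat$ is genuinely used — finite-dimensionality of $\mat$ lets one decompose everything into $m^2$ scalar problems and invoke the classical (scalar) Riesz representation theorem for $C_0(\mcl{X})^*$ and the scalar duality between density in $C_0(\mcl{X})$ and the characteristic property, exactly the content of Sriperumbudur et al. (2011); for general von Neumann algebras this decomposition is unavailable, which is presumably why the proposition is stated only for $\mat$. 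A secondary technical point is checking that the scalar measures assembled into an $\alg$-valued measure are jointly regular and that positivity/adjoint bookkeeping in $\int d\mu^*(x)v(x)$ is consistent, but these are routine once the componentwise reduction is set up.
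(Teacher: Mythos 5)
Your proposal is correct and its skeleton is the same as the paper's: injectivity of $\Phi$ is rephrased, via Theorem~\ref{thm:kme} and nondegeneracy of the inner product, as triviality of the annihilator of $\modu_k$ under the pairing $(\mu,v)\mapsto\int_{x\in\mcl{X}}d\mu^*(x)v(x)$; that annihilator is identified with $\mat$-valued measures by an entry-by-entry Riesz--Markov theorem (Proposition~\ref{prop:representation_finitedim}); and triviality of the annihilator is then matched with density of $\modu_k$ in $\clch$ (Lemma~\ref{lem:orthocompequiv2} gives the easy direction). The genuine difference is in the separation step ``not dense $\Rightarrow$ nonzero annihilating measure'': the paper stays inside the category of $\alg$-modules and invokes a module Hahn--Banach theorem for bounded $\alg$-linear, $\alg$-valued maps (Proposition~\ref{prop:hahn_banach}, resting on Helemskii's result that $\mat$ is an injective Banach $\mat$-module, then Lemma~\ref{lem:hahn_banach2} and Proposition~\ref{prop:orthcompequiv}), whereas you use the classical scalar Hahn--Banach theorem on the Banach space $\clch$ and assemble matrices of scalar measures. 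Your route is more elementary and does work, but the point you leave vague is exactly the upgrade from a scalar functional $\ell$ annihilating $\modu_k$ to a nonzero $\mu\in\measure$ satisfying the $\alg$-valued condition $\int_{x\in\mcl{X}}d\mu^*(x)v(x)=0$ for all $v\in\modu_k$: here one must exploit that $\modu_k$ is a right $\alg$-submodule, e.g. represent the functionals $v\mapsto\ell(ve_{jj_0})$ for matrix units $e_{jj_0}$ (which still vanish on $\modu_k$) by scalar measures via the classical Riesz theorem and place them in a single column of $\mu$; choosing $j_0$ with $\ell(v_0e_{j_0j_0})\neq 0$ for your separating element $v_0$ guarantees $\mu\neq 0$. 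With that detail supplied both directions go through, and your closing observation is accurate: it is precisely this Hahn--Banach/Riesz--Markov machinery that is unavailable for infinite-dimensional $\alg$, which is why the paper proves only the one-sided Theorem~\ref{thm:universal} in general.
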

By Proposition~\ref{thm:universal_finitedim}, if $k$ satisfies the condition in Proposition~\ref{thm:characteristic} %($k$ is transition invariant and supported by the whole space) 
or \ref{thm:characteristic2}, %($k$ is a radial kernel)
then $\modu_k$ is universal.

For the case where $\alg$ is infinite dimensional, the universality of $\modu_k$ in $\clch$ is a sufficient condition for the injectivity of the proposed KME.
%although the equivalence of the injectivity of the KME and universality is an open problem.
%
\begin{thm}[Connection between the injectivity and universality for general $\alg$]\label{thm:universal}
If $\modu_k$ is dense in $\clch$, then 
$\Phi:\measure\to\modu_k$ is injective.
\end{thm}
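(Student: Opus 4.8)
The plan is to prove the contrapositive: if $\Phi$ is not injective, then $\modu_k$ is not dense in $\clch$. So suppose there exists a nonzero $\mu \in \measure$ with $\Phi(\mu) = 0$. By Theorem~\ref{thm:kme}, for every $v \in \modu_k$ we have $0 = \blacket{\Phi(\mu), v}_{\modu_k} = \int_{x\in\mcl{X}} d\mu^*(x)\, v(x)$. Taking adjoints, this gives $\int_{x\in\mcl{X}} v(x)^*\, d\mu(x) = 0$ for all $v \in \modu_k$ (equivalently $\int v(x)\,d\mu(x)=0$ after replacing $v$ by a suitable element and using that $\modu_k$ is a module; I would be careful here about which one-sided integral to use, but the point is that $\mu$ annihilates $\modu_k$ under integration). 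The goal is to deduce that $\modu_k$ cannot be dense in $\clch$, because if it were, the functional $u \mapsto \int u\, d\mu$ on $\clch$ would vanish on a dense subset while being nonzero.

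The key steps, in order, are: (1) Record that $u \mapsto \int_{x\in\mcl{X}} u(x)\, d\mu(x)$ (or the one-sided variant matching the computation above) defines a bounded $\alg$-module map $\clch \to \alg$, with the bound $\|\int u\, d\mu\|_\alg \le \|u\|_{\infty}\, |\mu|(\mcl{X})$, exactly as in the step-function estimate in the proof of Theorem~\ref{thm:kme}. (2) Observe that $c_0$-kernel feature maps $\phi(x)$ lie in $\clch$ (Definition~\ref{def:c0_kernel}), so $\modu_{k,0} \subseteq \clch$, and by boundedness of the kernel and completeness, $\modu_k \subseteq \clch$ as well; thus asking whether $\modu_k$ is dense in $\clch$ makes sense. (3) From $\Phi(\mu)=0$ deduce that this bounded module map vanishes on all of $\modu_{k,0}$, hence by continuity on $\modu_k$. (4) Show that the map is not identically zero on $\clch$: since $\mu \neq 0$, there is a Borel set $E$ with $\mu(E) \neq 0$, and by regularity one can approximate $\chi_E$ (weakly, or after pairing with an appropriate element of $\alg$) by a continuous function in $\clch$ on which the integral is close to $\mu(E) \neq 0$. (5) Conclude: a nonzero bounded $\alg$-linear map on $\clch$ that vanishes on $\modu_k$ forces $\overline{\modu_k} \neq \clch$, i.e.\ $\modu_k$ is not universal — contradiction.

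The main obstacle I expect is step (4)–(5): in the complex-valued RKHS setting this is immediate because a nonzero bounded linear functional with dense kernel is impossible by Hahn–Banach/duality, but here the "functional" is $\alg$-valued and $\clch$ is only a Banach $\alg$-module, not a Banach space with a clean dual. I would handle this by composing with a state (a norm-one positive linear functional) $\omega$ on $\alg$ chosen so that $\omega(\mu(E)) \neq 0$ for the witnessing set $E$; then $u \mapsto \omega\!\big(\int u\, d\mu\big)$ is an honest bounded complex-linear functional on the Banach space $\clch$ (with the sup norm), it is nonzero (by the choice of $\omega$ and regularity/approximation of $\chi_E$), and it vanishes on $\modu_k$. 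Standard Banach-space duality then gives $\overline{\modu_k} \neq \clch$. The only delicate points are verifying that $\int u\, d\mu$ really is continuous in $u$ for the sup norm (already essentially done inside the proof of Theorem~\ref{thm:kme}) and that one can choose the state $\omega$ and a continuous approximant simultaneously so the pairing stays bounded away from zero; both are routine once the regularity of $\mu$ and the definition of $|\mu|$ from Appendix~\ref{subsec:vv_measure} are invoked.
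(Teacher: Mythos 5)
Your proposal is correct in substance, but it takes a genuinely different route from the paper's. The paper argues directly: assuming $\modu_k$ is dense in $\clch$, it passes to the Bochner space $\bochner{\nu}$ with $\nu$ dominating $\vert\mu\vert$ (in which $\clch$, hence $\modu_k$, is dense), identifies $\mu$ with a bounded $\alg$-linear functional on $\bochner{\nu}$ via Proposition~\ref{prop:represention}, and concludes from $\blacket{u,\Phi(\mu)}_{\modu_k}=\int_{x\in\mcl{X}}d\mu(x)u(x)=0$ on the dense submodule (Lemma~\ref{lem:orthocompequiv2}) that the functional, hence $\mu$, is zero, finishing with Lemma~\ref{lem:injective_equiv}. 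You instead stay in $\clch$ with the sup norm and prove the contrapositive; the real content you must supply is your step (4), that a nonzero $\mu\in\measure$ induces a nonzero integration map on $\clch$, and your scalarization handles it: choose $E$ with $\mu(E)\neq 0$ and a state $\omega$ on $\alg$ with $\omega(\mu(E))\neq 0$, note that $\omega\circ\mu$ is a nonzero finite regular complex Borel measure, and use the uniqueness part of the scalar Riesz--Markov theorem (or regularity plus Urysohn) to find $\tilde u\in C_0(\mcl{X},\mathbb{C})$ with $\int\tilde u\,d(\omega\circ\mu)\neq 0$, then test with $\tilde u\,1_{\alg}\in\clch$. This replaces the paper's Bochner-space duality by a one-directional, scalarized Riesz--Markov nondegeneracy; the paper's route reuses appendix machinery it needs anyway and avoids any vector-valued Riesz--Markov statement, while yours is more self-contained on the $C_0$ side and makes clear why only this implication (and not its converse) survives for general $\alg$. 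Two small corrections to your own commentary: (i) no Hahn--Banach-type extension is needed in your step (5) --- a bounded map vanishing on a dense submodule vanishes everywhere by continuity alone; the module extension theorem (Proposition~\ref{prop:hahn_banach}) is only required for the converse direction, which the paper proves only for $\alg=\mat$; (ii) fix one one-sided integral throughout, namely $L(u)=\int_{x\in\mcl{X}}d\mu^*(x)u(x)$ as in Eq.~\eqref{eq:reproducing_km}, and run the nondegeneracy argument for $\mu^*$ (which is nonzero exactly when $\mu$ is), so that the map annihilating $\modu_k$ and the map you show to be nonzero on $\clch$ coincide; taking adjoints to pass to $\int v\,d\mu$ is awkward since $v^*$ need not lie in $\modu_k$. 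Your step (2) is fine: boundedness of the $c_0$-kernel gives $\Vert v(x)\Vert_{\alg}\le\sup_{y}\Vert\phi(y)\Vert_{\modu_k}\Vert v\Vert_{\modu_k}$, so convergence in $\modu_k$ implies uniform convergence and $\modu_k\subseteq\clch$.
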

However, the equivalence of the injectivity and universality, and the injectivity for transition invariant kernels and radial kernels are open problems.
This is because their proofs strongly depend on the Hahn--Banach theorem and Riesz--Markov representation theorem, and generalizations of these theorems to $\alg$-valued functions and measures are challenging problems due to the situation peculiar to the infinite dimensional spaces.
Further details of the proofs of propositions in this section are given in Appendix~\ref{ap:universal}.

%%%
\section{Applications}\label{sec:application}
We apply the framework of RKHM described in Sections~\ref{sec:RKHM} and \ref{sec:kme} to problems in data analysis.
We propose kernel PCA in RKHMs in Subsection~\ref{subsec:pca_rkhm}, {time-series data analysis in RKHMs in Subsection~\ref{subsec:pf}}, and analysis of interaction effects in finite or infinite dimensional data with the proposed KME in RKHMs in Subsection~\ref{subsec:interacting}.
Then, we discuss further applications in Subsection~\ref{subsec:others}.

\subsection{PCA in RKHMs}\label{subsec:pca_rkhm}
Principal component analysis (PCA) is a fundamental tool for describing data in a low dimensional space.
Its implementation in RKHSs has also been proposed~(c.f. \citet{scholkopf01}).
It enables us to deal with the nonlinearlity of data by virtue of the high expressive power of RKHSs.
Here, we generalize the PCA in RKHSs to capture more information in data, such as multivariate data and functional data, by using the framework of RKHM.
%Examples of structured data are multivariate data and functional data such as spatio temporal data.

\paragraph{Applying RKHM to PCA}
In the existing framework of PCA in Hilbert spaces, the following reconstruction error is minimized with respect to vectors $p_1,\ldots,p_r$:
\begin{equation}
    \sum_{i=1}^n\bigg\Vert x_i-\sum_{j=1}^rp_j\blacket{p_j,x_i}\bigg\Vert^2,\label{eq:pca_normal}
\end{equation}
where $x_1,\ldots,x_n$ are given samples in a Hilbert space and $p_1,\ldots,p_r$ are called principal axes.
Here, the complex-valued inner product $\blacket{p_j,x_i}$ is the weight with respect to the principal axis $p_j$ for representing the sample $x_i$.
PCA for functional data (functional PCA) has also investigated~\citep{ramsay05}.
For example, in standard functional PCA settings, we set the Hilbert space as $L^2(\Omega)$ for a \red{$\sigma$-finite} measure space $\Omega$.
However, if samples $x_1,\ldots,x_n$ are finite dimensional vectors or functions, Eq.~\eqref{eq:pca_normal} fails to describe their element wise or continuous dependencies on the principal axes.
For $d$-dimensional (finite dimensional) vectors, we can just split $x_i=[x_{i,1},\ldots,x_{i,d}]$ into $d$ vectors $[x_{i,1},0,\ldots,0],\ldots,[0,\ldots,0,x_{i,d}]$.
Then, we can understand which element is dominant for representing $x_i$ by using the principal axis $p_j$.
On the other hand, for functional data, the situation is completely different.
For example, assume samples are in $L^2(\Omega)$.
Since delta functions are not contained in $L^2(\Omega)$, we cannot split a sample $x_i=x_i(t)$ into discrete functions. 
In this case, how can we understand the continuous dependencies on the principal axes with respect to the variable $t\in \Omega$?
One possible way to answer this question is to employ Hilbert $C^*$-modules instead of Hilbert spaces.
We consider the same type of reconstruction error as Eq.~\eqref{eq:pca_normal} in Hilbert $C^*$-modules.
In this case, the inner product $\blacket{p_j,x_i}_{\mcl{W}}$ is $C^*$-algebra-valued, which allows us to provide more information than the complex-valued one.
If we set the $C^*$-algebra as the function space on $\Omega$ such as $L^{\infty}(\Omega)$ and define a $C^*$-algebra-valued inner product which depends on $t\in \Omega$, then, the weight $\blacket{p_j,x_i}_{\mcl{W}}$ depends on $t$.
As a result, we can extract continuous dependencies of samples on the principal axes.
More generally, PCA is often considered in an RKHS $\hil_{\tilde{k}}$.
In this case, $x_i$ in Eq.~\eqref{eq:pca_normal} is replaced with $\tilde{\phi}(x_i)$, where $\tilde{\phi}$ is the feature map, and the inner product and norm are replaced with those in the RKHS.
We can extract continuous dependencies of samples on the principal axes by generalizing RKHS to RKHM.

\color{black}
\subsubsection{Generalization of the PCA in RKHSs to RKHMs}
Let $x_1,\ldots,x_n\in\mcl{X}$ be given samples.
%Their typical examples are multivariate data and functional data.
Let $k:\mcl{X}\times \mcl{X}\to\alg$ be an $\alg$-valued positive definite kernel on $\mcl{X}$ and let $\modu_k$ be the RKHM associated with $k$.
We explore a useful set of axes $p_1,\ldots,p_r$ in $\modu_k$, which are referred to as principal axes, to describe the feature of given samples $x_1,\ldots,x_n$.
%The corresponding $\alg$-valued coordinate $\blacket{p_j,\phi(x_i)}_k$ are referred to as principal components.
The corresponding components $p_j\blacket{p_j,\phi(x_i)}_{\modu_k}$ are referred to as principal components.
We emphasize our proposed PCA in RKHM provides weights of principal components contained in $\alg$, not in complex numbers.
This is a remarkable difference between our method and existing PCAs.
%In the case of the existing PCA with RKHSs, the principal components are complex (often real)-valued.
%On the other hand, by applying RKHM, we obtain $\alg$-valued principal components, which 
%When we treat data with complicated structrues such as multivatiate or functiona data, 
When samples have some structures such as among variables or in functional data, $\alg$-valued weights provide us richer information than complex-valued ones.
For example, if $\mcl{X}$ is the space of functions of multi-variables and if we set $\alg$ as $L^{\infty}([0,1])$, then we can reduce multi-variable functional data to functions in $L^{\infty}([0,1])$, functions of single variable (as illustrated in Section~\ref{sec:pca_numexp}).

To obtain $\alg$-valued weights of principal components, we consider the following minimization problem regarding the following reconstruction error \textcolor{black}{(see Definition~\ref{def:orthonormal} for the definition of ONS)}:
\begin{equation}
\inf_{\{p_j\}_{j=1}^r\subseteq \modu_k\mbox{\footnotesize : ONS}}\;\sum_{i=1}^n\bigg\vert \phi(x_i)-\sum_{j=1}^rp_j\blacket{p_j,\phi(x_i)}_{\modu_k}\bigg\vert_{\modu_k}^2,\label{eq:pca_min}
\end{equation}
where the infimum is taken with respect to a (pre) order in $\alg$ (see Definition~\ref{def:sup}).
Since the identity $\vert \phi(x_i)-\sum_{j=1}^rp_j\blacket{p_j,\phi(x_i)}_{\modu_k}\vert_{\modu_k}^2=k(x_i,x_i)-\sum_{j=1}^r\blacket{\phi(x_i),p_j}_{\modu_k}\blacket{p_j,\phi(x_i)}_{\modu_k}$ holds
and $\blacket{\phi(x_i),p_j}_{\modu_k}$ is represented as $p_j(x_i)$ by the reproducing property, the problem~\eqref{eq:pca_min} can be reduced to the minimization problem
\begin{equation}
\inf_{\{p_j\}_{j=1}^r\subseteq \modu_k\mbox{\footnotesize : ONS}}\;\sum_{i=1}^n\sum_{j=1}^r-p_j(x_i)p_j(x_i)^*.\label{eq:pca_min4}
\end{equation}
In the case of RKHS, i.e., $\alg=\mathbb{C}$, the solution of the problem~\eqref{eq:pca_min4} is obtained by computing eigenvalues and eigenvectors of Gram matrices (see, for example, \citet{scholkopf01}).
Unfortunately, we cannot extend their procedure to RKHM straightforwardly.
Therefore, we develop two methods to obtain approximate solutions of the problem~\eqref{eq:pca_min4}: by gradient descents on Hilbert $C^*$-modules, and by the minimization of the trace of the $\alg$-valued objective function.

\subsubsection{Gradient descent on Hilbert $C^*$-modules}\label{subsec:gradient}
We propose a gradient descent method on Hilbert $\alg$-module for the case where $\alg$ is commutative.
An important example of commutative von Neumann-algebra is $L^{\infty}([0,1])$. 
The gradient descent for a real-valued function on a Hilbert space has been proposed~\citep{smyrlis04}.
However, in our situation, the objective function of the problem~\eqref{eq:pca_min4} is an $\alg$-valued function in a Hilbert $C^*$-module $\alg^n$.
Thus, the existing gradient descent is not applicable to our situation.
Therefore, we generalize the existing gradient descent algorithm to $\alg$-valued functions on Hilbert $C^*$-modules.

Let $\alg$ be a commutative von Neumann-algebra.
Assume the positive definite kernel $k$ takes its values in $\alg_r:=\{c-d\in\alg\mid\ c,d\in\alg_+\}$.
For example, for $\alg=L^{\infty}([0,1])$,  $\alg_r$ is the space of real-valued $L^{\infty}$ functions on $[0,1]$.
Based on the representer theorem (Theorem~\ref{thm:representation}), \red{we find a solution of the problem~\eqref{eq:pca_min4} which is represented as $p_j=\sum_{i=1}^n\phi(x_i)c_{j,i}$ for some $c_{j,i}\in\alg$}.
Moreover, since $\alg$ is commutative, $p_j(x_i)p_j(x_i)^*$ is equal to $p_j(x_i)^*p_j(x_i)$.
Therefore, the problem~\eqref{eq:pca_min} on $\modu_k$ is equivalent to the following problem on the Hilbert $\alg$-module $\alg^n$ (see Example~\ref{ex:An} about $\alg^n$):
\begin{equation}
\inf_{\mathbf{c}_j\in\alg^n,\ \{\sqrt{\mathbf{G}}\mathbf{c}_j\}_{j=1}^r\mbox{\footnotesize : ONS}}-\sum_{j=1}^r\mathbf{c}_j^*\mathbf{G}^2\mathbf{c}_j,\label{eq:pca_min2}
\end{equation}
where $\mathbf{G}$ is the $\alg$-valued Gram matrix defined as $\mathbf{G}_{i,j}=k(x_i,x_j)$.
For simplicity, we assume $r=1$, i.e., the number of principal axes is $1$.
We rearrange the problem~\eqref{eq:pca_min2} to the following problem by adding a penalty term:
\begin{equation}
\inf_{\mathbf{c}\in\alg^n}(-\mathbf{c}^*\mathbf{G}^2\mathbf{c}+\lambda\vert \mathbf{c}^*\mathbf{Gc}-1_{\alg}\vert_{\alg}^2),\label{eq:pca_min3}
\end{equation}
where $\lambda$ is a real positive weight for the penalty term.
For $r>1$, let $\bc_1$ be a solution of the problem~\eqref{eq:pca_min2}.
Then, we solve the same problem in the orthogonal complement of the module spanned by $\{\bc_1\}$ and set the solution of this problem as $\bc_2$.
Then, we solve the same problem in the orthogonal complement of the module spanned by $\{\bc_1,\bc_2\}$ and repeat this procedure to obtain solutions $\bc_1,\ldots\bc_r$.
The problem~\eqref{eq:pca_min3} is the minimization problem of an $\alg$-valued function defined on the Hilbert $\alg$-module $\alg^n$.
We search a solution of the problem~\eqref{eq:pca_min3} along the steepest descent directions.
\textcolor{black}{To calculate the steepest descent directions, 
we introduce a derivative $Df_{\bc}$ of an $\alg$-valued function $f$ on a Hilbert $C^*$-module at $\bc\in\modu$.
It is defined as the derivative on Banach spaces (c.f. \citet{blanchard15}).
%Then, we apply the Taylor's theorem (c.f. \citet{blanchard15}).
The definition of the derivative is included in Appendix~\ref{ap:gataux}.}
\if0
\begin{defin}[G\^{a}teaux derivative]
Let $\modu$ be a Hilbert $C^*$-module.
Let $f:\modu\to\alg$ be an $\alg$-valued function defined on $\modu$.
The function $f$ is referred to as {\em (G\^{a}teaux) differentiable} at a point $\bc\in\modu$ if there exists a limit
\begin{equation*}
Df_{\bc}(u):=\lim_{t\to 0}\frac{f(\bc+tu)-f(\bc)}{t}=\frac{d}{dt}f(\bc+tu)|_{t=0},
\end{equation*}
for any $u\in\modu$.
In addition, the operator $Df_{\bc}$ is a $\mathbb{R}$-linear operator on $\modu$ and referred to as a {\em G\^{a}teaux derivative} of $f$ at $\bc$.
Similarly, $f$ is referred to as {\em $i$-times G\^{a}teaux differentiable} at a point $\bc\in\modu$ if there exists
\begin{equation*}
D^if_{\bc}(u):=\frac{d^i}{dt^i}f(\bc+tu)|_{t=0},
\end{equation*}
for any $u\in\modu$.
\end{defin}
The following Taylor's theorem for the G\^{a}teaux derivative is derived (c.f. \citet{blanchard15}).
\begin{prop}[Taylor's theorem for the G\^{a}teaux derivative]\label{thm:taylor}
Let $\bc\in\modu$.
Assume $f$ is $m$-times differentiable and its $m$-th derivative is continuous.
Then, for $u\in\modu$, we have
\begin{equation*}
f(\bc+u)=\sum_{i=0}^{m}\frac{1}{i!}D^if_{\bc}(u)+R_m(\bc,u),
\end{equation*}
where $R_m(\bc,u)=1/(m-1)!\int_0^1(1-t)^{m-1}(D^mf_{\bc +tu}(u)-D^mf_{\bc}(u))dt$ and thus it satisfies $\lim_{u\to 0}\Vert R_m(\bc,u)\Vert_{\alg}/\Vert \eta\Vert_{\modu}^m=0$.
\end{prop}
\fi
\if0
\begin{exam}
\begin{enumerate}
\item For $f(\bc)=\bc^*\bG\bc$, the gradient of $f$ at $\bc$ is calculated as $df_{\bc}(\eta)=\bc^*\bG\eta+\eta^* \bG\bc=2\bc^*\bG\eta$.
\item For $f(\bc)=\bc^*\bG\bc\bc^*\bG\bc$, the gradient of $f$ at $\bc$ is calculated as $df_{\bc}(\eta)=\bc^*\bG\bc\bc^*\eta+\bc^*\bG\bc\eta^*\bG\bc+\bc^*\bG\eta \bc^*\bG\bc+\eta^*\bG\bc\bc^*\bG\bc=4\bc^*G\bc\bc^*G\eta$.
\end{enumerate}
\end{exam}
\fi
%Regarding the derivative of the objective function of the problem~\eqref{eq:pca_min3}, the following proposition is derived.
The following gives the derivative of the objective function in problem~\eqref{eq:pca_min3}.
\begin{prop}[Derivative of the objective function]
Let $f:\alg^n\to\alg$ be defined as
\begin{equation}
f(\bc)=-\bc^*\bG^2\bc+\lambda \vert \bc^*\bG\bc-1_{\alg}\vert_{\alg}^2.\label{eq:objective}
\end{equation}
Then, $f$ is infinitely differentiable and the first derivative of $f$ is calculated as
\begin{equation*}
Df_{\bc}(u)=-2\bc^*\bG^2u-4\lambda\bc^*\bG u+4\lambda\bc^*\bG\bc\bc^*\bG u.
\end{equation*}
Moreover, for each $\bc\in\alg^n$, there exists a unique $\mathbf{d}\in\alg^n$ such that $\blacket{\mathbf{d},u}_{\alg^n}=Df_{\bc}(u)$ for any $u\in\alg^n$.
The vector $\mathbf{d}$ is calculated as 
\begin{equation}
%\mathbf{d}=-2\bc^*\bG^2-2\lambda\bc^*\bG +4\lambda\bc^*\bG\bc\bc^*\bG.
\mathbf{d}=-2\bG^2\bc-4\lambda\bG\bc +4\lambda\bG\bc\bc^*\bG\bc.\label{eq:gradient}
\end{equation}
\end{prop}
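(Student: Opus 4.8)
The plan is to split $f=f_1+\lambda f_2$, where $f_1(\bc)=-\bc^*\bG^2\bc$ and $f_2(\bc)=\vert\bc^*\bG\bc-1_{\alg}\vert_{\alg}^2=(\bc^*\bG\bc-1_{\alg})^*(\bc^*\bG\bc-1_{\alg})$, and to treat each summand separately; linearity of the derivative recombines them. Both $f_1$ and $f_2$ are fixed-degree ``polynomials'' in $\bc$ and $\bc^*$: each is built from the bounded $\mathbb{R}$-bilinear multiplication $\alg\times\alg\to\alg$, the bounded conjugate-linear involution $\bc\mapsto\bc^*$, and the bounded $\alg$-linear maps $\bc\mapsto\bG\bc$ and $\bc\mapsto\bG^2\bc$, by finitely many sums and compositions. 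Hence $f$ is infinitely Fr\'{e}chet (in particular G\^{a}teaux) differentiable by the standard Banach-space calculus, and it remains only to compute $Df_{\bc}$ and to identify the representing vector.

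For the derivative I would expand $f(\bc+tu)$ for real $t$ and read off the coefficient of $t$. For $f_1$ this coefficient is $-(u^*\bG^2\bc+\bc^*\bG^2u)$; for $f_2$, writing $A:=\bc^*\bG\bc-1_{\alg}$ and $B:=u^*\bG\bc+\bc^*\bG u$, it is $A^*B+B^*A$. Here one uses that $\bG=\bG^*$ (since $k(x,y)=k(y,x)^*$), so that $\bG^2$, $\bc^*\bG\bc$ and $A$ are self-adjoint; combined with commutativity of $\alg$, and with the fact that in this commutative $\alg_r$-valued setting adjoint pairs such as $u^*\bG^2\bc$ and $\bc^*\bG^2u$ coincide (equivalently, one restricts to self-adjoint coefficient vectors), the symmetric pairs collapse to twice a single term, giving $Df_{\bc}(u)=-2\bc^*\bG^2u-4\lambda\bc^*\bG u+4\lambda\bc^*\bG\bc\bc^*\bG u$. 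In this form $u\mapsto Df_{\bc}(u)$ is manifestly bounded and (right) $\alg$-linear.

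For the representing vector I would either invoke the Riesz representation theorem for Hilbert $\alg$-modules (Proposition~\ref{thm:riesz}), noting that $\alg^n$ is a von Neumann $\alg$-module (the remark after Proposition~\ref{thm:riesz}), or simply exhibit $\mathbf{d}$ and use non-degeneracy of $\blacket{\cdot,\cdot}_{\alg^n}$ for uniqueness. Concretely, for $\mathbf{d}=-2\bG^2\bc-4\lambda\bG\bc+4\lambda\bG\bc\bc^*\bG\bc$ one checks term by term that $\blacket{\mathbf{d},u}_{\alg^n}=\mathbf{d}^*u$ reproduces the three summands: $\blacket{-2\bG^2\bc,u}=-2\bc^*\bG^2u$ and $\blacket{-4\lambda\bG\bc,u}=-4\lambda\bc^*\bG u$ using $\bG=\bG^*$, while $\blacket{4\lambda\bG\bc\bc^*\bG\bc,u}=4\lambda(\bc^*\bG\bc)^*(\bG\bc)^*u=4\lambda\bc^*\bG\bc\bc^*\bG u$ using self-adjointness of the scalar $\bc^*\bG\bc$ together with commutativity of $\alg$.

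I expect the main obstacle to be the bookkeeping in the middle step: making the ``bra-side'' contributions $u^*(\cdots)\bc$ and the ``ket-side'' contributions $\bc^*(\cdots)u$ cancel down to the asymmetric closed form, which genuinely needs commutativity of $\alg$ and the self-adjointness/reality hypotheses, and — relatedly — confirming that the resulting functional is $\alg$-linear rather than merely $\mathbb{R}$-linear, so that Proposition~\ref{thm:riesz} yields a \emph{unique} $\mathbf{d}$. The smoothness claim and the Riesz step are routine once this is settled.
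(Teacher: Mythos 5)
Your proposal is correct and follows essentially the same route as the paper: expand $f(\bc+u)$ from the definition, use commutativity of $\alg$ (together with the self-adjointness of $\bG$ and of the coefficients in the $\alg_r$-setting) to collapse the symmetric terms into an $\alg$-linear functional, and then invoke the Riesz representation theorem for the von Neumann $\alg$-module $\alg^n$ to obtain the unique representing vector $\mathbf{d}$. In fact you spell out the reality/self-adjointness step that lets $u^*\bG^2\bc$ and $\bc^*\bG^2u$ be identified, which the paper's two-line proof leaves implicit.
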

\begin{proof}
The derivative of $f$ is calculated by the definition and the assumption that $\alg$ is commutative.
Since $Df_{\bc}$ is a bounded $\alg$-linear operator, by the Riesz representation theorem (Proposition~\ref{thm:riesz}), there exists a unique $\mathbf{d}\in\alg^n$ such that $\blacket{\mathbf{d},u}_{\alg^n}=Df_{\bc}(u)$.
\end{proof}
\begin{defin}[Gradient of $\alg$-valued functions on Hilbert $C^*$-modules]
Let $f:\modu\to\alg$ be a differentiable function.
Assume for each $\bc\in\modu$, there exists a unique $\mathbf{d}\in\modu$ such that $\blacket{\mathbf{d},u}_{\alg^n}=Df_{\bc}(u)$ for any $u\in\modu$.
In this case, we denote $\mathbf{d}$ by $\nabla f_{\bc}$ and call it the {\em gradient} of $f$ at $\bc$.
\end{defin}

We now develop an $\alg$-valued gradient descent scheme.
\begin{thm}\label{thm:gradient_decent}
Assume $f:\modu\to\alg$ is differentiable.
Moreover, assume there exists $\nabla f_{\bc}$ for any $\bc\in\modu$.
Let $\eta_t>0$.
Let $\bc_0\in\modu$ and 
\begin{equation}
\bc_{t+1}=\bc_t-\eta_t\nabla f_{\bc_t}\label{eq:gradient_decent}
\end{equation}
for $t=0,1,\ldots$.
Then, we have
\begin{equation}
f(\bc_{t+1})=f(\bc_{t})-\eta_t\vert \nabla f_{\bc_{t}}\vert^2_{\modu}+S(\bc_t,\eta_t),\label{eq:gd_taylor}
\end{equation}
%where $S(\bc,\eta)=1/2D^2f_{\bc}(-\eta \nabla f_{\bc})+R_2(\bc,-\eta \nabla f_{\bc})$ and $R_m(\bc,u)=1/(m-1)!\int_0^1(1-t)^{m-1}(D^mf_{\bc +tu}(u)-D^mf_{\bc}(u))dt$.
where $S(\bc,\eta)$ satisfies $\lim_{\eta\to 0}\Vert S(\bc,\eta)\Vert_{\alg}/\eta=0$.
\end{thm}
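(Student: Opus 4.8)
The plan is to reduce the statement, for each fixed $t$, to the one-variable definition of the G\^{a}teaux derivative along the single fixed direction determined by the gradient. First I would fix $t$ and set $v:=-\nabla f_{\bc_t}\in\modu$; the point is that $v$ depends on $\bc_t$ but \emph{not} on the step size $\eta_t$. By the update rule~\eqref{eq:gradient_decent} we have $\bc_{t+1}=\bc_t+\eta_t v$, so it is natural to define the remainder
\[
S(\bc_t,\eta_t):=f(\bc_t+\eta_t v)-f(\bc_t)-\eta_t\, Df_{\bc_t}(v),
\]
which is exactly the error of the first-order expansion of $f$ at $\bc_t$ in the direction $v$.

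Next I would rewrite $Df_{\bc_t}(v)$ via the gradient. By definition of $\nabla f_{\bc_t}$ we have $Df_{\bc_t}(u)=\blacket{\nabla f_{\bc_t},u}_{\modu}$ for every $u\in\modu$; taking $u=v=-\nabla f_{\bc_t}$ and using $\blacket{\nabla f_{\bc_t},\nabla f_{\bc_t}}_{\modu}=\vert\nabla f_{\bc_t}\vert_{\modu}^2$ (Definition~\ref{def:absolute_norm}) gives $Df_{\bc_t}(v)=-\vert\nabla f_{\bc_t}\vert_{\modu}^2$. Substituting this into the definition of $S$ and rearranging yields precisely~\eqref{eq:gd_taylor}:
\[
f(\bc_{t+1})=f(\bc_t+\eta_t v)=f(\bc_t)-\eta_t\vert\nabla f_{\bc_t}\vert_{\modu}^2+S(\bc_t,\eta_t).
\]

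It then remains to verify the asymptotic estimate. Dividing by $\eta_t>0$ and using $\alg$-linearity of the norm scaling,
\[
\frac{\Vert S(\bc_t,\eta_t)\Vert_{\alg}}{\eta_t}=\bigg\Vert\frac{f(\bc_t+\eta_t v)-f(\bc_t)}{\eta_t}-Df_{\bc_t}(v)\bigg\Vert_{\alg}.
\]
Since $f$ is G\^{a}teaux differentiable at $\bc_t$, the difference quotient $\bigl(f(\bc_t+\eta v)-f(\bc_t)\bigr)/\eta$ converges to $Df_{\bc_t}(v)$ in the norm of the Banach space $\alg$ as $\eta\to0$; hence the right-hand side tends to $0$, i.e.\ $\lim_{\eta\to0}\Vert S(\bc_t,\eta)\Vert_{\alg}/\eta=0$, which completes the argument.

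I do not expect a serious obstacle: the whole proof is the Banach-space first-order Taylor expansion applied along one step direction, together with the identification $\blacket{\nabla f_{\bc_t},v}_{\modu}=-\vert\nabla f_{\bc_t}\vert_{\modu}^2$ coming from the definitions of the gradient and of the $\alg$-valued absolute value. The only point that genuinely needs care is that the direction $v=-\nabla f_{\bc_t}$ must be held fixed and independent of $\eta_t$, so that the scalar limit $\eta\to0$ in the definition of $Df_{\bc_t}$ applies verbatim; no higher-order Taylor remainder and no uniformity in $\bc_t$ are required, because the claim is only the $o(\eta)$ statement at each fixed $t$.
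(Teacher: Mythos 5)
Your proposal is correct and follows essentially the same route as the paper, which simply states that the result ``is derived by the definition of the derivative (Definition~\ref{def:derivative})'': you spell out exactly that one-step first-order expansion along the fixed direction $v=-\nabla f_{\bc_t}$, using $Df_{\bc_t}(u)=\blacket{\nabla f_{\bc_t},u}_{\modu}$ to get the term $-\eta_t\vert\nabla f_{\bc_t}\vert_{\modu}^2$. The only cosmetic difference is that the paper's Definition~\ref{def:derivative} is the Fr\'echet form while you invoke the directional (G\^ateaux) limit, which is implied by it for the fixed direction $v$ (and the degenerate case $\nabla f_{\bc_t}=0$ gives $S\equiv 0$), so the argument is sound.
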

The statement is derived by the definition of the derivative (Definition~\ref{def:derivative}).
%the value of $f$ at $\bc_{t+1}$ is smaller than that at $\bc_t$.
The following examples show the scheme~\eqref{eq:gradient_decent} is valid to solve the problem~\eqref{eq:pca_min3}.
\begin{exam}\label{ex:gd}
Let $\alg=L^{\infty}([0,1])$, let $a_t=\vert \nabla f_{\bc_{t}}\vert^2_{\alg^n}\in\alg$ and let $b_{t,\eta}=S(\bc_t,\eta)\in\alg$.
If $a_t\ge_{\alg}\delta 1_{\alg}$ for some positive real value $\delta$, 
%(see Definition~\ref{def:positive} for the definition of the notation $>_{\alg}$), 
then the function $a_t$ on $[0,1]$ satisfies $a_t(s)>0$ for almost everywhere $s\in[0,1]$.
On the other hand, since $b_{t,\eta}$ satisfies $\lim_{\eta\to 0}\Vert b_{t,\eta}\Vert_{\alg}/\eta^2=0$, there exists sufficiently small positive real value $\eta_{t,0}$ such that for almost everywhere $s\in[0,1]$, $b_{t,\eta_{t,0}}(s)\le \Vert b_{t,\eta_{t,0}}\Vert_{\alg}\le \eta^2_{t,0} \delta\le \eta_{t,0}(1-\xi_1)\delta$ hold for some positive real value $\xi_1$.
As a result, $-\eta_{t,0}\vert \nabla f_{\bc_{t}}\vert^2_{\alg^n}+S(\bc_t,\eta_{t,0})\le_{\alg} -\eta_{t,0}\xi_1\vert \nabla f_{\bc_{t}}\vert^2_{\alg^n}$ holds and by the Eq.~\eqref{eq:gd_taylor}, we have
\begin{equation}
f(\bc_{t+1})<_{\alg}f(\bc_t)\label{eq:gd_ineq}
\end{equation}
for $t=0,1,\ldots$.
%Recall the meaning of $<_{\alg}$ in the case of $\alg=L^{\infty}([0,1])$.
%Note that $<_{\alg}$ represents the order in $\alg$.
As we mentioned in Example~\ref{ex:positive}, the inequality~\eqref{eq:gd_ineq} means the function $f(\bc_{t+1})\in L^{\infty}([0,1])$ is smaller than the function $f(\bc_{t})\in L^{\infty}([0,1])$ at almost every points on $[0,1]$, i.e., 
\begin{equation*}
 f(\bc_{t+1})(s)<f(\bc_t)(s)
\end{equation*}
for almost every $s\in [0,1]$.
\end{exam}
\begin{exam}
Assume $\alg$ is a finite dimensional space.
If $\vert \nabla f_{\bc_t}\vert_{\alg}^2\ge_{\alg}\delta 1_{\alg}$ for some positive real value $\delta$, the inequality $f(\bc_{t+1})\le_{\alg}f(\bc_{t})-\eta_{t}\xi_1\vert \nabla f_{\bc_{t}}\vert^2_{\alg^n}$ holds for $t=0,1,\ldots$ and some $\eta_t$ {and $\xi_1$} in the same manner as Example~\ref{ex:gd}.
Moreover, the function $f$ defined as Eq.~\eqref{eq:objective} is bounded below and $\nabla f_{\bc_t}$ is Lipschitz continuous on the set $\{\bc\in\alg^n\mid\ f(\bc)\le_{\alg}f(\bc_0)\}$.
In this case, if there exists a positive real value $\xi_2$ such that $\Vert  \nabla f_{\bc_{t+1}}- \nabla f_{\bc_t}\Vert_{\alg^n} \ge \xi_2\Vert \nabla f_{\bc_t}\Vert_{\alg^n}$, then we have
\begin{equation*}
  \xi_2 \Vert \nabla f_{\bc_t}\Vert_{\alg^n}\le L\Vert \bc_{t+1}-\bc_t\Vert_{\alg^n}
  \le L \eta_t \Vert \nabla f_{\bc_t}\Vert_{\alg^n},
\end{equation*}
where $L$ is a Liptschitz constant of $\nabla f_{\bc_t}$.
As a result, we have
\begin{equation*}
f(\bc_{t+1})\le_{\alg}f(\bc_t)-\eta_{t}\xi_1\vert \nabla f_{\bc_{t}}\vert^2_{\alg^n}
\le_{\alg} f(\bc_t)-\frac{\xi_1\xi_2}{L}\vert \nabla f_{\bc_{t}}\vert^2_{\alg^n},
\end{equation*}
which implies $\sum_{t=1}^T\vert \nabla f_{\bc_{t}}\vert^2_{\alg^n}\le_{\alg}L/(\xi_1\xi_2)(f(\bc_1)-f(\bc_{T+1}))$.
Since $f$ is bounded below, the sum $\sum_{t=1}^{\infty}\vert \nabla f_{\bc_{t}}\vert^2_{\alg^n}$ converges.
Therefore, $\vert \nabla f_{\bc_{t}}\vert^2_{\alg^n}\to 0$ as $t\to\infty$, i.e., the gradient $\nabla f_{\bc_{t}}$ in Eq.~\eqref{eq:gradient_decent} converges to $0$.
\end{exam}

\color{\motiv}
%\paragraph{Generalization to a particular case when $\alg$ is not commutative}
\begin{rem}\label{rem:gd_noncommutative}
It is possible to generalize the above method to the case where the objective function $f$ has the form $f(\bc)=\bc^*\mathbf{G}\bc$ for $\mathbf{G}\in\alg^{n\times n}$ {and $\alg$ is noncommutative}.
In this case, the derivative $Df_{\bc}$ is calculated as
\begin{equation*}
Df_{\bc}(u)=u^*\mathbf{G}\bc+\bc^*\mathbf{G}u.
\end{equation*}
Therefore, defining the gradient $\nabla f_{\bc}$ as $\nabla f_{\bc}=\mathbf{G}\bc$ results in $Df_{\bc}(-\eta\nabla f_{\bc})=-2\eta\bc^*\bG^2\bc\le_{\alg}0$ for a real positive value $\eta$, which allows us to derive the same result as Theorem~\ref{thm:gradient_decent}.
\end{rem}

\begin{rem}
The computational complexity of the PCA in RKHMs is higher than the standard PCA in RKHSs.
Indeed, in the case of RKHSs, the minimization problem is reduced to an eigenvalue problem of the Gram matrix with respect to given samples.
On the other hand, we solve the minimization problem~\eqref{eq:pca_min} by the gradient descent, and in each iteration step, we compute the gradient $\mathbf{d}$ in Eq.~\eqref{eq:gradient}.
Since the elements of $\mathbf{G}$ and $\mathbf{c}$ are in $\alg$, the computation of $\mathbf{d}$ involves the multiplication in $\alg$ such as multiplication of functions.
Even though we compute the multiplication in $\alg$ approximately in practice (see Subsection~\ref{sec:pca_numexp}), its computational cost is much higher than the multiplication in $\mathbb{C}$.
\end{rem}

\color{black}
\subsubsection{Minimization of the trace}\label{subsec:pca_trace}
%\color{red}
%\paragraph{For trace class operators}
In the case of $\alg=\blin{\mcl{W}}$, $p_j(x_i)$ and $p_j(x_i)^*$ in the problem~\eqref{eq:pca_min4} do not always commute.
\textcolor{\motiv}{Therefore, we restrict the solution to the form $p_j(x_i)=\sum_{i=1}^n\phi(x_i)c_i$ where each $c_i$ is a Hilbert--Schmidt operator and minimize the trace of the objective function of the problem~\eqref{eq:pca_min4} as follows:} 
%Then, the problem~\eqref{eq:pca_min4} is reduced to
%\begin{equation}
%\inf_{\{p_j\}_{j=1}^r\subseteq \mcl{F}_k\mbox{\footnotesize : ONS}}\;\opn{tr}\bigg(\sum_{i=1}^n\sum_{j=1}^r-p_j(x_i)p_j(x_i)^*\bigg),\label{eq:pca_min5}
%\end{equation}
%where $\mcl{F}_k=\{v\in\modu_k\mid\ v(x)\mbox{ is a rank $N$ operator for any }x\in\mcl{X}\}$.
%where \textcolor{red}{$\mcl{F}_k=\{v\in\modu_k\mid\ v(x)\mbox{ is a Hilbert--Schmidt operator for any }x\in\mcl{X}\}$}.
%This can be reduced to
%By applying the representer theorem (Theorem~\ref{thm:representation}) and replace $\alg^n$ by $F$, we obtain
\begin{equation}
\inf_{\mathbf{c}_j\in F,\ \{\sqrt{\mathbf{G}}\mathbf{c}_j\}_{j=1}^r\mbox{\footnotesize : ONS}}-\opn{tr}\bigg(\sum_{j=1}^r\mathbf{c}_j^*\mathbf{G}^2\mathbf{c}_j\bigg),\label{eq:pca_min6}
\end{equation}
%where $F=\{\bc\in\alg^n\mid\ c_i\mbox{ is a rank $N$ operator for }i=1,\ldots,n\}$.
where \textcolor{\motiv}{$F=\{\bc=[c_1,\ldots,c_n]\in\alg^n\mid\ c_i\mbox{ is a Hilbert--Schmidt operator for }i=1,\ldots,n\}$}.
If $\alg=\mat$, i.e., $\mcl{W}$ is a finite dimensional space, then we
solve the problem~\eqref{eq:pca_min6} by regarding $\bG$ as an $mn\times mn$ matrix and computing the eigenvalues and eigenvectors of $\bG$.
%we approximate $\bG$ in a finite dimensional subspace of $\mcl{W}^n$ as $\tilde{\bG}$.
%Then, the solution of the problem obtained by replacing $\bG$ with $\tilde{\bG}$ in the equation~\eqref{eq:pca_min6} is $\bc_j=\mathbf{v}_j\sigma_j^{-1/2}$, where $\sigma_j$ and $\mathbf{v}_j$ are the largest $s$ eigenvalues of $\tilde{\bG}$ and the corresponding orthonormal eigenvectors.
\begin{prop}\label{prop:min_trace}
Let $\alg=\mat$.
Let $\lambda_1,\ldots,\lambda_r\in\mathbb{C}$ and $\mathbf{v}_1,\ldots,\mathbf{v}_r\in\mathbb{C}^{mn}$ be the largest $r$ eigenvalues and the corresponding orthonormal eigenvectors of $\bG\in\mathbb{C}^{mn\times mn}$.
Then, $\bc_j=[\mathbf{v}_j,0,\ldots,0]\lambda_j^{-1/2}$ is a solution of the problem~\eqref{eq:pca_min6}. 
\end{prop}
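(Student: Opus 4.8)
The plan is to verify that the proposed $\bc_j$ are feasible and attain objective value $-\sum_{j=1}^r\lambda_j$, and then to show that no feasible point does better, by reducing the $\alg$-valued orthonormality constraint to a statement about ordinary orthogonal projections on $\mathbb{C}^{mn}$ and applying a Ky Fan type inequality to the Hermitian matrix $\bG$. Throughout I regard each $\bc_j\in\alg^n$ as an $mn\times m$ complex matrix (its $n$ blocks stacked), so that $\blacket{\sqrt{\bG}\bc_i,\sqrt{\bG}\bc_j}_{\alg^n}=\bc_i^*\bG\bc_j$. For $\bc_j=[\mathbf{v}_j,0,\dots,0]\lambda_j^{-1/2}$ only the first column is nonzero, so $[\mathbf{v}_i,0,\dots,0]^*\bG[\mathbf{v}_j,0,\dots,0]$ has a single nonzero entry, its $(1,1)$ entry $\mathbf{v}_i^*\bG\mathbf{v}_j=\lambda_j\delta_{ij}$; hence $\bc_i^*\bG\bc_j=\delta_{ij}E_{11}$ with $E_{11}\in\mat$ the matrix unit at $(1,1)$, a nonzero projection, so $\{\sqrt{\bG}\bc_j\}_{j=1}^r$ is an ONS (Definition~\ref{def:orthonormal}) and $\bc_j\in F$ automatically since every matrix is Hilbert--Schmidt. (One needs $\lambda_1\ge\cdots\ge\lambda_r>0$, i.e.\ $\opn{rank}\bG\ge r$, which is also what makes the feasible set nonempty.) The same computation with $\bG^2$ gives $\bc_j^*\bG^2\bc_j=\lambda_jE_{11}$, so $\opn{tr}\big(\sum_{j=1}^r\bc_j^*\bG^2\bc_j\big)=\sum_{j=1}^r\lambda_j$, i.e.\ the candidate attains $-\sum_{j=1}^r\lambda_j$.

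For the lower bound I would take an arbitrary feasible family and set $\mathbf{u}_j:=\sqrt{\bG}\bc_j$, so $\bc_j^*\bG^2\bc_j=\mathbf{u}_j^*\bG\mathbf{u}_j$ and $\opn{tr}\big(\sum_j\bc_j^*\bG^2\bc_j\big)=\opn{tr}\big(\bG\sum_jQ_j\big)$ with $Q_j:=\mathbf{u}_j\mathbf{u}_j^*$. The constraint says $P_j:=\mathbf{u}_j^*\mathbf{u}_j$ is a nonzero projection in $\mat$ and $\mathbf{u}_i^*\mathbf{u}_j=0$ for $i\ne j$; from $\big(\mathbf{u}_j(1_{\alg}-P_j)\big)^*\big(\mathbf{u}_j(1_{\alg}-P_j)\big)=(1_{\alg}-P_j)P_j(1_{\alg}-P_j)=0$ we get $\mathbf{u}_j(1_{\alg}-P_j)=0$, so each $Q_j$ is an orthogonal projection on $\mathbb{C}^{mn}$ of rank $\opn{rank}(P_j)$, and $Q_iQ_j=\mathbf{u}_i(\mathbf{u}_i^*\mathbf{u}_j)\mathbf{u}_j^*=0$ for $i\ne j$. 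Thus $Q:=\sum_jQ_j$ is an orthogonal projection of rank $s:=\sum_j\opn{rank}(P_j)$, and expanding in an orthonormal eigenbasis of $\bG$ (eigenvalues $\lambda_1\ge\lambda_2\ge\cdots\ge0$) gives $\opn{tr}(\bG Q)=\sum_l\lambda_l\,t_l$ with $t_l=\Vert Qe_l\Vert^2\in[0,1]$ and $\sum_lt_l=\opn{tr}Q=s$, whence $\opn{tr}(\bG Q)\le\sum_{l=1}^s\lambda_l$, with equality iff $Q$ projects onto a top-$s$ eigenspace. Taking each principal axis to be normalized by a rank-one (minimal) projection, so $s=r$, this yields $-\opn{tr}\big(\sum_j\bc_j^*\bG^2\bc_j\big)\ge-\sum_{j=1}^r\lambda_j$, which is attained by the candidate.

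The two matrix identities are routine; the step I expect to require care is the rank bookkeeping in the last line. A normalized vector in a Hilbert $C^*$-module need not satisfy $\blacket{q,q}_{\modu}=1_{\alg}$ (Definition~\ref{def:normalized}), so a priori $P_j$ could be a higher-rank projection and then $\opn{tr}(\bG Q)$ could exceed $\sum_{j=1}^r\lambda_j$; the proposition is therefore to be read with the $C^*$-module analogue of a unit vector, i.e.\ with each $P_j$ minimal (which is also what the candidate realizes), and I would make this restriction explicit. Once it is in force, everything reduces to the translation of $\alg$-valued orthonormality into mutually orthogonal ordinary projections together with the elementary Ky Fan / Rayleigh--Ritz bound for $\bG$, so the argument closes.
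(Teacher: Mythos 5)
Your proof is correct and is in fact more complete than the paper's own argument. The paper's proof is essentially a single assertion: after rewriting $\sum_{j=1}^r\bc_j^*\bG^2\bc_j=\sum_{j=1}^r(\sqrt{\bG}\bc_j)^*\bG(\sqrt{\bG}\bc_j)$, it states without justification that any solution of the problem~\eqref{eq:pca_min6} satisfies $\sqrt{\bG}\bc_j=\mathbf{v}_ju^*$ for a normalized $u\in\mathbb{C}^m$, and then reads off the candidate; the variational content --- why the constrained maximum of the trace equals $\sum_{j=1}^r\lambda_j$ --- is never proved. You supply exactly that content: the verification that the candidate is feasible and attains $-\sum_{j=1}^r\lambda_j$, the translation of the $\alg$-valued ONS condition into mutually orthogonal projections $Q_j=\mathbf{u}_j\mathbf{u}_j^*$ on $\mathbb{C}^{mn}$ (via $\mathbf{u}_j(1_{\alg}-P_j)=0$), and the Ky Fan bound $\opn{tr}(\bG Q)\le\sum_{l=1}^s\lambda_l$. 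Moreover, your rank bookkeeping exposes a point that the paper's assertion silently assumes: under Definition~\ref{def:normalized}, $\bc_j^*\bG\bc_j$ need only be a nonzero projection, and if its rank exceeds one (e.g.\ taking $\sqrt{\bG}\bc_1$ with columns $\mathbf{v}_1,\mathbf{v}_2$, so that $\bc_1^*\bG\bc_1=\opn{diag}(1,1,0,\dots,0)$ and $\opn{tr}(\bc_1^*\bG^2\bc_1)=\lambda_1+\lambda_2$), the objective drops strictly below $-\sum_{j=1}^r\lambda_j$, so the stated $\bc_j$ is optimal only under the minimal (rank-one) normalization that you make explicit and that the candidate realizes. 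In short, the paper asserts the characterization of minimizers; you prove attainment plus the matching upper bound and identify the extra hypotheses (positivity of $\lambda_1,\dots,\lambda_r$ and rank-one normalization) needed for the proposition to be literally true --- these are omissions in the statement and the paper's proof, not gaps in your argument.
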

\begin{proof}
Since the identity $\sum_{j=1}^r\mathbf{c}_j^*\mathbf{G}^2\mathbf{c}_j=\sum_{j=1}^r(\sqrt{\bG}\mathbf{c}_j)^*\mathbf{G}(\sqrt{\bG}\mathbf{c}_j)$ holds, any solution $\mathbf{c}_j$ of the problem~\eqref{eq:pca_min6} satisfies $\sqrt{\mathbf{G}}\mathbf{c}_j=\mathbf{v}_ju^*$ for a normalized vector $u\in\mathbb{C}^m$.
Thus, $p_j=\sum_{i=1}^n\phi(x_i)c_{i,j}$, where ${c}_{i,j}$ is the $i$-th element of $\lambda_j^{-1/2}[\mathbf{v}_j,0,\ldots,0]$, is a solution of the problem.
\end{proof}

\color{\motiv}
If $\mcl{W}$ is an infinite dimensional space, we rewrite the problem~\eqref{eq:pca_min6} with the Hilbert--Schmidt norm as follows:
\begin{equation}
\inf_{\mathbf{c_j}\in F,\ \{\sqrt{\mathbf{G}}\mathbf{c}_j\}_{j=1}^r\mbox{\footnotesize : ONS}}-\sum_{j=1}^r\Vert\mathbf{G}\mathbf{c}_j\Vert^2_{F},\label{eq:pca_min8}
\end{equation}
where $\Vert \mathbf{c}\Vert_{F}^2=\sum_{i=1}^n\Vert c_i\Vert_{\opn{HS}}^2$ and $\Vert\cdot\Vert_{\opn{HS}}$ is the Hilbert--Schmidt norm for Hilbert--Schmidt operators.
Similar to Eq.~\eqref{eq:pca_min3}, we rearrange the problem~\eqref{eq:pca_min8} to the following problem by adding a penalty term:
\begin{equation}
\inf_{\mathbf{c}\in F}-\Vert\mathbf{G}\mathbf{c}\Vert^2_{F}+\lambda\big\vert\big\Vert \sqrt{\mathbf{Gc}}\big\Vert^2_{F}-1\big\vert,\label{eq:pca_min7}
\end{equation}
where $\lambda$ is a real positive weight for the penalty term.
Then, we can apply the standard gradient descent method in Hilbert spaces to the problem in $F$~\citep{smyrlis04} since $F$ is the Hilbert space equipped with the Hilbert--Schmidt inner product.
Similar to the case of Eq.~\eqref{eq:pca_min3}, for $r>1$, let $\bc_1$ be a solution of the problem~\eqref{eq:pca_min7}.
Then, we solve the same problem in the orthogonal complement of the space spanned by $\{\bc_1\}$ and set the solution of this problem as $\bc_2$.
Then, we solve the same problem in the orthogonal complement of the space spanned by $\{\bc_1,\bc_2\}$ and repeat this procedure to obtain solutions $\bc_1,\ldots\bc_r$.
%\begin{rem}
%If the $\alg$-valued positive definite kernel $k$ takes its values in Hilbert--Schmidt operators, then $\modu_{k,0}\subseteq\mcl{F}_k$ since for a Hilbert--Schmidt operator $c$, $cd$ is also a Hilbert--Schmidt operator for any $d\in\alg$. 
%$\{\sum_{i=1}^n\phi(x_i)c_i\mid\ n\in\mathbb{N},\ x_i\in\mcl{X},\ c_i\in HS\}\subseteq\mcl{F}_k$ 
%\end{rem}
\color{black}

\if0
If $\mcl{W}$ is an infinite dimensional space, we regard $\bG$ as a linear operator on the Hilbert space $\mcl{W}^n$. 
The following theorem shows we can reduce the problem in $\alg^n$ to a problem in $\mcl{W}^n$.
Then, we can apply the standard gradient decent method in Hilbert spaces to the problem in $\mcl{W}^n$~\citep{smyrlis04}.
\begin{prop}
%Let $\alg=\blin{\mcl{W}}$.
Assume $\mcl{W}$ is a separable Hilbert space and $\{e_j\}_{j=1}^{\infty}$ is an orthonormal basis of $\mcl{W}$.
Let $[d_{1,1},\ldots,d_{n,1}]^T=\mathbf{d}_1,\ldots,[d_{1,N},\ldots,d_{n,N}]^T=\mathbf{d}_N\in\mcl{W}^n$ and let $[c_1,\ldots,c_n]^T=\mathbf{c}\in\blin{\mcl{W}}^n$ be a vector in the Hilbert $C^*$-module $\blin{\mcl{W}}^n$ such that $c_i\in\blin{\mcl{W}}$ is a rank $N$ operator satisfying $c_ie_j=d_{i,j}$ for $i=1,\ldots,n$ and $j=1,\ldots,N$, and $c_ie_j=0$ for $i=1,\ldots,n$ and $j\ge N+1$.
Then, we have
\begin{equation*}
\sum_{j=1}^N\blacket{\mathbf{d}_j,\mathbf{G}^2\mathbf{d}_j}_{\mcl{W}^n}=\opn{tr}(\mathbf{c}^*\bG^2\mathbf{c}),
\end{equation*}
and if $\{\sqrt{\bG}\mathbf{d}_j\}_{j=1}^N$ is an orthonormal system in $\mcl{W}^n$, then $\sqrt{\bG}\mathbf{c}$ is a normalized vector in $\blin{\mcl{W}}^n$.
\end{prop}
\begin{proof}
Let $g_{i,l}\in\blin{\mcl{W}}$ be the $(i,l)$-element of $\bG$.
By the definition of $\bc$, we have
\begin{align*}
\opn{tr}(\mathbf{c}^*\bG^2\mathbf{c})&=\sum_{j=1}^{\infty}\bblacketg{e_j,\sum_{i,l=1}^nc_i^*g_{i,l}c_le_j}_{\mcl{W}}
=\sum_{j=1}^{N}\sum_{i,l=1}^n\blacket{c_ie_j,g_{i,l}c_le_j}_{\mcl{W}}\\
&=\sum_{j=1}^{N}\sum_{i,l=1}^n\blacket{d_{i,j},g_{i,l}d_{l,j}}_{\mcl{W}}
=\sum_{j=1}^{N}\blacket{\mathbf{d}_j,\mathbf{G}^2\mathbf{d}_j}_{\mcl{W}^n}.
\end{align*}
In addition, $\blacket{\bc,\bG\bc}_{\blin{\mcl{W}}^n}$ is a rank $N$ operator all of whose nonzero eigenvalues are $1$,
which implies $\sqrt{\bG}\mathbf{c}$ is a normalized vector in $\blin{\mcl{W}}^n$.
%the corresponding eigenvectors are $e_1,\ldots,e_t$
\end{proof}
%By removing the constant term, the objective function is $f(\bc)=\opn{tr}(-\bc\bG^2\bc+\lambda\bc^*\bG\bc\bc^*\bG\bc-2\lambda\bc^*\bG\bc)$
Therefore, we consider the problem
\begin{equation}
\inf_{\mathbf{d}_j\in\mcl{W}^n,\ \{\sqrt{\mathbf{G}}\mathbf{d}_j\}_{j=1}^{rN}\mbox{\footnotesize : ONS}}-\sum_{j=1}^{rN}\mathbf{d}_j^*\mathbf{G}^2\mathbf{d}_j,\label{eq:pca_hilbert}
\end{equation}
instead of the problem~\eqref{eq:pca_min6}.
Since the porblem~\eqref{eq:pca_hilbert} is defined on the Hilbert space $\mcl{W}^n$, we solve it by the gradient descent method in Hilbert spaces~\citep{smyrlis04}. 
Then, we put $[c_{1,l},\ldots,c_{n,l}]^T=\bc_l\in\blin{\mcl{W}}^n$ such that $c_{i,l}e_j=d_{i,(l-1)s+j}$ for $l=1,\ldots,r$, $j=1,\ldots,N$, and $i=1,\ldots,n$.
\fi

\subsubsection{Numerical examples}\label{sec:pca_numexp}
\paragraph{Experiments with synthetic data}
We applied the above PCA with $\alg=L^{\infty}([0,1])$ to functional data.
We randomly generated three kinds of sample-sets from the following functions of two variables on $[0,1]\times [0,1]$:
\begin{align*}
y_1(s,t)=e^{10(s-t)},\quad y_2(s,t)=10st,\quad y_3(s,t)=\cos(10(s-t)).
\end{align*}
Each sample-set $i$ is composed of 20 samples with random noise.
We denote these samples by $x_1,\ldots,x_{60}$.
The noise was randomly drawn from the Gaussian distribution with mean $0$ and standard deviation $0.3$. 
Since $L^{\infty}([0,1])$ is commutative, we applied the gradient descent proposed in Subsection~\ref{subsec:gradient} to solve the problem~\eqref{eq:pca_min}.
The parameters were set as $\lambda=0.1$ and $\eta_t=0.01$.
We set the $L^{\infty}([0,1])$-valued positive definite kernel $k$ as $(k(x_i,x_j))(t)=\int_0^1\int_0^1(t-x_i(s_1,s_2))(t-x_j(s_1,s_2))ds_1ds_2$ (see Example~\ref{ex:pdk1}.1).
\textcolor{\exp}{Since $(k(x_i,x_j))(t)$ is a polynomial of $t$, all the computations on $\alg$ result in polynomials.
Thus, the results are obtained by keeping coefficients of the polynomials.}
Moreover, we set $\bc_0$ as the constant function $[1,\ldots,1]^T\in\alg^n$ and computed $\bc_1,\bc_2,\ldots$ according to Eq.~\eqref{eq:gradient_decent}.
For comparison, we also vectorized the samples \textcolor{\exp}{by discretizing $y_i$ at $121=11\times 11$ points composed of 11 equally spaced points in $[0,1]$ ($0,0.1,\ldots,1$)} and applied the standard kernel PCA in the RKHS associated with the Laplacian kernel on $\mathbb{R}^{121}$.
The results are illustrated in Figure~\ref{fig:pca}.
Since the samples are contaminated by the noise, the PCA in the RKHS cannot separate three sample-sets.
On the other hand, the $L^{\infty}([0,1])$-valued weights of principal components obtained by the proposed PCA in the RKHM reduce the information of the samples as functions.
As a result, it clearly separates three sample-sets.

Figure~\ref{fig:gd} shows the convergence of the proposed gradient descent.
In this example, we only compute the first principal components, hence $r$ is set as $1$.
For the objective function $f$ defined as $f(\bc)=-\bc^*\bG^2\bc+\lambda\bc\bG\bc\bc^*\bG\bc+\lambda\bc^*\bG\bc$, functions $f(\bc_t)\in L^{\infty}([0,1])$ for $t=0,\ldots,9$ are illustrated.
We can see $f(\bc_{t+1})<f(\bc_t)$ and $f(\bc_t)$ gradually approaches a certain function as $t$ grows.
\begin{figure}[t]
    \centering
    \includegraphics[scale=0.4]{./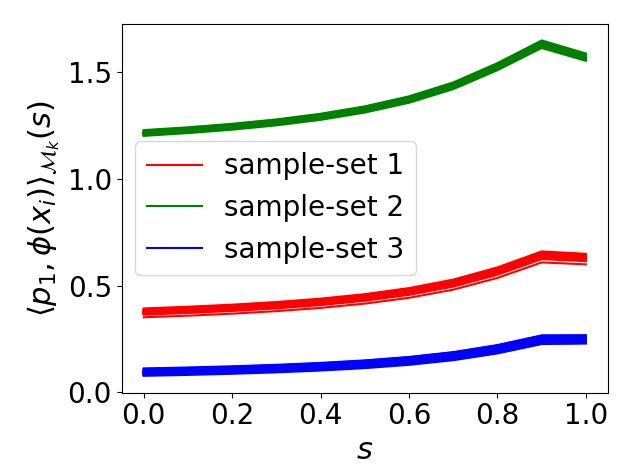}
    \includegraphics[scale=0.4]{./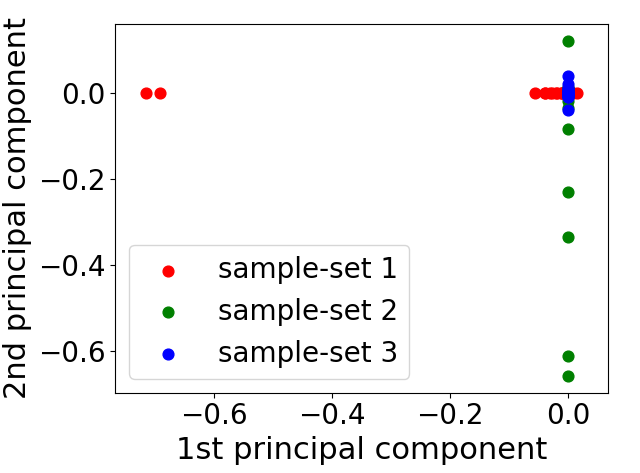}
    \caption{The $L^{\infty}([0,1])$-valued first principal components obtained by the proposed PCA in an RKHM (left) and the real-valued first and second principal components obtained by the standard PCA in an RKHS (right)}
    \label{fig:pca}
\end{figure}

\begin{figure}[t]
    \centering
    \includegraphics[scale=0.3]{./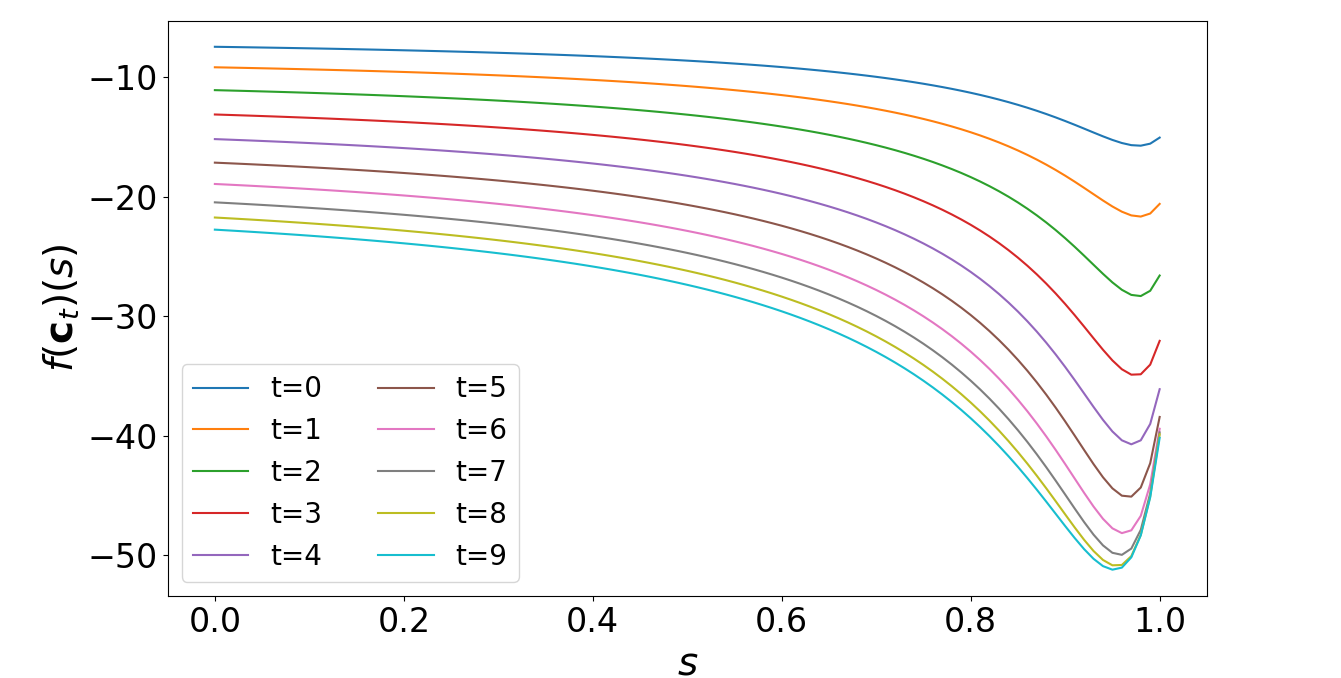}
    \caption{The convergence of the function $f(\bc_t)$ along $t$.}
    \label{fig:gd}
\end{figure}

\color{\exp}
\paragraph{Experiments with real-world data}
To show the proposed PCA with RKHMs extracts the continuous dependencies of samples on the principal axes as we insisted in Section~\ref{sec:motivation}, we conducted experiments with climate data in Japan\footnote{available at \url{https://www.data.jma.go.jp/gmd/risk/obsdl/}}.
The data is composed of the maximum and minimum daily temperatures at 47 prefectures in Japan in 2020.
The original data is illustrated in Figure~\ref{fig:data_climate_ori}.
The red line represents the temperature at Hokkaido, the northernmost prefecture in Japan and the blue line represents that at Okinawa, the southernmost prefecture in Japan. 
We respectively fit the maximum and minimum temperatures at each location to the Fourier series $a_0+\sum_{i=1}^{10}(a_i\cos(it)+b_i\sin(it))$.
The fitted functions $x_1,\ldots,x_{47}\in C([0,366],\mathbb{R}^2)$ are illustrated in Figure~\ref{fig:data_climate}.
Then, we applied the PCA with the RKHM associated with the $L^{\infty}([0,366])$-valued positive definite kernel $(k(x,y))(t)=e^{-\Vert x(t)-y(t)\Vert_2^2}$.
Let $\mcl{F}=\{a_0+\sum_{i=1}^{10}(a_i\cos(it)+b_i\sin(it))\mid\ a_i,b_i\in\mathbb{R}\}\subseteq L^2([0,366])$.
We project $k(x,y)$ onto $\mcl{F}$.
Then, for $c,d\in\mcl{F}$, $c+d\in\mcl{F}$ is satisfied, but $cd\in\mcl{F}$ is not always satisfied.
Thus, we approximate $cd$ with $a_0+\sum_{i=1}^{N}(a_i\cos(it)+b_i\sin(it))$ for $N\le 10$ to restrict all the computations in $\mcl{F}$ in practice.
{Here, to remove high frequency components corresponding to noise and extract essential information, we set $N=3$.}
Figure~\ref{fig:pca_climate_rkhm} shows the computed $L^{\infty}([0,366])$-valued weights of the first principal axis in the RKHM, which continuously depends on time.
The red and blue lines correspond to Hokkaido and Okinawa, respectively.
We see these lines are well-separated from other lines corresponding to other prefectures.
For comparison, we also applied the PCA in RKHSs to discrete time data.
First, we respectively applied the standard kernel PCA with RKHSs to the original temperature each day and obtained real-valued weights of the first principal components.
Here, we used the complex-valued Gaussian kernel $\tilde{k}(x,y)=e^{-\Vert x-y\Vert_2^2}$.
Then, we connected the results and obtained Figure~\ref{fig:pca_climate_rkhs_ori}. 
Since the original data is not smooth, the PCA amplifies the non-smoothness, which provides meaningless results.
Next, we respectively applied the standard kernel PCA with the RKHS to the value of the fitted Fourier series each day and obtained real-valued weights of the first principal components.
Then, similar to the case of Figure~\ref{fig:pca_climate_rkhs_ori}, we connected the results and obtained Figure~\ref{fig:pca_climate_rkhs}.
In this case, the extracted features somewhat capture the continuous behaviors of the temperatures.
%However, they do not completely extract the features of behaviors of data as functions.
{However, the PCA in the RKHS amplifies high frequency components, which correspond to noise.}
Therefore, the result fails to separate the temperatures of Hokkaido and Okinawa, whose behaviors are significantly different as illustrated in Figure~\ref{fig:data_climate_ori}.
On the other hand, the PCA in the RKHM captures the feature of each sample as a function {and removes nonessential high frequency components}, which results in separating functional data properly.

\begin{figure}[t]
    \centering
    \includegraphics[scale=0.4]{./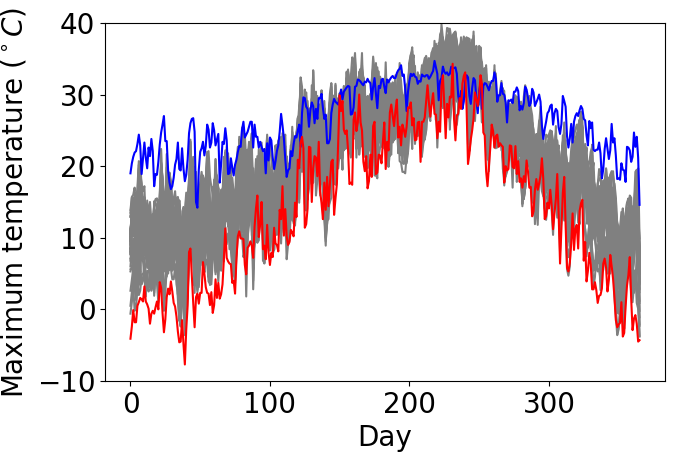}
    \includegraphics[scale=0.4]{./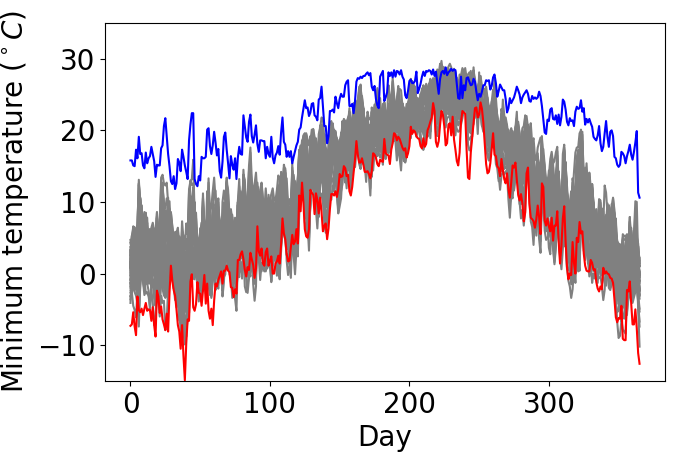}
    \caption{Original climate data at 47 locations}
    \label{fig:data_climate_ori}
\end{figure}

\begin{figure}[t]
    \centering
    \includegraphics[scale=0.4]{./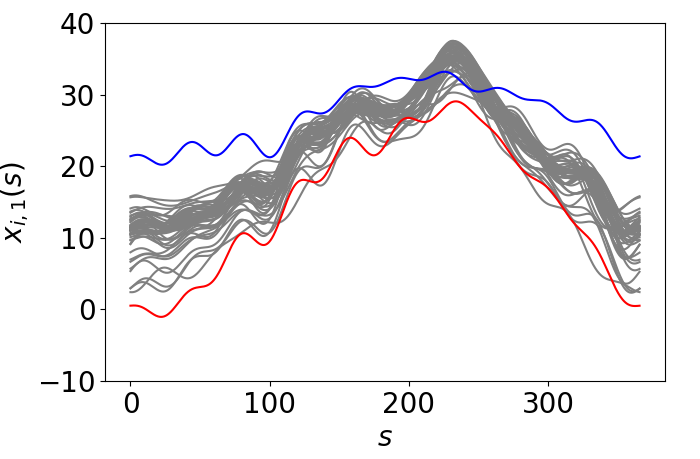}
    \includegraphics[scale=0.4]{./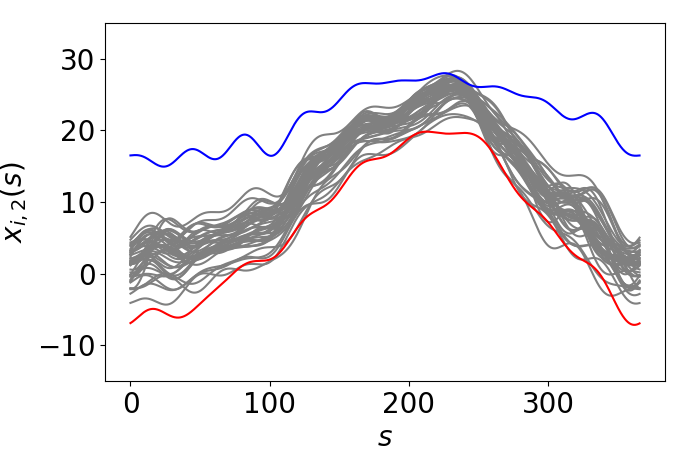}
    \caption{Fitted Fourier series}
    \label{fig:data_climate}
\end{figure}

\begin{figure}[t]
    \centering
    \subfigure[PCA with RKHMs for the fitted Fourier series]{\includegraphics[scale=0.4]{./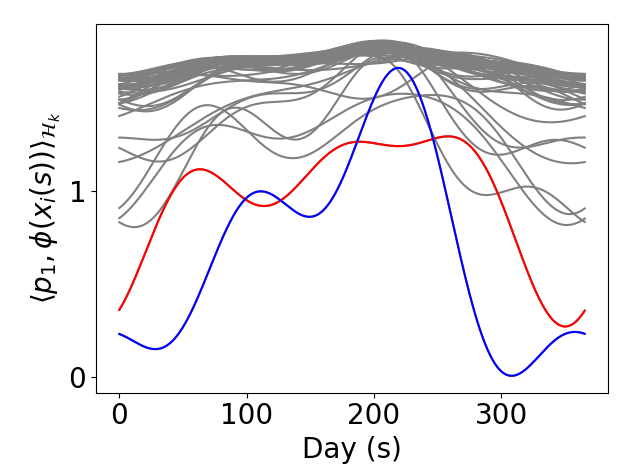}\label{fig:pca_climate_rkhm}}
    \subfigure[PCA with RKHSs for the original data]{\includegraphics[scale=0.4]{./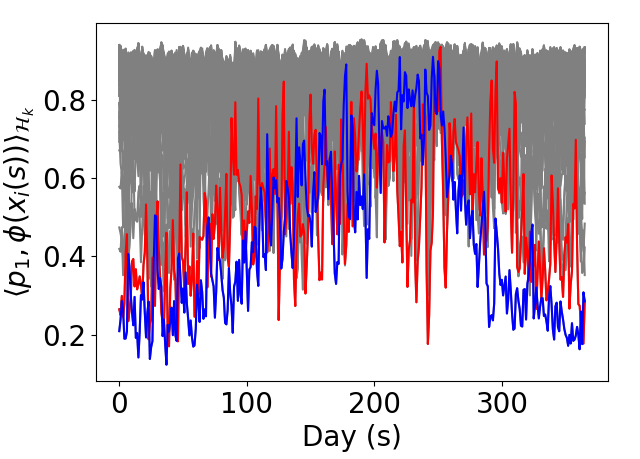}\label{fig:pca_climate_rkhs_ori}}\\
    \subfigure[PCA with RKHSs for the fitted Fourier series]{\includegraphics[scale=0.4]{./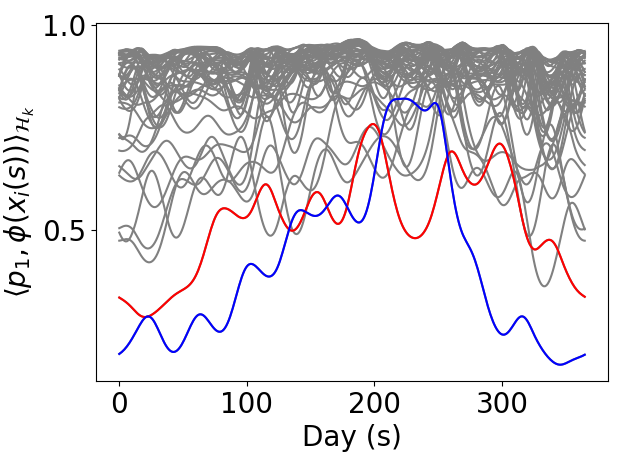}\label{fig:pca_climate_rkhs}}
    \caption{Principal components of PCA for climate data}
    \label{fig:pca_climate}
\end{figure}

\color{black}
\subsection{Time-series data analysis}\label{subsec:pf}
\color{\motiv}
The problem of analyzing dynamical systems from data by using Perron--Frobenius operators and their adjoints (called Koopman operators), which are linear operators expressing the time evolution of dynamical systems, has recently attracted attention in various fields~\citep{mezic12,mezic17,takeishi17,takeishi17-2,lusch17}.
And, several methods for this problem using RKHSs have also been proposed~\citep{kawahara16,klus17,ishikawa18,hashimoto19,fujii19}.
%$k(x_t,x_t)$ and $k(x_{t+1},x_{t+1})$ for observed time series data $\{x_0,x_1,\ldots\}$, and extracting the information of structures among observables.
In these methods, sequential data is supposed to be generated from dynamical systems and is analyzed through Perron--Frobenius operators in RKHSs.
{To analyze the time evolution of functional data, we generalize Perron--Frobenius operators defined in RKHSs to those in RKHMs by using an operator-valued positive definite kernel describing similarities between pairs of functions.}

\paragraph{Defining Perron--Frobenius operators in RKHMs}
We consider the RKHM and vvRKHS associated with an operator-valued positive definite kernel.
VvRKHSs are associated with operator-valued kernels, and as we stated in Lemma~\ref{lem:pdk_equiv}, those operator-valued kernels are special cases of $C^*$-algebra-valued positive definite kernels.
Here, we discuss the advantage of RKHMs over vvRKHSs.
Comparing with vvRKHSs, RKHMs have enough representation power for preserving continuous behaviors of infinite dimensional operator-valued kernels, while vvRKHSs are not sufficient for preserving such behaviors.
Let $\mcl{W}$ be a Hilbert space, let $k:\mcl{X}\times \mcl{X}\to\mcl{B}(\mcl{W})$ be an operator-valued positive definite kernel on a data space $\mcl{X}$, and let $\hil_k^{\opn{v}}$ be the vvRKHS associated with $k$.
Since the inner products in vvRKHSs have the form $\blacket{w,k(x,y)h}$ for $w,h\in\mcl{W}$ and $x,y\in\mcl{X}$, if $\mcl{W}$ is a $d$-dimensional space, putting $w$ as $d$ linearly independent vectors in $\mcl{W}$ reconstructs $k(x,y)$.
However, if $\mcl{W}$ is an infinite dimensional space, we need infinitely many $w$ to reconstruct $k(x,y)$, and we cannot recover the continuous behavior of the operator $k(x,y)$ with finitely many $w$.
For example, let $\mcl{X}=C(\Omega,\mcl{Y})$ and $\mcl{W}=L^2(\Omega)$ for a compact measure space $\Omega$ and a topological space $\mcl{Y}$.
Let $(k(x,y)w)(s)=\int_{t\in \Omega}\tilde{k}(x(s),y(t))w(t)dt$, where $\tilde{k}$ is a complex-valued positive definite kernel on $\mcl{Y}$ (see Example~\ref{ex:pdk1}.4).
The operator $k(x,y)$ for functional data $x$ and $y$ describes the continuous changes of similarities between function $x$ and $y$.
However, the estimation or prediction of the operator $k(x,y)$ in vvRKHSs fails to extract the continuous behavior of the function $\tilde{k}(x(s),y(t))$ in the operator $k(x,y)$ since vectors in vvRKHSs have the form $k(\cdot,y)w$  and we cannot completely recover $k(x,y)$ with finitely many vectors in the vvRKHS.
On the other hand, RKHMs have enough information to recover $k(x,y)$ since it is just the inner product between two vectors $\phi(x)$ and $\phi(y)$.
\color{black}

\subsubsection{Perron--Frobenius operator in RKHSs}\label{subsec:pf_rkhs_review}
We briefly review the definition of the Perron-Frobenius operator on RKHS and existing methods for analysis of time-series data through Perron--Frobenius operators and construction of their estimations~\citep{kawahara16,hashimoto19} .
First, we define Perron--Frobenius operators in RKHSs.
Let $\{x_0,x_1,\ldots\}\subseteq \mcl{X}$ be time-series data.
We assume it is generated from the following deterministic dynamical system:
\begin{equation}
x_{i+1}=f(x_i),\label{eq:ds_rkhs}
\end{equation}
where $f:\mcl{X}\to\mcl{X}$ is a map.
By embedding $x_i$ and $f(x_i)$ in an RKHS $\hil_{\tilde{k}}$ associated with a positive definite kernel $\tilde{k}$ and the feature map $\tilde{\phi}$, dynamical system~\eqref{eq:ds_rkhs} in $\mcl{X}$ is transformed into that in the RKHS as
\begin{equation*}
\tilde{\phi}(x_{i+1})=\tilde{\phi}(f(x_i)).
\end{equation*}
The Perron--Frobenius operator $\tilde{K}$ in the RKHS is defined as a linear operator on $\hil_{\tilde{k}}$ satisfying
\begin{equation*}
\tilde{K}\tilde{\phi}(x):=\tilde{\phi}(f(x))
\end{equation*}
for $x\in\mcl{X}$.
If $\{\tilde{\phi}(x)\mid\ x\in\mcl{X}\}$ is linearly independent, $\tilde{K}$ is well-defined as a linear map in the RKHS.
For example, if $\tilde{k}$ is a universal kernel~\citep{sriperumbudur11} such as the Gaussian or Laplacian kernel on $\mcl{X}=\mathbb{R}^d$, $\{\tilde{\phi}(x)\mid\ x\in\mcl{X}\}$ is linearly independent.

By considering eigenvalues and the corresponding {eigenvectors} of $\tilde{K}$, we can understand the long-time behavior of the dynamical system.
For example, let $v_1,\ldots,v_m$ be the eigenvectors with respect to eigenvalue 1 of $\tilde{K}$.  
We project the vector $\tilde{\phi}(x_0)$ onto the subspace spanned by $v_1,\dots,v_m$.
We denote the projected vector by $v$.
Then, for $\alpha=1,2,\ldots$, we have
\begin{equation*}
\tilde{\phi}(x_{\alpha})=\tilde{K}^{\alpha}(v+v^{\perp})=v+\tilde{K}^{\alpha}v^{\perp},
\end{equation*}
where $v^{\perp}=\tilde{\phi}(x_0)-v$.
Therefore, by calculating a pre-image of $v$, we can extract the time-invariant component of the dynamical system with the initial value $x_0$.

For practical uses of the above discussion, we construct an estimation of $\tilde{K}$ only with observed data $\{x_0,x_1,\ldots\}\subseteq \mcl{X}$ as follows: 
We project $\tilde{K}$ onto the finite dimensional subspace spanned by $\{\tilde{\phi}(x_0),\ldots,\tilde{\phi}(x_{T-1})\}$.
Let $\tilde{W}_T:=[\tilde{\phi}(x_0),\ldots,\tilde{\phi}(x_{T-1})]$ and $\tilde{W}_T=\tilde{Q}_T\tilde{\mathbf{R}}_T$ be the QR decomposition of $\tilde{W}_T$ in the RKHS.
Then, the Perron--Frobenius operator $\tilde{K}$ is estimated by projecting $\tilde{K}$ onto the space spanned by  $\{\tilde{\phi}(x_0),\ldots,\tilde{\phi}(x_{T-1})\}$.
Since $\tilde{K}\tilde{\phi}(x_i):=\tilde{\phi}(f(x_i))=\tilde{\phi}(x_{i+1})$ holds, we construct an estimation $\tilde{\mathbf{K}}_T$ of $\tilde{K}$ as follows:
\begin{align*}
\tilde{\mathbf{K}}_T:&= \tilde{Q}_T^*\tilde{K}\tilde{Q}_T=\tilde{Q}_T^*\tilde{K}\tilde{W}_T\tilde{\mathbf{R}}_T^{-1}
=\tilde{Q}_T^*[\tilde{\phi}(x_1),\ldots,\tilde{\phi}(x_{T})]\tilde{\mathbf{R}}_T^{-1},
\end{align*}
which can be computed only with observed data.

\subsubsection{Perron--Frobenius operator in RKHMs}\label{subsec:pf_rkhm}
Existing analyses \citep{kawahara16, hashimoto19} of time-series data with Perron--Frobenius operators are addressed only in RKHSs.
In the remaining parts of this section, we generalize the existing analyses to RKHM to extract continuous behaviors of functional data.
We consider the case where time-series is functional data.
Let $\Omega$ be a compact measure space, $\mcl{Y}$ be a topological space, $\mcl{X}=C(\Omega,\mcl{Y})$, $\alg=\mcl{B}(L^2(\Omega))$, and $\{x_0,x_1,\ldots\}\subseteq\mcl{X}$ be functional time-series data.
Let $k:\mcl{X}\times\mcl{X}\to \alg$ be defined as $(k(x,y)w)(s)=\int_{t\in\Omega}\tilde{k}(x(s),y(t))w(t)dt$, where $\tilde{k}:\mcl{Y}\times \mcl{Y}\to\mathbb{C}$ is a complex-valued positive definite kernel (see Example~\ref{ex:pdk1}.4 and the last paragraph of Section~\ref{sec:motivation}).
The operator $k(x,y)$ is the integral operator whose integral kernel is $\tilde{k}(x(s),y(t))$.
We define a Perron--Frobenius operator in the RKHM $\modu_k$ associated with the above kernel $k$ as an $\alg$-linear operator satisfying 
\begin{equation*}
{K}{\phi}(x)={\phi}(f(x))
\end{equation*}
for $x\in\mcl{X}$.
We assume $K$ is well-defined on a dense subset of $\modu_k$.
Then, for $\alpha,\beta=1,2,\ldots$, we have
\begin{equation*}
k(x_{\alpha},x_{\beta})=\blacket{\phi(x_{\alpha}),\phi(x_{\beta})}_{\modu_k}=\bblacket{K^{\alpha}\phi(x_0),K^{\beta}\phi(x_0)}_{\modu_k}.
\end{equation*}
Therefore, by estimating $K$ in the RKHM $\modu_k$, we can extract the similarity between arbitrary points of functions $x_{\alpha}$ and $x_{\beta}$.
Moreover, the eigenvalues and eigenvectors of $K$ provide us a decomposition of the similarity $k(x_{\alpha},x_{\beta})$ into a time-invariant term and time-dependent term.
Since $K$ is a linear operator on a Banach space $\modu_k$, eigenvalues and eigenvectors of $K$ are available. 
Let $v_1,\ldots,v_m\in\modu_k$ be the eigenvectors with respect to eigenvalue $1$ of $K$.
We project the vector $\phi(x_0)$ onto the submodule spanned by $v_1,\ldots,v_m$, which is denoted by $\mcl{V}$.
Let $\{q_1,\ldots,q_m\}\subseteq\modu_k$ be an orthonormal basis of $\mcl{V}$ and let $v=\sum_{i=1}^mq_i\blacket{q_i,\phi(x_0)}_{\modu_k}$.
Then, we have
\begin{equation}
k(x_{\alpha},x_{\beta})=\bblacket{K^{\alpha}(v+v^{\perp}),K^{\beta}(v+v^{\perp})}_{\modu_k}=\blacket{v,v}_{\modu_k}+r({\alpha},\beta),\label{eq:modedec_rkhm}
\end{equation}
where $v^{\perp}=\phi(x_0)-v$ and $r(\alpha,\beta)=\blacket{K^{\alpha}v,K^{\beta}v^{\perp}}_{\modu_k}+\blacket{K^{\alpha}v^{\perp},K^{\beta}v}_{\modu_k}+\blacket{K^{\alpha}v^{\perp},K^{\beta}v^{\perp}}_{\modu_k}$.
Therefore, the term $\blacket{v,v}_{\modu_k}$ provides us with the information about time-invariant similarities.

\begin{rem}\label{rmk:vvrkhs}
We can also consider the vvRKHS $\hil_k^{\opn{v}}$ with respect to the operator-valued kernel $k$.
Here, we discuss the difference between the case of vvRKHS and RKHM.
The Perron--Frobenius operator $K^{\opn{v}}$ in a vvRKHS $\vvrkhs$~\citep{fujii19} is defined as a linear operator satisfying
\begin{equation*}
K^{\opn{v}}\phi(x)w=\phi(f(x))w
\end{equation*}
for $x\in\mcl{X}$ and $w\in\mcl{W}$.
However, with finitely many vectors in $\hil_k^{\opn{v}}$, we can only recover an projected operator $UU^*k(x_{\alpha},x_{\beta})UU^*$, where $N\in\mathbb{N}$, $U=[u_1,\ldots,u_N]$, and $\{u_1,\ldots,u_N\}$ is an orthonormal system on $\mcl{W}$ as follows:
\begin{equation}
U^*k(x_{\alpha},x_{\beta})U=\big[\blacket{\phi(x_s)u_i,\phi(x_t)u_j}_{\vvrkhs}\big]_{i,j}=\big[\bblacket{(K^{\opn{v}})^{\alpha}\phi(x_0)u_i,(K^{\opn{v}})^{\beta}\phi(x_0)u_j}_{\vvrkhs}\big]_{i,j}.\label{eq:time_invariant_rkhm}
\end{equation}
Furthermore, let $v_1,\ldots,v_m\in\modu_k$ be the eigenvectors with respect to eigenvalue $1$ of $K^{\opn{v}}$.
Let $\{q_1,\ldots,q_m\}\subseteq\vvrkhs$ be an orthonormal basis of the subspace spanned by $v_1,\ldots,v_m$ and let $\tilde{v}_j=\sum_{i=1}^mq_i\blacket{q_i,\phi(x_0)u_j}_{\vvrkhs}$.
Then, we have
\begin{equation}
U^*k(x_{\alpha},x_{\beta})U=\big[\sblacket{(K^{\opn{v}})^{\alpha}(\tilde{v}_i+\tilde{v}_i^{\perp}),(K^{\opn{v}})^{\beta}(\tilde{v}_j+\tilde{v}_j^{\perp})}_{\vvrkhs}\big]_{i,j}=[\blacket{\tilde{v}_i,\tilde{v}_j}_{\vvrkhs}]_{i,j}+\tilde{r}(\alpha,\beta),\label{eq:time_invariant_vvrkhs}
\end{equation}
where $\tilde{v}_i^{\perp}=\phi(x_0)u_i-\tilde{v}_i$ and $\tilde{r}(\alpha,\beta)=[\sblacket{(K^{\opn{v}})^{\alpha}\tilde{v}_i,(K^{\opn{v}})^{\beta}\tilde{v}_j^{\perp}}_{\vvrkhs}+\sblacket{(K^{\opn{v}})^{\alpha}\tilde{v}_i^{\perp},(K^{\opn{v}})^{\beta}\tilde{v}_j}_{\vvrkhs}+\sblacket{(K^{\opn{v}})^{\alpha}\tilde{v}_i^{\perp},(K^{\opn{v}})^{\beta}\tilde{v}_j^{\perp}}_{\vvrkhs}]_{i,j}$.
Therefore, with vvRKHSs, we cannot recover the continuous behavior of the operator $k(x,y)$ which encodes similarities between functions $x$ and $y$.
\end{rem}

\subsubsection{Estimation of Perron--Frobenius operators in RKHMs}
In practice, we only have time-series data but do not know the underlying dynamical system and its Perron--Frobenius operator in an RKHM.
Therefore, we consider estimating the Perron--Frobenius operator only with the data.
To do so, we generalize the Gram--Schmidt orthonormalization algorithm to Hilbert $C^*$-modules to apply the QR decomposition and project Perron--Frobenius operators onto the submodule spanned by $\{\phi(x_0),\ldots,\phi(x_{T-1})\}$.
The Gram--Schmidt orthonormalization in Hilbert modules is theoretically investigated by~\citet{cnops92}.
Here, we develop a practical method for our settings.
Then, we can apply the  decomposition~\eqref{eq:modedec_rkhm}, proposed in Subsection~\ref{subsec:pf_rkhm}, of the estimated operator regarding eigenvectors.
Since we are considering the RKHM associated with the integral operator-valued positive definite kernel defined in the first part of Subsection~\ref{subsec:pf_rkhm}, we assume $\alg=\mcl{B}(\mcl{W})$ and we denote by $\modu$ a Hilbert $C^*$-module over $\alg$ throughout this subsection.
Note that integral operators are compact.

We first develop a normalization method for Hilbert $C^*$-modules.
In $C^*$-algebras, nonzero elements are not always invertible, which is the main difficulty of the normalization in Hilbert $C^*$-modules.
However, by carefully applying the definition of normalized (see Definition~\ref{def:normalized}), we can construct a normalization method.
\begin{prop}[Normalization]\label{prop:normalized_property}
%Let $\alg=\mcl{B}(\mcl{W})$ for a Hilbert space $\mcl{W}$ and let $\modu$ be a Hilbert $C^*$-module over $\alg$.
Let $\epsilon\ge 0$ and let $\hat{q}\in\mcl{M}$ satisfy $\Vert \hat{q}\Vert_{\modu}>\epsilon$.
Assume $\sblacket{\hat{q},\hat{q}}_{\modu}$ is compact.
Then, there exists $\hat{b}\in\alg$ such that $\Vert \hat{b}\Vert_{\alg}<1/\epsilon$ and $q:=\hat{q}\hat{b}$ is normalized.
In addition, there exists $b\in\alg$ such that 
$\Vert \hat{q}-qb\Vert_{\modu}\le\epsilon$.
\end{prop}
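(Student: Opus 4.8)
The plan is to reduce everything to the ordinary spectral calculus of the single positive compact operator $a:=\sblacket{\hat q,\hat q}_\modu\in\alg_+$. Since $\alg=\blin{\mcl W}$ and $a$ is compact, $a$ admits a spectral decomposition $a=\sum_n\lambda_n P_n$ in which $\lambda_1>\lambda_2>\cdots>0$ are its distinct nonzero eigenvalues (a finite list or a null sequence) and the $P_n\in\alg$ are mutually orthogonal finite-rank spectral projections commuting with $a$. By Definition~\ref{def:absolute_norm}, $\Vert\hat q\Vert_\modu^2=\big\Vert\vert\hat q\vert_\modu^2\big\Vert_\alg=\Vert a\Vert_\alg$, so the hypothesis $\Vert\hat q\Vert_\modu>\epsilon$ says exactly $\Vert a\Vert_\alg>\epsilon^2$. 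I will treat the case $\epsilon>0$; then, because $\lambda_n\to0$ and $\Vert a\Vert_\alg=\max_n\lambda_n>\epsilon^2$, the set of indices with $\lambda_n>\epsilon^2$ is finite and nonempty, and $E:=\sum_{\lambda_n>\epsilon^2}P_n$ is a nonzero finite-rank projection commuting with $a$.

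Next I would define the normalizer $\hat b:=\sum_{\lambda_n>\epsilon^2}\lambda_n^{-1/2}P_n$. This is a finite sum of bounded operators, so $\hat b\in\alg$, with $\Vert\hat b\Vert_\alg=\max\{\lambda_n^{-1/2}:\lambda_n>\epsilon^2\}$; since the finitely many eigenvalues involved are all strictly larger than $\epsilon^2$, their minimum is strictly larger than $\epsilon^2$, giving $\Vert\hat b\Vert_\alg<1/\epsilon$. Because $\hat b$ is self-adjoint and commutes with $a$, a one-line computation gives
\begin{equation*}
\sblacket{q,q}_\modu=\sblacket{\hat q\hat b,\hat q\hat b}_\modu=\hat b^{*}a\hat b=a\hat b^{2}=\sum_{\lambda_n>\epsilon^2}\lambda_n\lambda_n^{-1}P_n=E,
\end{equation*}
and $E=E^2\neq0$, so $q=\hat q\hat b$ is normalized in the sense of Definition~\ref{def:normalized}. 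For the second assertion I would take $b:=a^{1/2}E=\sum_{\lambda_n>\epsilon^2}\lambda_n^{1/2}P_n\in\alg$. Then $\hat b b=E$, so $qb=\hat q\hat b b=\hat q E$ and $\hat q-qb=\hat q(1_\alg-E)$. Using Definition~\ref{def:absolute_norm} and the $C^{*}$-identity together with the commutation relations,
\begin{equation*}
\Vert\hat q-qb\Vert_\modu^{2}=\big\Vert\sblacket{\hat q(1_\alg-E),\hat q(1_\alg-E)}_\modu\big\Vert_\alg=\big\Vert(1_\alg-E)a(1_\alg-E)\big\Vert_\alg=\Big\Vert\sum_{0<\lambda_n\le\epsilon^2}\lambda_n P_n\Big\Vert_\alg\le\epsilon^{2},
\end{equation*}
which yields $\Vert\hat q-qb\Vert_\modu\le\epsilon$.

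The only genuinely delicate point — and the reason the compactness hypothesis is imposed — is the membership $\hat b\in\alg$: truncating the spectral sum of $a$ below the level $\epsilon^{2}$ leaves only finitely many terms, each with coefficient bounded by $1/\epsilon$, whereas for a general positive $a$ the ``inverse square root on the support of $a$'' need not be bounded; everything else is routine manipulation of the spectral projections. For the degenerate case $\epsilon=0$ the same recipe works with $E$ taken to be the support projection of $a$ provided $a$ has finite rank (in which case in fact $\hat q=qb$), while for $a$ of infinite rank one must take $\epsilon>0$.
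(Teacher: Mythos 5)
Your proof is correct and follows essentially the same route as the paper: spectral decomposition of the compact positive operator $\sblacket{\hat{q},\hat{q}}_{\modu}$, truncation of the spectrum at the level $\epsilon^2$, with $\hat{b}$ the inverse square root and $b$ the square root on the truncated part; the only cosmetic difference is that you work with the distinct spectral projections $P_n$ where the paper uses rank-one terms $\lambda_i v_i v_i^*$. Your closing caveat about $\epsilon=0$ is a fair observation: the paper's proof likewise needs $m'=\max\{j\mid\lambda_j>\epsilon^2\}$ to be finite, so it too implicitly covers only $\epsilon>0$ or the finite-rank case.
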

\begin{proof}
Let $\lambda_1\ge\lambda_2\ge\cdots\ge 0$ be the eigenvelues of the compact operator $\blacket{\hat{q},\hat{q}}_{\modu}$, and $m':=\max\{j\mid\ \lambda_j>\epsilon^2\}$.
Since $\sblacket{\hat{q},\hat{q}}_{\modu}$ is positive and compact,
it admits the spectral decomposition $\sblacket{\hat{q},\hat{q}}_{\modu}=\sum_{i=1}^{\infty}\lambda_iv_iv_i^*$, where $v_i$ is the orthonormal eigenvector with respect to $\lambda_i$.
Also, since $\lambda_1=\Vert\hat{q}\Vert_{\modu}^2>\epsilon^2$, we have $m'\ge 1$.
Let $\hat{b}=\sum_{i=1}^{m'}1/\sqrt{\lambda_i}v_iv_i^*$. 
By the definition of $\hat{b}$, $\Vert \hat{b}\Vert_{\alg}=1/\sqrt{\lambda_{m'}}<1/\epsilon$ holds.
Also, we have
\begin{align*}
\sblacket{\hat{q}\hat{b},\hat{q}\hat{b}}_{\modu}&=\hat{b}^*\blacket{\hat{q},\hat{q}}_{\modu}\hat{b}
=\sum_{i=1}^{m'}\frac{1}{\sqrt{\lambda_i}}v_iv_i^*\sum_{i=1}^{\infty}\lambda_iv_iv_i^*\sum_{i=1}^{m'}\frac{1}{\sqrt{\lambda_i}}v_iv_i^*
=\sum_{i=1}^{m'}v_iv_i^*.
\end{align*}
Thus, $\sblacket{\hat{q}\hat{b},\hat{q}\hat{b}}_{\modu}$ is a nonzero orthogonal projection.

%Since $\blacket{\hat{q},\hat{q}}\hat{b}b=\blacket{\hat{q},\hat{q}}$, by Lemma~\ref{lem:inprodequiv}, the following equalities are derived:
In addition, let $b=\sum_{i=1}^{m'}\sqrt{\lambda_i}v_iv_i^*$.
Since $\hat{b}b=\sum_{i=1}^{m'}v_iv_i^*$, the identity $\sblacket{\hat{q},\hat{q}\hat{b}b}=\sblacket{\hat{q}\hat{b}b,\hat{q}\hat{b}b}$ holds, and 
we obtain
\begin{align*}
\sblacket{\hat{q}-qb,\hat{q}-qb}_{\modu}
&=\sblacket{\hat{q}-\hat{q}\hat{b}b,\hat{q}-\hat{q}\hat{b}b}_{\modu}
=\sblacket{\hat{q},\hat{q}}-\sblacket{\hat{q}\hat{b}b,\hat{q}\hat{b}b}_{\modu}\nn\\
&=\sum_{i=1}^{\infty}{\lambda_i}v_iv_i^*-\sum_{i=1}^{m'}{\lambda_i}v_iv_i^*=\sum_{i=m'+1}^{\infty}{\lambda_i}v_iv_i^*.
\end{align*}
Thus, $\Vert \hat{q}-q\hat{b}\Vert_{\modu}=\sqrt{\lambda_{m'+1}}\le\epsilon$ holds, which completes the proof of the proposition.
\end{proof}
Proposition~\ref{prop:normalized_property} and its proof provide a concrete procedure to obtain normalized vectors in $\modu$.
This enables us to compute an orthonormal basis practically by applying Gram-Schmidt orthonormalization with respect to $\alg$-valued inner product.
\begin{prop}[Gram-Schmidt orthonormalization]\label{prop:gram-schmidt}
%Let $\alg=\mcl{B}(\mcl{W})$ for a Hilbert space $\mcl{W}$ and let $\modu$ be a Hilbert $C^*$-module over $\alg$.
Let $\{w_i\}_{i=1}^{\infty}$ be a sequence in $\modu$.
Assume $\blacket{w_i,w_j}_{\modu}$ is compact for any $i,j=1,2,\ldots$.
Consider the following scheme for $i=1,2,\ldots$ and $\epsilon\ge 0$:
\begin{equation}
\begin{aligned}
\hat{q}_j&=w_j-\sum_{i=1}^{j-1}q_i\blacket{q_i,w_j}_{\modu},\quad
q_j=\hat{q}_j\hat{b}_j\quad \mbox{if }\;\Vert \hat{q}_j\Vert_{\modu}>\epsilon,\\
q_j&=0\quad\mbox{o.w.},
\end{aligned}\label{eq:gram-schmidt}
\end{equation}
where $\hat{b}_j$ is defined as $\hat{b}$ in Proposition~\ref{prop:normalized_property} by setting $\hat{q}=\hat{q}_j$.
Then, $\{q_j\}_{j=1}^{\infty}$ is an orthonormal basis in $\modu$ such that any $w_j$ is contained in the $\epsilon$-neighborhood of the module spanned by $\{ q_j\}_{j=1}^{\infty}$.
\end{prop}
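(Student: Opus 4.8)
The plan is a single induction on $j$, arranged so that at every stage the normalization step of Proposition~\ref{prop:normalized_property} is available. The induction hypothesis at step $j$ records three facts about $q_1,\dots,q_{j-1}$: each $q_i$ lies in the algebraic $\alg$-span of $\{w_1,\dots,w_i\}$, each $q_i$ is either normalized or zero, and $\sblacket{q_i,q_k}_{\modu}=0$ for $i\ne k$. The first task at step $j$ is to check that $\sblacket{\hat q_j,\hat q_j}_{\modu}$ is compact, so that Proposition~\ref{prop:normalized_property} applies with $\hat q=\hat q_j$: since $q_i$ is a finite $\alg$-combination of $w_1,\dots,w_i$, each $\sblacket{q_i,w_j}_{\modu}$ is a finite $\alg$-combination of the (by hypothesis compact) operators $\sblacket{w_l,w_j}_{\modu}$, hence compact because the compacts form a two-sided ideal of $\alg=\blin{\mcl{W}}$; expanding $\sblacket{\hat q_j,\hat q_j}_{\modu}$ from $\hat q_j=w_j-\sum_{i<j}q_i\sblacket{q_i,w_j}_{\modu}$ then writes it as $\sblacket{w_j,w_j}_{\modu}$ plus terms each carrying a factor $\sblacket{q_i,w_j}_{\modu}$, so it too is compact. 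Proposition~\ref{prop:normalized_property} then produces $\hat b_j,b_j\in\alg$ with $q_j:=\hat q_j\hat b_j$ normalized and $\Vert\hat q_j-q_jb_j\Vert_{\modu}\le\epsilon$ when $\Vert\hat q_j\Vert_{\modu}>\epsilon$, and we take $q_j=0$ otherwise; in both cases $q_j$ stays in the algebraic $\alg$-span of $\{w_1,\dots,w_j\}$.

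The orthogonality part of the induction is where the $C^*$-module geometry bites, and it is the step I expect to be the main obstacle. The point is that $P_i:=\sblacket{q_i,q_i}_{\modu}$ is only an orthogonal projection, not $1_{\alg}$, so one must use the explicit form of the normalizer from the proof of Proposition~\ref{prop:normalized_property}, namely $\hat b_i=P_i\hat b_i=\hat b_i P_i$, to get $\sblacket{q_i,w_j}_{\modu}=\hat b_i^{\,*}\sblacket{\hat q_i,w_j}_{\modu}$ and hence the key identity $P_i\sblacket{q_i,w_j}_{\modu}=\sblacket{q_i,w_j}_{\modu}$. Granting this, for $i<j$ every cross term in $\sblacket{q_i,\hat q_j}_{\modu}=\sblacket{q_i,w_j}_{\modu}-\sum_{k<j}\sblacket{q_i,q_k}_{\modu}\sblacket{q_k,w_j}_{\modu}$ vanishes except $k=i$, which contributes $P_i\sblacket{q_i,w_j}_{\modu}=\sblacket{q_i,w_j}_{\modu}$, so $\sblacket{q_i,\hat q_j}_{\modu}=0$ and therefore $\sblacket{q_i,q_j}_{\modu}=\sblacket{q_i,\hat q_j}_{\modu}\hat b_j=0$. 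This closes the induction, so the nonzero $q_j$ form an ONS.

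Finally, for the approximation claim I would argue term by term. If $\Vert\hat q_j\Vert_{\modu}>\epsilon$, then $w_j-\big(\sum_{i<j}q_i\sblacket{q_i,w_j}_{\modu}+q_jb_j\big)=\hat q_j-q_jb_j$ has norm at most $\epsilon$ and the subtracted vector lies in the module spanned by $q_1,\dots,q_j$; if $\Vert\hat q_j\Vert_{\modu}\le\epsilon$, then $w_j-\sum_{i<j}q_i\sblacket{q_i,w_j}_{\modu}=\hat q_j$ has norm at most $\epsilon$ and the subtracted vector lies in the span of $q_1,\dots,q_{j-1}$. Either way $w_j$ lies in the $\epsilon$-neighborhood of the module generated by $\{q_j\}_{j=1}^{\infty}$; in particular, when $\{w_j\}$ is total in $\modu$ and $\epsilon=0$ this module is dense, so $\{q_j\}_{j=1}^{\infty}$ is an ONB in the sense of Definition~\ref{def:orthonormal}.
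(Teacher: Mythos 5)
Your proof is correct and follows essentially the same route as the paper: induct on $j$, invoke Proposition~\ref{prop:normalized_property} for the normalization and the bound $\Vert\hat q_j-q_jb_j\Vert_{\modu}\le\epsilon$, and kill the cross terms in $\blacket{q_i,\hat q_j}_{\modu}$ using the identity $\blacket{q_i,q_i}_{\modu}\blacket{q_i,w_j}_{\modu}=\blacket{q_i,w_j}_{\modu}$. The only differences are cosmetic: you obtain that identity from the explicit spectral form of $\hat b_i$ rather than from Lemma~\ref{cor:inprodequiv} ($q\blacket{q,q}_{\modu}=q$), and you additionally verify the compactness of $\blacket{\hat q_j,\hat q_j}_{\modu}$ (via the ideal property of compacts in $\blin{\mcl{W}}$) and the case $\Vert\hat q_j\Vert_{\modu}\le\epsilon$, details the paper leaves implicit.
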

\begin{rem}\label{rem:trade_off}
We give some remarks about the role of $\epsilon$ in Propositions~\ref{prop:normalized_property}.  
The vector $\hat{q}_i$ can always be reconstructed by $w_i$ only when $\epsilon=0$.
This is because the information of the spectrum of $\blacket{\hat{q}_i,\hat{q}_i}_{\modu}$ may be lost if $\epsilon>0$.
However, if $\epsilon$ is sufficiently small, 
we can reconstruct $\hat{q}_i$ with a small error.
On the other hand, the norm of $\hat{b}_i$ can be large if $\epsilon$ is small, and the computation of $\{q_i\}_{i=1}^{\infty}$ can become numerically unstable.
This corresponds to the trade-off between the theoretical accuracy and numerical stability.
%We will also confirm the trade-off empirically in Section~\ref{sec:exp_tradeoff}.
\end{rem}
To prove Proposition~\ref{prop:gram-schmidt}, we first prove the following lemmas.
\begin{lem}\label{lem:inprodequiv}
For $c\in\mcl{A}$ and $v\in\mcl{M}$, if $\blacket{v,v}_{\modu}c=\blacket{v,v}_{\modu}$, then $vc=v$ holds.
\end{lem}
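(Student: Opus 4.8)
The plan is to prove that the module element $v-vc$ satisfies $\blacket{v-vc,v-vc}_{\modu}=0$ and then invoke the positive-definiteness axiom of an $\alg$-valued inner product (item~4 of Definition~\ref{def:innerproduct}) to conclude $v=vc$.

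First I would set $a:=\blacket{v,v}_{\modu}\in\alg_+$, so that the hypothesis becomes $ac=a$. Since $a$ is positive it is self-adjoint, $a^*=a$, and taking adjoints of $ac=a$ yields the companion identity $c^*a=a$. Note also that then $c^*ac=c^*(ac)=c^*a=a$.

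Next I would expand $\blacket{v-vc,v-vc}_{\modu}$ using additivity and the compatibility of $\blacket{\cdot,\cdot}_{\modu}$ with the right $\alg$-multiplication (Definition~\ref{def:innerproduct}), namely $\blacket{u,v'c}_{\modu}=\blacket{u,v'}_{\modu}c$ together with its conjugate form $\blacket{v'c,u}_{\modu}=c^*\blacket{v',u}_{\modu}$:
\begin{align*}
\blacket{v-vc,v-vc}_{\modu}
&=\blacket{v,v}_{\modu}-\blacket{v,v}_{\modu}c-c^*\blacket{v,v}_{\modu}+c^*\blacket{v,v}_{\modu}c\\
&=a-ac-c^*a+c^*ac=a-a-a+a=0,
\end{align*}
where the last equalities use $ac=a$, $c^*a=a$, and $c^*ac=a$ from the previous step. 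By item~4 of Definition~\ref{def:innerproduct}, $\blacket{v-vc,v-vc}_{\modu}=0$ forces $v-vc=0$, i.e., $vc=v$.

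There is essentially no obstacle here; the only point requiring a little care is obtaining $c^*a=a$ from the hypothesis by taking adjoints and using the self-adjointness of the positive element $a$, and applying the one-sided compatibility of the $\alg$-valued inner product with right multiplication (and its conjugate form) correctly when expanding the absolute value.
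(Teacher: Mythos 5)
Your proof is correct and follows essentially the same route as the paper: take adjoints of $\blacket{v,v}_{\modu}c=\blacket{v,v}_{\modu}$ to get $c^*\blacket{v,v}_{\modu}=\blacket{v,v}_{\modu}$, expand $\blacket{vc-v,vc-v}_{\modu}$ to see it vanishes, and conclude $vc=v$ from the positive-definiteness of the $\alg$-valued inner product. No gaps to report.
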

\begin{proof}
If $\blacket{v,v}_{\modu}c=\blacket{v,v}_{\modu}$, then $c^*\blacket{v,v}_{\modu}=\blacket{v,v}_{\modu}$ and we have
\begin{align*}
 &\blacket{vc-v,vc-v}_{\modu}=c^*\blacket{v,v}_{\modu}c-c^*\blacket{v,v}_{\modu}-\blacket{v,v}_{\modu}c+\blacket{v,v}_{\modu}=0,
\end{align*}
 which implies $vc=v$.
\end{proof}
\begin{lem}\label{cor:inprodequiv}
If $q\in\mcl{M}$ is normalized, then $q\blacket{q,q}_{\modu}=q$ holds.
\end{lem}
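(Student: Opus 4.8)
The plan is to obtain this as an immediate corollary of Lemma~\ref{lem:inprodequiv}. Recall that, by Definition~\ref{def:normalized}, a normalized vector $q\in\mcl{M}$ satisfies $0\neq\blacket{q,q}_{\modu}=\blacket{q,q}_{\modu}^2$. So I would set $c:=\blacket{q,q}_{\modu}\in\alg$ and $v:=q\in\mcl{M}$, and observe that the identity $\blacket{q,q}_{\modu}^2=\blacket{q,q}_{\modu}$ is exactly the hypothesis $\blacket{v,v}_{\modu}c=\blacket{v,v}_{\modu}$ of Lemma~\ref{lem:inprodequiv}.

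Applying Lemma~\ref{lem:inprodequiv} with this choice then yields $vc=v$, i.e.\ $q\blacket{q,q}_{\modu}=q$, which is the desired conclusion. In particular, I would not need to expand $\blacket{q\blacket{q,q}_{\modu}-q,\,q\blacket{q,q}_{\modu}-q}_{\modu}$ directly here, since that computation is already packaged inside Lemma~\ref{lem:inprodequiv} (where one uses that $c^*\blacket{v,v}_{\modu}=\blacket{v,v}_{\modu}$ follows by taking adjoints of $\blacket{v,v}_{\modu}c=\blacket{v,v}_{\modu}$, as $\blacket{v,v}_{\modu}$ is self-adjoint).

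There is essentially no obstacle: the only subtlety is making sure the algebraic identity in the definition of ``normalized'' is read in the correct order so that it matches the left-multiplication convention in Lemma~\ref{lem:inprodequiv}; since $\blacket{q,q}_{\modu}$ is positive and hence self-adjoint, the equation $\blacket{q,q}_{\modu}=\blacket{q,q}_{\modu}^2$ is symmetric in that respect, so this causes no trouble. The proof is therefore a single line invoking Lemma~\ref{lem:inprodequiv}.
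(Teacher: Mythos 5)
Your proof is correct and matches the paper's argument exactly: both set $c=\blacket{q,q}_{\modu}$, $v=q$, use that a normalized $q$ satisfies $\blacket{q,q}_{\modu}=\blacket{q,q}_{\modu}^2$, and then invoke Lemma~\ref{lem:inprodequiv} to conclude $q\blacket{q,q}_{\modu}=q$. Nothing further is needed.
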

\begin{proof}
Since $\blacket{q,q}_{\modu}$ is a projection, $\blacket{q,q}_{\modu}\blacket{q,q}_{\modu}=\blacket{q,q}_{\modu}$ holds.
Therefore, letting $c=\blacket{q,q}_{\modu}$ and $v=q$ in Lemma~\ref{lem:inprodequiv} completes the proof of the lemma.
\end{proof}
\begin{myproof}{Proof of Proposition~\ref{prop:gram-schmidt}}
By Proposition~\ref{prop:normalized_property}, $q_j$ is normalized, and for $\epsilon\ge 0$, there exists $b_j\in\alg$ such that $\Vert \hat{q}_j-q_jb_j\Vert_{\modu}\le \epsilon$.
Therefore, by the definition of $\hat{q}_j$, 
$\Vert w_j-v_j\Vert_{\modu}\le\epsilon$ holds, where $v_j$ is a vector in the module spanned by $\{q_j\}_{j=0}^{\infty}$ which is defined as $v_j=\sum_{i=1}^{j-1}q_i\blacket{q_i,w_j}_{\modu}-q_jb_j$.
This means that the $\epsilon$-neighborhood of the space spanned by $\{q_j\}_{j=1}^{\infty}$ contains $\{w_j\}_{j=1}^{\infty}$.
Next, we show the orthogonality of $\{q_j\}_{j=1}^{\infty}$.
Assume $q_1,\ldots,q_{j-1}$ are orthogonal to each other.
For $i<j$, the following identities are deduced by Lemma~\ref{cor:inprodequiv}:
\begin{align*}
\blacket{q_j,q_i}_{\modu}&=\hat{b}_t^*\blacket{\hat{q}_j,q_i}_{\modu}
=\hat{b}_j^*\Bblacket{w_j-\sum_{l=1}^{j-1}q_l\blacket{q_l,w_j},q_i}_{\modu}\\
&=\hat{b}_j^*\left(\blacket{w_j,q_i}_{\modu}-\blacket{q_i\blacket{q_i,w_j}_{\modu},q_i}\right)
=\hat{b}_j^*\left(\blacket{w_j,q_i}_{\modu}-\blacket{w_j,q_i}_{\modu}\right)=0.
\end{align*}
Therefore, $q_1,\ldots,q_j$ are also orthogonal to each other, which completes the proof of the proposition.
\end{myproof}

In practical computations, the scheme~\eqref{eq:gram-schmidt} should be represented with matrices.
For this purpose, we derive the following QR decomposition from Proposition~\ref{prop:gram-schmidt}.
This is a generalization of the QR decomposition in Hilbert spaces.
\begin{cor}[QR decomposition]\label{prop:qr}
For $n\in\mathbb{N}$, let $W:=[w_1,\ldots,w_{n}]$ and $Q:=[q_1,\ldots,q_{n}]$.
Let $\epsilon\ge 0$.
Then, there exist $\mathbf{R},\mathbf{R}_{\opn{inv}}\in\alg^{n\times n}$ that satisfy
\begin{equation}
Q=W\mathbf{R}_{\opn{inv}},\quad
\Vert W-Q\mathbf{R}\Vert\le \epsilon.\label{eq:qr_dec}
\end{equation}
Here, $\Vert W\Vert$ for a $\alg$-linear map $W:\alg^n\to\modu$ is defined as $\Vert W\Vert:=\sup_{\Vert v\Vert_{\alg^n}=1}\Vert Wv\Vert_{\modu}$.
\end{cor}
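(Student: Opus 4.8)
The plan is to read the two matrices directly off the Gram--Schmidt recursion~\eqref{eq:gram-schmidt} of Proposition~\ref{prop:gram-schmidt}. Unfolding $\hat{q}_j = w_j - \sum_{i=1}^{j-1}q_i\blacket{q_i,w_j}_{\modu}$ and $q_j = \hat{q}_j\hat{b}_j$ (with $q_j=0$ and $\hat{b}_j:=0$ when $\Vert\hat{q}_j\Vert_{\modu}\le\epsilon$), and substituting the earlier $q_i$ recursively, expresses each $q_j$ as a right $\alg$-linear combination of $w_1,\dots,w_j$ only; collecting those coefficients yields an upper-triangular $\mathbf{R}_{\opn{inv}}\in\alg^{n\times n}$ with $Q = W\mathbf{R}_{\opn{inv}}$ by construction. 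For the second relation I would take the upper-triangular $\mathbf{R}$ with entries $\mathbf{R}_{ij}=\blacket{q_i,w_j}_{\modu}$ for $i<j$ and $\mathbf{R}_{jj}=b_j$, where $b_j$ is the element furnished by Proposition~\ref{prop:normalized_property} applied to $\hat{q}=\hat{q}_j$ (and $b_j:=0$ when $q_j=0$). Then the $j$-th column of $Q\mathbf{R}$ is $\sum_{i<j}q_i\blacket{q_i,w_j}_{\modu}+q_jb_j$, while $w_j=\hat{q}_j+\sum_{i<j}q_i\blacket{q_i,w_j}_{\modu}$, so the $j$-th column of $W-Q\mathbf{R}$ is exactly the residual $\varepsilon_j:=\hat{q}_j-q_jb_j$.

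It remains to estimate $\Vert W-Q\mathbf{R}\Vert=\sup_{\Vert\mathbf{v}\Vert_{\alg^n}=1}\Vert(W-Q\mathbf{R})\mathbf{v}\Vert_{\modu}$ from these columnwise quantities. Proposition~\ref{prop:normalized_property} gives $\Vert\varepsilon_j\Vert_{\modu}\le\epsilon$, and its proof in fact yields the sharper $\blacket{\varepsilon_j,\varepsilon_j}_{\modu}=\sum_{i>m'}\lambda_iv_iv_i^*\le_{\alg}\epsilon^2 1_{\alg}$ (every discarded eigenvalue satisfies $\lambda_i\le\epsilon^2$). For $\mathbf{v}=[v_1,\dots,v_n]^T$ we have $(W-Q\mathbf{R})\mathbf{v}=\sum_j\varepsilon_jv_j$, so $\Vert(W-Q\mathbf{R})\mathbf{v}\Vert_{\modu}^2=\big\Vert\sum_{j,l}v_j^*\blacket{\varepsilon_j,\varepsilon_l}_{\modu}v_l\big\Vert_{\alg}$; the diagonal part is at most $\epsilon^2\big\Vert\sum_jv_j^*v_j\big\Vert_{\alg}\le\epsilon^2$ by the sharpened bound and monotonicity of $\Vert\cdot\Vert_{\alg}$ on $\alg_+$. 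For the off-diagonal terms I would use that $\hat{b}_jb_j=\blacket{q_j,q_j}_{\modu}$ (a projection), so $\varepsilon_j=\hat{q}_j\big(1_{\alg}-\blacket{q_j,q_j}_{\modu}\big)$ and hence $\blacket{\varepsilon_l,\varepsilon_j}_{\modu}\blacket{q_j,q_j}_{\modu}=0$; combining this with the Gram--Schmidt orthogonality $\blacket{\hat{q}_l,q_i}_{\modu}=0$ for $i<l$, the Cauchy--Schwarz inequality (Lemma~\ref{lem:c-s}), and the spectral inequality $\blacket{\hat{q}_j,\hat{q}_j}_{\modu}\big(1_{\alg}-\blacket{q_j,q_j}_{\modu}\big)\le_{\alg}\epsilon^2 1_{\alg}$ keeps the cross terms of size $O(\epsilon)$. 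Note that when $\epsilon=0$ these identities force $\blacket{\varepsilon_j,\varepsilon_j}_{\modu}=0$, i.e.\ $\hat{q}_j=q_jb_j$, so the residuals vanish and the decomposition is exact.

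The routine part is the triangular bookkeeping and the column identity $(W-Q\mathbf{R})\mathbf{e}_j=\varepsilon_j$; the genuine difficulty is the last step, namely passing from the columnwise bound $\Vert\varepsilon_j\Vert_{\modu}\le\epsilon$ to the operator-norm bound $\Vert W-Q\mathbf{R}\Vert\le\epsilon$. This amounts to showing that the residual block $[\varepsilon_1,\dots,\varepsilon_n]$ behaves essentially like a column with mutually orthogonal entries, which forces one to track how the discarded spectral subspaces of the successive operators $\blacket{\hat{q}_j,\hat{q}_j}_{\modu}$ interact; this is exactly where the compactness hypothesis on the $\blacket{w_i,w_j}_{\modu}$ and the precise form of the normalizing elements $\hat{b}_j$ supplied by Proposition~\ref{prop:normalized_property} enter.
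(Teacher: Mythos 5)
Your proof is essentially the paper's: you take the same upper-triangular $\mathbf{R}$ (entries $\blacket{q_i,w_j}_{\modu}$ above the diagonal, $b_j$ on the diagonal), identify the $j$-th column of $W-Q\mathbf{R}$ with the residual $\hat{q}_j-q_jb_j$, and invoke Proposition~\ref{prop:normalized_property} for $\Vert \hat{q}_j-q_jb_j\Vert_{\modu}\le\epsilon$, while your recursive unfolding of scheme~\eqref{eq:gram-schmidt} to obtain $Q=W\mathbf{R}_{\opn{inv}}$ is just an equivalent description of the paper's closed form $\mathbf{R}_{\opn{inv}}=\hat{\mathbf{B}}(I+(\mathbf{R}-\mathbf{B})\hat{\mathbf{B}})^{-1}$. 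The only place you go beyond the paper is the passage from the columnwise bounds to $\Vert W-Q\mathbf{R}\Vert\le\epsilon$: the paper concludes this directly from $\Vert w_j-Q\mathbf{r}_j\Vert_{\modu}\le\epsilon$ with no further argument, whereas you rightly flag it as the delicate step but only sketch the cross-term control (an $O(\epsilon)$ estimate on the off-diagonal entries of the residual Gram matrix would not by itself give the factor-free bound). So your proposal covers everything the paper's proof actually establishes, and your closing remark identifies a step the paper treats as immediate.
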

\begin{proof}
Let $\mathbf{R}=[r_{i,j}]_{i,j}$ be an $n\times n$ $\alg$-valued matrix.
Here, $r_{i,j}$ is defined by $r_{i,j}=\blacket{q_i,w_j}_{\modu}\in\alg$ for $i<j$, $r_{i,j}=0$ for $i>j$, and $r_{j,j}=b_j$, where $b_j$ is defined as $b$ in Proposition~\ref{prop:normalized_property} by setting $\hat{q}=\hat{q}_j$.
In addition, let $\hat{\mathbf{B}}=\opn{diag}\{\hat{b}_1,\ldots,\hat{b}_{n}\}$, $\mathbf{B}=\opn{diag}\{{b}_1,\ldots,{b}_{n}\}$, and $\mathbf{R}_{\opn{inv}}=\mathbf{\hat{B}}(I+(\mathbf{R}-\mathbf{B})\mathbf{\hat{B}})^{-1}$ be $n\times n$ $\alg$-valued matrices.
The equality $Q=W\mathbf{R}_{\opn{inv}}$ is derived directly from scheme~\eqref{eq:gram-schmidt}.
In addition, by the scheme~\eqref{eq:gram-schmidt}, for $t=1,\ldots,n$, we have
\begin{align*}
w_j&=\sum_{i=1}^{j-1}q_i\blacket{q_i,w_j}_{\modu}+\hat{q}_j
=\sum_{i=1}^{j-1}q_i\blacket{q_i,w_j}_{\modu}+q_jb_j+\hat{q}_j-q_jb_j
=Q\mathbf{r}_j+\hat{q}_j-q_jb_j,
\end{align*}
where $\mathbf{r}_j\in\alg^n$ is the $i$-th column of $\mathbf{R}$.
Therefore, by Proposition~\ref{prop:normalized_property}, $\Vert w_j-Q\mathbf{r}_j\Vert_{\modu}=\Vert \hat{q}_j-q_jb_j\Vert_{\modu}\le\epsilon$ holds for $j=1,\ldots,n$, which implies $\Vert W-Q\mathbf{R}\Vert\le \epsilon$. 
\end{proof}
We call the decomposition~\eqref{eq:qr_dec} as the QR decomposition in Hilbert $C^*$-modules.
%Note that the projection operator onto 
%the space spanned by $\{q_t\}_{t=1}^n$ is represented as $QQ^*$, where $Q^*$ is the adjoint operator of $Q$, which maps $v\in\modu$ to $[\blacket{q_1,v},\ldots,\blacket{q_{n},v}]\in(\mat)^n$.
Although we are handling vectors in $\modu$, by applying the QR decomposition, we only have to compute $\mathbf{R}_{\opn{inv}}$ and $\mathbf{R}$.
%Details about the way constructing $\mathbf{R}_{\opn{inv}}$ and $\mathbf{R}$ is given in the proof of Corollary~\ref{prop:qr} in Appendix~\ref{ap:proof}.
%Also, the pseudo-code of the QR decomposition is shown in Algorithm~\ref{al:qr} in Appendix~\ref{ap:psudocode}. 

We now consider estimating the Perron--Frobenius operator $K$ with observed time-series data $\{x_0,x_1,\ldots\}$. %, we project $K$ onto the module spanned by $\{\phi(x_0),\ldots,\phi(x_{T-1})\}$.
Let ${W}_T=[{\phi}(x_0),\ldots,{\phi}(x_{T-1})]$.
We are considering an integral operator-valued positive definite kernel (see the first part of Subsection~\ref{subsec:pf_rkhm} and the last paragraph in Section~\ref{sec:motivation}).
Since integral operators are compact, $W_T$ satisfies the assumption in Corollary~\ref{prop:gram-schmidt}.
Thus, let ${W}_T{\mathbf{R}}_{\opn{inv},T}={Q}_T$ be the QR decomposition \eqref{eq:qr_dec} of ${W}_T$ in the RKHM $\modu_k$.
The Perron--Frobenius operator $K$ is estimated by projecting ${K}$ onto the module spanned by $\{{\phi}(x_0),\ldots,\phi(x_{T-1})\}$.
We define ${\mathbf{K}}_T$ as the estimation of $K$.
Since ${K}{\phi}(x_i)={\phi}(f(x_i))={\phi}(x_{i+1})$ hold, ${\mathbf{K}}_T$ can be computed only with observed data as follows:
\begin{align*}
{\mathbf{K}}_T&= {Q}_T^*{K}{Q}_T={Q}_T^*{K}{W}_T{\mathbf{R}}_{\opn{inv},T}
={Q}_T^*[{\phi}(x_1),\ldots,{\phi}(x_{T})]{\mathbf{R}}_{\opn{inv},T}.
\end{align*}
\color{\exp}
\begin{rem}\label{rem:computation}
In practical computations, we only need to keep the integral kernels to implement the Gram--Schmidt orthonormalization algorithm and estimate Perron--Frobenius operators in the RKHM associated with the integral operator-valued kernel $k$.
Therefore, we can directly access integral kernel functions of operators, which is not achieved by vvRKHS as we stated in Remark~\ref{thm:rkhm_vvrkhs}.
%The set of integral operators is closed under the operations required for estimating Perron--Frobenius operators.
Indeed, the operations required for estimating Perron--Frobenius operators are explicitly computed as follows:
Let $c,d\in\mcl{B}(L^2(\Omega))$ be integral operators whose integral kernels are $f(s,t)$ and $g(s,t)$.
Then, the integral kernels of the operator $c+d$ and $cd$ are $f(s,t)+g(s,t)$ and $\int_{r\in\Omega}f(s,r)g(r,t)dr$, respectively.
And that of $c^*$ is $f(t,s)$.
Moreover, if $c$ is positive, let $c_{\epsilon}^+$ be $\sum_{\lambda_i>\epsilon}1/\sqrt{\lambda_i}v_i{v_i}^*$, where $\lambda_i$ are eigenvalues of the compact positive operator $c$ and $v_i$ are corresponding orthonormal eigenvectors.
Then, the integral kernel of the operator $c_{\epsilon}^+$ is $\sum_{\lambda_i>\epsilon}1/\sqrt{\lambda_i}v_i(s)\overline{v_i(t)}$.
\end{rem}

\color{\exp}
\subsubsection{Numerical examples}
To show the proposed analysis with RKHMs captures continuous changes of values of kernels along functional data as we insisted in Section~\ref{sec:motivation}, we conducted experiments with river flow data of the Thames River in London\footnote{available at \url{https://nrfa.ceh.ac.uk/data/search}}.
The data is composed of daily flow at 10 stations. %illustrated in Figure~\ref{fig:river}.
We used the data for 51 days beginning from January first, 2018.
We regard every daily flow as a function of the ratio of the distance from the most downstream station and fit it to a polynomial of degree 5 to obtain time series $x_0,\ldots,x_{50}\in C([0,1],\mathbb{R})$.
Then, we estimated the Perron--Frobenius operator which describes the time evolution of the series $x_0,\ldots,x_{50}$ in the RKHM associated with the $\mcl{B}(L^2([0,1]))$-valued positive definite kernel $k(x,y)$ defined as the integral operator whose integral kernel is $\tilde{k}(s,t)=e^{-\vert x(s)-y(t)\vert^2}$ for $x,y\in C([0,1],\mathbb{R})$.
In this case, $T=50$.
As we noted in Remark~\ref{rem:computation}, all the computations in $\alg=\mcl{B}(L^2([0,1]))$ are implemented by keeping integral kernels of operators.
Let $\mcl{F}$ be the set of polynomials of the form $x_i(s,t)=\sum_{j,l=0}^5\eta_{j,l}s^jt^l$, where $\eta_{j,l}\in\mathbb{R}$.
We project $\tilde{k}$ onto $\mcl{F}$.
Then, for $c,d\in\mcl{F}$, $c+d\in\mcl{F}$ is satisfied, but $cd\in\mcl{F}$ is not always satisfied.
Thus, we project $cd$ onto $\mcl{F}$ to restrict all the computations in $\mcl{F}$ in practice.
We computed the time-invariant term $\blacket{v,v}_{\modu_k}$ in Eq.~\eqref{eq:time_invariant_rkhm}.
Regarding the computation of eigenvectors with respect to the eigenvalue $1$, we consider the following minimization problem for the estimated Perron--Frobenius operator $\mathbf{K}_T$:
\begin{equation}
\inf_{\mathbf{v}\in\alg^T}\vert \mathbf{K}_T\mathbf{v}-\mathbf{v}\vert_{\alg^T}^2-\lambda\vert \mathbf{v}\vert_{\alg^T}^2\label{eq:eig_min}.
\end{equation}
Here, $-\lambda\vert \mathbf{v}\vert_{\alg^T}^2$ is a penalty term to keep $\mathbf{v}$ not going to $0$.
Since the objective function of the problem~\eqref{eq:eig_min} is represented as $\mathbf{v}^*(\mathbf{K}_T^*\mathbf{K}_T-\mathbf{K}_T^*-\mathbf{K}_T+(1-\lambda)\mathbf{I})\mathbf{v}$, where $\mathbf{I}$ is the identity operator on $\alg^T$, we apply the gradient descent on $\alg^T$ (see Remark~\ref{rem:gd_noncommutative}).
Figure~\ref{fig:river_rkhm} shows the heat map representing the integral kernel of $\blacket{v,v}_{\modu_k}$.

For comparison, we also applied the similar analysis in a vvRKHS.
We computed the time-invariant term $[\blacket{\tilde{v}_i,\tilde{v}_j}_{\vvrkhs}]_{i,j}$ in Eq.~\eqref{eq:time_invariant_vvrkhs} by setting $u_i$ as orthonormal polynomials of the form $u_i(s)=\sum_{j=1}^5\eta_j s^j$, where $\eta_j\in\mathbb{R}$.
Let $c_{\opn{inv}}=[\blacket{\tilde{v}_i,\tilde{v}_j}_{\vvrkhs}]_{i,j}$.
In this case, we cannot obtain the integral kernel of the time-invariant term of the operator $k(x_{\alpha},x_{\beta})$, which is denoted by $\tilde{k}_{\opn{inv}}$ here.
Instead, by approximating $k(x_{\alpha},x_{\beta})$ by $UU^*k(x_{\alpha},x_{\beta})UU^*$ and computing $Uc_{\opn{inv}}U^*\chi_{[0,t]}$, we obtain an approximation of $\int_0^t\tilde{k}_{\opn{inv}}(s,r)dr$ for $s\in[0,1]$.
Here, $\chi_{E}:[0,1]\to\{0,1\}$ is the indicator function for a Borel set $E$ on $[0,1]$.
Therefore, by numerically differentiating $Uc_{\opn{inv}}U^*\chi_{[0,t]}$ by $t$, we obtain an approximation of $\tilde{k}_{\opn{inv}}$.
Figure~\ref{fig:river_vvrkhs} shows the heat map representing the approximation of $\tilde{k}_{\opn{inv}}$.

Around the upstream stations, there are many branches and the flow is affected by them.
Thus, the similarity between flows at two points would change along time.
While, around the downstream stations, the flow is supposed not to be affected by other rivers.
Thus, the similarity between flows at two points would be invariant along time.
The values around the diagonal part of Figure~\ref{fig:river_rkhm} (RKHM) become small as $s$ and $t$ become large (as going up the river).
On the other hand, those of Figure~\ref{fig:river_vvrkhs} (vvRKHS) are also large for large $s$ and $t$.
Therefore, RKHM captures the aforementioned fact more properly.

\if0
\begin{figure}[t]
    \centering
    \includegraphics[scale=0.25]{./river_map.jpg}
    \caption{Thames River and 10 stations along the river} 
    \label{fig:river}
\end{figure}
\fi

\begin{figure}[t]
    \centering
    \subfigure[RKHM]{\includegraphics[scale=0.6]{./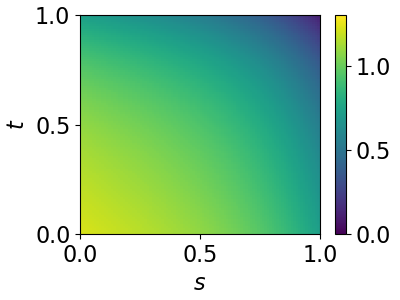}\label{fig:river_rkhm}}
    \subfigure[vvRKHS]{\includegraphics[scale=0.6]{./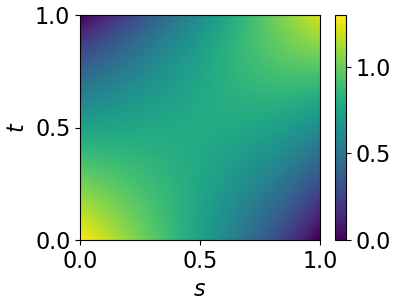}\label{fig:river_vvrkhs}}
    \caption{Heat maps representing time-invariant similarities}
\end{figure}

\color{black}
%For example, if we use the cross-covariance measures as $\alg$-valued measures, this enables us to analyze time-series data generated from a random dynamical system on the basis of higher-order interactions among variables.
\subsection{Analysis of interaction effects}\label{subsec:interacting}
Polynomial regression is a classical problem in statistics~\citep{hastie09} and analyzing interacting effects by the polynomial regression has been investigated (for its recent improvements, see, for example, ~\citet{suzumura17}).
Most of the existing methods focus on the case of finite dimensional (discrete) data.
%However, the computational cost is expensive for high dimensional data.
However, in practice, we often encounter situations where we cannot fix the dimension of data.
For example, observations are obtained at multiple locations and the locations are not fixed. It may be changed depending on time.
Therefore, analysing interaction effects of infinite dimensional (continuous) data is essential.
We show the KMEs of $\alg$-valued measures in RKHMs provide us with a method for the analysis of infinite dimensional data by setting $\alg$ as an infinite dimensional space such as $\blin{\mcl{W}}$.
Moreover, the proposed method does not need the assumption that interaction effects are described by a polynomial.
We first develop the analysis in RKHMs for the case of finite dimensional data in Subsection~\ref{subsec:interacting_finite}.
Then, we show the analysis is naturally generalized to the infinite dimensional data in Subsection~\ref{subsec:interacting_infinite}.

\paragraph{Applying $\alg$-valued measures and KME in RKHMs}
Using $\alg$-valued measures, we can describe the measure corresponding to each point of functional data as functions or operators.
For example, let $\mcl{X}$ be a locally compact Hausdorff space and let $x_1,x_2,\ldots\in C([0,1],\mcl{X})$ be samples.
Let $\alg=L^{\infty}([0,1])$ and let $\mu$ be the $\alg$-valued measure defined as $\mu(t)=\tilde{\mu}_{t}$, where $\tilde{\mu}_{t}$ is the distribution which samples $x_1(t),x_2(t),\ldots$ follow.
Then, $\mu$ describes continuous behaviors of the distribution of samples $x_1(t),x_2(t),\ldots$ with respect to $t$.
Moreover, let $\alg=\blin{L^2([0,1])}$ and let $\mu$ be the $\alg$-valued measure defined as $(\mu(E)v)(s)=\int_{t\in [0,1]}\tilde{\mu}(E)_{s,t}v(t)dt$ for a Borel set $E$, where $\tilde{\mu}_{s,t}$ is the joint distribution of the distributions which samples $x_1(s),x_2(s),\ldots$ and samples $x_1(t),x_2(t),\ldots$ follow.
Then, $\mu$ describes continuous dependencies of samples $x_1(s),x_2(s),\ldots$ and samples $x_1(t),x_2(t),\ldots$ with respect to $s$ and $t$.
Using the KME in RKHMs, we can embed $\alg$-valued measures into RKHMs, which enables us to compute inner products between $\alg$-valued measures.
Then, we can generalize algorithms in Hilbert spaces to $\alg$-valued measures.

\color{black}
\subsubsection{The case of finite dimensional data}\label{subsec:interacting_finite}
In this subsection, we assume $\alg=\mat$.
Let $\mcl{X}$ be a locally compact Hausdorff space and let $x_1,\ldots, x_n\in\mcl{X}^{m\times m}$ and $y_1,\ldots,y_n\in\alg$ be given samples. 
We assume there exist functions $f_{j,l}:\mcl{X}\to\alg$ such that 
\begin{equation*}
y_i=\sum_{j,l=1}^mf_{j,l}((x_i)_{j,l})    
\end{equation*}
for $i=1,\ldots,n$.
For example, the $(j,l)$-element of each $x_i$ describes an effect of the $l$-th element on the $j$-th element of $x_i$ and $f_{j,l}$ is a nonlinear function describing an impact of the effect to the value $y_i$.
If the given samples $y_i$ are real or complex-valued, we can regard them as $y_i1_{\alg}$ to meet the above setting.
%We consider an $\alg$-valued positive definite kernel $k$ on $\mcl{X}$, the RKHM $\modu_{k}$ associated with $k$, and the KME $\Phi$ in $\modu_{k}$.
Let $\mu_{x}\in\measuremat$ be a $\mat$-valued measure defined as $(\mu_{x})_{j,l}=\tilde{\delta}_{x_{j,l}}$, where $\tilde{\delta}_x$ for $x\in\mcl{X}$ is the standard (complex-valued) Dirac measure centered at $x$.
Note that the $(j,l)$-element of $\mu_x$ describes a measure regarding the element $x_{j,l}$.
%Meanwhile, we regard $\alg$ as an RKHM associated with the simple linear positive definite kernel $k(y_1,y_2)=y_1^*y_2$, and let $\Psi$ be the KME in $\alg$.
Let $k$ be an $\alg$-valued $c_0$-kernel (see Definition~\ref{def:c0_kernel}), let $\modu_k$ be the RKHM associated with $k$, and let $\Phi$ be the KME defined in Section~\ref{subsec:kme}.
In addition, let $\mcl{V}$ be the submodule of $\modu_k$ spanned by $\{\Phi(\mu_{x_1}),\ldots,\Phi(\mu_{x_n})\}$, and let $P_f:\mcl{V}\to\mat$ be a $\mat$-linear map (see Definition~\ref{def:a_lin_op}) which satisfies
\begin{equation*}
 %P_f\Phi(\mu)=\sum_{j,k=1}^m\Psi((f_{j,k})_{\#}\mu_{j,k}1_{\alg}),  
P_f\Phi(\mu_{x_i})=\sum_{j,l=1}^m{f_{j,l}((x_i)_{j,l})} 
\end{equation*}
for $i=1,\ldots,n$.
Here, we assume the vectors $\Phi(\mu_{x_1}),\ldots,\Phi(\mu_{x_n})$ are $\mat$-linearly independent (see Definition~\ref{def:a_lin_indep}).

%for a $\mat$-valued measure $\mu$, where $(f_{j,k})_{\#}\mu_{j,k}$ is the push-forward measure defined as $(f_{j,k})_{\#}\mu_{j,k}(E)=\mu_{j,k}(f_{j,k}^{-1}(E))$ for a Borel set $E$.
%We derive the following lemma regarding the operator $P_f$ and measure $\mu_x$.
%\begin{lemma}
%For $x\in\mcl{X}^{m\times m}$, the identity $P_f\Phi(\mu_{x})=\sum_{j,l=1}^m{f_{j,l}(x_{j,l})}$ holds.
%\end{lemma}
%Therefore, we have $P_f\Phi(\mu_{x_i})=y_i$ for samples $x_i$ and $y_i$.
\subsubsection{Generalization to the continuous case}\label{subsec:interacting_infinite}
We generalize the setting mentioned in Subsection~\ref{subsec:interacting_finite} to the case of functional data.
We assume Assumption~\ref{assum:kme} in this subsection.
We set $\alg$ as $\blin{L^2[0,1]}$ instead of $\mat$ in this subsection.
Let $x_1,\ldots, x_n\in C([0,1]\times[0,1],\mcl{X})$ and $y_1,\ldots,y_n\in\alg$ be given samples. 
We assume there exists an integrable function $f:[0,1]\times [0,1]\times \mcl{X}\to\alg$ such that
\begin{equation*}
y_i=\int_0^1\int_0^1f(s,t,x_i(s,t))dsdt
\end{equation*}
for $i=1,\ldots,n$.
We consider an $\alg$-valued positive definite kernel $k$ on $\mcl{X}$, the RKHM $\modu_{k}$ associated with $k$, and the KME $\Phi$ in $\modu_{k}$.
Let $\mu_{x}\in\mcl{D}(\mcl{X},\blin{L^2([0,1])})$ be a $\blin{L^2([0,1])}$-valued measure defined as $\mu_{x}(E)v=\blacket{\chi_E(x(s,\cdot)),v}_{L^2([0,1])}$ for a Borel set $E$ on $\mcl{X}$.
Here, $\chi_E:\mcl{X}\to\{0,1\}$ is the indicator function for $E$.
\textcolor{black}{Note that $\mu_x(E)$ is an integral operator whose integral kernel is $\chi_E(x(s,t))$, which corresponds to the Dirac measure $\tilde{\delta}_{x(s,t)}(E)$.} 
%Note that the linear operator $\mu_x(E)$ can be extended to the space of complex-valued finite regular Borel measures on $[0,1]$ through a map $\nu\mapsto\int_0^1\chi_E(x(s,t))d\nu(t)$.
%In addition, the complex-valued measure $\int_0^1\mu_x\tilde{\delta}_td\tilde{\delta}_s$ describes a measure regarding the point $x(s,t)$.
Let $\mcl{V}$ be the submodule of $\modu_k$ spanned by $\{\Phi(\mu_{x_1}),\ldots,\Phi(\mu_{x_n})\}$, and let $P_f:\mcl{V}\to \blin{L^2([0,1])}$ be a $\blin{L^2([0,1])}$-linear map (see Definition~\ref{def:a_lin_op}) which satisfies
\begin{equation*}
 %P_f\Phi(\mu)=\sum_{j,k=1}^m\Psi((f_{j,k})_{\#}\mu_{j,k}1_{\alg}),  
P_f\Phi(\mu_{x_i})=\int_0^1\int_0^1f(s,t,x_i(s,t))dsdt 
\end{equation*}
for $i=1,\ldots,n$.
Here, we assume the vectors $\Phi(\mu_{x_1}),\ldots,\Phi(\mu_{x_n})$ are $\blin{L^2([0,1])}$-linearly independent (see Definition~\ref{def:a_lin_indep}).

We estimate $P_f$ by restricting it to a submodule of $\mcl{V}$.
For this purpose, we apply the PCA in RKHMs proposed in Section~\ref{subsec:pca_rkhm} and obtain principal axes $p_1,\ldots,p_r$ to construct the submodule.
We replace $\phi(x_i)$ in the problem~\eqref{eq:pca_min} with $\Phi(\mu_{x_i})$ and consider the problem
\begin{equation}
\inf_{\{p_j\}_{j=1}^r\subseteq \modu_k\mbox{\footnotesize : ONS}}\;\sum_{i=1}^n\bigg\vert \Phi(\mu_{x_i})-\sum_{j=1}^rp_j\blacket{p_j,\Phi(\mu_{x_i})}_{\modu_k}\bigg\vert_{\modu_k}^2.\label{eq:pca_min_kme}
\end{equation}
The projection operator onto the submodule spanned by $p_1,\ldots,p_r$ is represented as $QQ^*$, where $Q=[p_1,\ldots,p_r]$.
Therefore, we estimate $P_f$ by $P_fQQ^*$.
We can compute $P_fQQ^*$ as follows.
\begin{prop}\label{prop:interaction_max}
The solution of the problem~\eqref{eq:pca_min_kme} is represented as $p_j=\sum_{i=1}^n\Phi(\mu_{x_i})c_{i,j}$ for some $c_{i,j}\in\alg$.
Let $C=[c_{i,j}]_{i,j}$. Then, the estimation $P_fQQ^*$ is computed as 
\begin{equation*}
P_fQQ^*=[y_1,\ldots,y_n]CQ^*.
\end{equation*}
\end{prop}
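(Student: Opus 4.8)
The plan is to split the proof into two pieces: (i) a representer-type reduction showing that a solution $\{p_j\}_{j=1}^r$ of~\eqref{eq:pca_min_kme} may be taken inside the submodule $\mcl{V}$ spanned by $\Phi(\mu_{x_1}),\dots,\Phi(\mu_{x_n})$, so that $p_j=\sum_{i=1}^n\Phi(\mu_{x_i})c_{i,j}$ for some $c_{i,j}\in\alg$; and (ii) a direct computation of $P_fQQ^*$ from this representation using the $\alg$-linearity of $P_f$.

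For (i) I would argue essentially as in the proof of the representer theorem (Theorem~\ref{thm:representation}). Because $\{p_j\}_{j=1}^r$ is an ONS and $p_j\blacket{p_j,p_j}_{\modu_k}=p_j$ (Lemma~\ref{cor:inprodequiv}), the reconstruction error for $\Phi(\mu_{x_i})$ equals $\blacket{\Phi(\mu_{x_i}),\Phi(\mu_{x_i})}_{\modu_k}-\sum_{j=1}^r\blacket{\Phi(\mu_{x_i}),p_j}_{\modu_k}\blacket{p_j,\Phi(\mu_{x_i})}_{\modu_k}$, so the objective depends on each $p_j$ only through the $\alg$-valued coefficients $\blacket{p_j,\Phi(\mu_{x_i})}_{\modu_k}$. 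By orthogonal complementedness (Proposition~\ref{prop:orthcomp}; here $\mcl{V}$ is finitely generated and closed) write $p_j=q_j+q_j'$ with $q_j\in\mcl{V}$, $q_j'\in\mcl{V}^{\perp}$; then $\blacket{q_j,\Phi(\mu_{x_i})}_{\modu_k}=\blacket{p_j,\Phi(\mu_{x_i})}_{\modu_k}$ and $\blacket{q_j,q_j}_{\modu_k}\le_{\alg}\blacket{p_j,p_j}_{\modu_k}$, the eigenvalues of the former lying in $[0,1]$. Renormalizing the $q_j$ inside $\mcl{V}$ (via Proposition~\ref{prop:normalized_property}) keeps the ONS constraint and does not increase the $\alg$-valued objective, so there is a minimizer all of whose vectors lie in $\mcl{V}$, which is exactly the asserted form $p_j=\sum_{i=1}^n\Phi(\mu_{x_i})c_{i,j}$.

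For (ii), set $W:=[\Phi(\mu_{x_1}),\dots,\Phi(\mu_{x_n})]$, regarded as the $\alg$-linear map $\alg^n\to\modu_k$ sending $\mathbf{a}$ to $\sum_i\Phi(\mu_{x_i})a_i$. Then $C=[c_{i,j}]_{i,j}\in\alg^{n\times r}$ satisfies $Q=WC$, so $QQ^*=WCQ^*$. Since $P_f$ is $\alg$-linear with $P_f\Phi(\mu_{x_i})=y_i$, one has $P_fW=[y_1,\dots,y_n]$ (as a $1\times n$ row over $\alg$), and since $QQ^*v=W(CQ^*v)\in\mcl{V}$ for every $v$, $P_f$ applies to $QQ^*v$ and
\begin{equation*}
P_fQQ^*v=P_f\big(W(CQ^*v)\big)=(P_fW)(CQ^*v)=[y_1,\dots,y_n]\,C\,Q^*v.
\end{equation*}
Therefore $P_fQQ^*=[y_1,\dots,y_n]CQ^*$, as claimed.

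The main obstacle is (i): unlike the RKHS case, the PCA objective is $\alg$-valued with only the pre-order $\le_{\alg}$ at hand and is subject to the ONS constraint, so the reduction to $\mcl{V}$ cannot be read off from a spectral theorem and must instead combine orthogonal complementedness (which requires $\mcl{V}$ closed, e.g.\ when the Gram operator $[\blacket{\Phi(\mu_{x_i}),\Phi(\mu_{x_j})}_{\modu_k}]_{i,j}$ is invertible) with the normalization procedure of Proposition~\ref{prop:normalized_property}, whose truncation parameter brings in the accuracy/stability trade-off of Remark~\ref{rem:trade_off}. Once $p_j=\sum_i\Phi(\mu_{x_i})c_{i,j}$ is available, part (ii) is just bookkeeping with $\alg$-linear maps and their adjoints.
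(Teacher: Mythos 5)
The paper states this proposition without a separate proof: the representation $p_j=\sum_{i=1}^n\Phi(\mu_{x_i})c_{i,j}$ is inherited from the same representer-theorem appeal used for problem~\eqref{eq:pca_min4}, and the displayed formula is the immediate consequence of $\alg$-linearity. Your part (ii) is exactly that computation and is correct: with $W=[\Phi(\mu_{x_1}),\ldots,\Phi(\mu_{x_n})]$ one has $Q=WC$, $P_fW=[y_1,\ldots,y_n]$ by $\alg$-linearity of $P_f$ on $\mcl{V}$, and hence $P_fQQ^*=[y_1,\ldots,y_n]CQ^*$; nothing more is needed there.

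The genuine gap is in part (i), at the step ``renormalizing the $q_j$ inside $\mcl{V}$ keeps the ONS constraint.'' Writing $p_j=q_j+q_j'$ with $q_j\in\mcl{V}$, $q_j'\in\mcl{V}^{\perp}$, the projected family is in general no longer an ONS: for $j\neq l$, $\blacket{q_j,q_l}_{\modu_k}=\blacket{p_j,p_l}_{\modu_k}-\blacket{q_j',q_l'}_{\modu_k}=-\blacket{q_j',q_l'}_{\modu_k}$, which need not vanish, and $\blacket{q_j,q_j}_{\modu_k}$ need not be a projection. Normalizing each $q_j$ separately via Proposition~\ref{prop:normalized_property} does not restore mutual orthogonality, and the claim that this renormalization does not increase the $\alg$-valued objective is not justified in the operator order: replacing $q_j$ by $q_j\hat{b}_j$ turns the term $\blacket{\Phi(\mu_{x_i}),q_j}_{\modu_k}\blacket{q_j,\Phi(\mu_{x_i})}_{\modu_k}$ into $\blacket{\Phi(\mu_{x_i}),q_j}_{\modu_k}\hat{b}_j\hat{b}_j^*\blacket{q_j,\Phi(\mu_{x_i})}_{\modu_k}$, and $\hat{b}_j\hat{b}_j^*$ dominates the identity only on the spectral part above the truncation threshold, so no vector-by-vector inequality of the required form comes out. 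The argument that does work is the submodule one: for $u\in\mcl{V}$, $\vert u-\sum_j p_j\blacket{p_j,u}_{\modu_k}\vert_{\modu_k}^2=\vert u-\sum_j q_j\blacket{q_j,u}_{\modu_k}\vert_{\modu_k}^2+\vert\sum_j q_j'\blacket{q_j,u}_{\modu_k}\vert_{\modu_k}^2\ge_{\alg}\vert u-\sum_j q_j\blacket{q_j,u}_{\modu_k}\vert_{\modu_k}^2$, and since $\sum_j q_j\blacket{q_j,u}_{\modu_k}$ lies in the submodule generated by $\{q_j\}\subseteq\mcl{V}$, one then invokes the minimization property of projections (Theorem~\ref{prop:min_projection}) with an ONS of that submodule obtained from the Gram--Schmidt procedure (Propositions~\ref{prop:normalized_property} and~\ref{prop:gram-schmidt}), under the same closedness/compactness hypotheses and $\epsilon$-tolerance the paper accepts elsewhere. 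With that replacement your proof reaches the same level of rigour as the paper's own (implicit) justification; as written, the renormalization step would fail.
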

The following proposition shows we can obtain a vector which attains the largest transformation by $P_f$.
\begin{prop}\label{prop:pf_max}
Let $u\in\modu_k$ be a unique vector satisfying for any $v\in\modu_k$, $\blacket{u,v}_{\modu_k}=P_fQQ^*v$.
For $\epsilon>0$, let $b_{\epsilon}=(\vert u\vert_{\modu_k}+\epsilon 1_{\alg})^{-1}$ and let $v_{\epsilon}=ub_{\epsilon}$.
%$P_fQQ^*P_f^*=a^*da$ be the eigenvalue decomposition of the positive semi-definite matrix $P_fQQ^*P_f^*$ and let $b=a^*d^+a$, where the $i$th diagonal element of $d^+$ is $d_{i,i}^{-1/2}$ if $d_{i,i}\neq 0$ and $0$ if $d_{i,i}=0$. 
Then, $P_fQQ^*v_{\epsilon}$ converges to
\begin{equation}
\sup_{v\in\modu_k,\ \Vert v\Vert_{\modu_k}\le 1}P_fQQ^*v\label{eq:pf_max_problem}
\end{equation}
as $\epsilon\to 0$, where the supremum is taken with respect to a (pre) order in $\alg$ (see Definition~\ref{def:sup}).
If $\alg=\mat$, then the supremum is replaced with the maximum.
In this case, let $\vert u\vert_{\modu_k}^2=a^*da$ be the eigenvalue decomposition of the positive semi-definite matrix $\vert u\vert_{\modu_k}^2$ and let $b=a^*d^+a$, where the $i$-th diagonal element of $d^+$ is $d_{i,i}^{-1/2}$ if $d_{i,i}\neq 0$ and $0$ if $d_{i,i}=0$.
Then, $ub$ is the solution of the maximization problem.
\end{prop}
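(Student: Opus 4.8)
The plan is to reduce everything to the Riesz representative of the functional $P_fQQ^{*}$ and then carry out an explicit functional-calculus computation. First I would check that $P_fQQ^{*}$ is a bounded $\alg$-linear map $\modu_k\to\alg$: the operator $QQ^{*}v=\sum_{j=1}^{r}p_j\blacket{p_j,v}_{\modu_k}$ is the orthogonal projection onto the submodule spanned by the ONS $\{p_1,\dots,p_r\}$, hence bounded and $\alg$-linear by Theorem~\ref{prop:min_projection} and Proposition~\ref{prop:orthcomp}, its range lies in $\mcl{V}$ since each $p_j\in\mcl{V}$, and composing with the bounded $\alg$-linear $P_f$ on the finitely generated submodule $\mcl{V}$ keeps these properties. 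Since $\modu_k$ is a von Neumann $\alg$-module by Assumption~\ref{assum:kme}, the Riesz representation theorem (Proposition~\ref{thm:riesz}) then produces the unique $u\in\modu_k$ with $P_fQQ^{*}v=\blacket{u,v}_{\modu_k}$ appearing in the statement.

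Next I would evaluate $P_fQQ^{*}v_\epsilon$ directly. Since $b_\epsilon=(\vert u\vert_{\modu_k}+\epsilon 1_{\alg})^{-1}$ is a continuous function of the positive element $\vert u\vert_{\modu_k}$, it commutes with $\blacket{u,u}_{\modu_k}=\vert u\vert_{\modu_k}^{2}$, and $\alg$-linearity of the inner product in its second argument gives
\[
P_fQQ^{*}v_\epsilon=\blacket{u,ub_\epsilon}_{\modu_k}=\vert u\vert_{\modu_k}^{2}\bigl(\vert u\vert_{\modu_k}+\epsilon 1_{\alg}\bigr)^{-1},
\]
a positive element of $\alg$. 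Applying functional calculus to the scalar identity $t^{2}/(t+\epsilon)=t-\epsilon t/(t+\epsilon)$ gives $0\le_{\alg}\vert u\vert_{\modu_k}-P_fQQ^{*}v_\epsilon\le_{\alg}\epsilon 1_{\alg}$, so $\Vert P_fQQ^{*}v_\epsilon-\vert u\vert_{\modu_k}\Vert_{\alg}\le\epsilon\to 0$; likewise $\Vert v_\epsilon\Vert_{\modu_k}^{2}=\Vert\vert u\vert_{\modu_k}^{2}(\vert u\vert_{\modu_k}+\epsilon 1_{\alg})^{-2}\Vert_{\alg}\le 1$, so each $v_\epsilon$ is feasible for~\eqref{eq:pf_max_problem}.

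To finish I would identify $\vert u\vert_{\modu_k}$ with the supremum in the sense of Definition~\ref{def:sup}. If $a$ is any upper bound of $\{P_fQQ^{*}v:\Vert v\Vert_{\modu_k}\le 1\}$, then $P_fQQ^{*}v_\epsilon\le_{\alg}a$ for all $\epsilon>0$; letting $\epsilon\to 0$ and using that $\alg_{+}$ is norm-closed yields $\vert u\vert_{\modu_k}\le_{\alg}a$, which together with the convergence above says exactly that $P_fQQ^{*}v_\epsilon$ converges to a supremum. For the case $\alg=\mat$ I would then substitute the spectral decomposition $\vert u\vert_{\modu_k}^{2}=a^{*}da$ into $b=a^{*}d^{+}a$: unitarity of $a$ gives $\blacket{u,ub}_{\modu_k}=a^{*}dd^{+}a=\vert u\vert_{\modu_k}$ and $\blacket{ub,ub}_{\modu_k}=a^{*}d^{+}dd^{+}a$, the spectral projection onto $\opn{ran}\vert u\vert_{\modu_k}$, whence $\Vert ub\Vert_{\modu_k}\le 1$; so the supremum $\vert u\vert_{\modu_k}$ is actually attained at the feasible point $ub$, i.e.\ it is a maximum. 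The Cauchy--Schwarz inequality for Hilbert $\alg$-modules (Lemma~\ref{lem:c-s}, or its companion form $\blacket{u,v}_{\modu_k}\blacket{v,u}_{\modu_k}\le_{\alg}\Vert v\Vert_{\modu_k}^{2}\blacket{u,u}_{\modu_k}$) provides an alternative route to the feasibility and domination estimates.

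The main obstacle I anticipate is conceptual rather than computational: because $P_fQQ^{*}v$ is $\alg$-valued and need not be self-adjoint, $\vert u\vert_{\modu_k}$ cannot in general be an \emph{upper} bound of the set of values, so the word "supremum" must be read through the weak notion of Definition~\ref{def:sup} (smaller than every upper bound), and the real content is that the approximating family $v_\epsilon$ pushes $P_fQQ^{*}v_\epsilon$ up to every upper bound in the limit. The other delicate point is the spectral bookkeeping for $\alg=\mat$ --- tracking how $d$, $d^{+}$, and the unitary $a$ interact so that $ub$ lands exactly on $\vert u\vert_{\modu_k}$ while remaining in the unit ball --- which is where the statement upgrades to a genuine, attained maximum.
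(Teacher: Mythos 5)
Your proposal is correct and follows essentially the same route as the paper's proof: obtain $u$ via the Riesz representation theorem, compute $P_fQQ^*v_\epsilon=\vert u\vert_{\modu_k}^2(\vert u\vert_{\modu_k}+\epsilon 1_{\alg})^{-1}$ to get $0\le_{\alg}\vert u\vert_{\modu_k}-P_fQQ^*v_\epsilon\le_{\alg}\epsilon 1_{\alg}$ together with $\Vert v_\epsilon\Vert_{\modu_k}\le 1$, pass to the limit against any upper bound to identify $\vert u\vert_{\modu_k}$ as the supremum in the sense of Definition~\ref{def:sup}, and verify attainment at $ub$ for $\alg=\mat$ via the spectral decomposition. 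Your functional-calculus phrasing and the explicit checks (boundedness for Riesz, feasibility of $v_\epsilon$, the computation $\blacket{u,ub}_{\modu_k}=a^*dd^+a=\vert u\vert_{\modu_k}$) only spell out details the paper states more tersely, and your remark that the supremum must be read in the weak sense of Definition~\ref{def:sup} is an accurate reading of what the statement asserts.
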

\begin{proof}
By the Riesz representation theorem (Proposition~\ref{thm:riesz}), there exists a unique $u\in\modu_k$ satisfying for any $v\in\modu_k$, $\blacket{u,v}_{\modu_k}=P_fQQ^*v$.
Then, for $v\in\modu_k$ which satisfies $\Vert v\Vert_{\modu_k}=1$, by the Cauchy--Schwarz inequality (Lemma~\ref{lem:c-s}), we have
\begin{equation}
P_fQQ^*v=\blacket{u,v}_{\modu_k}\le_{\alg} \vert u\vert_{\modu_k}\Vert v\Vert_{\modu_k}\le_{\alg}\vert u\vert_{\modu_k}.\label{eq:p-fmax}
\end{equation}
The vector $v_{\epsilon}$ satisfies $\Vert v_{\epsilon}\Vert_{\modu_k}\le 1$. 
In addition, we have 
\begin{align*}
%&\vert\blacket{v,u_{\epsilon}}\vert_{\alg}+\epsilon 1_{\alg}-\vert v\vert
%=\vert v\vert^2(\vert v\vert+\epsilon 1_{\alg})^{-1}+\epsilon 1_{\alg}-\vert v\vert\\
%&\qquad\qquad=\int_{\lambda\in\Lambda(\vert v\vert)}\bigg(\frac{\lambda^2}{\lambda+\epsilon}+\epsilon-\lambda\bigg) dE_{\lambda}
%=\int_{\lambda\in\Lambda(\vert v\vert)}\frac{\epsilon^2}{\lambda+\epsilon}dE_{\lambda}\ge 0,
%(\vert u\vert_{\modu_k}+\epsilon 1_{\alg})\vert u\vert_{\modu_k}^2-(\vert u\vert_{\modu_k}^2-\epsilon^21_{\alg})(\vert u\vert_{\modu_k}+\epsilon 1_{\alg})
%=\epsilon^2(\vert u\vert_{\modu_k}+\epsilon 1_{\alg})\ge_{\alg} 0.
\vert u\vert_{\modu_k}^2-(\vert u\vert_{\modu_k}^2-\epsilon^21_{\alg})\ge_{\alg} 0.
%=\epsilon^2(\vert u\vert_{\modu_k}+\epsilon 1_{\alg})\ge_{\alg} 0.
\end{align*}
By multiplying $(\vert u\vert_{\modu_k}+\epsilon 1_{\alg})^{-1}$ on the both sides, we have $\blacket{u,v_{\epsilon}}_{\modu_k}+\epsilon 1_{\alg}-\vert u\vert_{\modu_k}\ge_{\alg} 0$,
which implies $\Vert \vert u\vert_{\modu_k}-\blacket{u,v_{\epsilon}}_{\modu_k}\Vert_{\alg}\le\epsilon$, and $\lim_{\epsilon\to 0}P_fQQ^*v_{\epsilon}=\lim_{\epsilon\to 0}\blacket{u,v_{\epsilon}}_{\modu_k}=\vert u\vert_{\modu_k}$.
Since $\blacket{u,v_{\epsilon}}_{\modu_k}\le_{\alg} d$ for any upper bound $d$ of $\{\blacket{u,v}_{\modu_k}\ \mid\ \Vert v\Vert_{\modu_k}\le 1\}$, $\vert u\vert_{\modu_k}\le_{\alg} d$ holds.
As a result, $\vert u\vert_{\modu_k}$ is the supremum of $P_fQQ^*v$.
In the case of $\alg=\mat$, the inequality~\eqref{eq:p-fmax} is replaced with the equality by setting $v=ub$.
\end{proof}
The vector $ub_{\epsilon}$ is represented as $ub_{\epsilon}=QC^*[y_1,\ldots,y_n]^Tb_{\epsilon}=\sum_{i=1}^n\Phi(\mu_{x_i})d_i$, where $d_i\in\alg$ is the $i$-th element of $CC^*[y_1,\ldots,y_n]^Tb_{\epsilon}\in\alg^n$, and $\Phi$ is $\alg$-linear (see Proposition~\ref{prop:kme_lin}).
Therefore, the vector $ub_{\epsilon}$ corresponds to the $\alg$-valued measure $\sum_{i=1}^n\mu_{x_i}d_i$, and if $\Phi$ is injective (see Example~\ref{ex:injective1}), the corresponding measure is unique.
This means that if we transform the samples $x_i$ according to the measure $\sum_{i=1}^n\mu_{x_i}d_i$, then the transformation makes a large impact to $y_i$.

\color{\exp}
\subsubsection{Numerical examples}
We applied our method to functional data $x_1,\ldots,x_n\in C([0,1]\times [0,1],[0,1])$, where $n=30$, $x_i$ are polynomials of the form $x_i(s,t)=\sum_{j,l=0}^5\eta_{j,l}s^jt^l$.
The coefficients $\eta_{j,l}$ of $x_i$ are randomly and independently drawn from the uniform distribution on $[0,0.1]$.
Then, we set $y_i\in \mathbb{R}$ as  
\begin{align*}
y_i=\int_0^1\int_0^1x_i(s,t)^{-\alpha+\alpha\vert s+t\vert}dsdt
\end{align*}
for $\alpha=3,0.5$.
We set $\alg=\blin{L^2([0,1])}$ and $k(x_1,x_2)=\tilde{k}(x_1,x_2)1_{\alg}$, where $\tilde{k}$ is a complex-valued positive definite kernel on $[0,1]$ defined as $\tilde{k}(x_1,x_2)=e^{-\Vert x_1-x_2\Vert_2^2}$.
We applied the PCA proposed in Subsection~\ref{subsec:pca_trace} with $r=3$, and then computed $\lim_{\epsilon\to 0} ub_{\epsilon}\in\modu_k$ in Proposition~\ref{prop:interaction_max}, which can be represented as $\Phi(\sum_{i=1}^n\mu_{x_i}d_i)$ for some $d_i\in\alg$.
The parameter $\lambda$ in the objective function of the PCA was set as $0.5$.
Figure~\ref{fig:interaction} shows the heat map representing the value related to the integral kernel of the $\alg$-valued measure $\sum_{i=1}^n\mu_{x_i}(E)d_i$ for $E=[0,0.1]$.
We denote $\sum_{i=1}^n\mu_{x_i}(E)d_i$ by $\nu(E)$ and the integral kernel of the integral operator $\nu(E)$ by $\tilde{k}_{\nu(E)}$.
As we stated in Section~\ref{subsec:interacting_infinite}, if we transform the samples $x_i$ according to the measure $\nu$, then the transformation makes a large impact to $y_i$.
Moreover, the value of $\tilde{k}_{\nu(E)}$ at $(s,t)$ corresponds to the measure at $(s,t)$.
Therefore, the value of $\tilde{k}_{\nu(E)}$ at $(s,t)$ describes the impact of the effect of $t$ on $s$ to $y_i$.
To additionally take the effect of $s$ on $t$ into consideration, we show the value of $\tilde{k}_{\nu(E)}(s,t)+\tilde{k}_{\nu(E)}(t,s)$ in Figure~\ref{fig:interaction}.
The values for $\alpha=3$ are larger than those for $\alpha=0.5$, which implies the overall impacts to $y_i$ for $\alpha=3$ are larger than that for $\alpha=0.5$.
Moreover, the value is large if $s+t$ is small.
This is because for $x_i(s,t)\in[0,0.1]$, $x_i(s,t)^{-\alpha+\alpha\vert s+t\vert}$ is large if $s+t$ is small.
Furthermore, the values around $(s,t)=(1,0)$ and $(0,1)$ are also large since $x_i$ has the form $x_i(s,t)=\sum_{j,l=0}^5\eta_{j,l}s^jt^l$ for $\eta_{j,l}\in[0,0.1]$ and $x_i(s,t)$ itself is large around $(s,t)=(1,0)$ and $(0,1)$, which results in $x_i(s,t)^{-\alpha+\alpha\vert s+t\vert}\approx x_i(s,t)$ being large.

\begin{figure}[t]
    \centering
    \subfigure[$\alpha=3$]{\includegraphics[scale=0.6]{./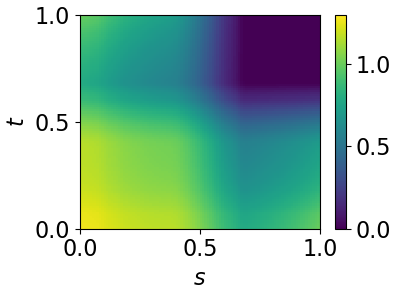}}
    \subfigure[$\alpha=0.5$]{\includegraphics[scale=0.6]{./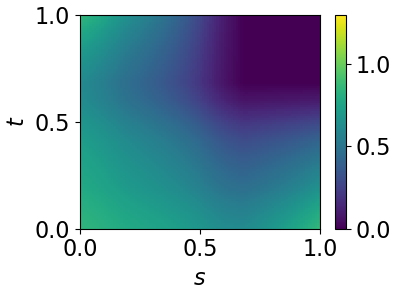}}
    \caption{Heat map representing the value the integral kernel of $\nu([0,1])$}
    \label{fig:interaction}
\end{figure}

\if0
For the computed $\blin{L^2([0,1])}$-valued measure $\sum_{i=1}^n\mu_{x_i}d_i$, Figure~\ref{fig:max_measure} shows the density function of the complex-valued measure $\int_0^1\mu_x\tilde{\delta}_td\tilde{\delta}_s$ for $s=0.1$ and $t=0.2,0.3,0.4,0.5,0.6,0.7,0.8$.
Recall that the complex-valued measure $\int_0^1\mu_x\tilde{\delta}_td\tilde{\delta}_s$ describes a measure regarding the point $x(s,t)$.
Thus, if we transform the samples $x_i(s,t)$ according to the measure $\int_0^1\mu_x\tilde{\delta}_td\tilde{\delta}_s$, then the transformation makes a large impact to corresponds to $y_i$.
In fact, for $s=0.1$ and $t=0.2$, $y_i$ becomes larger as $x_i(s,t)$ becomes larger since the exponent $2-5\vert s-t\vert$ is larger than $0$.
On the other hand, for $s=0.1$ and $t=0.8$, $y_i$ becomes larger as $x_i(s,t)$ becomes smaller since the exponent $2-5\vert s-t\vert$ is smaller than $0$.
This property is extracted through the $\alg$-valued measure.

\begin{figure}[t]
    \centering
    \includegraphics[scale=0.2]{./Figure_2.png}
    \includegraphics[scale=0.2]{./Figure_3.png}
    \includegraphics[scale=0.2]{./Figure_4.png}
    \includegraphics[scale=0.2]{./Figure_5.png}
    \includegraphics[scale=0.2]{./Figure_6.png}
    \includegraphics[scale=0.2]{./Figure_7.png}
    \includegraphics[scale=0.2]{./Figure_8.png}
    \caption{The density function $p$ of $\int_0^1\mu_x\tilde{\delta}_td\tilde{\delta}_s$ for $s=0.1$ and $t=0.2,0.3,0.4,0.5,0.6,0.7,0.8$}
    \label{fig:max_measure}
\end{figure}
\fi
\color{black}
\subsection{Other applications}\label{subsec:others}
\subsubsection{Maximum mean discrepancy with kernel mean embedding}\label{sec:mmd}
Maximum mean discrepancy (MMD) is a metric of measures according to the largest difference in means over a certain subset of a function space. 
It is also known as integral probability metric (IPM). % in the unit ball of a RKHS.
For a set $\mcl{U}$ of real-valued bounded measurable functions on $\mcl{X}$ and two real-valued probability measures $\mu$ and $\nu$,
MMD $\gamma(\mu,\nu,\mcl{U})$ is defined as follows~\citep{muller97,gretton12}:
\begin{equation*}
\sup_{u\in\mcl{U}}\bigg\vert\int_{x\in\mcl{X}}u(x)d\mu(x)-\int_{x\in\mcl{X}}u(x)d\nu(x)\bigg\vert.
\end{equation*}
%\begin{equation*}
% \gamma(\mu,\nu,\mcl{U}):=\sup_{f\in\mcl{U}}\bigg\vert\int_{x\in\mcl{X}}f(x)d\mu(x)-\int_{x\in\mcl{X}}f(x)d\nu(x)\bigg\vert.
%\end{equation*}
%For example, if $\mcl{X}$ is a metric space and $\mcl{U}=\mcl{U}_{\opn{K}}:=\{u\mid\ \Vert u\Vert_L\le 1\}$, where $\Vert u\Vert_L:=\sup_{x\neq y}\vert u(x)-u(y)\vert/d(x,y)$ and $d$, $\gamma(\mu,\nu,\mcl{U})$ is called Kantorovich metric.
%And if $\mcl{U}=\mcl{U}_{\opn{D}}:=\{u\mid\ \Vert u\Vert_{\infty}+\Vert u\Vert_L\le 1\}$, where $\Vert u\Vert_{\infty}$ is the sup norm of $u$, $\gamma(\mu,\nu,\mcl{U})$ is called Dudley metric~\cite{kernelmean,sriperumbudur12}.
For example, if $\mcl{U}$ is the unit ball of an RKHS, denoted as $\mcl{U}_{\opn{RKHS}}$, the MMD can be represented using the KME $\tilde{\Phi}$ in the RKHS as $\gamma(\mu,\nu,\mcl{U}_{\opn{RKHS}})=\Vert\tilde{\Phi}(\mu)-\tilde{\Phi}(\nu)\Vert_{\rkhs}$.
In addition, let $\mcl{U}_{\opn{K}}=\{u\mid\ \Vert u\Vert_L\le 1\}$ and let $\mcl{U}_{\opn{D}}=\{u\mid\ \Vert u\Vert_{\infty}+\Vert u\Vert_L\le 1\}$, where, $\Vert u\Vert_L:=\sup_{x\neq y}\vert u(x)-u(y)\vert/\vert x-y\vert$, and  $\Vert u\Vert_{\infty}$ is the sup norm of $u$. %~\cite{kernelmean,sriperumbudur12}. 
The MMDs with $\mcl{U}_{\opn{K}}$ and $\mcl{U}_{\opn{D}}$ are also discussed in~\citet{ratchev85,dudley02,sriperumbudur12}.

%, where $\tilde{\Phi}$ is the KME in the RKHS~\cite{gretton12}.
Let $\mcl{X}$ be a locally compact Hausdorff space, let $\mcl{U}_{\alg}$ be a set of $\alg$-valued bounded and measurable functions, and let $\mu,\nu\in\measure$.
We generalize the MMD to that for $\alg$-valued measures as follows:
\begin{equation*}
 \gamma_{\alg}(\mu,\nu,\mcl{U}_{\alg}):=\sup_{u\in\mcl{U}_{\alg}}\bigg\vert\int_{x\in\mcl{X}}u(x)d\mu(x)-\int_{x\in\mcl{X}}u(x)d\nu(x)\bigg\vert_{\alg},
\end{equation*}
%where $\vert c\vert_{\alg}:=(c^*c)^{1/2}$ for $c\in\alg$ 
where the supremum is taken with respect to a (pre) order in $\alg$ (see Definition~\ref{def:sup}). %$c=\sup_{f\in\mcl{U}}G(f)\in\alg$ for a functional $G$ means $G(f)\le c$ for all $f\in\mcl{U}$.
Let $k$ be an $\alg$-valued positive definite kernel and let $\modu_k$ be the RKHM associated with $k$.
We assume Assumption~\ref{assum:kme}.
Let $\Phi$ be the KME defined in Section~\ref{subsec:kme}.
The following theorem shows that similar to the case of RKHS, if $\mcl{U}_{\alg}$ is the unit ball of an RKHM, the generalized MMD $\gamma_{\alg}(\mu,\nu,\mcl{U}_{\alg})$ can also be represented using the proposed KME in the RKHM.
\begin{prop}\label{prop:mmd}
Let $\mcl{U}_{\opn{RKHM}}:=\{u\in\modu_k\mid\ \Vert u\Vert_{\modu_k}\le 1\}$.
Then, for $\mu,\nu\in\mcl{D}(\mcl{X},\alg)$, we have
 \begin{equation*}
\gamma_{\alg}(\mu,\nu,\mcl{U}_{\opn{RKHM}})=\vert \Phi(\mu)-\Phi(\nu)\vert_{\modu_k}.
\end{equation*}
\end{prop}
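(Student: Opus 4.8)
The plan is to reduce the two-measure statement to a single $\alg$-valued measure, convert the MMD integral into an $\alg$-valued inner product by the reproducing property of the KME, and then run the $\alg$-analogue of the usual ``Cauchy--Schwarz plus maximizer'' argument, in the spirit of the proof of Proposition~\ref{prop:pf_max}.

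First I would set $\lambda:=\mu-\nu\in\measure$. Since $\Phi$ is additive (Proposition~\ref{prop:kme_lin}) and well defined on $\measure$ (Theorem~\ref{thm:kme}), $\Phi(\mu)-\Phi(\nu)=\Phi(\lambda)\in\modu_k$. For any $u\in\mcl{U}_{\opn{RKHM}}$, the reproducing identity for the KME (Eq.~\eqref{eq:reproducing_km}) lets me rewrite $\int u\,d\mu-\int u\,d\nu=\int u\,d\lambda$ as the $\alg$-valued inner product $\blacket{\Phi(\lambda),u}_{\modu_k}$, up to the adjoint bookkeeping dictated by which side $d\mu$ sits on in the integral; this only affects an adjoint and is harmless since we take $\vert\cdot\vert_{\alg}$ of it. Thus the claim becomes $\sup_{\Vert u\Vert_{\modu_k}\le 1}\vert\blacket{\Phi(\lambda),u}_{\modu_k}\vert_{\alg}=\vert\Phi(\lambda)\vert_{\modu_k}$, the supremum understood with respect to $\le_{\alg}$.

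For the ``upper bound'' half, apply the $\alg$-valued Cauchy--Schwarz inequality (Lemma~\ref{lem:c-s}): $\vert\blacket{\Phi(\lambda),u}_{\modu_k}\vert_{\alg}^2\le_{\alg}\Vert u\Vert_{\modu_k}^2\,\blacket{\Phi(\lambda),\Phi(\lambda)}_{\modu_k}\le_{\alg}\vert\Phi(\lambda)\vert_{\modu_k}^2$, and then use operator monotonicity of the square root on $\alg_+$ to get $\vert\blacket{\Phi(\lambda),u}_{\modu_k}\vert_{\alg}\le_{\alg}\vert\Phi(\lambda)\vert_{\modu_k}$ for every $u$ in the unit ball; hence $\vert\Phi(\lambda)\vert_{\modu_k}$ is an upper bound of the set. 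For the ``least upper bound'' half, I would exhibit near-maximizers exactly as in Proposition~\ref{prop:pf_max}: for $\epsilon>0$ put $u_{\epsilon}:=\Phi(\lambda)(\vert\Phi(\lambda)\vert_{\modu_k}+\epsilon 1_{\alg})^{-1}$. A short functional-calculus computation (using that $(\vert\Phi(\lambda)\vert_{\modu_k}+\epsilon 1_{\alg})^{-1}$ is self-adjoint and commutes with $\vert\Phi(\lambda)\vert_{\modu_k}$) shows $\Vert u_{\epsilon}\Vert_{\modu_k}\le 1$ and $\blacket{\Phi(\lambda),u_{\epsilon}}_{\modu_k}=\vert\Phi(\lambda)\vert_{\modu_k}^2(\vert\Phi(\lambda)\vert_{\modu_k}+\epsilon 1_{\alg})^{-1}$, which is a positive element converging in norm to $\vert\Phi(\lambda)\vert_{\modu_k}$ as $\epsilon\to 0$. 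Since $\alg_+$ is norm-closed, any upper bound $d$ of the set satisfies $\vert\Phi(\lambda)\vert_{\modu_k}^2(\vert\Phi(\lambda)\vert_{\modu_k}+\epsilon 1_{\alg})^{-1}\le_{\alg}d$ for all $\epsilon$, and letting $\epsilon\to 0$ gives $\vert\Phi(\lambda)\vert_{\modu_k}\le_{\alg}d$. Hence $\vert\Phi(\lambda)\vert_{\modu_k}$ is the supremum.

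The main obstacle I anticipate is this second direction: unlike in a Hilbert space, the ``unit vector'' $(\Phi(\mu)-\Phi(\nu))/\Vert\Phi(\mu)-\Phi(\nu)\Vert_{\modu_k}$ need not make sense, because $\vert\Phi(\mu)-\Phi(\nu)\vert_{\modu_k}$ may fail to be invertible in $\alg$; one must instead work with the regularized near-maximizers $u_{\epsilon}$ and pass to the limit inside the $\le_{\alg}$-order, invoking norm-closedness of $\alg_+$ and checking that every element in play is positive so that $\vert\cdot\vert_{\alg}$ acts trivially and the order manipulations (in particular operator monotonicity of $t\mapsto t^{1/2}$) are legitimate. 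The remaining point --- fixing the side on which $d\mu$ acts in $\int u\,d\mu$ so that Theorem~\ref{thm:kme} applies verbatim --- is purely bookkeeping and does not affect the argument.
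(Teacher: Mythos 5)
Your proposal is correct and follows essentially the same route as the paper: the upper bound via the $\alg$-valued Cauchy--Schwarz inequality (Lemma~\ref{lem:c-s}), and the least-upper-bound half via the regularized near-maximizers $u_{\epsilon}=\Phi(\mu-\nu)\bigl(\vert\Phi(\mu-\nu)\vert_{\modu_k}+\epsilon 1_{\alg}\bigr)^{-1}$, exactly as the paper does by invoking the argument of Proposition~\ref{prop:pf_max}. Your added remarks on operator monotonicity of the square root and norm-closedness of $\alg_+$ only make explicit steps the paper leaves implicit.
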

\begin{proof}
By the Cauchy--Schwarz inequality (Lemma~\ref{lem:c-s}), we have
 \begin{align*}
  &\bigg\vert \int_{x\in\mcl{X}}d\mu^*u(x)-\int_{x\in\mcl{X}}d\nu^*u(x)\bigg\vert_{\alg}=\vert \blacket{\Phi(\mu-\nu),u}_{\modu_k}\vert_{\alg}\\
&\qquad\le_{\alg} \Vert u\Vert_{\modu_k}\vert \Phi(\mu-\nu)\vert_{\modu_k}\;\le_{\alg}\vert\Phi(\mu-\nu)\vert_{\modu_k}
 \end{align*}
for any $u\in\modu_k$ such that $\Vert u\Vert_{\modu_k}\le 1$.
Let $\epsilon>0$.
We put $v=\Phi(\mu-\nu)$ and $u_{\epsilon}=v(\vert v\vert_{\modu_k}+\epsilon 1_{\alg})^{-1}$.
In the same manner as Proposition~\ref{prop:pf_max}, $\vert \Phi(\mu-\nu)\vert_{\modu_k}$ is shown to be the supremum of $\vert \int_{x\in\mcl{X}}d\mu^*u(x)-\int_{x\in\mcl{X}}d\nu^*u(x)\vert_{\alg}$.
\end{proof}

%Various methods with the existing KME in RKHS are generalized to RKHM by using the proposed KME of the covariance measure in RKHM instead of the existing KME.
Various methods with the existing MMD of real-valued probability measures are generalized to $\alg$-valued measures by applying our MMD. 
%An example of $\alg$-valued measures is the cross-covariance measure defined in Definition~\ref{def:covariance}.
%Using our MMD of the cross-covariance measures instead of the existing MMD allows us to encode higher-order interactions among variables in the methods.
% by replacing the MMD in %RKHS of real-valued probability measures by that of $\alg$-valued measures in RKHM.
Using our MMD of $\alg$-valued measures instead of the existing MMD allows us to evaluate  discrepancies between measures regarding each point of structured data such as multivariate data and functional data. 
For example, the following existing methods can be generalized:

{\bf Two-sample test:}\quad In two-sample test, samples from two distributions (measures) are compared by computing the MMD of these measures~\citep{gretton12}. %Replacing the MMD by that of covariance measures enables us to compare samples 
%which are composed of several elements correlated each other
%on the basis of higher-order interactions among variables.
%We can replace the MMD by our MMD of $\alg$-valued measures.

{\bf Kernel mean matching for generative models:}\quad In generative models, MMD is used in finding points whose distribution is as close as that of input points~\citep{jitkrittum19}. %Replacing the MMD by that of covariance measures enables us to generate points composed of several elements which preserve the covariance between arbitrary pairs of elements of given input points.

{\bf Domain adaptation:}\quad In domain adaptation, MMD is used in describing the difference between the distribution of target domain data and that of source domain data~\citep{Li19}. %Replacing the MMD by that of covariance measures enables us to describe covariances of arbitrary pairs of elements of target domain data are close to those of source domain data.

\subsubsection{Time-series data analysis with random noise}
Recently, random dynamical systems, which are (nonlinear) dynamical systems with random effects, have been extensively researched. 
Analyses of them by generalizing the discussion mentioned in Subsection~\ref{subsec:pf_rkhs_review} using the existing KME in RKHSs have been proposed~\citep{klus17,hashimoto19}.
We can apply our KME of $\alg$-valued measures to generalize the analysis proposed in Subsection~\ref{subsec:pf_rkhm} to random dynamical systems.
Then, we can extract continuous behaviors of the time evolution of functions with consideration of random noise.
%Our framework can generalize these results by replacing the existing KME with our KME of $\alg$-valued measures. 
%If we set $\alg$ as $\blin{L^2([0,1])}$ and consider a positive definite kernel on $C([0,1],\mcl{X})$, then we can describe the time evolution of functional data in $C([0,1],\mcl{X})$.

%%%
\section{Connection with existing methods}\label{sec:existing}
In this section, we discuss connections between the proposed methods and existing methods.
We show the connection with the PCA in vvRKHSs in Subsection~\ref{subsec:pca_vvrkhs} and an existing notion in quantum mechanics.

\subsection{Connection with PCA in vvRKHSs}\label{subsec:pca_vvrkhs}
We show that PCA in vvRKHSs is a special case of the proposed PCA in RKHMs.
Let $\mcl{W}$ be a Hilbert space and we set $\alg=\blin{\mcl{W}}$.
Let $k:\mcl{X}\times\mcl{X}\to\blin{\mcl{W}}$ be a $\blin{\mcl{W}}$-valued positive definite kernel.
In addition, let $x_1,\ldots,x_n\in\mcl{X}$ be given data and $w_{1,1},\ldots,w_{1,N},\ldots,w_{n,1},\ldots,w_{n,N}\in\mcl{W}$ be fixed vectors in $\mcl{W}$.
The following proposition shows that we can reconstruct principal components of PCA in vvRKHSs by using the proposed PCA in RKHMs. 
\begin{prop}
Let $W_j:\mcl{X}\to\mcl{W}$ be a map satisfying $W_j(x_i)=w_{i,j}$ for $j=1,\ldots,N$, let $W=[W_1,\ldots,W_N]$,
and let $\hat{k}:\mcl{X}\times\mcl{X}\to\mathbb{C}^{N\times N}$ be defined as $\hat{k}(x,y)=W(x)^*k(x,y)W(y)$.
Let $\{q_1,\ldots,q_r\}\subseteq\mcl{F}_{\hat{k}}$ is a solution of the minimization problem
\begin{equation}
\min_{ \{q_j\}_{j=1}^r\subseteq\mcl{F}_{\hat{k}}\mbox{\footnotesize : ONS}}\;\sum_{i=1}^n\opn{tr}\Big(\big\vert \phi(x_i)-\sum_{j=1}^rq_j\blacket{q_j,\phi(x_i)}_{\modu_{\hat{k}}}\big\vert_{\modu_{\hat{k}}}^2\Big),\label{eq:pca_rkhm}
\end{equation}
where $\mcl{F}_k=\{v\in\modu_k\mid\ v(x)\mbox{ is a rank $1$ operator for any }x\in\mcl{X}\}$.
In addition, let $p_1,\ldots,p_r\in\hil_k^{\opn{v}}$ be the solution of the minimization problem
\begin{equation}
\min_{\{p_j\}_{j=1}^r\subseteq \hil_k^{\opn{v}}\mbox{\footnotesize : ONS}}\;\sum_{i=1}^{n}\sum_{l=1}^N\bigg\Vert \phi(x_i)w_{i,l}-\sum_{j=1}^{r}p_j\blacket{p_j,\phi(x_i)w_{i,l}}_{\vvrkhs}\bigg\Vert_{\vvrkhs}^2.\label{eq:pca_vvrks}
\end{equation}
Then, $\Vert (\sblacket{q_j,\hat{\phi}(x_i)}_{\modu_{\hat{k}}})_l\Vert_{\mathbb{C}^{N}}=\blacket{p_j,\phi(x_i)w_{i,l}}_{\vvrkhs}$ for $i=1,\ldots,n$, $j=1,\ldots,r$, and $l=1,\ldots,N$.
Here, $(\sblacket{q_j,\hat{\phi}(x_i)}_{\modu_{\hat{k}}})_l$ is the $l$-th column of the matrix $\sblacket{q_j,\hat{\phi}(x_i)}_{\modu_{\hat{k}}}\in\mathbb{C}^{N\times N}$.
%On the other hand, let $\hat{k}=W(x)^*k(x,y)W(y)$, where $W:\mcl{Y}\to\blin{\mcl{W}}$ satisfies $W(x_i)=\sum_{l=1}^tw_{i,l}w_{i,l}^*$ and let $q\in\modu_{\hat{k}}$ be a vector so that $q(x)$ is a rank $r$ operator which satisfies $q(x)e_j=\sum_{l=1}^N\blacket{p_j(x),W(x_l)}$ for any $x\in\mcl{X}$ and $j=1,\ldots,r$, and $q(x)e_j=0$ for any $x\in\mcl{X}$ and $j>r$.
\end{prop}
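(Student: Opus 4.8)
The plan is to realize both minimizations as one and the same principal component analysis, carried out in a closed subspace of the vvRKHS $\vvrkhs$, by exhibiting $\hat k$ as a kernel whose feature space sits isometrically inside $\vvrkhs$, and then to read off the claimed identity by comparing the two reproducing properties. First I would check that $\hat k$ is a $\mathbb{C}^{N\times N}$-valued positive definite kernel in the sense of Definition~\ref{def:pdk_rkhm}: symmetry is immediate from $k(x,y)^{*}=k(y,x)$, and for $c_{1},\dots ,c_{n}\in\mathbb{C}^{N\times N}$ one has $\sum_{i,j}c_{i}^{*}\hat k(x_{i},x_{j})c_{j}=\sum_{i,j}(W(x_{i})c_{i})^{*}k(x_{i},x_{j})(W(x_{j})c_{j})\ge_{\mathbb{C}^{N\times N}}0$ by the operator-valued positive definiteness of $k$ (Definition~\ref{def:pdk_vv-rkhs}, via Lemma~\ref{lem:pdk_equiv}), where $W(x)\colon\mathbb{C}^{N}\to\mcl{W}$, $v\mapsto\sum_{l}v_{l}W_{l}(x)$. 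Hence $\modu_{\hat k}$ and $\hatphi(x)=\hat k(\cdot,x)$ are well defined. Composing the unitary of Theorem~\ref{thm:rkhm_vvrkhs} for $\hat k$ with the isometry $\hil_{\hat k}^{\opn{v}}\to\vvrkhs$ induced by $W$, I obtain an isometry $V\colon\modu_{\hat k}\otimes_{\mathbb{C}^{N\times N}}\mathbb{C}^{N}\to\vvrkhs$ with $V(\hatphi(x)c\otimes v)=\phi(x)(W(x)cv)$; the chain $\blacket{\hatphi(x)c\otimes v,\hatphi(y)d\otimes h}=\blacket{cv,\hat k(x,y)dh}_{\mathbb{C}^{N}}=\blacket{W(x)cv,k(x,y)W(y)dh}_{\mcl{W}}=\blacket{\phi(x)(W(x)cv),\phi(y)(W(y)dh)}_{\vvrkhs}$ gives both well-definedness and isometry, so $\opn{Im}V$ is a closed subspace of $\vvrkhs$ containing all data points $\phi(x_{i})w_{i,l}=V(\hatphi(x_{i})\otimes e_{l})$.

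Next I would reduce both optimization problems to the PCA, in the Hilbert space $\modu_{\hat k}\otimes_{\mathbb{C}^{N\times N}}\mathbb{C}^{N}$ (equivalently, in $\hil_{\hat k}^{\opn{v}}$), of the vectors $\{\hatphi(x_{i})\otimes e_{l}\}_{i,l}$. For \eqref{eq:pca_vvrks}: its objective depends on $p_{j}$ only through $\blacket{p_{j},\phi(x_{i})w_{i,l}}_{\vvrkhs}$, and since $\phi(x_{i})w_{i,l}\in\opn{Im}V$ the usual projection argument (the one underlying the representer theorem, Theorem~\ref{thm:representation}) produces an optimal ONS $\{p_{j}\}$ inside $\opn{Im}V$; writing $p_{j}=V(\tilde p_{j})$ and invoking the orthogonal-decomposition identity \eqref{eq:deviation} turns \eqref{eq:pca_vvrks} into the maximization of $\sum_{i,l,j}|\blacket{\tilde p_{j},\hatphi(x_{i})\otimes e_{l}}|^{2}$ over ONS $\{\tilde p_{j}\}$, with $\blacket{p_{j},\phi(x_{i})w_{i,l}}_{\vvrkhs}=\blacket{\tilde p_{j},\hatphi(x_{i})\otimes e_{l}}$. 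For \eqref{eq:pca_rkhm}: a normalized $q_{j}$ satisfies $q_{j}\blacket{q_{j},q_{j}}_{\modu_{\hat k}}=q_{j}$ by Lemma~\ref{cor:inprodequiv}, hence $\blacket{q_{j},q_{j}}_{\modu_{\hat k}}\blacket{q_{j},\hatphi(x_{i})}_{\modu_{\hat k}}=\blacket{q_{j},\hatphi(x_{i})}_{\modu_{\hat k}}=q_{j}(x_{i})$; expanding the $\alg$-valued reconstruction error with the ONS relations and this identity collapses it to $\hat k(x_{i},x_{i})-\sum_{j}q_{j}(x_{i})^{*}q_{j}(x_{i})$, so \eqref{eq:pca_rkhm} is the maximization of $\sum_{i,j}\opn{tr}(q_{j}(x_{i})^{*}q_{j}(x_{i}))=\sum_{i,j}\|q_{j}(x_{i})\|_{\mathrm F}^{2}$ over ONS $\{q_{j}\}$ in $\mcl{F}_{\hat k}$. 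Identifying $\mcl{F}_{\hat k}$, via Theorem~\ref{thm:rkhm_vvrkhs}, with the rank-one tensors $q_{j}(x)=\tilde q_{j}(x)z_{j}^{*}$ ($\tilde q_{j}\in\hil_{\hat k}^{\opn{v}}$, $z_{j}\in\mathbb{C}^{N}$), for which normalization forces $\|\tilde q_{j}\|=\|z_{j}\|=1$ and $\{q_{j}\}$ is an ONS iff $\{\tilde q_{j}\}$ is, and noting $\|q_{j}(x_{i})\|_{\mathrm F}^{2}=\|\tilde q_{j}(x_{i})\|_{\mathbb{C}^{N}}^{2}=\sum_{l}|\blacket{\tilde q_{j},\hatphi(x_{i})\otimes e_{l}}|^{2}$, this is precisely the same maximization as for \eqref{eq:pca_vvrks}. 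Hence the solutions may be matched so that $\tilde q_{j}=\tilde p_{j}$.

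Finally, with this matching, $\blacket{q_{j},\hatphi(x_{i})}_{\modu_{\hat k}}=q_{j}(x_{i})=\tilde p_{j}(x_{i})z_{j}^{*}$, whose $l$-th column is $\overline{(z_{j})_{l}}\,\tilde p_{j}(x_{i})$, while the vvRKHS reproducing property \eqref{eq:reproducing_vvRKHS} gives $\blacket{p_{j},\phi(x_{i})w_{i,l}}_{\vvrkhs}=\blacket{\tilde p_{j}(x_{i}),e_{l}}_{\mathbb{C}^{N}}$; comparing the $\mathbb{C}^{N}$-norm of the former with the latter (the equality being read, as is inherent to PCA, up to the phase of the principal axes, and holding on the nose after summing over $l$, since then both sides equal $\|\tilde p_{j}(x_{i})\|_{\mathbb{C}^{N}}^{2}$) yields the claim. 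The hard part is the treatment of \eqref{eq:pca_rkhm}: one must both exploit the $C^{*}$-module peculiarity that a normalized vector has $\blacket{q,q}_{\modu_{\hat k}}$ a projection rather than $1_{\alg}$ (Lemma~\ref{cor:inprodequiv}) to simplify the matrix-valued objective, and verify carefully that the rank-one constraint defining $\mcl{F}_{\hat k}$ corresponds exactly to the elementary tensors in $\modu_{\hat k}\otimes_{\mathbb{C}^{N\times N}}\mathbb{C}^{N}$ under Theorem~\ref{thm:rkhm_vvrkhs}, so that restricting to $\mcl{F}_{\hat k}$ costs nothing relative to the vvRKHS PCA.
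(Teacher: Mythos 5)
Your proposal reaches the statement by a genuinely different route than the paper. The paper argues by explicit computation: it applies Proposition~\ref{prop:min_trace} with $\alg=\mathbb{C}^{N\times N}$, regarding the Gram matrix $\bG_{i,j}=\hat{k}(x_i,x_j)$ as an $Nn\times Nn$ complex matrix, to write the solutions of \eqref{eq:pca_rkhm} as $q_j=\sum_{i}\hatphi(x_i)c_{i,j}$ with $[c_{1,j},\ldots,c_{n,j}]^T=\lambda_j^{-1/2}\mathbf{v}_ju^*$ built from the top eigenpairs $(\lambda_j,\mathbf{v}_j)$ of $\bG$; since the same $\bG$ is the Gram matrix of the data $\{\phi(x_i)w_{i,l}\}$ in $\vvrkhs$, the solutions of \eqref{eq:pca_vvrks} are expressed through the same eigenvectors, and the two explicit formulas for $\sblacket{q_j,\hatphi(x_i)}_{\modu_{\hat{k}}}^*$ and $\overline{\blacket{p_j,\phi(x_i)w_{i,l}}_{\vvrkhs}}$ are then compared columnwise. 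You instead construct the interior-tensor isometry $V\colon\modu_{\hat{k}}\otimes_{\mathbb{C}^{N\times N}}\mathbb{C}^N\to\vvrkhs$ with $V(\hatphi(x)c\otimes v)=\phi(x)(W(x)cv)$ and exhibit both problems as one and the same Hilbert-space PCA of $\{\hatphi(x_i)\otimes e_l\}$; this is more structural, explains why the coincidence happens, and avoids the eigendecomposition, whereas the paper's computation is shorter and yields the principal components in closed form. The place where your route demands extra work is exactly the step you flag but do not carry out: the identification of $\mcl{F}_{\hat{k}}$ with the elementary tensors $\tilde{q}_jz_j^*$. This is not automatic, since a normalized element of $\modu_{\hat{k}}$ whose values are rank one may a priori have a right vector varying with $x$ (and normalization only forces $\Vert\tilde{q}_j\Vert\,\Vert z_j\Vert=1$, not both equal to one); you need either to prove the identification or to argue, as the paper implicitly does, that the elementary minimizers of the unconstrained trace problem \eqref{eq:pca_min6} lie in $\mcl{F}_{\hat{k}}$ and that the minimizers of \eqref{eq:pca_rkhm} can be taken of this form. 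Note that the paper itself glosses over this point (Proposition~\ref{prop:min_trace} is invoked although its feasible set is not $\mcl{F}_{\hat{k}}$), and that both proofs read the final identity up to the modulus/phase ambiguity of the principal axes, your left-hand side being a norm while $\blacket{p_j,\phi(x_i)w_{i,l}}_{\vvrkhs}$ is a priori complex; with these caveats made explicit, your argument goes through and matches the paper's conclusion.
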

\begin{proof}
Let $\bG\in(\mathbb{C}^{N\times N})^{n\times n}$ be defined as $\bG_{i,j}=\hat{k}(x_i,x_j)$.
By Proposition~\ref{prop:min_trace}, any solution of the problem~\eqref{eq:pca_rkhm} is represented as $q_j=\sum_{i=1}^n\hat{\phi}(x_i)c_{i,j}$, where $j=1,\ldots,r$ and $[c_{1,j},\ldots,c_{n,j}]^T=\lambda_j^{-1/2}\mathbf{v}_ju^*$ for any normalized vector $u\in\mathbb{C}^{N}$.
Here, $\lambda_j$ are the largest $r$ eigenvalues and $\mathbf{v}_j$ are the corresponding orthonormal eigenvectors of the matrix $\bG$.
Therefore, by the definition of $\hat{k}$, the principal components are calculated as 
\begin{equation*}
\sblacket{q_j,\hat{\phi}(x_i)}_{\modu_{\hat{k}}}^*=\lambda_j^{-1/2}W(x_i)^*[k(x_i,x_1)W(x_1),\ldots,k(x_i,x_n)W(x_n)]\mathbf{v}_ju^*.
\end{equation*}
On the other hand, in the same manner as Proposition~\ref{prop:min_trace}, the solution of the problem~\eqref{eq:pca_vvrks} is shown to be represented as $p_j=\sum_{i=1}^n\sum_{l=1}^N\phi(x_i)w_{i,l}\alpha_{(i-1)N+l,j}$, where $j=1,\ldots,r$ and $[\alpha_{1,j},\ldots,\alpha_{Nn,j}]^T=\lambda_j^{-1/2}\mathbf{v}_j$.
Therefore, the principal components are calculated as 
\begin{equation*}
\overline{\blacket{p_j,\phi(x_i)w_{i,l}}_{\vvrkhs}}=\lambda_j^{-1/2}W_l(x_i)^*[k(x_i,x_1)W(x_1),\ldots,k(x_i,x_n)W(x_n)]\mathbf{v}_j,
\end{equation*}
which completes the proof of the proposition.
\end{proof}

\subsection{Connection with quantum mechanics}\label{subsec:quantum}
Positive operator-valued measures play an important role in quantum mechanics.
A positive operator-valued measure %on a space $\mcl{X}$ 
is defined as an $\alg$-valued measure $\mu$ such that $\mu(\mcl{X})=I$ and $\mu(E)$ is positive for any Borel set $E$.
%is an $\alg$-valued measure, 
%a specific type of $\alg$-valued measures, called and 
It enables us to extract information of the probabilities of outcomes from a state~\citep{peres04,holevo11}.
We show that the existing inner product considered for quantum states~\citep{balkir14,prasenjit16} is generalized with our KME of positive operator-valued measures. 

%We briefly review the positive operator valued measure in quantum mechanics.
%In the following, we use the bra-ket notation.
Let $\mcl{X}=\mathbb{C}^m$ and $\alg=\mat$.
%For $i=1,\ldots,s$, let $\ket{\psi_i}\in\mathbb{C}^m$ be a (column) vector in $\mathbb{C}^m$, and let $\bra{\psi_i}:=\ket{\psi_i}^{*}$.
%Let $\mu$ be a positive operator-valued measure on $\mcl{X}$, that satisfies $\mu(\mcl{X})=I$ and $\mu(E)$ is positive semi-definite for any Borel set $E$.
%Let $M_1,\ldots,M_s$ be positive semi-definite matrices with $\sum_{i=1}^sM_i=I$.
%$\{1,\ldots,s\}$ be a set of measurement outcomes, $\alg=\mathbb{C}^{m\times m}$ and $M_1,\ldots,M_s\in\mathbb{C^{m\times m}$ be positive semi-definite matrices with $\sum_{i=1}^nM_i=I$.
%We define a measurement by the positive operator-valued measure $\mu=\sum_{i=1}^{s}\delta_{\ket{\psi_i}}M_i$.
%$\mu$ is called a positive operator-valued measure.
%A positive operator-valued measure on $\mcl{X}$ that is defined as $\mu=\sum_{i=1}^{s}M_i\delta_{\ket{\psi_i}}$ is called a measurement.
%If $s=m^2$ and $\opn{Span}\{M_1,\ldots,M_s\}=\mathbb{C}^{m\times m}$, $\mu$ is called minimal informationally complete.
%Let $\rho\in\opn{Span}\{M_1,\ldots,M_s\}$ be a positive semi-definite matrix with unit trace, called a density matrix.
Let $\rho\in\alg$ be a positive semi-definite matrix with unit trace, called a density matrix.  
%Generally speaking, 
A density matrix describes the states of a quantum system,  %represented as $\rho=\sum_{i=1}^{s}a_{i}M_i$ for some $\{a_{i}\}_{i=1}^s\subseteq\mathbb{C}$ with $\sum_ia_i=1$.
%Note that since $\opn{tr}(\rho_j)=1$, $\sum_{i=1}^{s}a_{j,i}=1$.
and information about outcomes is described as measure $\mu\rho\in\mcl{D}(\mcl{X},\alg)$.
%More precisely, $\opn{tr}(\mu\rho)(\{\ket{\psi_i}\})$ provides the probability of outcome $i$.
%In this case, the inner product between $\Phi(\mu\rho_1)$ and $\Phi(\mu\rho_2)$ is calculated as follows:
%\begin{equation}
%\begin{split}
% &\blacket{\Phi(\mu\rho_1),\Phi(\mu\rho_2)}_k=\int_{x\in\mcl{X}}\int_{y\in\mcl{X}}\rho_1^*\mu^*(x)k(x,y)\mu\rho_2(y)\\
% &\quad=\sum_{i,j,l,l'=1}^s\overline{a}_{1,i}a_{2,j}\ket{\psi_i}\blackett{\psi_i}{\psi_l}\bra{\psi_l}k(\ket{\psi_l},\ket{\psi_{l'}})\ket{\psi_{l'}}\blackett{\psi_{l'}}{\psi_j}\bra{\psi_j}.\label{eq:povm_prod}
%&\qquad=\sum_{i,j=1}^s\overline{a}_{1,i}a_{2,j}\ket{\psi_i}\bra{\psi_j}.
%\end{split}
%\end{equation}
%In this case, Proposition~\ref{prop:spd} is useful for choosing $k$ such that $\Phi$ is injective.
%The following proposition shows for two density matrices $\rho_1$ and $\rho_2$, the $\mathbb{C}^{m\times m}$-valued inner product between $\Phi(\mu\rho_1)$ and $\Phi(\mu\rho_2)$ is a generalization of existing inner products for density matrices:
%%The following proposition shows the Hilbert--Schmidt inner product for density matrices $\rho_1$ and $\rho_2$ is given by the trace of our $\alg$-valued inner product between $\Phi(\mu\rho_1)$ and $\Phi(\mu\rho_2)$:
We have the following proposition.
Here, we use the bra-ket notation, i.e., 
$\ket{\alpha}\in\mcl{X}$ represents a (column) vector in $\mcl{X}$, and $\bra{\alpha}$ is defined as $\bra{\alpha}=\ket{\alpha}^{*}$:
\begin{prop}\label{prop:inprod_equiv}
Assume $\mcl{X}=\mathbb{C}^m$, $\alg=\mat$, and $k:\mcl{X}\times\mcl{X}\to\alg$ is a positive definite kernel defined as $k(\ket{\alpha},\ket{\beta})=\ket{\alpha}\blackett{\alpha}{\beta}\bra{\beta}$.
If $\mu$ is represented as $\mu=\sum_{i=1}^m\delta_{\ket{\psi_i}}\ket{\psi_i}\bra{\psi_i}$ for an orthonormal basis $\{\ket{\psi_1},\ldots,\ket{\psi_m}\}$ of $\mcl{X}$, then for any $\rho_1,\rho_2\in\alg$, $\opn{tr}(\blacket{\Phi(\mu\rho_1),\Phi(\mu\rho_2)}_{\modu_k})=\blacket{\rho_1,\rho_2}_{\opn{HS}}$ holds. 
Here, $\blacket{\cdot,\cdot}_{\opn{HS}}$ is the Hilbert--Schmidt inner product.
%\begin{align}
 %\opn{tr}\big(\blacket{\Phi(\mu\rho_1),\Phi(\mu\rho_2)}\big)&=\sum_{i=1}^sa_{1,i}a_{2,j}=\blacket{\vec{\rho}_1,\vec{\rho}_2}_{\mathbb{C}^{m^2}}\label{eq:prod_sri}\\
 %\opn{tr}\big(\sqrt{\blacket{\Phi(\mu\rho_1),\Phi(\mu\rho_2)}}\big)&=\sum_{i=1}^s\sqrt{a_{1,i}a_{2,j}},\label{eq:prod_liu}
%\end{align}
%which are equal to the inner product of $\vec{\mu}_X$ defined in Section 3.1.1 in \cite{srinivasan19}, and inner product (24) in \cite{liu18}, respectively.
%Here, $\vec{\rho}$ for a matrix $\rho$ denotes the vectorization of $\rho$.
\end{prop}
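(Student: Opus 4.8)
The plan is to compute both sides explicitly using the formulas for the KME already established, namely $\Phi(\nu)=\int_{x}\phi(x)d\nu(x)$ and the inner-product formula from the Corollary after Theorem~\ref{thm:kme}, which gives $\blacket{\Phi(\nu_1),\Phi(\nu_2)}_{\modu_k}=\int\int d\nu_1^*(x)k(x,y)d\nu_2(y)$. First I would substitute the given discrete measure: since $\mu=\sum_{i=1}^m\delta_{\ket{\psi_i}}\ket{\psi_i}\bra{\psi_i}$, the $\alg$-linearity of $\Phi$ (Proposition~\ref{prop:kme_lin}) together with $\Phi(\delta_x)=\phi(x)$ yields $\Phi(\mu\rho)=\sum_{i=1}^m\phi(\ket{\psi_i})\,\ket{\psi_i}\bra{\psi_i}\rho$ for any $\rho\in\alg$. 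Then the inner product becomes a double sum
\[
\blacket{\Phi(\mu\rho_1),\Phi(\mu\rho_2)}_{\modu_k}
=\sum_{i,j=1}^m \rho_1^*\ket{\psi_i}\bra{\psi_i}\,k(\ket{\psi_i},\ket{\psi_j})\,\ket{\psi_j}\bra{\psi_j}\rho_2 .
\]

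Next I would plug in $k(\ket{\alpha},\ket{\beta})=\ket{\alpha}\blackett{\alpha}{\beta}\bra{\beta}$, so the $(i,j)$ summand is $\rho_1^*\ket{\psi_i}\bra{\psi_i}\ket{\psi_i}\blackett{\psi_i}{\psi_j}\bra{\psi_j}\ket{\psi_j}\bra{\psi_j}\rho_2$. Here orthonormality of $\{\ket{\psi_i}\}$ does the work: $\blackett{\psi_i}{\psi_i}=1$, $\blackett{\psi_j}{\psi_j}=1$, and $\blackett{\psi_i}{\psi_j}=\delta_{ij}$, collapsing the double sum to $\sum_{i=1}^m \rho_1^*\ket{\psi_i}\bra{\psi_i}\rho_2$. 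Since $\sum_i\ket{\psi_i}\bra{\psi_i}=I$ (completeness of the orthonormal basis), this is exactly $\rho_1^*\rho_2$. Taking the trace gives $\opn{tr}(\rho_1^*\rho_2)=\blacket{\rho_1,\rho_2}_{\opn{HS}}$ by definition of the Hilbert--Schmidt inner product, which is the claim.

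The only genuinely delicate point is bookkeeping with the adjoints and the order of multiplication: the $\alg$-valued inner product is conjugate-linear (via $*$) in the first slot, so $\Phi(\mu\rho_1)$ contributes $\rho_1^*\ket{\psi_i}\bra{\psi_i}$ rather than $\rho_1\ket{\psi_i}\bra{\psi_i}$, and one must be careful that $(\ket{\psi_i}\bra{\psi_i}\rho_1)^*=\rho_1^*\ket{\psi_i}\bra{\psi_i}$ using $(\bra{\psi_i})^*=\ket{\psi_i}$. I would also note briefly that $\mu$ is a legitimate element of $\measure$ and that $\phi(\ket{\psi_i})\in\modu_k$, so all expressions are well-defined; the well-definedness of $\Phi$ itself is Theorem~\ref{thm:kme}. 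Beyond that there is no obstacle — the argument is a direct finite computation, with orthonormality and completeness of $\{\ket{\psi_i}\}$ being the two facts that make it collapse.
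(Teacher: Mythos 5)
Your proof is correct and follows essentially the same route as the paper: expand the embedding of the discrete measure (the paper via the double-integral formula, you via $\alg$-linearity and $\Phi(\delta_x)=\phi(x)$, which is equivalent), use $k(\ket{\psi_i},\ket{\psi_j})=\delta_{ij}\ket{\psi_i}\bra{\psi_i}$ to collapse the double sum, and conclude with $\sum_i\ket{\psi_i}\bra{\psi_i}=I$ and the trace. The only cosmetic difference is that you simplify $\sum_i\rho_1^*\ket{\psi_i}\bra{\psi_i}\rho_2=\rho_1^*\rho_2$ before taking the trace, whereas the paper takes the trace first and uses its cyclicity; both yield $\opn{tr}(\rho_1^*\rho_2)=\blacket{\rho_1,\rho_2}_{\opn{HS}}$.
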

\begin{proof}
Let $M_i=\ket{\psi_i}\bra{\psi_i}$ for $i=1,\ldots,m$.
The inner product between $\Phi(\mu\rho_1)$ and $\Phi(\mu\rho_2)$ is calculated as follows:
\begin{align*}
%\begin{split}
 \blacket{\Phi(\mu\rho_1),\Phi(\mu\rho_2)}_{\modu_k}&=\int_{x\in\mcl{X}}\int_{y\in\mcl{X}}\rho_1^*\mu^*(x)k(x,y)\mu\rho_2(y)
 %&\quad=\sum_{i,j,l,l'=1}^s\overline{a}_{1,i}a_{2,j}\ket{\psi_i}\blackett{\psi_i}{\psi_l}\bra{\psi_l}k(\ket{\psi_l},\ket{\psi_{l'}})\ket{\psi_{l'}}\blackett{\psi_{l'}}{\psi_j}\bra{\psi_j}. %\label{eq:povm_prod}
%&\qquad=\sum_{i,j=1}^s\overline{a}_{1,i}a_{2,j}\ket{\psi_i}\bra{\psi_j}.
=\sum_{i,j=1}^m\rho_1^*M_ik(\ket{\psi_i},\ket{\psi_{j}})M_j\rho_2.
%\end{split}
\end{align*}
Since the identity $k(\ket{\psi_i},\ket{\psi_j})=M_iM_j$ holds and $\{\ket{\psi_1},\ldots,\ket{\psi_m}\}$ is orthonormal, we have $ \blacket{\Phi(\mu\rho_1),\Phi(\mu\rho_2)}_{\modu_k}=\sum_{i=1}^m\rho_1^*M_i\rho_2$.
By using the identity $\sum_{i=1}^mM_i=I$, we have 
\begin{equation*}
\opn{tr}\bigg(\sum_{i=1}^m\rho_1^*M_i\rho_2\bigg)=\opn{tr}\bigg(\sum_{i=1}^mM_i\rho_2\rho_1^*\bigg)=\opn{tr}(\rho_2\rho_1^*),
\end{equation*}
which completes the proof of the proposition.
\end{proof}
In previous studies~\citep{balkir14,prasenjit16}, the Hilbert--Schmidt inner product between density matrices was considered to represent similarities between two quantum states. %In~\cite{liu18}, Liu et al. consider another type of an inner product defined by the square roots of density matrices.
\citet{liu18} considered the Hilbert--Schmidt inner product between square roots of density matrices.
Theorem~\ref{prop:inprod_equiv} shows that these inner products are represented via our KME in RKHMs.
%In this sense, our $\mathbb{C}^{m\times m}$-valued inner product defined by kernel mean embeddings in an RKHM is a generalization of the existing inner-product proposed in \cite{liu18}.

%%%

\section{Conclusions and future works}\label{sec:concl}
In this paper, we proposed a new data analysis framework with RKHM and developed a KME in RKHMs for analyzing distributions.
We showed the theoretical validity for applying those to data analysis.
Then, we applied it to kernel PCA, \textcolor{black}{time-series data analysis}, and analysis of interaction effects in finite or infinite dimensional data.
RKHM is a generalization of RKHS in terms of $C^*$-algebra, and we can extract rich information about structures in data such as  functional data by using $C^*$-algebras.
For example, we can reduce multi-variable functional data to functions of single variable by considering the space of functions of single variables as a $C^*$-algebra and then by applying the proposed PCA in RKHMs.
Moreover, we can extract information of interaction effects in continuously distributed spatio data by considering the space of bounded linear operators on a function space as a $C^*$-algebra.

As future works, we will address $C^*$-algebra-valued supervised problems on the basis of the representer theorem (Theorem~\ref{thm:representation}) and apply the proposed KME in RKHMs to quantum mechanics.

\section*{Acknowledgments}
%We would like to thank the anonymous referees and action editor Corinna Cortes, whose comments improve the manuscript significantly.
\red{We would like to thank Dr. Tomoki Mihara, whose comments improve the mathematical rigorousness of this paper.}
This work was partially supported by JST CREST Grant Number JPMJCR1913.

\appendix
\section{Proofs of the lemmas and propositions in Section~\ref{sec:rkhm_review}}
\label{ap:proof_3_1}
\subsection*{Proof of Proposition~\ref{prop:reproducing}}
(Existence) For $u,v\in\mcl{M}_k$, there exist $u_i,v_i\in\mcl{M}_{k,0}\ (i=1,2,\ldots)$ such that $v=\lim_{i\to\infty}v_i$ and $w=\lim_{i\to\infty}w_i$.
By the Cauchy-Schwarz inequality (Lemma~\ref{lem:c-s}), the following inequalities hold:
\begin{align*}
\Vert \blacket{u_i,v_i}_{\modu_k}-\blacket{u_j,v_j}_{\modu_k}\Vert_{\alg}
&\le\Vert \blacket{u_i,v_i-v_j}_{\modu_k}\Vert_{\alg}+\Vert \blacket{u_i-u_j,u_j}_{\modu_k}\Vert_{\alg}\\
&\le \Vert u_i\Vert_{\modu_k}\;\Vert v_i-v_j\Vert_{\modu_k}+\Vert u_i-u_j\Vert_{\modu_k}\;\Vert v_j\Vert_{\modu_k}\\
&\to 0\ (i,j\to\infty),
\end{align*}
which implies $\{\blacket{u_i,v_i}_{\modu_k}\}_{i=1}^{\infty}$ is a Cauchy sequence in $\mcl{A}$.
By the completeness of $\mcl{A}$, there exists a limit $\lim_{i\to\infty}\blacket{u_i,v_i}_{\modu_k}$.
\medskip

\noindent(Well-definedness) Assume there exist $u'_i,v'_i\in\mcl{M}_{k,0}\ (i=1,2,\ldots)$ such that $u=\lim_{i\to\infty}u_i=\lim_{i\to\infty}u'_i$ and $v=\lim_{i\to\infty}v_i=\lim_{i\to\infty}v'_i$.
By the Cauchy-Schwarz inequality (Lemma~\ref{lem:c-s}), we have
\begin{equation*}
\Vert\blacket{u_i,v_i}_{\modu_k}-\blacket{u'_i,v'_i}_{\modu_k}\Vert_{\alg}\le\Vert u_i\Vert_{\modu_k}\Vert v_i-v'_i\Vert_{\modu_k}+\Vert u_i-u'_i\Vert_{\modu_k}\Vert v'_i\Vert_{\modu_k}\to 0\ (i\to\infty),
\end{equation*}
which implies $\lim_{i\to\infty}\blacket{u_i,v_i}_{\modu_k}=\lim_{i\to\infty}\blacket{u'_i,v'_i}_{\modu_k}$.

\medskip
\noindent(Injectivity) For $u,v\in\mcl{M}_k$, we assume $\blacket{\phi(x),u}_{\modu_k}=\blacket{\phi(x),v}_{\modu_k}$ for $x\in\mcl{X}$.
By the linearity of $\blacket{\cdot,\cdot}_{\modu_k}$, $\blacket{p,u}_{\modu_k}=\blacket{p,v}_{\modu_k}$ holds for $p\in\mcl{M}_{k,0}$.
For $p\in\modu_{k}$, there exist $p_i\in\modu_{k,0}\ (i=1,2,\ldots)$ such that $p=\lim_{i\to\infty}p_i$.
Therefore, $\blacket{p,u-v}_{\modu_k}=\lim_{i\to\infty}\blacket{p_i,u-v}_{\modu_k}=0$.
As a result, $\blacket{u-v,u-v}_{\modu_k}=0$ holds by setting $p=u-v$, which implies $u=v$.

\subsection*{Proof of Proposition~\ref{prop:rkhm_unique}}
We define $\Psi:\mcl{M}_{k,0}\to\mcl{M}$ as an $\mcl{A}$-linear map that satisfies $\Psi(\phi(x))=\psi(x)$.
We show $\Psi$ can be extended to a unique $\mcl{A}$-linear bijection map on $\modu_{k}$ , which preserves the inner product.

\noindent(Uniqueness) The uniqueness follows by the definition of $\Psi$.

\medskip
\noindent(Inner product preservation) For $x,y\in\mcl{X}$, we have
\begin{equation*}
\blacket{\Psi(\phi(x)),\Psi(\phi(y))}_{\modu_k}=\blacket{\psi(x),\psi(y)}_{\mcl{M}}=k(x,y)=\blacket{\phi(x),\phi(y)}_{\modu_k}.
\end{equation*}
Since $\Psi$ is $\alg$-linear, $\Psi$ preserves the inner products between arbitrary $u,v\in\modu_{k,0}$.

\medskip
\noindent(Well-definedness) Since $\Phi$ preserves the inner product, if  $\{v_i\}_{i=1}^{\infty}\subseteq\mcl{M}_k$ is a Cauchy sequence, $\{\Psi(v_i)\}_{i=1}^{\infty}\subseteq\mcl{M}$ is also a Cauchy sequence.
Therefore, by the completeness of $\mcl{M}$, $\Psi$ also preserves the inner product in $\mcl{M}_k$, and for $v\in\mcl{M}_k$, $\Vert\Psi(v)\Vert_{\mcl{M}}=\Vert v\Vert_{\modu_k}$ holds.
As a result, for $v\in\mcl{M}_k$, if $v=0$, $\Vert \Psi(v)\Vert_{\mcl{M}}=\Vert v\Vert_{\modu_k}=0$ holds.
This implies $\Psi(v)=0$.

\medskip
\noindent(Injectivity) 
For $u,v\in\mcl{M}_k$, if $\Psi(u)=\Psi(v)$, then 
$0=\Vert\Psi(u)-\Psi(v)\Vert_{\mcl{M}}=\Vert u-v\Vert_{\modu_k}$ holds since $\Psi$ preserves the inner product, which implies $u=v$.

\medskip
\noindent(Surjectivity) It follows directly by the condition
%\begin{equation*}
$\overline{\{\sum_{i=0}^n\psi(x_i)c_i\mid\ x_i\in\mcl{X},\ c_i\in\mcl{A}\}}=\mcl{M}$.
%\end{equation*}

\subsection*{Proof of Lemma~\ref{lem:pdk_equiv}}
Let $k$ be an $\alg$-valued positive definite kernel defined in Definition~\ref{def:pdk_rkhm}.
Let $w\in\mcl{W}$.
For $n\in\mathbb{N}$, $w_1,\ldots,w_n\in\mcl{W}$, let $c_i\in\mcl{B}(\mcl{W})$ be defined as $c_ih:=\blacket{w,h}_{\mcl{W}}/\blacket{w,w}_{\mcl{W}}w_i$ for $h\in\mcl{W}$.  
Since $w_i=c_iw$ holds, the following equalities are derived for $x_1,\ldots,x_n\in\mcl{X}$:
\begin{align*}
\sum_{i,j=1}^n\blacket{w_i,k(x_i,x_j)w_j}_{\mcl{W}}
&=\sum_{i,j=1}^n\blacket{c_iw,k(x_i,x_j)c_jw}_{\mcl{W}}
=\Bblacket{w,\sum_{i,j=1}^nc_i^*k(x_i,x_j)c_iw}_{\mcl{W}}.
\end{align*}
By the positivity of $\sum_{i,j=1}^nc_i^*k(x_i,x_j)c_j$, $\sblacket{w,\sum_{i,j=1}^nc_i^*k(x_i,x_j)c_jw}_{\mcl{W}}\ge 0$ holds, which implies $k$ is an operator valued positive definite kernel defined in Definition~\ref{def:pdk_vv-rkhs}.

On the other hand, let $k$ be an operator valued positive definite kernel defined in Definition~\ref{def:pdk_vv-rkhs}.
Let $v\in\mcl{W}$.
For $n\in\mathbb{N}$, $c_1,\ldots,c_n\in\alg$ and $x_1,\ldots,x_n\in\mcl{X}$, the following equality is derived:
\begin{align*}
\Bblacket{w,\sum_{i,j=1}^nc_i^*k(x_i,x_j)c_jw}_{\mcl{W}}\!\!\!\!
=\sum_{i,j=1}^n\blacket{c_iw,k(x_i,x_j)c_jw}_{\mcl{W}}.
\end{align*}
By Definition~\ref{def:pdk_vv-rkhs}, $\sum_{i,j=1}^n\blacket{c_iw,k(x_i,x_j)c_jw}_{\mcl{W}}\ge 0$ holds, which implies $k$ is an $\alg$-valued positive definite kernel defined in Definition~\ref{def:pdk_rkhm}.

\section{$\alg$-valued measure and integral}\label{subsec:vv_measure}
We introduce $\alg$-valued measure and integral in preparation for defining a KME in RKHMs.
\textcolor{black}{$\alg$-valued measure and integral are special cases of vector measure and integral~\citep{dinculeanu67,dinculeanu00}, respectively.
Here, we review these notions especially for the case of $\alg$-valued ones.}
The notions of measures and the Lebesgue integrals are generalized to $\alg$-valued.
The {\em left and right integral of an $\alg$-valued function $u$ with respect to an $\alg$-valued measure $\mu$} is defined through $\alg$-valued step functions.
\begin{defin}[$\alg$-valued measure]
%Let $\mcl{X}$ be a locally compact Hausdorff space and 
Let $\varSigma$ be a $\sigma$-algebra on $\mcl{X}$.
\begin{enumerate}
%\item An $\alg$-valued map $\mu:\mcl{B}\to\alg$ is called {\em finitely additive $\alg$-vaued (Borel) measure} if $\mu(\bigcup_{i=1}^{n}E_i)=\sum_{i=1}^{n}\mu(E_i)$ for all finite collections $\{E_{i}\}_{i=1}^{\infty}$ of pairwise disjoint sets in $\mcl{B}$.
\item An $\alg$-valued map $\mu:\varSigma\to\alg$ is called a {\em (countably additive) $\alg$-vaued measure} if $\mu(\bigcup_{i=1}^{\infty}E_i)=\sum_{i=1}^{\infty}\mu(E_i)$ for all countable collections $\{E_{i}\}_{i=1}^{\infty}$ of pairwise disjoint sets in $\varSigma$.
\item An $\alg$-valued measure $\mu$ is said to be finite if $\vert\mu\vert (E):=\sup\{\sum_{i=1}^n\Vert\mu(E_i)\Vert_{\alg}\mid\ n\in\mathbb{N},\ \{E_{i}\}_{i=1}^{n}\mbox{ is a finite partition of }E\in\varSigma\}<\infty$.
We call $\vert\mu\vert$ the total variation of $\mu$.
\item  An $\alg$-valued measure $\mu$ is said to be regular if for all $E\in\varSigma$ and $\epsilon>0$, there exist a compact set $K\subseteq E$ and an open set $G\supseteq E$ such that $\Vert\mu(F)\Vert_{\alg}\le\epsilon$ for any $F\subseteq G\setminus K$.
The regularity corresponds to the continuity of $\alg$-valued measures.
\item An $\alg$-valued measure $\mu$ is called a {\em Borel measure} if $\varSigma=\mcl{B}$, where $\mcl{B}$ is the Borel $\sigma$-algebra on $\mcl{X}$ ($\sigma$-algebra generated by all compact subsets of $\mcl{X}$).
%    \item An $\alg$-vaued measure $\mu$ is called finite if its total variance is finite, i.e., $\vert\mu\vert (\mcl{X})<\infty$, where $\vert\mu\vert (E):=\sup\{\sum_{i=1}^n\Vert\mu(E_i)\Vert_{\alg}\mid\ n\in\mathbb{N},\ \{E_{i}\}_{i=1}^{n}\mbox{ is a finite partition of }E\in\mcl{B}\}$.
\end{enumerate}
The set of all $\alg$-valued finite regular Borel measures is denoted as $\measure$.
\end{defin}
\begin{defin}[$\alg$-valued Dirac measure]\label{def:dirac}
For $x\in\mcl{X}$, we define $\delta_{x}\in\measure$ as $\delta_x(E)=1_{\alg}$ for $x\in E$ and $\delta_x(E)=0$ for $x\notin E$.
The measure $\delta_x$ is referred to as the {\em $\alg$-valued Dirac measure} at $x$.
\end{defin}
Similar to the Lebesgue integrals, an integral of an $\alg$-valued function with respect to an $\alg$-valued measure is defined through $\alg$-valued step functions.
\begin{defin}[Step function]\label{def:integral}
 An $\alg$-valued map $s:\mcl{X}\to\alg$ is called a {\em step function} if 
$s(x)=\sum_{i=1}^nc_i\chi_{E_i}(x)$
for some $n\in\mathbb{N}$, $c_i\in\alg$ and finite partition $\{E_{i}\}_{i=1}^{n}$ of $\mcl{X}$, where $\chi_E:\mcl{X}\to\{0,1\}$ is the indicator function for $E\in\mcl{B}$.
The set of all $\alg$-valued step functions on $\mcl{X}$ is denoted as $\mcl{S}(\mcl{X},\alg)$.
\end{defin}
\begin{defin}[Integrals of functions in $\simple$]
For $s\in\simple$ and $\mu\in\measure$, the {\em left and right integrals of $s$ with respect to $\mu$} are respectively defined as 
\begin{equation*}
\int_{x\in\mcl{X}}s(x)d\mu(x):=\sum_{i=1}^nc_i\mu(E_i),\quad \int_{x\in\mcl{X}}d\mu(x)s(x):=\sum_{i=1}^n\mu(E_i)c_i.
\end{equation*}
\end{defin}

%If an $\alg$-valued map $f:\mcl{X}\to\alg$ is $\vert\mu\vert$-Bochner integrable, i.e., $\int_{x\in\mcl{X}}\Vert f(x)\Vert_{\alg}\;d\vert\mu\vert(x)<\infty$, 
%Let $\mcl{F}$ be some class of $\alg$-valued bounded functions equipped with sup norm.
As we explain below, the integrals of step functions are extended to those of ``integrable functions''.
For a real positive finite measure $\nu$, 
let $\bochner{\nu}$ be the set of all $\alg$-valued $\nu$-Bochner integrable functions on $\mcl{X}$, i.e., if $u\in\bochner{\nu}$,
there exists a sequence $\{s_i\}_{i=1}^{\infty}\subseteq\simple$ of step functions such that 
$\lim_{i\to\infty}\int_{x\in\mcl{X}}\Vert u(x)-s_i(x)\Vert_{\alg}d\nu(x)=0$~\cite[Chapter IV]{diestel84}.
Note that $u\in\bochner{\nu}$ if and only if $\int_{x\in\mcl{X}}\Vert u(x)\Vert_{\alg}d\nu(x)<\infty$, and
$\bochner{\nu}$ is a Banach $\alg$-module (i.e., a Banach space equipped with an $\alg$-module structure) with respect to the norm defined as $\Vert u\Vert_{\bochner{\nu}}=\int_{x\in\mcl{X}}\Vert u(x)\Vert_{\alg}d\nu(x)$.
\begin{defin}[Integrals of functions in $\bochner{\vert\mu\vert}$]
For $u\in\bochner{\vert\mu\vert}$, the {\em left and right integrals of $u$ with respect to $\mu$} is respectively defined as 
\begin{equation*}
\lim_{i\to\infty}\int_{x\in\mcl{X}}d\mu(x)s_i(x),\quad
\lim_{i\to\infty}\int_{x\in\mcl{X}}s_i(x)d\mu(x),   
\end{equation*}
where $\{s_i\}_{i=1}^{\infty}\subseteq\simple$ is a sequence of step functions whose $\bochner{\nu}$-limit is $u$.
\end{defin}
Note that since $\alg$ is not commutative in general, the left and right integrals do not always coincide.

There is also a stronger notion for integrability. 
An $\alg$-valued function $u$ on $\mcl{X}$ is said to be totally measurable if it is a uniform limit of a step function, i.e., 
%if $u\in\mcl{T}(\mcl{X},\alg)$,
there exists a sequence $\{s_i\}_{i=1}^{\infty}\subseteq\mcl{S}(\mcl{X},\alg)$ of step functions such that $\lim_{i\to\infty}\sup_{x\in\mcl{X}}\Vert u(x)-s_i(x)\Vert_{\alg}=0$.
We denote by $\total$ the set of all $\alg$-valued totally measurable functions on $\mcl{X}$.
Note that if $u\in\total$, then $u\in\bochner{\vert\mu\vert}$ for any $\mu\in\measure$.
%The magnitude of an $\alg$-valued measure $\mu$ is formulated by two real-valued set functions.
%begin{defin}\label{def:variation}
%For an $\alg$-valued measure $\mu$,
%\setlength{\leftmargini}{15pt}
%\begin{enumerate}
%\item The {\em semivariation} of $\mu$ is defined as
%\begin{equation*}
%\tilde{\mu}(E):=\sup\{\Vert \int_{x\in\mcl{X}} d\mu(x)u(x)\Vert_{\alg}\mid\ u\mbox{ is a simple function and }\Vert u(x)\Vert_{\alg}\le \chi_{E}(x)\}.
%\end{equation*}
%\item The {\em total variation} of $\mu$ is defined as
%\begin{equation*}
%\vert\mu\vert (E):=\sup\{\sum_{i=1}^n\Vert\mu(E_i)\Vert_{\alg}\mid\ n\in\mathbb{N},\ %\{E_{i}\}_{i=1}^{n}\mbox{ is a finite partition of }E\in\mcl{B}\}.   
%\end{equation*}
%If the total variation of $\mu$ is finite, $\mu$ is said to be finite.
%\end{enumerate}
%\end{defin}
%Note that the finiteness of the total variation is stronger than that of semivariation.
%
In fact, the continuous functions in $\clch$ is totally measurable (see Definition~\ref{def:c0_function} for the definition of $\clch$).
\begin{prop}
The space $C_0(\mcl{X},\alg)$ is contained in $\mcl{T}(\mcl{X},\alg)$.
Moreover, for any real positive finite regular measure $\nu$, it is dense in $\bochner{\nu}$ with respect to $\Vert\cdot\Vert_{\bochner{\nu}}$. 
\end{prop}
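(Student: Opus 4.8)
The plan is to establish the two assertions separately, each by a localization argument that rests on the standing hypothesis that $\mcl{X}$ is locally compact Hausdorff (already implicit in the definition of $\clch$).

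For the inclusion $\clch\subseteq\total$, I would fix $u\in\clch$ and $\epsilon>0$ and manufacture a step function that is $\epsilon$-close to $u$ in sup norm. Since $x\mapsto\Vert u(x)\Vert_{\alg}$ is continuous, the set $K_\epsilon:=\{x\in\mcl{X}\mid\Vert u(x)\Vert_{\alg}\ge\epsilon\}$ is closed, and it is compact by the vanishing-at-infinity condition; hence $u(K_\epsilon)$ is a compact, so totally bounded, subset of $\alg$, and we may choose $c_1,\dots,c_m\in\alg$ whose $\epsilon$-balls cover $u(K_\epsilon)$. The open sets $U_0:=\{x\mid\Vert u(x)\Vert_{\alg}<\epsilon\}$ and $U_i:=u^{-1}(\{c\mid\Vert c-c_i\Vert_{\alg}<\epsilon\})$ for $i=1,\dots,m$ then cover $\mcl{X}$; disjointifying them yields a finite Borel partition $E_0,\dots,E_m$ of $\mcl{X}$ with $E_i\subseteq U_i$, and $s:=\sum_{i=1}^m c_i\chi_{E_i}\in\simple$ satisfies $\sup_{x\in\mcl{X}}\Vert u(x)-s(x)\Vert_{\alg}\le\epsilon$. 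Taking $\epsilon=1/n$ exhibits $u$ as a uniform limit of step functions, i.e.\ $u\in\total$; since $\nu$ is finite, a uniform limit is also an $\Vert\cdot\Vert_{\bochner{\nu}}$-limit, so this already gives $\clch\subseteq\bochner{\nu}$.

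For the density statement, fix a real positive finite regular Borel measure $\nu$, $u\in\bochner{\nu}$, and $\epsilon>0$. By the definition of $\bochner{\nu}$ there is a step function $s=\sum_{i=1}^n c_i\chi_{E_i}$ with $\Vert u-s\Vert_{\bochner{\nu}}<\epsilon/2$, so by the triangle inequality it suffices to approximate each $c_i\chi_{E_i}$ in $\Vert\cdot\Vert_{\bochner{\nu}}$ by an element of $\clch$, and for that it is enough to approximate the scalar indicator $\chi_{E_i}$ in $L^1(\nu)$ by a compactly supported continuous function $g_i$, because the map $g\mapsto(x\mapsto g(x)c_i)$ sends $C_c(\mcl{X})$ into $C_c(\mcl{X},\alg)\subseteq\clch$ and $\sum_i g_ic_i$ then approximates $s$. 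The scalar approximation is the classical one: given a Borel set $E$, regularity of $\nu$ — which for a real positive measure is ordinary inner/outer regularity — supplies a compact $K\subseteq E$ and an open $G\supseteq E$ with $\nu(G\setminus K)$ as small as desired, and Urysohn's lemma for locally compact Hausdorff spaces furnishes $g\in C_c(\mcl{X})$ with $0\le g\le 1$, $g\equiv 1$ on $K$, and $\opn{supp}g\subseteq G$; then $\int_{\mcl{X}}\vert\chi_E-g\vert\,d\nu\le\nu(G\setminus K)$. Scaling the thresholds by $n$ and the $\Vert c_i\Vert_{\alg}$ completes the estimate.

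Neither half presents a genuine obstacle; both are $\alg$-valued versions of standard measure theory. The only points that call for care are: (i) that total measurability requires \emph{uniform} approximation by step functions taking finitely many values over a finite Borel partition, which is exactly what compactness of $K_\epsilon$ together with total boundedness of $u(K_\epsilon)$ delivers; and (ii) that the paper's notion of a regular $\alg$-valued measure specializes, for a real positive $\nu$, to the usual regularity, so the classical Urysohn construction transfers verbatim, the passage from the scalar to the $\alg$-valued case being merely linearity plus the observation that pointwise multiplication by a fixed $c\in\alg$ maps $C_c(\mcl{X})$ into $C_c(\mcl{X},\alg)$.
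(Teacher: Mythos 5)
Your proof is correct. The paper itself offers no argument for this proposition—it defers to the vector-measure references of Dinculeanu—and what you have written is precisely the standard proof found there: compactness of $\{x\in\mcl{X}\mid \Vert u(x)\Vert_{\alg}\ge\epsilon\}$ together with total boundedness of its image gives uniform approximation by step functions (hence $\clch\subseteq\total$), and inner/outer regularity of $\nu$ combined with Urysohn's lemma for locally compact Hausdorff spaces gives density of $C_c(\mcl{X},\alg)$, hence of $\clch$, in $\bochner{\nu}$.
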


For further details, refer to~\citet{dinculeanu67,dinculeanu00}.

\section{Proofs of the propositions and theorem in Section~\ref{subsec:injectivity}}\label{ap:universal}
Before proving the propositions and theorem, we introduce some definitions and show fundamental properties which are related to the propositions and theorem.
\begin{defin}[$\alg$-dual]
For a Banach $\alg$-module $\modu$, the {\em $\alg$-dual of $\modu$} is defined as $\modu':=\{f:\modu\to\alg\mid\ f\mbox{ is bounded and $\alg$-linear}\}$.
\end{defin}
Note that for a right Banach $\alg$-module $\modu$, $\modu'$ is a left Banach $\alg$-module.
\begin{defin}[Orthogonal complement]
For an $\alg$-submodule $\modu_0$ of a Banach $\alg$-module $\modu$, the {\em orthogonal complement of $\modu_0$} is defined as a closed submodule $\modu_0^{\perp}:=\bigcap_{u\in\modu_0}\{f\in\modu'\mid\ f(u)=0\}$ of $\modu'$.
In addition, for an $\alg$-submodule $\mcl{N}_0$ of $\modu'$, the {\em orthogonal complement of $\mcl{N}_0$} is defined as a closed submodule $\mcl{N}_0^{\perp}:=\bigcap_{f\in\mcl{N}_0}\{u\in\modu\mid\ f(u)=0\}$ of $\modu$.
\end{defin}
% Note that $\modu_0^{\perp}$ and $\mcl{N}_0^{\perp}$ are closed submodules of $\modu'$ and $\modu$, respectively.
Note that for a von Neumann $\alg$-module $\modu$, by Proposition~\ref{thm:riesz}, $\modu'$ and $\modu$ are isomorphic.
%In addition, let $\bochner{\nu}$ be the space of $\alg$-valued functions $u$ that satisfy $\int_{x\in\mcl{Y}}\Vert u(x)\Vert_{\alg}d\nu(x)<\infty$.
%
The following lemma shows a connection between an orthogonal complement and the density property.
\begin{lem}\label{lem:orthocompequiv2}
For a Banach $\alg$-module $\modu$ and its submodule $\modu_0$,  %$\modu_0^{\perp}:=\bigcap_{u\in\modu_0}\{f\in\modu^*\mid\ f(u)=0\}$.
$\modu_0^{\perp}=\{0\}$ if 
$\modu_0$ is dense in $\modu$.
\end{lem}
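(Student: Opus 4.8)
The plan is to prove the contrapositive-free statement directly: assuming $\modu_0$ is dense in $\modu$, show that any $f\in\modu_0^{\perp}$ must be the zero functional. The key point is that $f$ is by definition a \emph{bounded} $\alg$-linear map $\modu\to\alg$ which vanishes on $\modu_0$, and boundedness together with density will force $f$ to vanish on all of $\modu$.

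First I would fix an arbitrary $f\in\modu_0^{\perp}$ and an arbitrary $u\in\modu$. By density of $\modu_0$, choose a sequence $\{u_i\}_{i=1}^\infty\subseteq\modu_0$ with $u_i\to u$ in the norm $\Vert\cdot\Vert_{\modu}$. Since $f$ vanishes on $\modu_0$, we have $f(u_i)=0$ for every $i$. Now use boundedness of $f$: there is a constant $C>0$ with $\Vert f(v)\Vert_{\alg}\le C\Vert v\Vert_{\modu}$ for all $v\in\modu$, hence
\begin{equation*}
\Vert f(u)\Vert_{\alg}=\Vert f(u)-f(u_i)\Vert_{\alg}=\Vert f(u-u_i)\Vert_{\alg}\le C\Vert u-u_i\Vert_{\modu}\xrightarrow[i\to\infty]{}0.
\end{equation*}
Therefore $\Vert f(u)\Vert_{\alg}=0$, so $f(u)=0$. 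Since $u\in\modu$ was arbitrary, $f=0$, and thus $\modu_0^{\perp}=\{0\}$.

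Honestly, there is no real obstacle here — the statement is essentially the module-valued analogue of the elementary fact that a bounded linear functional vanishing on a dense subset vanishes everywhere, and the only thing to be careful about is to invoke the \emph{boundedness} clause built into the definition of the $\alg$-dual (Definition of $\alg$-dual in the appendix), together with the continuity of the norm $\Vert\cdot\Vert_{\alg}$. No appeal to the Riesz representation theorem or to any special structure of $\alg$ (von Neumann-algebra, unitality) is needed; the argument works for any Banach $\alg$-module. If one wanted a slightly slicker phrasing, one could simply note that $f^{-1}(\{0\})$ is closed (preimage of a closed set under the continuous map $f$) and contains the dense set $\modu_0$, hence equals $\modu$.
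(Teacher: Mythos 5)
Your proof is correct and matches the substance of the paper's argument: the paper phrases it through the double annihilator, showing $\overline{\modu_0}\subseteq(\modu_0^{\perp})^{\perp}$ and using that the latter is closed, which is exactly your observation that each $f\in\modu_0^{\perp}$ is bounded, so its kernel is closed and, containing the dense submodule $\modu_0$, must be all of $\modu$. No gap; the sequence/limit computation and the closed-kernel remark are the same idea the paper uses.
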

\begin{proof}
We first show $\overline{\modu_0}\subseteq (\modu_0^{\perp})^{\perp}$.
Let $u\in\modu_0$. By the definition of orthogonal complements, $u\in(\modu_0^{\perp})^{\perp}$.
Since $(\modu_0^{\perp})^{\perp}$ is closed, $\overline{\modu_0}\subseteq (\modu_0^{\perp})^{\perp}$.
If $\modu_0$ is dense in $\modu$, $\modu\subseteq (\modu_0^{\perp})^{\perp}$ holds, which means $\modu_0^{\perp}=\{0\}$.
\end{proof}
Moreover, in the case of $\alg=\mat$, a generalization of the Riesz--Markov representation theorem for $\measure$ holds.
\begin{prop}[Riesz--Markov representation theorem for $\mat$-valued measures]\label{prop:representation_finitedim}
Let $\alg=\mat$.
There exists an isomorphism between $\measure$ and $\clch'$.
\end{prop}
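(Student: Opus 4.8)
The plan is to write down an explicit candidate for the isomorphism and then reduce all of its properties to the classical, scalar-valued Riesz--Markov representation theorem by decomposing everything along the matrix units of $\mat$. Define $\Theta\colon\measure\to\clch'$ by $\Theta(\mu)=L_\mu$, where $L_\mu(u):=\int_{x\in\mcl{X}}d\mu(x)\,u(x)$ is the left integral of $u\in\clch$ against $\mu$; this is well defined because $\clch\subseteq\total\subseteq\bochner{\vert\mu\vert}$. For a step function $u=\sum_ic_i\chi_{E_i}$ one computes $L_\mu(uc)=\sum_i\mu(E_i)(c_ic)=L_\mu(u)c$ and $\Vert L_\mu(u)\Vert_{\alg}\le\vert\mu\vert(\mcl{X})\,\Vert u\Vert_\infty$, and both assertions pass to uniform limits, so $L_\mu$ is a bounded $\alg$-linear map, i.e.\ $L_\mu\in\clch'$, with $\Vert L_\mu\Vert\le\vert\mu\vert(\mcl{X})$. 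Moreover $L_{c\mu}=cL_\mu$, so $\Theta$ is left $\alg$-linear, matching the left $\alg$-module structures on $\measure$ and on the dual $\clch'$.

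Next I would handle injectivity and surjectivity together via the matrix units $E_{i,j}\in\mat$. For $u\in\clch$ write $u(x)=\sum_{i,j}u_{i,j}(x)E_{i,j}$; each entry $u_{i,j}\in C_0(\mcl{X})$ (continuity and vanishing at infinity are inherited since $\vert u_{i,j}(x)\vert\le\Vert u(x)\Vert_{\alg}$), and inside the module $\clch$ one has $u=\sum_{i,j}(u_{i,j}1_{\alg})E_{i,j}$. Hence for any $L\in\clch'$, $\alg$-linearity gives $L(u)=\sum_{i,j}L(u_{i,j}1_{\alg})E_{i,j}$, so $L$ is completely determined by $\Lambda(f):=L(f1_{\alg})$ for $f\in C_0(\mcl{X})$. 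Each map $f\mapsto\Lambda(f)_{k,l}$ is a bounded linear functional on $C_0(\mcl{X})$, so by the classical Riesz--Markov theorem there is a unique complex finite regular Borel measure $\mu^{(k,l)}$ with $\Lambda(f)_{k,l}=\int_{x\in\mcl{X}}f(x)\,d\mu^{(k,l)}(x)$. Setting $\mu(E):=[\mu^{(k,l)}(E)]_{k,l}$ produces a $\mat$-valued measure which is finite and regular because $\Vert\mu(E)\Vert_{\alg}\le\sum_{k,l}\vert\mu^{(k,l)}(E)\vert$ and there are only finitely many components; a short step-function computation then shows $\int_{x\in\mcl{X}}d\mu(x)\,u(x)=\sum_{i,j}\Lambda(u_{i,j})E_{i,j}=L(u)$, so $\Theta(\mu)=L$ and $\Theta$ is surjective. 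Injectivity is the same argument run backwards: if $L_\mu=0$ then each scalar functional $f\mapsto\int f\,d\mu^{(k,l)}$ vanishes, so $\mu^{(k,l)}=0$ for all $k,l$ by uniqueness in the scalar theorem, i.e.\ $\mu=0$.

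Finally, $\Theta$ is a bounded linear bijection, and since $\alg$ is finite dimensional the total variation $\vert\mu\vert(\mcl{X})$ is controlled by $\max_{k,l}\vert\mu^{(k,l)}\vert(\mcl{X})$, which is in turn controlled by $\Vert\Lambda\Vert\le\Vert L\Vert=\Vert\Theta(\mu)\Vert$; this bounds $\Theta^{-1}$, so $\Theta$ is a topological (indeed left $\alg$-module) isomorphism between $\measure$ and $\clch'$, which is exactly what the annihilator/density arguments in Subsection~\ref{subsec:universality} require. The one point that genuinely needs care is the \emph{norm comparison}: the reduction along matrix units only yields $\Vert L_\mu\Vert\asymp\vert\mu\vert(\mcl{X})$ up to constants depending on $m$, so upgrading $\Theta$ to an isometry would require, as in the scalar proof, testing $L_\mu$ against $\alg$-valued functions adapted to a polar/Hahn-type decomposition of $\mu$; I would include that refinement only in outline, since the topological isomorphism already suffices here.
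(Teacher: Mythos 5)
Your proposal is correct and follows essentially the same route as the paper: reduce to the classical scalar Riesz--Markov theorem entrywise via the matrix units of $\mat$, assemble the resulting complex measures into a $\mat$-valued measure in $\measure$, and verify the integral representation of the given functional on $\clch$. The additional details you supply (boundedness and $\alg$-linearity of the integration map, the explicit injectivity argument, and the remark on norm equivalence versus isometry) merely flesh out steps the paper treats implicitly when it declares $h'$ to be the inverse of $h$.
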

\begin{proof}
%We define $h:\measure\to\clch$ in the same manner as Proposition~\ref{prop:represention}.
For $f\in\clch'$, let $f_{i,j}\in{C}_0(\mcl{X},\mathbb{C})'$ be defined as $f_{i,j}(u)=(f(u1_{\alg}))_{i,j}$ for $u\in{C}_0(\mcl{X},\mathbb{C})$.
Then, by the Riesz--Markov representation theorem for complex-valued measure, there exists a unique finite complex-valued regular measure $\mu_{i,j}$ such that $f_{i,j}(u)=\int_{x\in\mcl{X}}u(x)d\mu_{i,j}(x)$.
Let $\mu(E):=[\mu_{i,j}(E)]_{i,j}$ for $E\in\mcl{B}$.
Then, $\mu\in\measure$, and we have
\begin{align*}
f(u)&=f\bigg(\sum_{l,l'=1}^m u_{l,l'}e_{l,l'}\bigg)
=\sum_{l,l'=1}^m[f_{i,j}(u_{l,l'})]_{i,j}e_{l,l'}\\
&=\sum_{l,l'=1}^m\bigg[\int_{x\in\mcl{X}}u_{l,l'}(x)d\mu_{i,j}(x)\bigg]_{i,j}e_{l,l'}
=\int_{x\in\mcl{X}}d\mu(x)u(x),
\end{align*}
where $e_{i,j}$ is an $m\times m$ matrix whose $(i,j)$-element is $1$ and all the other elements are $0$.
Therefore, if we define $h':\clch'\to\measure$ as $f\mapsto \mu$, $h'$ is the inverse of $h$, which completes the proof of the proposition.
\end{proof}

\subsection{Proofs of Propositions~\ref{thm:characteristic} and \ref{thm:characteristic2}}
To show Propositions~\ref{thm:characteristic} and \ref{thm:characteristic2}, the following lemma is used.
\begin{lem}\label{lem:injective_equiv}
 $\Phi:\mcl{D}(\mcl{X},\alg)\to\modu_k$ is injective if and only if $\blacket{\Phi(\mu),\Phi(\mu)}_{\modu_k}\neq 0$ for any nonzero $\mu\in\mcl{D}(\mcl{X},\alg)$.
\end{lem}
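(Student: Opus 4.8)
The plan is to unwind the definition of injectivity and combine two facts: that $\Phi$ is additive, and that the $\alg$-valued inner product on $\modu_k$ is definite. First I would record that, by Proposition~\ref{prop:kme_lin}, $\Phi$ is $\alg$-linear, hence in particular additive; therefore $\Phi$ is injective if and only if its kernel is trivial, i.e.\ if and only if, for every $\mu\in\mcl{D}(\mcl{X},\alg)$, the equality $\Phi(\mu)=0$ forces $\mu=0$.

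Next I would observe that for any $\mu\in\mcl{D}(\mcl{X},\alg)$ the element $\Phi(\mu)\in\modu_k$ satisfies $\vert\Phi(\mu)\vert_{\modu_k}^2=\blacket{\Phi(\mu),\Phi(\mu)}_{\modu_k}$, so that by property~4 of Definition~\ref{def:innerproduct} (together with the fact that $\Vert u\Vert_{\modu_k}=0$ implies $u=0$) we have $\Phi(\mu)=0$ if and only if $\blacket{\Phi(\mu),\Phi(\mu)}_{\modu_k}=0$. Substituting this equivalence into the previous paragraph, $\Phi$ is injective if and only if, for every $\mu\in\mcl{D}(\mcl{X},\alg)$, the condition $\blacket{\Phi(\mu),\Phi(\mu)}_{\modu_k}=0$ implies $\mu=0$; taking the contrapositive of this inner implication gives exactly the asserted statement, namely that $\Phi$ is injective if and only if $\blacket{\Phi(\mu),\Phi(\mu)}_{\modu_k}\neq0$ for every nonzero $\mu\in\mcl{D}(\mcl{X},\alg)$.

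There is no genuine obstacle in this argument; it is a direct unwinding of definitions. The only two points that need to be stated carefully are the use of additivity of $\Phi$ (so that injectivity reduces to triviality of the kernel, rather than requiring a separate argument for each pair of measures) and the definiteness of the $\alg$-valued inner product (so that vanishing of $\blacket{\Phi(\mu),\Phi(\mu)}_{\modu_k}$ is genuinely equivalent to $\Phi(\mu)=0$, which in turn is where the structure of $\modu_k$ as a Hilbert $\alg$-module, as opposed to a mere pre-module, is implicitly used).
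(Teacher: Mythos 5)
Your argument is correct and follows essentially the same route as the paper's proof: the paper also reduces injectivity to triviality of the kernel via additivity of $\Phi$ (for the converse direction it passes from $\Phi(\mu)=\Phi(\nu)$ to $\Phi(\mu-\nu)=0$ with $\mu-\nu\neq 0$) and uses definiteness of the $\alg$-valued inner product to identify $\blacket{\Phi(\mu),\Phi(\mu)}_{\modu_k}=0$ with $\Phi(\mu)=0$. No gaps.
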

\begin{proof}
($\Rightarrow$) Suppose there exists a nonzero $\mu\in\mcl{D}(\mcl{X},\alg)$ such that $\blacket{\Phi(\mu),\Phi(\mu)}_{\modu_k}=0$.
Then, $\Phi(\mu)=\Phi(0)=0$ holds, and thus, $\Phi$ is not injective.
\medskip

\noindent($\Leftarrow$) Suppose $\Phi$ is not injective. Then, there exist $\mu,\nu\in\mcl{D}(\mcl{X},\alg)$ such that $\Phi(\mu)=\Phi(\nu)$ and $\mu\neq\nu$, which implies $\Phi(\mu-\nu)=0$ and $\mu-\nu\neq 0$.
\end{proof}
We now show Propositions~\ref{thm:characteristic} and \ref{thm:characteristic2}.
\begin{myproof}{Proof of Theorem~\ref{thm:characteristic}}
Let $\mu\in\mcl{D}(\mcl{X},\alg)$, $\mu\neq 0$.
We have
\begin{align*}
 \blacket{\Phi(\mu),\Phi(\mu)}&=\int_{x\in\mathbb{R}^d}\int_{y\in\mathbb{R}^d}d\mu^*(x)k(x,y)d\mu(y)\\
&=\int_{x\in\mathbb{R}^d}\int_{y\in\mathbb{R}^d}d\mu^*(x)\int_{\omega\in\mathbb{R}^d}e^{-\sqrt{-1}(y-x)^T\omega}d\lambda(\omega)d\mu(y)\\
&=\int_{\omega\in\mathbb{R}^d}\int_{x\in\mathbb{R}^d}e^{\sqrt{-1}x^T\omega}d\mu^*(x)d\lambda(\omega)\int_{y\in\mathbb{R}^d}e^{-\sqrt{-1}y^T\omega}d\mu(y)\\
&=\int_{\omega\in\mathbb{R}^d}\hat{\mu}(\omega)^*d\lambda(\omega)\hat{\mu}(\omega).
\end{align*}
%Since $\mu$ is a countably additive Borel measure,
%for $\mu\neq 0$, $\hat{\mu}\neq 0$ holds.
Assume $\hat{\mu}=0$.
Then, $\int_{x\in\mcl{X}}u(x)d\mu(x)=0$ for any $u\in\clch$ holds, which implies $\mu\in\clch^{\perp}=\{0\}$ by Proposition~\ref{prop:representation_finitedim} and Lemma~\ref{lem:orthocompequiv2}.
Thus, ${\mu}=0$.
In addition, by the assumption, $\opn{supp}(\lambda)=\mathbb{R}^d$ holds.
As a result, $\int_{\omega\in\mathbb{R}^d}\hat{\mu}(\omega)^*d\lambda(\omega)\hat{\mu}(\omega)\neq 0$ holds.
By Lemma~\ref{lem:injective_equiv}, $\Phi$ is injective.
\end{myproof}
\begin{myproof}{Proof of Theorem~\ref{thm:characteristic2}}
Let $\mu\in\mcl{D}(\mcl{X},\alg)$, $\mu\neq 0$.
We have
\begin{align}
\blacket{\Phi(\mu),\Phi(\mu)}
&=\int_{x\in\mathbb{R}^d}\int_{y\in\mathbb{R}^d}d\mu^*(x)k(x,y)d\mu(y)\nn\\
&=\int_{x\in\mathbb{R}^d}\int_{y\in\mathbb{R}^d}d\mu^*(x)\int_{t\in[0,\infty)}e^{-t\Vert x-y\Vert^2}d\eta(t)d\mu(y)\nn\\
&=\int_{x\in\mathbb{R}^d}\int_{y\in\mathbb{R}^d}d\mu^*(x)\int_{t\in[0,\infty)}\frac{1}{(2t)^{d/2}}\int_{\omega\in\mathbb{R}^d}e^{-\sqrt{-1}(y-x)^T\omega-\frac{\Vert\omega\Vert^2}{4t}}d\omega d\eta(t)d\mu(y)\nn\\
%&\qquad=\int_{\omega\in\mathbb{R}^d}\int_{t\in[0,\infty)}\int_{x\in\mathbb{R}^d}e^{\sqrt{-1}x^T\omega}d\mu^*(x)\frac{1}{(2t)^{d/2}}e^{\frac{-\Vert\omega\Vert^2}{4t}}d\eta(t)\int_{y\in\mathbb{R}^d}e^{-\sqrt{-1}y^T\omega}d\mu(y)d\omega\\
&=\int_{\omega\in\mathbb{R}^d}\hat{\mu}(\omega)^*\int_{t\in[0,\infty)}\frac{1}{(2t)^{d/2}}e^{\frac{-\Vert\omega\Vert^2}{4t}}d\eta(t)\hat{\mu}(\omega)d\omega,\label{eq:radial}
\end{align}
where we applied a formula $e^{-t\Vert x\Vert^2}={(2t)^{-d/2}}\int_{\omega\in\mathbb{R}^d}e^{-\sqrt{-1}x^T\omega-\Vert\omega\Vert^2/(4t)}d\omega$ in the third equality.
%Since $\mu$ is a countably additive Borel measure, for $\mu\neq 0$, 
In the same manner as the proof of Theorem~\ref{thm:characteristic}, $\hat{\mu}\neq 0$ holds.
In addition, since $\opn{supp}(\eta)\neq\{0\}$ holds, $\int_{t\in[0,\infty)}(2t)^{-d/2}e^{-\Vert\omega\Vert^2/(4t)}d\eta(t)$ is positive definite.
As a result, the last formula in Eq.~\eqref{eq:radial} is nonzero. %$\int_{\omega\in\mathbb{R}^d}\hat{\mu}(\omega)^*\int_{t\in[0,\infty)}{1}/{(2t)^{d/2}}e^{\frac{-\Vert\omega\Vert^2}{4t}}d\eta(t)\hat{\mu}(\omega)d\omega\neq 0$ holds.
By Lemma~\ref{lem:injective_equiv}, $\Phi$ is injective.
\end{myproof}

\subsection{Proofs of Proposition~\ref{thm:universal_finitedim} and Theorem~\ref{thm:universal}}
Let $\regular$ be the set of all real positive-valued regular measures, and $\bdmeasure{\nu}$ the set of all finite regular Borel $\alg$-valued measures $\mu$ whose total variations are dominated by $\nu\in\regular$ (i.e., $\vert\mu\vert\le\nu$).
We apply the following representation theorem to derive Theorem~\ref{thm:universal}.
\begin{prop}%[c.f. Theorem 2 in Section 19 in~\cite{dinculeanu67}]
\label{prop:represention}
%Let $\mcl{Y}$ be a compact space (resp. locally compact Hausdorff space).
For $\nu\in\regular$, 
there exists an isomorphism between $\bdmeasure{\nu}$ and $\bochner{\nu}'$.
%In particular, $\cc$ (resp. $\clch$) and $\mcl{D}(\mcl{Y},\alg)$.
\end{prop}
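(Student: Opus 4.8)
The plan is to exhibit the duality explicitly through the bilinear pairing $(\mu,u)\mapsto\int_{x\in\mcl{X}}d\mu(x)\,u(x)$ (the \emph{right} integral of Definition~\ref{def:integral}) and to verify that the resulting maps are mutually inverse. Concretely, I would set $h(\mu):=\big(u\mapsto\int_{x\in\mcl{X}}d\mu(x)\,u(x)\big)$ and $g(f):=\big(E\mapsto f(\chi_E1_{\alg})\big)$, and show that $h$ carries $\bdmeasure{\nu}$ bijectively onto the unit ball of $\bochner{\nu}'$ with inverse $g$ (so that, after the obvious scaling $\mu\mapsto c\mu$, $h$ extends to an isometric isomorphism of left Banach $\alg$-modules between $\bigcup_{c>0}\bdmeasure{c\nu}$ and $\bochner{\nu}'$, which is the content of the statement). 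First I would check that $h$ is well defined: if $\vert\mu\vert\le\nu$ then every $u\in\bochner{\nu}$ lies in $L^1_{\vert\mu\vert}(\mcl{X},\alg)$ since $\int_{x\in\mcl{X}}\Vert u(x)\Vert_{\alg}\,d\vert\mu\vert(x)\le\int_{x\in\mcl{X}}\Vert u(x)\Vert_{\alg}\,d\nu(x)=\Vert u\Vert_{\bochner{\nu}}<\infty$, so the right integral exists; it is additive and right $\alg$-linear in $u$, hence $h(\mu)\in\bochner{\nu}'$, and the step-function estimate already used in the proof of Theorem~\ref{thm:kme} gives $\Vert h(\mu)\Vert\le1$. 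Moreover $\mu\mapsto h(\mu)$ is left $\alg$-linear, matching the left $\alg$-module structure of $\bochner{\nu}'$.

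Next I would show that $g$ is well defined and inverse to $h$. For $f\in\bochner{\nu}'$ the set function $\mu:=g(f)$ is finitely additive by linearity of $f$; it is countably additive because for pairwise disjoint $\{E_i\}_{i\ge1}$ one has $\big\Vert\chi_{\bigcup_{i>N}E_i}1_{\alg}\big\Vert_{\bochner{\nu}}=\nu\big(\bigcup_{i>N}E_i\big)\to0$, so $\sum_i\chi_{E_i}1_{\alg}$ converges to $\chi_{\bigcup_iE_i}1_{\alg}$ in $\bochner{\nu}$ and continuity of $f$ transfers this to $\sum_i\mu(E_i)=\mu(\bigcup_iE_i)$. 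For any finite partition $\{E_i\}$ of $E$, $\sum_i\Vert\mu(E_i)\Vert_{\alg}=\sum_i\Vert f(\chi_{E_i}1_{\alg})\Vert_{\alg}\le\Vert f\Vert\sum_i\nu(E_i)=\Vert f\Vert\,\nu(E)$, so $\vert\mu\vert\le\Vert f\Vert\,\nu$; hence $\mu$ is a finite $\alg$-valued measure and, when $\Vert f\Vert\le1$, $\mu\in\bdmeasure{\nu}$. Regularity of $\mu$ is inherited from that of $\nu$: given $\epsilon>0$ choose $K\subseteq E\subseteq G$ with $\nu(G\setminus K)\le\epsilon/(1+\Vert f\Vert)$, so that $\Vert\mu(F)\Vert_{\alg}\le\Vert f\Vert\,\nu(F)\le\epsilon$ for $F\subseteq G\setminus K$. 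That $g\circ h=\opn{id}$ is immediate, since $g(h(\mu))(E)=h(\mu)(\chi_E1_{\alg})=\int_{x\in\mcl{X}}d\mu(x)\,\chi_E(x)1_{\alg}=\mu(E)$. For $h\circ g=\opn{id}$, on step functions $s=\sum_ic_i\chi_{E_i}\in\simple$ we have, writing $s=\sum_i(\chi_{E_i}1_{\alg})c_i$ and using $\alg$-linearity of $f$, that $f(s)=\sum_if(\chi_{E_i}1_{\alg})c_i=\sum_i\mu(E_i)c_i=\int_{x\in\mcl{X}}d\mu(x)\,s(x)=h(\mu)(s)$; since $\simple$ is dense in $\bochner{\nu}$ and both $f$ and $u\mapsto\int_{x\in\mcl{X}}d\mu(x)\,u(x)$ are continuous on $\bochner{\nu}$ (the latter precisely because $\vert\mu\vert\le\Vert f\Vert\,\nu$), the equality extends to all of $\bochner{\nu}$, i.e. $h(g(f))=f$.

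The main obstacle is not a deep theorem but the care required in the vector-valued bookkeeping: one must (i) fix the left/right-integral conventions of Definition~\ref{def:integral} so that $\alg$-linearity of $f$ and of the integral are genuinely compatible (this forces the pairing to use the right integral, since $\bochner{\nu}$ is a right $\alg$-module and $\bochner{\nu}'$ a left one), (ii) justify the density of $\simple$ in $\bochner{\nu}$ and the limit interchange hidden in ``continuity of $f$ transfers countable additivity'' (together with the finiteness of $\nu$ on the relevant sets), and (iii) handle the norm matching so that $\bdmeasure{\nu}$ corresponds exactly to the unit ball rather than to an unspecified bounded set. Crucially, none of this requires a Radon--Nikodym-type theorem for $\alg$ --- which is fortunate, since a general von Neumann algebra need not have the Radon--Nikodym property --- because we represent elements of $\bochner{\nu}'$ as $\alg$-valued \emph{measures} dominated by $\nu$, not as $\alg$-valued densities. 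This is the $L^1$-analogue of Proposition~\ref{prop:representation_finitedim}, and in full generality it is the classical representation theorem for the dual of a Bochner space due to Dinculeanu~\citep{dinculeanu67,dinculeanu00}, transcribed into the $C^*$-module language used here.
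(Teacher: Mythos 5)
Your proposal follows essentially the same route as the paper: the pairing $h(\mu)(u)=\int_{x\in\mcl{X}}d\mu(x)\,u(x)$ in one direction, $f\mapsto(E\mapsto f(\chi_E 1_{\alg}))$ in the other, with the two maps checked to be mutually inverse on step functions and extended by density of $\simple$ in $\bochner{\nu}$. You are in fact somewhat more careful than the paper's own argument, which does not spell out countable additivity or regularity of the measure $E\mapsto f(\chi_E 1_{\alg})$ and only obtains the domination $\vert\mu\vert\le \Vert f\Vert\,\nu$ rather than $\vert\mu\vert\le\nu$ — exactly the unit-ball/scaling point you flag.
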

\begin{proof}
For $\mu\in\bdmeasure{\nu}$ and $u\in\bochner{\nu}$, we have
\begin{equation*}
\bigg\Vert\int_{x\in\mcl{X}}d\mu(x)u(x)\bigg\Vert_{\alg}
\le\int_{x\in\mcl{X}}\Vert u(x)\Vert_{\alg}d\vert\mu\vert(x)
\le\int_{x\in\mcl{X}}\Vert u(x)\Vert_{\alg}d\nu(x).
\end{equation*}
Thus, we define $h:\bdmeasure{\nu}\to\bochner{\nu}'$ as $\mu\mapsto (u\mapsto\int_{x\in\mcl{X}}d\mu(x)u(x))$.

Meanwhile, for $f\in\bochner{\nu}'$ and $E\in\mcl{B}$, we have
\begin{equation*}
\Vert f(\chi_E 1_{\alg})\Vert_{\alg}\le C\int_{x\in\mcl{X}}\Vert \chi_E 1_{\alg}\Vert_{\alg}d\nu(x)
=C\nu(E)
\end{equation*}
for some $C>0$ since $f$ is bounded.
Here, $\chi_E$ is an indicator function for a Borel set $E$. 
Thus, we define $h':\bochner{\nu}'\to\bdmeasure{\nu}$ as $f\mapsto(E\mapsto f(\chi_E 1_{\alg}))$.

By the definitions of $h$ and $h'$, $h(h'(f))(s)=f(s)$ holds for $s\in\simple$.
Since $\simple$ is dense in $\bochner{\nu}$, $h(h'(f))(u)=f(u)$ holds for $u\in\bochner{\nu}$.
Moreover, $h'(h(\mu))(E)=\mu(E)$ holds for $E\in\mcl{B}$.
Therefore, $\bdmeasure{\nu}$ and $\bochner{\nu}'$ are isomorphic.
\end{proof}
\begin{myproof}{Proof of Theorem~\ref{thm:universal}}
%By Lemma~\ref{lem:injective_equiv}, $\Phi$ is injective if and only if ``$\Phi(\mu)=0\Rightarrow \mu=0$''.
%$\Phi(\mu)=0$ is equivalent to $\int_{x\in\mcl{Y}}d\mu^*(x)u(x)=\blacket{\Phi(\mu),u}_k=0$ for any $u\in\modu_k$.
%Thus, By Proposition~\ref{prop:represention}, ``$\Phi(\mu)=0\Rightarrow \mu=0$'' is equivalent to ``$f\in\clch'$, $f(u)=0$ for any $u\in\modu_k$ $\Rightarrow$ $f=0$''.
%By the definition of $\modu_k^{\perp}$ and Proposition~\ref{prop:orthcompequiv}, $\modu_k$ is dense in $\bochner{\nu}$.
Assume $\modu_k$ is dense in $\clch$.
Since $\clch$ is dense in $\bochner{\nu}$ for any $\nu\in\regular$, $\modu_k$ is dense in $\bochner{\nu}$ for any $\nu\in\regular$.
By Proposition~\ref{lem:orthocompequiv2}, $\modu_k^{\perp}=\{0\}$ holds.
Let $\mu\in\measure$.
There exists $\nu\in\regular$ such that $\mu\in\bdmeasure{\nu}$.
By Proposition~\ref{prop:represention}, if $\int_{x\in\mcl{X}}d\mu(x)u(x)=0$ for any $u\in\modu_k$, $\mu=0$.
Since $\int_{x\in\mcl{X}}d\mu(x)u(x)=\blacket{u,\Phi(\mu)}_{\modu_k}$, $\int_{x\in\mcl{X}}d\mu(x)u(x)=0$ means $\Phi(\mu)=0$.
Therefore, by Lemma~\ref{lem:injective_equiv}, $\Phi$ is injective.
\end{myproof}

For the case of $\alg=\mat$, we apply the following extension theorem to derive the converse of Theorem~\ref{thm:universal}.
\begin{prop}[c.f. Theorem in~\cite{helemskii94}]
\label{prop:hahn_banach}
Let $\alg=\mat$. %be Connes injective.
Let $\modu$ be a Banach $\alg$-module, $\modu_0$ be a closed submodule of $\modu$, and $f_0:\modu_0\to\alg$ be a bounded $\alg$-linear map.
Then, there exists a bounded $\alg$-linear map $f:\modu\to\alg$ that extends $f_0$ (i.e., $f(u)=f_0(u)$ for $u\in\modu_0$).
\end{prop}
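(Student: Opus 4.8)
The plan is to reduce the statement to the classical (complex–scalar) Hahn--Banach theorem, exploiting that $\mat$ is Morita equivalent to $\mathbb{C}$, an equivalence realized concretely by the matrix units $\{e_{ij}\}_{i,j=1}^m$ of $\mat$ (with $e_{ij}e_{kl}=\delta_{jk}e_{il}$ and $1_{\alg}=\sum_{i=1}^m e_{ii}$).

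First I would set up the ``corner'' construction. For a right Banach $\mat$-module $\modu$, put $E:=\modu e_{11}$. Since $u\mapsto u e_{11}$ is an idempotent $\alg$-module map of norm $\le 1$, $E$ is a closed subspace of $\modu$, hence a Banach space. Every $u\in\modu$ decomposes as $u=\sum_{j=1}^m u e_{jj}$ with $u e_{jj}=(u e_{j1})e_{1j}$ and $u e_{j1}\in E$, and the family $(u e_{j1})_{j=1}^m\in E^m$ determines $u$; thus $\modu\cong E^m$ as a Banach space with the $\mat$-action becoming matrix multiplication on $E^m$. Consequently a bounded $\alg$-linear map $f\colon\modu\to\modu'$ is determined by its restriction to $E$, a bounded linear map $E\to E'$, and conversely any bounded linear $h\colon E\to E'$ lifts to a bounded $\alg$-linear $f$ via $f(u):=\sum_{j=1}^m h(u e_{j1})e_{1j}$, with $\Vert f\Vert\le m\Vert h\Vert$. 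Applying this with $\modu'=\alg$, so $E'=\alg e_{11}\cong\mathbb{C}^m$, identifies bounded $\alg$-linear maps $\modu\to\alg$ with bounded linear maps $\modu e_{11}\to\mathbb{C}^m$.

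Next I would transport the given data. The closed submodule $\modu_0\subseteq\modu$ yields a closed subspace $E_0:=\modu_0 e_{11}\subseteq E$, and $f_0$ corresponds to a bounded linear map $h_0=(\ell_1,\dots,\ell_m)\colon E_0\to\mathbb{C}^m$ with each $\ell_i\in E_0'$. By the classical complex Hahn--Banach theorem, extend each $\ell_i$ to $\tilde\ell_i\in E'$ (boundedness alone suffices; norm control is immaterial), obtaining a bounded linear extension $h\colon E\to\mathbb{C}^m$ of $h_0$. Finally I would push $h$ back through the corner correspondence to a bounded $\alg$-linear map $f\colon\modu\to\alg$ and check $f|_{\modu_0}=f_0$, which reduces via the decomposition $u=\sum_j (u e_{j1})e_{1j}$ to the fact that $h$ extends $h_0$ on $E_0$.

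The argument is essentially bookkeeping, and the only point needing care — the main obstacle — is verifying that the corner functor $\modu\mapsto\modu e_{11}$ is full and faithful with two-sided norm comparisons, i.e.\ that reconstructing $f$ from $h$ by $f(u)=\sum_j h(u e_{j1})e_{1j}$ really produces an $\mat$-linear map; here one uses the identities $(uc)e_{j1}=\sum_k c_{kj}\,u e_{k1}$ and $e_{1k}c=\sum_l c_{kl}e_{1l}$ to match $f(uc)$ with $f(u)c$, and the finiteness of $m$ to keep all sums and norm estimates under control. Alternatively, one may simply invoke the extension theorem for Banach $\mat$-modules of~\citet{helemskii94} directly, of which the above is the proof in this finite-dimensional case.
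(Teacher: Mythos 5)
Your argument is correct, but it proves the proposition by a genuinely different route than the paper. The paper's proof is a one-line appeal to the theorem of \citet{helemskii94}: $\alg=\mat$ is Connes injective and normal as a module over itself, hence $\alg$ is an injective object in the category of Banach $\alg$-modules, and the stated extension property is just the definition of injectivity. You instead make that injectivity explicit for $\mat$ by a Morita-type corner reduction: identify $\modu$ with $(\modu e_{11})^m$ via $u=\sum_j (ue_{j1})e_{1j}$, identify bounded $\alg$-linear maps $\modu\to\alg$ with bounded linear maps $\modu e_{11}\to\alg e_{11}\cong\mathbb{C}^m$, extend componentwise by the scalar Hahn--Banach theorem, and reassemble $f(u)=\sum_j h(ue_{j1})e_{1j}$. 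The checks you flag do go through: right $\alg$-linearity follows from $(uc)e_{j1}=\sum_k c_{kj}\,ue_{k1}$ and $e_{1k}c=\sum_l c_{kl}e_{1l}$; $E_0=\modu_0e_{11}=\modu_0\cap E$ is a (closed) subspace of $E$; and for $u\in\modu_0$ one has $h(ue_{j1})=f_0(ue_{j1})=f_0(u)e_{j1}$, so $f(u)=f_0(u)\sum_j e_{jj}=f_0(u)$. One cosmetic slip: $u\mapsto ue_{11}$ is bounded, idempotent and $\mathbb{C}$-linear but not $\alg$-linear; boundedness and idempotence are all you need for $E$ to be closed. The trade-off between the two proofs: yours is self-contained and constructive, producing an explicit extension with the (non-sharp) bound $\Vert f\Vert\le m\Vert f_0\Vert$, but it leans on the finite system of matrix units and is therefore confined to $\alg=\mat$; the paper's citation of Helemskii's injectivity theorem is shorter and situates the result in the general framework of injective Banach modules over von Neumann algebras, whose unavailability for infinite-dimensional $\alg$ is exactly what the paper points to in the discussion following Theorem~\ref{thm:universal}. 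Both proofs tacitly use that the modules are unital ($u1_{\alg}=u$), which is the paper's standing convention.
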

\begin{proof}
Von Neumann-algebra $\alg$ itself is regarded as an $\alg$-module and is normal.
Also, $\mat$ is Connes injective.
By Theorem in~\cite{helemskii94}, $\alg$ is an injective object in the category of Banach $\alg$-module.
The statement is derived by the definition of injective objects in category theory. %for Banach modules.
\end{proof}
%
%In the following, we always assume $\alg$ is Connes injective.
We derive the following lemma and proposition by Proposition~\ref{prop:hahn_banach}.
\begin{lem}\label{lem:hahn_banach2}
Let $\alg=\mat$.
Let $\modu$ be a Banach $\alg$-module and $\modu_0$ be a closed submodule of $\modu$.
For $u_1\in\modu\setminus \modu_0$, there exists a bounded $\alg$-linear map $f:\modu\to\alg$ such that $f(u_0)=0$ for $u_0\in\modu_0$ and $f(u_1)\neq 0$.
\end{lem}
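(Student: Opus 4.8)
The plan is to derive Lemma~\ref{lem:hahn_banach2} from the Hahn--Banach-type extension theorem for $\mat$-modules (Proposition~\ref{prop:hahn_banach}) by first constructing a suitable $\alg$-linear functional on a small submodule and then extending it. First I would consider the submodule $\modu_1$ of $\modu$ generated by $\modu_0$ and $u_1$; since $u_1\notin\modu_0$, the quotient $\modu_1/\modu_0$ is a nonzero Banach $\alg$-module, and the image $[u_1]$ of $u_1$ is a nonzero element of it. The natural candidate for $f_0$ is the composition of the quotient map $\modu_1\to\modu_1/\modu_0$ with a bounded $\alg$-linear functional on $\modu_1/\modu_0$ that does not vanish at $[u_1]$. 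Such a functional is obtained by mapping $[u_1]c\mapsto ac$ for a well-chosen $a\in\alg$ (e.g.\ $a=1_{\alg}$ if the module generated by $[u_1]$ is free, or a suitable projection otherwise), and then extending it from the cyclic submodule generated by $[u_1]$ to all of $\modu_1/\modu_0$ via Proposition~\ref{prop:hahn_banach}.

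The key steps, in order, would be: (1) form $\modu_1 = \overline{\modu_0 + u_1\alg}$, a closed submodule of $\modu$; (2) note the quotient $\modu_1/\modu_0$ is a Banach $\alg$-module with $[u_1]\neq 0$; (3) on the cyclic submodule $[u_1]\alg\subseteq \modu_1/\modu_0$, exhibit a nonzero bounded $\alg$-linear map $g_0$ with $g_0([u_1])\neq 0$ --- here one uses that $\langle\cdot,\cdot\rangle$-type arguments or a direct construction using the structure of $\mat$ give boundedness, choosing $g_0([u_1]c)=pc$ where $p$ is the range projection appearing in writing $[u_1]$ in a polar-type form, so that $g_0([u_1])=p\neq 0$; (4) apply Proposition~\ref{prop:hahn_banach} to extend $g_0$ to a bounded $\alg$-linear $g:\modu_1/\modu_0\to\alg$; (5) compose with the quotient map to get $f_0:\modu_1\to\alg$ vanishing on $\modu_0$ with $f_0(u_1)=p\neq 0$; (6) apply Proposition~\ref{prop:hahn_banach} once more to extend $f_0$ from the closed submodule $\modu_1$ to all of $\modu$, obtaining the desired $f$.

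The main obstacle I expect is step (3): constructing a bounded $\alg$-linear functional on the cyclic submodule $[u_1]\alg$ that is guaranteed to be nonzero at the generator. Unlike the scalar case, $[u_1]\alg$ need not be free --- there can be a nontrivial annihilator $\{c\in\alg\mid [u_1]c=0\}$, which is a closed right ideal, hence of the form $(1-p)\alg$ for a projection $p$ in the finite-dimensional algebra $\mat$. One must check that the assignment $[u_1]c\mapsto pc$ is well-defined (i.e.\ respects the annihilator) and bounded; well-definedness follows because $[u_1]c = [u_1]c'$ forces $c-c'\in(1-p)\alg$, so $p(c-c')=0$. Boundedness requires a lower bound $\|[u_1]c\|\geq \delta\|pc\|$ on the complementary part, which one gets from finite-dimensionality of $\alg$ and the fact that the map is injective on $p\alg$. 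Once this is in hand, the two invocations of Proposition~\ref{prop:hahn_banach} are routine, and the conclusion $f(u_0)=0$ for $u_0\in\modu_0$, $f(u_1)=p\neq 0$ follows immediately.
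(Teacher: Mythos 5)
Your proposal is correct and essentially reproduces the paper's argument: both pass to the quotient by $\modu_0$, define the functional on the cyclic submodule generated by the class of $u_1$ via the projection complementary to its annihilator ideal in $\alg=\mat$, obtain boundedness from the finite-dimensionality of $\alg$, and conclude with the extension theorem (Proposition~\ref{prop:hahn_banach}). The only cosmetic difference is that the paper quotients all of $\modu$ at once, so a single application of Proposition~\ref{prop:hahn_banach} followed by composition with the quotient map suffices, whereas you work inside $\overline{\modu_0+u_1\alg}$ and invoke the extension theorem twice.
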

\begin{proof}
Let $q:\modu\to\modu/\modu_0$ be the quotient map to $\modu/\modu_0$, and $\,\mcl{U}_1:=\{q(u_1)c\mid\ c\in\alg\}$.
Note that $\modu/\modu_0$ is a Banach $\alg$-module and $\,\mcl{U}_1$ is its closed submodule.
Let $\mcl{V}:=\{c\in\alg\mid\ q(u_1)c=0\}$, which is a closed subspace of $\alg$.
Since $\mcl{V}$ is orthogonally complemented~\cite[Proposition 2.5.4]{manuilov00}, %$\alg$ can be regarded as a Hilbert $\alg$-module and any Hilbert $C^*$-module over von Neumann-algebra is orthogonally complemented~\cite[Proposition 2.5.4]{manuilov00}, 
$\alg$ is decomposed into $\alg=\mcl{V}+\mcl{V}^{\perp}$.
Let $p:\alg\to\mcl{V}^{\perp}$ be the projection onto $\mcl{V}^{\perp}$ and
$f_0:\mcl{U}_1\to\alg$ defined as $q(u_1)c\mapsto p(c)$.
Since $p$ is $\alg$-linear, $f_0$ is also $\alg$-linear.
%In addition, the restriction of $f_0$ to a closed subspace $\mcl{U}_2:=\{q(u_1)c\mid\ c\in\mcl{V}^{\perp}\}$ of $\mcl{U}_1$ is the inverse of a bounded bijective map $\mcl{V}^{\perp}\ni c\mapsto q(u_1)c\in\mcl{U}_2$.
%Thus, by the bounded inverse theorem, the restriction of $f_0$ is also bounded, which implies $f_0$ is also bounded.
Also, for $c\in\alg$, we have
\begin{align*}
&\Vert q(u_1)c\Vert_{\modu/\modu_0}=\Vert q(u_1)(c_1+c_2)\Vert_{\modu/\modu_0} 
=\Vert q(u_1)c_1\Vert_{\modu/\modu_0}\\
&\qquad \ge \inf_{d\in\mcl{V}^{\perp},\Vert d\Vert_{\alg}=1}\Vert q(u_1)d\Vert_{\modu/\modu_0}\; \Vert c_1\Vert_{\alg}
=\inf_{d\in\mcl{V}^{\perp},\Vert d\Vert_{\alg}=1}\Vert q(u_1)d\Vert_{\modu/\modu_0}\; \Vert p(c)\Vert_{\alg},
\end{align*}
where $c_1=p(c)$ and $c_2=c_1-p(c)$.
Since $\inf_{d\in\mcl{V}^{\perp},\Vert d\Vert_{\alg}=1}\Vert q(u_1)d\Vert_{\modu/\modu_0}\; \Vert p(c)\Vert_{\alg}>0$, $f_0$ is bounded.
By Proposition~\ref{prop:hahn_banach}, $f_0$ is extended to a bounded $\alg$-linear map $f_1:\modu/\modu_0\to\alg$.
Setting $f:=f_1\circ q$ completes the proof of the lemma.
\end{proof}
Then we prove the converse of Lemma \ref{lem:orthocompequiv2}.
\begin{prop}\label{prop:orthcompequiv}
Let $\alg=\mat$.
For a Banach $\alg$-module $\modu$ and its submodule $\modu_0$, 
$\modu_0$ is dense in $\modu$ if $\modu_0^{\perp}=\{0\}$.
\end{prop}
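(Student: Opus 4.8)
The claim is Proposition~\ref{prop:orthcompequiv}: for $\alg=\mat$, a submodule $\modu_0\subseteq\modu$ of a Banach $\alg$-module is dense whenever $\modu_0^\perp=\{0\}$. This is the converse of Lemma~\ref{lem:orthocompequiv2}, and the natural route is the standard Hahn--Banach argument, carried out with the $\alg$-valued separation result the authors have just proved (Lemma~\ref{lem:hahn_banach2}).

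\textbf{Plan of proof.} I would argue by contraposition: suppose $\modu_0$ is not dense in $\modu$. Then $\overline{\modu_0}$ is a proper closed submodule of $\modu$, so there exists $u_1\in\modu\setminus\overline{\modu_0}$. Apply Lemma~\ref{lem:hahn_banach2} with the closed submodule $\overline{\modu_0}$ in place of $\modu_0$: this yields a bounded $\alg$-linear map $f:\modu\to\alg$ with $f(u)=0$ for all $u\in\overline{\modu_0}$ and $f(u_1)\neq 0$. In particular $f\in\modu'$, $f$ vanishes on $\modu_0$ (since $\modu_0\subseteq\overline{\modu_0}$), so by definition $f\in\modu_0^\perp$; and $f\neq 0$ because $f(u_1)\neq 0$. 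Hence $\modu_0^\perp\neq\{0\}$, contradicting the hypothesis. Therefore $\modu_0$ must be dense.

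\textbf{Where the work sits.} The only nontrivial ingredient is Lemma~\ref{lem:hahn_banach2}, which in turn rests on Proposition~\ref{prop:hahn_banach} (the $\mat$-valued Hahn--Banach extension theorem, valid because $\mat$ is Connes injective and hence an injective object among Banach $\alg$-modules). Both of these are available earlier in the excerpt, so the proposition itself is essentially a one-line deduction once the separation lemma is invoked with $\overline{\modu_0}$. The one small point to be careful about is precisely that one must separate using the \emph{closure} $\overline{\modu_0}$, not $\modu_0$ itself, since Lemma~\ref{lem:hahn_banach2} is stated for closed submodules; the element $u_1$ is chosen outside $\overline{\modu_0}$, which is exactly what non-density provides. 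There is no genuine obstacle here — the substance was already absorbed into Lemma~\ref{lem:hahn_banach2} — so the proof is short and the ``hard part'' has effectively been front-loaded into the extension theorem for $\mat$-modules.

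A clean way to present it:
\begin{myproof}{Proof of Proposition~\ref{prop:orthcompequiv}}
We prove the contrapositive. Assume $\modu_0$ is not dense in $\modu$. Then $\overline{\modu_0}$ is a proper closed submodule of $\modu$, so there exists $u_1\in\modu\setminus\overline{\modu_0}$. By Lemma~\ref{lem:hahn_banach2} applied to the closed submodule $\overline{\modu_0}$, there exists a bounded $\alg$-linear map $f:\modu\to\alg$ such that $f(u)=0$ for all $u\in\overline{\modu_0}$ and $f(u_1)\neq 0$. Since $\modu_0\subseteq\overline{\modu_0}$, we have $f\in\modu_0^{\perp}$, and $f\neq 0$ because $f(u_1)\neq 0$. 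Hence $\modu_0^{\perp}\neq\{0\}$, which completes the proof.
\end{myproof}
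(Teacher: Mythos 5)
Your proof is correct and is essentially the paper's argument: both separate a point $u_1\notin\overline{\modu_0}$ from the closed submodule $\overline{\modu_0}$ via Lemma~\ref{lem:hahn_banach2}, the paper merely phrasing the deduction through the inclusion $(\modu_0^{\perp})^{\perp}\subseteq\overline{\modu_0}$ while you argue directly by contraposition.
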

\begin{proof}
Assume $u\notin\overline{\modu_0}$. 
We show $\overline{\modu_0}\supseteq (\modu_0^{\perp})^{\perp}$.
By Lemma~\ref{lem:hahn_banach2}, there exists $f\in\modu'$ such that $f(u)\neq 0$ and $f(u_0)=0$ for any $u_0\in\overline{\modu_0}$.
Thus, $u\notin (\modu_0^{\perp})^{\perp}$.
As a result, $\overline{\modu_0}\supseteq (\modu_0^{\perp})^{\perp}$.
Therefore, if $\modu_0^{\perp}=\{0\}$, then $\overline{\modu_0}\supseteq\modu$, which implies $\modu_0$ is dense in $\modu$.
%The statement is derived by equalities $\modu^{\perp}=0$ and $^{\perp}\{0\}=\modu$.
\end{proof}
As a result, we derive Proposition~\ref{thm:universal_finitedim} as follows.
\begin{myproof}{Proof of Proposition~\ref{thm:universal_finitedim}}
%Since $\alg=\mat$, by Propositions~\ref{prop:dominated_equiv} and \ref{prop:matrix_domination}, $\bigcup_{\nu\in\regular}\bochner{\nu}'=\clch'$ holds.
%Thus, by Proposition~\ref{prop:represention}, $\bigcup_{\nu\in\regular}\bdmeasure{\nu}=\measure$ and $\clch$ are isomorphic.
%The statement is proved in the same manner as the proof of Theorem~\ref{thm:universal}.
%By Lemma~\ref{lem:injective_equiv}, $\Phi$ is injective if and only if ``$\Phi(\mu)=0\Rightarrow \mu=0$''.
Let $\mu\in\measure$. Then, ``$\Phi(\mu)=0$'' is equivalent to ``$\int_{x\in\mcl{X}}d\mu^*(x)u(x)=\blacket{\Phi(\mu),u}_{\modu_k}=0$ for any $u\in\modu_k$''.
Thus, by Proposition~\ref{prop:representation_finitedim}, ``$\Phi(\mu)=0\Rightarrow \mu=0$'' is equivalent to ``$f\in\clch'$, $f(u)=0$ for any $u\in\modu_k$ $\Rightarrow$ $f=0$''.
By the definition of $\modu_k^{\perp}$ and Proposition~\ref{prop:orthcompequiv}, $\modu_k$ is dense in $\clch$.
\end{myproof}

\section{Derivative on Banach spaces}\label{ap:gataux}
\begin{defin}[Fr\'{e}chet derivative]\label{def:derivative}
Let $\modu$ be a Banach space.
Let $f:\modu\to\alg$ be an $\alg$-valued function defined on $\modu$.
The function $f$ is referred to as {\em (Fr\'{e}chet) differentiable} at a point $\bc\in\modu$ if there exists a continuous $\mathbb{R}$-linear operator $l$ such that
\begin{equation*}
\lim_{u\to 0,\ u\in \modu\setminus\{0\}}\frac{\Vert f(\bc+u)-f(\bc)-l(u)\Vert_{\alg}}{\Vert u\Vert_{\modu}}=0
%Df_{\bc}(u):=\lim_{t\to 0}\frac{f(\bc+tu)-f(\bc)}{t}=\frac{d}{dt}f(\bc+tu)|_{t=0},
\end{equation*}
for any $u\in\modu$.
In this case, we denote $l$ as $Df_{\bc}$.
%In addition, the operator $Df_{\bc}$ is a $\mathbb{R}$-linear operator on $\modu$ and referred to as a {\em G\^{a}teaux derivative} of $f$ at $\bc$.
%Similarly, $f$ is referred to as {\em $i$-times (Fr\'{e}chet) differentiable} at a point $\bc\in\modu$ if there exists
%\begin{equation*}
%D^if_{\bc}(u):=\frac{d^i}{dt^i}f(\bc+tu)|_{t=0},
%\end{equation*}
%for any $u\in\modu$.
\end{defin}
\if0
The following Taylor's theorem is derived (c.f. \citet{blanchard15}).
\begin{prop}[Taylor's theorem for the Fr\'{e}chet derivative]\label{thm:taylor}
Let $\bc\in\modu$.
Assume $f$ is differentiable and its derivative is continuous.
Then, for $u\in\modu$, we have
\begin{equation*}
%f(\bc+u)=\sum_{i=0}^{m}\frac{1}{i!}D^if_{\bc}(u)+R_m(\bc,u),
f(\bc+u)=f(\bc)+Df_{\bc}(u)+R_m(\bc,u),
\end{equation*}
where $R_m(\bc,u)=1/(m-1)!\int_0^1(1-t)^{m-1}(D^mf_{\bc +tu}(u)-D^mf_{\bc}(u))dt$ and thus it satisfies $\lim_{u\to 0}\Vert R_m(\bc,u)\Vert_{\alg}/\Vert u\Vert_{\modu}^m=0$.
\end{prop}
\fi
%\bibliography{Koopmanbib}
%\bibliographystyle{myplain} 

\end{document}